\title{Provably Efficient Generative Adversarial Imitation Learning for Online and Offline Setting with Linear Function Approximation}
\author
{\normalsize
Zhihan Liu\thanks{University of Science and Technology of China; 
\texttt{liuzhihan0627@mail.ustc.edu.cn}}
\qquad Yufeng Zhang\thanks{Northwestern University; 
\texttt{yufengzhang2023@u.northwestern.edu}}
\qquad Zuyue Fu\thanks{Northwestern University; 
\texttt{zuyue.fu@u.northwestern.edu}}
\qquad Zhuoran Yang\thanks{Princeton University; 
\texttt{zhuoranyang.work@gmail.com}}
\qquad Zhaoran Wang\thanks{Northwestern University; 
\texttt{zhaoranwang@gmail.com}}}
\def\cP{{\mathcal{P}}}
\def\cS{{\mathcal{S}}}
\def\cA{{\mathcal{A}}}
\def\cR{{\mathcal{R}}}
\def\cV{{\mathcal{V}}}
\def\cN{{\mathcal{N}}}
\def\D{{\mathbb D}}
\def\rE{{\mathrm {E}}}
\def\rA{{\mathrm {A}}}
\renewcommand{\tilde}{\widetilde}
\def\cO{{\mathcal{O}}}
\def\tO{{\tilde{\cO}}}
\def\given{\,|\,}
\def\biggiven{\,\big{|}\,}
\def\##1\#{\begin{align}#1\end{align}}
\newcommand{\BlackBox}{\rule{1.5ex}{1.5ex}}  
\newtheorem{theorem}{Theorem}
\newtheorem{lemma}[theorem]{Lemma}
\newtheorem{proposition}[theorem]{Proposition}
\newtheorem{corollary}[theorem]{Corollary}
\newtheorem{definition}[theorem]{Definition}
\newtheorem{assumption}[theorem]{Assumption}
\begin{document}

\maketitle

\begin{abstract}
    In generative adversarial imitation learning (GAIL), the agent aims to learn a policy from an expert demonstration so that its performance cannot be discriminated from the expert policy on a certain predefined reward set. In this paper, we study GAIL in both online and offline settings with linear function approximation, where both the transition and reward function are linear in the feature maps. Besides the expert demonstration, in the online setting the agent can interact with the environment, while in the offline setting the agent only accesses an additional dataset collected by a prior. For online GAIL, we propose an optimistic generative adversarial policy optimization algorithm (OGAP) and prove that OGAP achieves  $\widetilde{\mathcal{O}}(H^2 d^{3/2}K^{1/2}+KH^{3/2}dN_1^{-1/2})$ regret. Here $N_1$ represents the number of trajectories of the expert demonstration, $d$ is the feature dimension, and $K$ is the number of episodes.
   For offline GAIL, we propose a pessimistic generative adversarial policy optimization algorithm (PGAP). For an arbitrary additional dataset, we obtain the optimality gap of PGAP, achieving the minimax lower bound in the utilization of the additional dataset. Assuming sufficient coverage on the additional dataset, we show that PGAP achieves $\widetilde{\mathcal{O}}(H^{2}dK^{-1/2} +H^2d^{3/2}N_2^{-1/2}+H^{3/2}dN_1^{-1/2} \ )$ optimality gap. Here $N_2$ represents the number of trajectories of the additional dataset with sufficient coverage.
    \end{abstract}
    

\section{Introduction}
In imitation learning (IL, \cite{IL-survey}) (a.k.a apprenticeship learning), the agent remains unknown of the reward, but can learn from an expert demonstration so that the agent learns a policy as good as the expert one.  To solve IL problem, there exist mostly three types of methods: behavior cloning (BC, \cite{BC-Robot-Learning-from-demonstration}), inverse reinforcement learning (IRL, \cite{2004-AL-using-IRL}), and online generative adversarial imitation learning (online GAIL). BC  regards IL as a supervised learning problem of predicting actions based on states.  While appealingly simple, BC suffers from compounding error caused by covariate shift \citep{BC-corvirate-shift-paper-1,BC-corvirate-shift-paper-2}. IRL explicitly solves the true reward function and then accordingly fully solves an RL subproblem at every iteration \citep{2004-AL-using-IRL,IRL-theory-Algorithms-for-IRL}. Though it has succeeded in tasks involving continuous spaces \citep{IRL-Continuous-Inverse-Optimal-Control,IRL-Continuous-Inverse-control-deep}, IRL lacks computational efficiency and the desired true reward function may not be unique.
To address these issues, online GAIL \citep{GAIL} solves IL through minimax optimization with alternating updates to learn a policy whose performance cannot be discriminated from the expert policy on a certain predefined reward set. The alternating updates in online GAIL mirror the training of generative adversarial networks \citep{GANs,WGAN}. Specifically, during every iteration, online GAIL first minimizes the discrepancy in expected cumulative reward between the expert policy and the learned policy and then maximizes such a discrepancy over a given reward function class in adversary.
Online GAIL achieves tremendous empirical success in a variety of fields, such as autonomous driving \citep{application-gail-autodriving}, human behavior modeling \citep{application-gail-human}, natural language processing \citep{application-gail-nlp}, and robotics control \citep{application-gail-control}.

Despite the state-of-art empirical performance of online GAIL, the agent requires a huge amount of interactions with the environment during the training. For some practical problems, it is inconvenient, costly, or risky to get expert data or labeled data, especially when collecting clinical data or developing autonomous driving. Meanwhile, it is available to get other sources of offline data, which may be originated from historical experiments, non-labeled data, and published datasets, etc. Naturally, we desire to utilize these offline data to alleviate the shortage of expert demonstration and aid the agent to mimic the expert policy.
To this end, besides online GAIL, we consider the offline generative adversarial imitation learning (offline GAIL) setting. In offline GAIL, we assume that the agent is accessible to an additional dataset besides the expert demonstration, without further interaction with the environment.  
Some related works \citep{zOffline-Learning-from-Demonstrations-and-Unlabeled-Experience,offlineGAILAPP} study this setting and provide methods from the aspect of the experiment. 
   
Furthermore, previous theoretical analyses on GAIL either focus on the tabular case \citep{Online-Apprenticeship-Learning}, where the state and action spaces are discrete, or relies on strong assumptions, including access to a well-explored dataset \citep{nerual-gail}, linear-quadratic regulators \citep{cai2019global}, or kernelized nonlinear regulators \citep{KNR}. Theoretical analysis for GAIL with linear function approximation either in online or offline settings still remains an open problem, which is crucial for the application of GAIL in the continuous or high dimensional state and action spaces. The cruxes of such an analysis involve: (i) Different from RL, both online GAIL and offline GAIL are minimax optimization problems with respect to the policy and reward function, especially with linear reward set. (ii) For offline GAIL, without assuming the well-exploredness of the additional dataset, the agent may be misled by distribution shift in the additional dataset and shares the suffering with offline RL \citep{pessimismRl-Jinchi,limitOffline}; (iii) For offline GAIL, we are incapable to update the reward function based on the trajectory of present policy.

Hence in this paper, we aim at tackling these issues and answering the following question:

\begin{center}
{\textit{Can we design provably efficient algorithms for online and offline GAIL \\ with linear function approximation in a unified view?}}
\end{center}

 To answer the above question, we present a unified framework and specialize it as
optimistic generative adversarial policy optimization (OGAP) for online GAIL and pessimistic generative adversarial policy optimization (PGAP) for offline GAIL with linear function approximation. 
This framework is motivated by the  alternating update process of GANs and involves two main stages: (i) policy update stage and (ii) reward update stage. (i) In the policy update stage, we apply mirror descent \citep{mirror-descent-1,mirror-descent-2} to update the policy and evaluate policy online optimistically for OGAP and offline pessimistically for PGAP. 
(ii) In the reward update stage, we first estimate the gradient of GAIL objective function with respect to the reward parameter through the collected trajectory induced by the present policy for OGAP. While for PGAP, we build the estimate through estimated action-value functions during the stage of policy update.
Then we use projected gradient ascent to update reward parameters via such an estimate of gradient. 

\paragraph{Contribution}
Particularly, we conclude our contributions in the following three aspects.
\begin{itemize}
\item For online GAIL with linear function approximation, we propose a new algorithm OGAP and prove that OGAP achieves $\tO(H^2 d^{3/2}K^{1/2}+KH^{3/2}dN_1^{-1/2})$ \footnote{Here $\tilde{\cO}(\cdot)$ hides the log terms and constant terms.} regret when applying linear function approximation, demonstrating that OGAP is provably efficient. Here $N_1$ represents the number of trajectories of the expert demonstration, $d$ is the feature dimension, and $K$ is the episode.
\item For offline GAIL with linear function approximation, we design a new algorithm PGAP and obtain the optimality gap of the output policy under the minimal assumption on the additional dataset. Then we decompose the optimality gap into three sources: optimization error, Monte Carlo (MC) estimation error, and intrinsic error. We prove that optimization error and  MC estimation error respectively scale to $\tO(K^{-1/2})$ and $\tO(N_1^{-1/2})$, while intrinsic error depends on how well the additional dataset $\D^\rA$ covers the expert policy and attains the minimax optimality in the utilization of the additional dataset.
\item In addition, we demonstrate that if we further assume that the additional dataset  $\mathbb{D}^\rA$ has sufficient coverage on the expert policy, we prove PGAP achieves $\tO(H^{2}dK^{-1/2} +H^2d^{3/2}N_2^{-1/2}+H^{3/2}dN_1^{-1/2} \ )$ optimality gap, thus PGAP has global convergence. Here $N_2$ represents the number of trajectories of the additional dataset with sufficient coverage. Furthermore, we discuss the effect of the additional offline dataset $\mathbb{D}^\rA$. In particular, facilitated with an additional dataset $\mathbb{D}^\rA$ with sufficient coverage, we decrease the dependency for horizon $H$ and feature dimension $d$ in the optimality gap.
\end{itemize}
\paragraph{Related Works}

Our work adds to the body of analysis on GAIL \citep{cai2019global,chen-GAIL,nerual-gail,ErrorBoundINIMITATIONLEARNING,Online-Apprenticeship-Learning,KNR}. 
 \cite{Online-Apprenticeship-Learning} study online GAIL and obtain $\tilde{\cO}(H^2 |\cS| |\cA|^{1/2} K^{1/2}+H^{3/2} |\cS|^{1/2}|\cA|^{1/2}KN_1^{-1/2} )$ regret in the tabular case with bounded reward functions but we apply linear function approximation on the transition kernels without assuming the state space or the action space is discrete and we adopt linear reward set. \cite{chen-GAIL} only study the convergence of offline GAIL to a stationary point instead of global convergence (optimality gap) as in this paper. \cite{ErrorBoundINIMITATIONLEARNING,nerual-gail} analyze the global convergence of GAIL with neural networks respectively in the tabular case and the continuous case but assume that a well-explored dataset is available (concentrability coefficients are uniformly
upper bounded), while our analysis need not such a strict and impractical assumption. \cite{cai2019global} study the global convergence of offline GAIL in the setting of linear-quadratic regulators, which is unnecessary for this paper. As an independent work, \cite{KNR} study offline GAIL with bounded reward functions in the continuous kernelized nolinear regulator (KNR,  \cite{KNRSetting}) and  Gaussian process (GP, \cite{GP}) setting. We point out that the KNR (resp. GP) setting is different from linear kernel MDP as analyzed in this paper and each one does not imply the other, which leads to the difference in model estimation and later analysis. In addition, we study the linear reward set instead of bounded reward set and the former case is difficult to handle with \citep{Online-Apprenticeship-Learning}. 

Our work is related to IRL, \cite{2004-AL-using-IRL,AL+IRL+Grandient,AL-linear-program,Game-Theoretic-Approach-AL2007} study the convergence of IRL in the tabular case, while they require to solve an RL subproblem every iteration, inefficiently.
Our work is also related to BC \citep{BC-corvirate-shift-paper-1,jiao-BC-Toward-the-Fundamental-Limits-of-IL,Provably-Breaking-the-Quadratic-Error-Compounding-Barrier-in-Imitation-Learning, MACONG-2021new-Offline+IL}. BC does not solve a minimax problem as GAIL, but directly mimics the expert policy extracted from the expert demonstration. \cite{Provably-Breaking-the-Quadratic-Error-Compounding-Barrier-in-Imitation-Learning} propose a BC method which achieves $\tilde{\cO}(|\mathcal{S}|H^2/N_1)$ suboptimality, attaining $\Omega(|\mathcal{S}|H^2/N_1)$ the lower bound of BC \citep{jiao-BC-Toward-the-Fundamental-Limits-of-IL}, when the transition model is unknown.  To best of our knowledge, present analysis of BC only focus on the tabular case and would fail in the continuous state and action space with horizon $H\ge 2$, since BC is considered as a classification problem and always faces unseen states in the continuous state space. 

Besides, our work is
related to the vast body of existing literature on online RL cooperated with optimism \citep{MABexploration,UCRL,explorationRL1,ProvablyefficientQlearning,LinearFunctionApproximation,RLinfeaturespace}, offline RL  \citep{offlineRL2,CQL,BCQ,minimaxoffpolicy,uniform-covering,pessimismRl-Jinchi}, policy optimization \citep{mirror-descent-1,mirror-descent-2,OPPO,policy_optimization}, adversarial MDP \citep{advMDPbanditfeed,advMDP2,advMDP3JIN}, and linear function approximation  \citep{minimaxoffpolicy,LinearLSforTD,LinearsampleoptimalparaQlearning,LinearFunctionApproximation,linear-approximation-example-1,linear-approximation-example-2,Gu-probility-measure} while they study minimization or maximization problem with known reward through value-based or policy-based method, instead of minimax problem with respect to policy and reward function as GAIL.

Our work is related to a line of study on pessimism. Specifically, the uncertainty quantification for estimated model in PGAP is motivated by the pessimism in offline RL \citep{jiangnanOffline,jiangnanOffline2,jiangnanOffline3,CQL,pessimismRl-Jinchi,PessFQI,PessOfflinPOLICY,PessOfflinPOLICY2,importanceofpess,MACONG-2021new-Offline+IL,sunwenOffline}. \cite{PessFQI} propose a pessimistic variant of fitted Q-learning algorithm \citep{uniform3,uniform4} achieving the optimal policy within
a restricted class of policies without assuming the dataset to be well-explored. \cite{pessimismRl-Jinchi} propose a provably efficient algorithm with the spirit of pessimism to solve offline RL with linear function approximation, under no coverage assumption on the dataset. \cite{jiangnanOffline2} propose a refined pessimistic estimate and obtain a tighter suboptimality in $d$ compared with \cite{pessimismRl-Jinchi}.
\cite{MACONG-2021new-Offline+IL} study the offline RL in the tabular case through lower confidence bound (LCB), relining on the partial coverage assumption on the dataset. \cite{sunwenOffline} analyze the constrained pessimistic policy optimization with general function approximation and with the partial coverage assumption of the dataset, then they specialize the case in the KNR setting and give a refined upper bound.
The importance of pessimism in offline RL is characterized by \cite{importanceofpess,LowerboundOfflineRL} through discussing the lower bound of offline RL when the dataset has no restriction. 
\paragraph{Notations}
We denote by $[N]=\{1,\ldots,N\}$. We also denote by $\|\cdot\|_{2}$ the $\ell_{2}$-norm of a vector and denote by $\|\cdot\|_A$ the spectral norm of a matrix $A$. We denote by $\Delta(\mathcal{X})$ the set of probability distributions on a set $\mathcal{X}$ and correspondingly define
$\Delta(\mathcal{A} \mid \mathcal{S}, H)=\{\{\pi_{h}(\cdot \mid \cdot)\}_{h\in [H]}: \pi_{h}(\cdot \mid s) \in \Delta(\mathcal{A})$ for any $(s,h) \in \mathcal{S} \times [H]\}$
for all set $\mathcal{S}$ and $H \in \mathbb{ N}_{+}$.  For $p_{1}, p_{2} \in \Delta(\mathcal{A}),$ we denote by $D_{\mathrm{KL}}(p_{1} \| p_{2})$ the KL-divergence, that is,
$$
D_{\mathrm{KL}}(p_{1} \| p_{2})=\int_\cA p_{1}(a) \log \frac{p_{1}(a)}{p_{2}(a)} \mathrm{d}a.
$$ And $\langle\cdot, \cdot\rangle_{\mathcal{A}}$ is the inner product taken over the action space $\mathcal{A}$. We also denote by $\delta_x$ Dirac function centered at $x$. We denote by Vol$(\mathcal{X})$ by the measure of set $\mathcal{X}$.

\section{Preliminary}
In this section, we first introduce the notion of the episodic Markov decision process (MDP). Then we introduce generative adversarial imitation learning in the online and offline settings, respectively. Finally we introduce the definition of linear function approximation.
\subsection{Episodic Markov Decision Process}
We consider an episodic MDP $(\cS, \cA, H, \cP, r)$, where $\cS$ and $\cA$ are the state and action spaces, respectively, $H$ is the length of each episode, $\cP_h$ is the Markov transition kernel of the $h$-th step of each episode for any $h \in [H]$, and $r_h\colon \cS\times \cA\to [0,1]$ is the reward function at the $h$-th step of each episode for any $h \in [H]$. We assume without loss of generality that the reward function $r_h$ is deterministic. 

In the episodic MDP, the agent interacts with the environment as follows.  At the beginning of each episode, the agent determines a policy $\pi = \{\pi_h\}_{h\in[H]} \in \Delta(\cA \given \cS, H)$. Then the agent takes the action $a_h \sim \pi_h(\cdot\given s_h)$ at the $h$-th step of the $k$-th episode, observes the reward $r_h(s_h, a_h)$, and transits to the next state $s_{h+1} \sim \cP_h(\cdot\given s_h, a_h)$. The episode terminates when the agent reaches the state $s_{H+1}$. Without loss of generality, we assume that the initial state $s_1 = x$ is fixed across different episodes. We remark that our algorithms and corresponding analyses readily generalize to the
setting where the initial state $s_1$ is sampled from a fixed distribution. 

We now define the value functions in the episodic MDP.   For any policy $\pi = \{\pi_h\}_{h\in[H]}$ and reward function $r = \{r_h\}_{h\in[H]}$, the state- and action-value functions are defined for any $(s,a,h) \in \cS\times\cA \times[H]$ as follows, 
\begin{equation}
V_{h,\pi}^{r}(s)=\mathbb{E}_{\pi}\Big[\sum_{i=h}^{H} r_{i}(s_{i}, a_{i}) \Big| s_{h}=s\Big], \quad 
Q_{h,\pi}^{r}(s, a)=\mathbb{E}_{\pi}\Big[\sum_{i=h}^{H} r_{i}(s_{i}, a_{i})  \Big| s_{h}=s, a_{h}=a\Big], 
\label{eq:def:Q-function}
\end{equation}
where the expectation $\EE_\pi[\cdot]$ is taken with respect to the action $a_i \sim \pi_i(\cdot \given s_i)$ and the state $s_{i+1}\sim \cP_i(\cdot\given s_i, a_i)$ for any $i \in \{h, h+1, \ldots, H\}$. With slight abuse of notations, we also denote by $\cP_h$ the operator form of the transition kernel such that $(\cP_h f)(s,a) = \EE_{s'\sim \cP_h(\cdot\given s,a)}[f(s')]$ for any $f\colon \cS\to \RR$. By the definitions of the value functions in \eqref{eq:def:Q-function}, for any $(s,a,h)\in \cS\times\cA\times [H]$, any policy $\pi$, and any reward function $r$, we have
\#
V_{h,\pi}^r(s) = \langle Q_{h,\pi}^r(s,\cdot),\pi_h(\cdot,s)\rangle _{\mathcal{A}}, \quad 
Q_{h,\pi}^r (s,a) = r_h(s,a) + \mathcal{P}_hV_{h+1,\pi}^r(s,a), \quad V_{H+1,\pi}^r(s) = 0,  \label{eq:Bellman}
\#
where $\la \cdot, \cdot \ra$ denotes the inner product over the action space $\cA$. 
We further define the expected cumulative reward as follows, 
\#\label{eq:def-J}
J(\pi, r)=V_{1,\pi}^{r}(x).
\#
In this paper, we characterize the performance of the agent via the expected cumulative reward $J(\pi, r)$ defined in \eqref{eq:def-J}. 


\subsection{Generative Adversarial Imitation Learning}
Given an expert demonstration $\mathbb{D}^{\mathrm{E}}=\{(s_{h,\tau}^{\mathrm{E}}, a_{h,\tau}^{\mathrm{E}})\}_{h \in[H ],\tau \in [N_1]}$ with $N_1$ trajectories of state-action pairs generated following the underlying MDP and the expert policy $\pi^{\mathrm{E}}$, the goal of GAIL is to find a policy whose performance is close to that of the expert policy $\pi^\rE$ for any reward function in a given set $\cR$ \citep{GAIL}. Here the set $\cR$ is specified later in \S\ref{sec:linear}. 
We assume that the trajectories in the expert demonstration $\mathbb{D}^{\mathrm{E}}$ are independent, which is a standard assumption in the literature \citep{2004-AL-using-IRL,Online-Apprenticeship-Learning}. 
In GAIL, we consider the following minimax optimization problem, 
\#\label{eq:min-max}
\min _{\pi \in \Delta(\cS|\cA,H)} \max _{r \in \cR}  J(\pi^{\mathrm{E}},r)-J(\pi, r), 
\#
where $J(\pi,r)$ is defined in \eqref{eq:def-J}. 

\vskip5pt
\noindent\textbf{Online GAIL.} 
In online GAIL, the agent interacts with the environment to collect state-action pairs following the underlying MDP and the current policy. 
For online GAIL, we are interested in the performance of the algorithm during learning. To this end, we compare the expected cumulative reward corresponding to the algorithm during learning with the expected cumulative reward corresponding to the expert policy under the worst-case scenario, which is defined as follows \citep{Online-Apprenticeship-Learning}, 
\#\label{def:eq:reg}
\mathrm{Regret} (K)= \max_{r \in\mathcal{R} } \sum_{k=1}^K \big[J(\pi^\mathrm{E},r)- J(\pi^k,r)\big],
\#
where $\pi^k$ is the policy of the agent at the $k$-th episode.


\vskip5pt
\noindent\textbf{Offline GAIL.} \label{sec:ORIL}
To simultaneously utilize non-expert data without further interaction with the environment, we consider offline GAIL, which involves an additional dataset to benefit the policy learning.
Specifically, except for the expert demonstration  $\mathbb{D}^{\mathrm{E}} = \{(s_{h,\tau}^\mathrm{E},a_{h,\tau}^\mathrm{E})\}_{h\in[H],\tau \in [N_1]}$ collected by the expert policy $\pi^\mathrm{E}$ in the underlying MDP, the agent has access to an \text{additional} dataset $\mathbb{D}^\rA = \{(s_{h}^{\tau}, a_{h}^{\tau})\}_{h \in[H ],\tau \in [N_2]}$, which is collected a priori by an experimenter in the underlying MDP. In particular, at each step $h \in[H]$ of each trajectory $\tau \in[N_2]$, the experimenter takes the action $a_{h}^{\tau}$ at the state $s_{h}^{\tau}$ and observes the next state $s_{h+1}^{\tau} \sim \mathcal{P}_{h}(\cdot \given s_{h}^{\tau}, a_{h}^{\tau})$.  Here
$a_{h}^{\tau}$ is arbitrarily chosen by the experimenter given the filtration 
\$
\mathcal{F}_{h, \tau}=\sigma\big(\{(s_{i}^{n}, a_{i}^{n}) \colon (n-1) H+i \leq(\tau-1) H+h\}\big),
\$
In other words, in the $\tau$-th trajectory, the action the experiment takes is only determined by the historical information with randomness.
For offline GAIL, we measure the performance of a policy $\pi$ by the $\cR$-distance \citep{chen-GAIL} between the expert policy $\pi^\mathrm{E}$ and $\pi$, which is defined as follows, 
\#\label{eq:def-gap}
\mathbf{D_{\cR}(\pi^\mathrm{E},{\pi}}) = \max_{r\in\mathcal{R}}[J(\pi^\mathrm{E},r)- J(\pi,r)\big].
\#
Here $\cR$ is the reward set, which is specified later in \S\ref{sec:linear}. Optimality gap defined in \eqref{eq:def-gap} can be considered as one episode regret defined in \eqref{def:eq:reg}. When optimality gap of policy $\pi$ approaches zero, it implies that the performance difference between the policy $\pi$ and the expert policy $\pi^\mathrm{E}$ tends to be undistinguishable by the reward set $\mathcal{R}$, which implies that the performance of $\pi$ is measured by both the optimality gap $\mathbf{D}_{\mathcal{R}}(\pi^\mathrm{E},\pi)$ and the richness of the reward set $\mathcal{R}$.

\subsection{Linear Function Approximation}\label{sec:linear}

We consider the linear setting where the transition kernel is linear in a feature map, which is formalized in the following assumption.

\begin{assumption}[Linear Kernel Episodic MDP]\label{def:linear}
Given measurable sets $\cS$ and $\cA$ with finite measure, the episodic MDP $(\cS, \cA, H, \cP, r^\mu)$ is a linear MDP with a feature map $\phi: \cS \times \cA \times \cS \to \RR^d$, that is, for any $h\in [H]$, there exists $\theta_{h}\in \mathbb{R}^{d}$ with $\|\theta_h\|_2\le \sqrt{d}$ such that $\mathcal{P}_{h}  (s' \given s, a ) = \phi  (s, a, s' )^{\top} \theta_{h}$ for any $(s,a,s') \in \cS \times \cA \times \cS$. 
Also, there exists an absolute constant $R>0$ such that
\$
R^{-2} \cdot \sup _{s^{\prime} \in \mathcal{S}}  |\phi  (s, a, s^{\prime} )^{\top} y |^{2} \leq \int _{\cS} |\phi  (s, a, s^{\prime} )^{\top} y |^{2} \mathrm{~d} s^{\prime} \leq d,
\$
for any $(s, a) \in \mathcal{S} \times \mathcal{A}$ and $y\in\mathbb{R}^d$ with $\|y\|_{2} \leq 1$.
\end{assumption}

Under Assumption \ref{def:linear}, we further assume that there exists a feature map $\psi\colon \cS \times \cA \to \RR^d$ such that the reward set $\cR$ in \eqref{eq:min-max} takes the following form, 
\#\label{eq:B-mu}
\mathcal{R}  = \bigl \{r^{\mu} \colon r_h^\mu(\cdot,\cdot) = \psi(\cdot,\cdot)^\top \mu_h  \text{ for any }  (h,\mu)\in[H] \times S  \bigr\},
\#
where $r^{\mu}=\{r_h^\mu(\cdot,\cdot)\}_{h\in [H]}$ is the reward function and $\mu = \{\mu_h\}_{h\in [H]}$ is the reward parameter. Here $S$ is the reward parameter domain, which is defined as follows, 
\#\label{eq:def:parameter domain}
S = \{\mu  \colon \mu_h\in B \text{ for any } h\in[H] \}, \quad \text{ where } B  = \{ u \in \mathbb{R}^d \colon \|u\|_2\le \sqrt{d} \}.
\#
We assume that $\|\psi(s,a)\|_2\le 1$ for any $(s,a)\in\cS\times\cA$, which ensures that $r_h^\mu (s,a)\in[0,\sqrt{d}\ ]$ for any $(s,a,h,\mu)\in\cS\times\cA\times [H]\times S$. 
For notational convenience, for any reward function $r^\mu$,  we denote by the GAIL objective function $L(\pi, \mu)$ as follow, 
\#\label{def:L}
L(\pi, \mu) = J(\pi^{\rm E}, r^\mu) - J(\pi, r^\mu), 
\#
where $J(\pi, r^\mu)$ is defined in \eqref{eq:def-J}. 

Assumption \ref{def:linear} corresponds to the linear kernel MDP model in RL. See \cite{linear-approximation-example-1,Gu-probility-measure,OPPO} for various examples of linear kernel MDPs. 
We remark that the existence of $R$ in Assumption \ref{def:linear} can be guaranteed if for any $(s, a) \in \mathcal{S} \times \mathcal{A}$, the feature map $\phi(s, a, \cdot)$ is upper bounded and Lipschitz continuous.  In particular, a tabular MDP where the state space $\mathcal{S}$ and the action space $\mathcal{A}$ are both finite, is a special case of the linear kernel MDP in Assumption \ref{def:linear} with $d=|\mathcal{S}|^{2}|\mathcal{A}|$ and the feature map $\phi  (s, a, s' )$ being the canonical basis $e_{  (s, a, s' )}$ of $\mathbb{R}^{|\mathcal{S}|^{2}|\mathcal{A}|}$.  It implies that our analysis for GAIL with linear function approximation also covers the tabular case. We also remark that the range of the reward function is $[0,\sqrt{d}]$ instead of $[0,1]$. It means that, with the increasing $d$, we can  enrich the reward set $\mathcal{R}$ and then capture the performance of the policy more meticulously using the optimality gap defined in \eqref{eq:def-gap}. Analysis in GAIL  with linear reward set is more challenging than the case with bounded reward set, which is studied in the previous literature \citep{Online-Apprenticeship-Learning,KNR}.


\section{Algorithm}

We first propose a unified framework in Algorithm \ref{framework} to solve GAIL in both online and offline settings. 
Then we specify the framework for online and offline settings in {\bf O}ptimistic {\bf G}enerative {\bf A}dversarial {\bf P}olicy optimization (OGAP in Algorithm \ref{alg:gail-online} of \S\ref{section:online}) and {\bf P}essimistic {\bf G}enerative {\bf A}dversarial {\bf P}olicy optimization (PGAP in Algorithm \ref{alg:gail-offline} of \S\ref{section:offline}), respectively. 

\begin{algorithm}[h]
\caption{A Unified Framework for OGAP and PGAP} \label{framework}
\begin{algorithmic}[1]
\renewcommand{\algorithmicrequire}{\textbf{Input:}}
\renewcommand{\algorithmicensure}{\textbf{Output:}} 
\State Initialize $\{Q^0_h\}_{h \in [H]}$ as zero functions over $\cS \times \cA$ and $\{\pi^0_h\}_{h \in [H]}$ as uniform distributions over $\cA$.
\State (PGAP) Construct estimated transition kernels $\{\widehat \cP_h\}_{h \in [H]}$ and uncertainty qualifiers $\{ \Gamma_h\}_{h \in [H]}$ based on~$\D^\rA$. 
\For{$k=1,\ldots,K$}
\State Update policy $\pi^k = \{\pi^k_h\}_{h \in [H]}$ by mirror descent with estimated action-value function $\{\widehat Q^{k-1}_h\}_{h \in [H]}$.
\State (OGAP) Rollout a trajectory following $\pi^k$, and construct empirically estimated transition kernels $\{\widehat \cP_h^k\}_{h \in [H]}$ and bonus functions $\{\Gamma_h^k\}_{h \in [H]}$.
\State (OGAP/PGAP) Optimistically/Pessimistically  estimate action-value function $\{\widehat Q^{k}_h\}_{h \in [H]}$.  
\State (OGAP/PGAP) Estimate $\nabla_\mu L(\pi^k,\mu^k)$ via \eqref{eq:MC_estimation-G}/\eqref{eq:offline-estimator}. 
\State Update reward parameter $\mu^{k+1}$ by projected gradient ascent with estimated $\nabla_\mu L(\pi^k,\mu^k)$.
\EndFor
\State (PGAP) Output the mixed policy $\widehat{\pi}$ of $\{\pi^k\}_{k\in[K]}$.
\end{algorithmic}
\end{algorithm}

This framework in Algorithm \ref{framework} involves two stages: (i) policy update stage and (ii) reward update stage. (i) In the policy update stage, we use mirror descent  to update the policy based on the estimated action-value function constructed in the previous iteration. For OGAP, we sample a trajectory following the updated policy. Then we construct estimated action-value functions with optimism based on the finite historical data for OGAP or pessimism based on the additional dataset for PGAP. (ii) In the reward update stage, we first construct an estimate of the gradient based on the collected trajectory induced by the present policy and the finite historical data for OGAP or estimated action-value functions for PGAP , and then we use projected gradient ascent to update reward parameters via such an estimate of gradient. We further detail OGAP and PGAP in \S\ref{section:online} and \S\ref{section:offline}, respectively.


\subsection{Optimistic Generative Adversarial Policy Optimization}
\label{section:online}\label{alg:online:detail}

To specialize Algorithm \ref{framework} to solve online GAIL, we propose OGAP in Algorithm \ref{alg:gail-online}. We detail the policy update stage and reward update stage as follows. 



\begin{algorithm}[h]
\caption{Optimistic Generative Adversarial Policy Optimization (OGAP)}\label{alg:gail-online}
\begin{algorithmic}[1]
\State{\textbf{Input:} Expert demonstration $\D^\rE$, scaling factor $\kappa$, and step size $\eta$ and $\alpha$.}
\State{Initialize  $\{\widehat{Q}_{h}^{0}\}_{h\in [H]}$ as zero functions over $\cS\times \cA$,  $\{\pi_{h}^{0}\}_{h\in [H]}$ as uniform distributions over $\cA$, $\mu^1 = \{\mu_h^1\}_{h\in [H]}$ as zero vectors, and $\{\widehat{V}_{H+1}^{k}\}_{k\in \{0,1,\ldots, K\}}$ as zero functions over $\cS$.}
\For{$k=1,\ldots,K$}
\For{$h=1,\ldots,H$}\algorithmiccomment{Policy Improvement}\label{line:p1}
\State{$\pi_{h}^{k}(\cdot \given \cdot) \propto \pi_{h}^{k-1}(\cdot \given \cdot) \cdot \exp \{\alpha \cdot \widehat{Q}_{h}^{k-1}(\cdot, \cdot)\}$. } \label{line:policy}
\EndFor\label{line:p2}
\State{Rollout a trajectory $\{(s_h^k,a_h^k)\}_{h\in [H]}$ following $\pi^k$.}
\For{$h=H,\ldots,1$} \algorithmiccomment{Policy Evaluation} \label{line:p3}
\State{Set $\{\widehat{\cP}_h^k\}_{h=1}^H$ and $\{\Gamma_h^k\}_{h=1}^H$ via \eqref{eq:def-online-estimation-P} and \eqref{eq:bonus}, respectively.} \label{line:r1}
\State{$\widehat{Q}_{h}^{k}(\cdot, \cdot)\gets \min\{({r}_h^k+\widehat{\mathcal{P}}^k_{h} \widehat{V}_{h+1}^{k}+\Gamma^k_{h})(\cdot, \cdot), (H-h+1)\sqrt{d}\}_+$.}\label{line:r2}
\State{$\widehat{V}_{h}^{k}(\cdot) \gets\langle\widehat{Q}_{h}^{k}(\cdot, \cdot) ,\pi_{h}^k(\cdot \given \cdot)\rangle_{\mathcal{A}}$.} \label{line:1}
\EndFor\label{line:p4}
\State{Set $\{\nabla_{\mu_h}\Tilde{J}(\pi^{\mathrm{E}},r^\mu)\}_{h\in[H]}$ via \eqref{eq:MC_estimation-G}. \algorithmiccomment{Reward Update}}\label{line:p5}
   \For{$h=1,\ldots,H$}
\State{$\widehat{\nabla}_{\mu_h}L(\pi^k,\mu^k)\gets \nabla_{\mu_h} \tilde{J}(\pi^\rE,r^\mu)\given_{\mu = \mu^k}-\psi(s_{h}^k,a_h^k)$.}
\State{$\mu_h^{k+1}\gets \text{Proj}_{B}(\mu_h^k+\eta\widehat{\nabla}_{\mu_h}L(\pi^k,\mu^k))$.} \label{line:on-proj}
\EndFor\label{line:p6}
\EndFor
\end{algorithmic}
\end{algorithm}


\subsubsection{Policy Update Stage}
The policy update stage (Lines \ref{line:p1}--\ref{line:p4} of Algorithm \ref{alg:gail-online}) consists of two steps: (i) policy improvement (Lines \ref{line:p1}--\ref{line:p2}) and (ii) policy evaluation (Lines \ref{line:p3}--\ref{line:p4}).  In  policy improvement, we apply mirror descent in its proximal form to update the current policy via estimated action-value functions, which is specified in policy evaluation stage. In policy evaluation, we employ the optimism principle to construct the estimated action-value functions, which further utilize estimated  transition kernels and bonus functions. 

\vskip5pt
\noindent\textbf{Policy Improvement.}  To generate a policy whose performance is close to the expert policy $\pi^{\rm E}$, we update the policy $\pi^{k}$ to minimize the GAIL objective function $L(\pi, \mu^{k-1}) = J(\pi^{\rm E}, r^{k-1}) - J(\pi, r^{k-1})$ in \eqref{def:L} under the current reward function $r^{k-1} = r^{\mu^{k-1}}$. Note that $\pi^\rE$ is fixed, then we only need to maximize $J(\pi,r^{k-1})$. 
Applying online mirror descent \citep{mirror-descent-1,mirror-descent-2}, a standard algorithm to solve online learning problem, we update $\pi$ as follows,
\#\label{eq:policy-update-target}
\pi^{k}=\underset{\pi \in \Delta(\mathcal{A} \given \mathcal{S}, H)}{\operatorname{argmax}}\{\mathcal{L}_{k-1}(\pi)-\alpha^{-1} D  (\pi, \pi^{k-1} )\},
\#
where $\alpha$ is the step size, Bregman divergence regularizer $D (\pi, \pi^{k-1} )$ is chosen as the expected KL divergence 
$D (\pi, \pi^{k-1} ) = \mathbb{E}_{\pi^{k-1}}[\sum_{h=1}^{H} D_{\mathrm{KL}}  (\pi_{h}  (\cdot \given s_{h} ) \| \pi_{h}^{k-1}  (\cdot \given s_{h} ) )\given s_1 =x]$, and $\mathcal{L}_{k-1}(\pi)$ takes the form as
\# \label{eq:def:mathcal-L}
\mathcal{L}_{k-1}(\pi)  =   J(\pi^{k-1},r^{k-1}) + \mathbb{E}_{\pi^{k-1}}\Big[\sum_{h=1}^{H}\big\langle \widehat Q_h^{k-1} (s_{h}, \cdot),   \pi_{h}(\cdot \given s_{h})-\pi_{h}^{k-1}(\cdot \given s_{h})\big\rangle_{\mathcal{A}} \biggiven s_{1}=x\Big]. 
\#
Here expectation $\mathbb{E}_{\pi}[\cdot]$ is taken with respect to the trajectory induced by $\pi$ and $\widehat Q_h^{k-1}$ is an estimator of $Q^{r^{k-1}}_{h,\pi^{ k-1}}$, which is specified later in \eqref{eq:est-q}. Such policy update formulation defined in \eqref{eq:policy-update-target} also corresponds to the policy optimization in online RL \citep{NPG,TRPO,ppo,regularizedMDP,TRPO2,OPPO}.


By solving \eqref{eq:policy-update-target}, we obtain the following closed-form update as, 
\#\label{eq:policy-solution}
\pi_{h}^{k}(\cdot \given s) \propto \pi_{h}^{k-1}(\cdot \given s) \cdot \exp \{\alpha \cdot \widehat Q_h^{k-1}(s, \cdot)\} \text{ for any $(s,h) \in \cS\times [H]$}, 
\#
which gives Line \ref{line:policy} of Algorithm \ref{alg:gail-online}.

\vskip5pt
\noindent\textbf{Policy Evaluation.} 
To evaluate the policy $\pi^k$ under the reward function $r^k$, we first construct estimated transition kernels $\widehat{\cP}^k = \{\widehat{\cP}_h^k\}_{h\in[H]}$, through value-target regression \citep{linear-approximation-example-1} on finite historical data in Line \ref{line:r1}, and then construct an estimator of the action-value functions by the Bellman equation in \eqref{eq:Bellman} with an extra bonus term to incorporate exploration in Line \ref{line:r2}. 

Specifically, in the $k$-th episode, we construct our estimated transition kernels  $\widehat{\cP}^k = \{\widehat{\mathcal{P}}^k_{h}\}_{h\in[H]}$ as
\#\label{eq:def-online-estimation-P}
\widehat{\mathcal{P}}_{h}^k(s^\prime\given s,a) = \phi  (s, a, s^\prime )^{\top} \widehat{\theta}_{h}^k \text{ for any $(h,s,a,s^\prime)\in[H]\times \cS\times\cA\times\cS$},
\#
where $\widehat{\theta}^k_{h}$ is the minimizer of the regularized empirical mean-squared Bellman error as follows, 
\#\label{eq:online-regid-regress}
&\widehat{\theta}_{h}^k=\underset{\theta \in \mathbb{R}^{d}}{\operatorname{argmin}} \sum_{\tau=1}^{k-1}  \big|  \varphi_h^\tau(s_h^\tau,a_h^\tau)^\top \theta-\widehat{V}_h^\tau(s_{h}^{\tau})\big|^{2}+\lambda\|\theta\|_{2}^{2}, \quad \text {where } \varphi_{h}^{\tau}(\cdot, \cdot)=\int_{\mathcal{S}} \phi\left(\cdot, \cdot, s^{\prime}\right) \widehat{V}_{h+1}^{\tau}\left(s^{\prime}\right) \mathrm{d} s^{\prime}.
\#
Here $\widehat{V}_h^\tau$ is constructed in Line \ref{line:1} of Algorithm \ref{alg:gail-online} and $\lambda > 0$ is the regularization parameter, which is specified later in Theorem \ref{thm:online}. 
By solving \eqref{eq:online-regid-regress}, we obtain the closed-form update of $\widehat \theta_h^k$ as follows, 
\#\label{eq:online-result-L}
\widehat{\theta}_{h}^k =(\Lambda^k_{h})^{-1} \sum_{\tau=1}^{k-1} \varphi _h^\tau (s_{h}^{\tau}, a_{h}^{\tau} )\widehat{V}_{h+1}^\tau (s_{h+1}^\tau), 
\text {\quad where } \Lambda_{h}^k =\sum_{\tau=1}^{k-1} \varphi_h^\tau  (s_{h}^{\tau}, a_{h}^{\tau}) \varphi_h^\tau  (s_{h}^{\tau}, a_{h}^{\tau})^{\top} \mathrm{d} s^{\prime} + \lambda I.
\#
We use $(r_h^k + \widehat \cP^k_h \widehat V_{h+1}^{k})(s,a) $  as an estimator  of  $Q^{r^{k}}_{h,\pi^{ k}}(s,a)$. 
To further handle the uncertainty incurred by finite historical data and balance between exploration and exploitation, we employ optimism to incentivize exploration as many no-regret online RL algorithms do \citep{MABexploration,UCRL,explorationRL1,ProvablyefficientQlearning,LinearFunctionApproximation,RLinfeaturespace}. Specifically, we incorporate a bonus term into the estimator $(r_h^k + \widehat \cP^k_h \widehat V_{h+1}^{k})(s,a)$ of $Q^{r^{k}}_{h,\pi^{ k}}(s,a)$, i.e., 
\#
\widehat{Q}_{h}^{k}(\cdot, \cdot)&= \min\big\{ ({r}_h^k+\widehat{\mathcal{P}}^k_{h} \widehat{V}_{h+1}^{k}+\Gamma_{h}^k)(\cdot, \cdot) ,(H-h+1)\sqrt{d}\big\}^+, \label{eq:est-q} \\
 \text{where }  \Gamma^k_h (s,a)&= H\sqrt{d}\cdot \min \big \{\kappa \cdot  \|\varphi_h^k  (s, a ) \|_{(\Lambda_{h}^k)^{-1}}, 1 \big\}. \label{eq:bonus}
\#
Here $\widehat{Q}_{h}^{k}$ is truncated into $[0,(H-h+1)\sqrt{d}]$, and $\kappa>0$ is a scaling parameter. 

We highlight that the policy update stage of OGAP (Lines \ref{line:p1}--\ref{line:p4} of Algorithm \ref{alg:gail-online}) corresponds to the no-regret policy optimization in adversarial MDP with full information feedback \citep{advMDPbanditfeed,advMDP2,OPPO,advMDP3JIN}. Such tolerance of arbitrarily chosen reward function every episode paves the way for the alternate update between the policy and reward function.

\subsubsection{Reward Update Stage}
To discriminate the discrepancy between the expert policy $\pi^\rE$ and the current policy $\pi^k$, we update the reward parameter $\mu^{k+1}$ by maximizing GAIL objective function $L(\pi^k,\mu)$ defined in \eqref{def:L}. By projected gradient ascent, we obtain the update of the reward parameter as follows,
\#\label{eq:update-r}
\mu_{h}^{k+1} = \operatorname{Proj}_{B}  \{\mu_{h}^{k} + \eta \widehat {\nabla}_{\mu_{h}} L (\pi^{k}, \mu^{k} ) \}, 
\#
where  $\eta$ is the stepsize, $\widehat {\nabla}_{\mu_{h}} L (\pi^{k}, \mu^{k} )$ is an estimator of ${\nabla}_{\mu_{h}} L (\pi^{k}, \mu^{k} )$, and ${\rm Proj}\colon \mathbb{R}^{d} \to B$ is the projection operator to restrict the updated reward parameter $\mu_h^{k+1}$ within the ball $B$ for any $h \in [H]$. Here $B$ is defined in \eqref{eq:def:parameter domain}.
Without accessing to the true transition kernels of the underlying MDP and the expert policy $\pi^\rE$, we need to obtain an estimator $\widehat {\nabla}_{\mu_{h}} L (\pi^{k}, \mu^{k} )$ in \eqref{eq:update-r}.

Specifically, to construct an estimator of ${\nabla}_{\mu_{h}} L (\pi^{k}, \mu^{k} )$, we first construct a Monte Carlo (MC) estimator $\widehat L (\pi^{k}, \mu^{k} )$ of $L (\pi^{k}, \mu^{k} )$ as follows, 
\#\label{eq:hat-L}
\widehat L (\pi^{k}, \mu^{k} ) = \tilde J(\pi^{\rm E}, r^k) - \tilde J(\pi^k, r^k).
\#
Here $\tilde J(\pi^{\rm E}, r^k)$ and $\tilde J(\pi^k, r^k)$ are MC estimators of $ J(\pi^{\rm E}, r^k)$ and $J(\pi^k, r^k)$, respectively, which are defined as follows, 
\#\label{MC:estimate}
\tilde J(\pi^{\rm E}, r^k)  = \frac{1}{N_1}\sum_{\tau=1}^{N_1}\sum_{h=1}^H \psi (s_{h,\tau}^\rE,a_{h,\tau}^\rE)^\top \mu_h, \quad 
\tilde J(\pi^k, r^k)  = \sum_{h=1}^H \psi (s_{h}^k,a_{h}^k)^\top \mu_h, 
\#
where we use $N_1$ trajectories in $\tilde J(\pi^{\rm E}, r^k)$ and one trajectory in $\tilde J(\pi^k, r^k)$. Combining \eqref{eq:hat-L} and \eqref{MC:estimate}, we obtain that
\#\label{eq:MC_estimation-G}
\widehat{\nabla}_{\mu_{h}} L (\pi^{k}, \mu^{k} ) = \frac{1}{N_1}\sum_{\tau=1}^{N_1} \psi (s_{h,\tau}^\rE,a_{h,\tau}^\rE) - \psi(s_h^{k},a_h^k). 
\#
We use $\widehat{\nabla}_{\mu_{h}} L (\pi^{k}, \mu^{k} )$ as an estimator of $\nabla_{\mu_{h}} L (\pi^{k}, \mu^{k} )$, which gives Lines \ref{line:p5}--\ref{line:p6} of Algorithm \ref{alg:gail-online}.

\subsection{Pessimistic Generative Adversarial Policy Optimization}\label{section:offline}
To specialize Algorithm \ref{framework} to solve offline GAIL, we propose PGAP in Algorithm \ref{alg:gail-offline}. Besides the policy update stage and reward update stage, PGAP further contains an initial construction stage, which constructs estimated transition kernels and bonus functions at the beginning of the algorithm. We detail the initial construction, the policy update, and the reward update stage as follows. 


\begin{algorithm}[h]
\caption{Pessimistic Generative Adversarial Policy Optimization (PGAP) }\label{alg:gail-offline}
\begin{algorithmic}[1]
\State \textbf{Input:} Expert demonstration $\mathbb{D}^\rE$, the additional 
dataset $\mathbb{D}^\rA$, step size $\eta,\alpha$
\State Initialize  $\{\widehat{Q}_{h}^{0}\}_{h\in[H]}$ as zero functions,  $\{\pi_{h}^{0}\}_{h\in[H]}$ as
uniform distribution, $\mu^1 = \{\mu_h^1\}_{h\in [H]}$ as zero vectors, and $\{\widehat{V}_{H+1}^{k}\}_{k\in \{0,1,\ldots, K\}}$ as zero functions over $\cS$.
\State Construct $\{\widehat{\cP}_h\}_{h\in [H]}$ and $\{\Gamma_{h} \}_{h\in [H]}$  from $\D^\rA$ via \eqref{eq:def:estimation-kernel} and \eqref{eq:xi-uncertainty}, respectively. \algorithmiccomment{Initial Construction}\label{line:pp6}
\For{$k=1,\ldots,K$}
\For{$h=1,\ldots,H$} \algorithmiccomment{Policy Improvement}\label{line:pp1}
\State $\pi_{h}^{k}(\cdot \given \cdot) \propto \pi_{h}^{k-1}(\cdot\given\cdot) \exp \{\alpha \cdot \widehat{Q}_{h}^{k-1}(\cdot, \cdot)\}$.  
\EndFor \label{line:pp3}
\For{$h=H,\ldots,1$} \algorithmiccomment{Policy Evaluation}\label{line:pp4}
\State $\widehat{Q}_{h}^{k}(\cdot, \cdot)\leftarrow \max \{({r}_h^k+\widehat{\mathcal{P}}_{h} \widehat{V}_{h+1}^{k}-\Gamma_{h})(\cdot, \cdot),0\}$. \label{line:pp5}
\State $\widehat{V}_{h}^{k}(\cdot) \leftarrow\langle\widehat{Q}_{h}^{k}(\cdot, \cdot) ,\pi_{h}^k(\cdot \given \cdot)\rangle_{\mathcal{A}}.$
 \EndFor \label{line:pp2}
 \State Construct $\{\nabla_{\mu_h}\Tilde{J}(\pi^{\mathrm{E}},r^\mu)\}_{h\in[H]}$  via \eqref{eq:MC_estimation-G}. \label{line:p-r}
 \algorithmiccomment{Reward Update}
 \State \text{Construct} $\{{\nabla}_{\mu_h}\widehat{J}(\pi^k,r^\mu)\}_{h\in[H]}$ via Proposition \ref{prop:grad}.
 \label{eq:grad-estimate}   
 \For{$h=1,\ldots,H$}
 \State 
 $ {\nabla}_{\mu_h}\widehat{L}(\pi^k,\mu^k)\leftarrow\nabla_{\mu_h}\tilde{J}(\pi^{\mathrm{E}},r^{\mu})\given_{\mu = \mu^k}-{\nabla}_{\mu_h}\widehat{J}(\pi^k,r^\mu)\given_{\mu=\mu^k}.$
 \State 
  $\mu_{h}^{k+1} \leftarrow \operatorname{Proj}_{B}[\mu_{h}^{k}+\eta {\nabla}_{\mu_{h}} \widehat{L}(\pi^{k}, \mu^{k})].$ \label{line:pp-proj}
 \EndFor\label{line:p-r2}
\EndFor
\State \textbf{Output:} $\widehat{\pi} = {\rm Unif}(\{\pi^k\}_{k\in[K]})$.
\end{algorithmic}
\end{algorithm}

\subsubsection{Initial Construction Stage}\label{offline:construction}
In Line \ref{line:pp6} of PGAP, we construct estimated transition kernels $\{\widehat{\cP}_h\}_{h\in[H]}$ and uncertainty quantifiers $\{\Gamma_{h} \}_{h\in[H]}$ via the additional dataset $\D^{\rm A}$. 
Before we detail such construct, we first introduce the following definition of uncertainty quantifiers \citep{pessimismRl-Jinchi} with the confidence parameter $\xi\in(0,1)$, which quantifies the uncertainty.

\begin{definition}[$\xi$-Uncertainty Quantifier]\label{def:xi-uncertainty}
We say $\{\Gamma_{h}\}_{h\in[H]}$ with $\Gamma_{h}: \mathcal{S} \times \mathcal{A} \to \mathbb{R}$ are $\xi$-uncertainty quantifiers for estimated kernels $\widehat{\cP} = \{\widehat{\cP}_h\}_{h\in [H]}$ with respect to $\mathbb{P}_{\D}$ if the event
\$
\mathcal{E}=&\big\{ |\widehat{\cP}_h\widehat{V}(s,a)-{\cP}_h\widehat{V}(s,a)|\le \Gamma_h(s,a) 
\text{ for any } (s,a,h)\in\cS\times\cA\times[H] \text{ and any } \widehat{V}: \cS\rightarrow [0,H\sqrt{d}\ ]\big\}
\$
satisfies $\mathbb{P}_{\D}(\mathcal{E}) \geq 1-\xi/2$. Here $\mathbb{P}_{\D} $ is with respect to the joint distribution of $\ \mathbb{D}^\rA\cup\mathbb{D}^\rE$.
\end{definition}

We remark that the $\xi$-uncertainty quantifiers in Definition \ref{def:xi-uncertainty} is a counterpart of the bonus functions in OGAP. 
Recalling that $|r_h(s,a)|\le\sqrt{d}$ for any $(s,a,h)\in\cS\times\cA\times[H]$, Definition \ref{def:xi-uncertainty} implies that with probability at least $1-\xi/2$, 
the deviation between the true Bellman equation in \eqref{eq:Bellman} with $\cP$ and the estimated Bellman equation in \eqref{eq:Bellman} with $\widehat{\cP}$ is upper bounded by the $\xi$-uncertainty quantifier $\{\Gamma_h\}_{h \in [H]}$.

Now, we construct the estimated transition kernels and uncertainty quantifiers as follows. 
Specifically, we first construct the initial estimated transition kernels $\tilde{\cP} = \{\tilde{\mathcal{P}}_{h}\}_{h\in[H]}$ as
\#\label{eq:def-estimation-kernel-initial}
\tilde{\mathcal{P}}_{h}(s^\prime\given s,a)=\phi  (s, a, s^\prime )^{\top} \tilde{\theta}_{h} \quad \text{for any } (s,a,s^\prime,h)\in\cS\times\cA\times\cS\times [H],
\#
where $\widetilde{\theta}_{h}$ is defined as follows, 
\#\label{eq:regid-regress}
\tilde{\theta}_{h}=\underset{\theta \in \mathbb{R}^{d}}{\operatorname{argmin}} \sum_{\tau=1}^{N_2} \int_{\mathcal{S}}  \big|\phi  (s_{h}^{\tau}, a_{h}^{\tau}, s^{\prime} )^{\top} \theta-\delta_{s_{h+1}^{\tau}}  (s^{\prime} )\big|^{2} \mathrm{~d} s^{\prime}+\lambda\|\theta\|_{2}^{2}.
\#
Here $\lambda > 0$ is the regularization parameter and $\delta_x(y)$ is Dirac function. By solving \eqref{eq:regid-regress}, we obtain the closed-form solution of $\tilde{\theta}_{h}$ as follows, 
\#\label{eq:result-regression  of thetaand Lambda}
\tilde{\theta}_{h} =\Lambda_{h}^{-1} \sum_{\tau=1}^{N_2} \phi  (s_{h}^{\tau}, a_{h}^{\tau}, s_{h+1}^{\tau} ), 
\text {\quad where } \Lambda_{h} =\sum_{\tau=1}^{N_2} \int_{\mathcal{S}} \phi  (s_{h}^{\tau}, a_{h}^{\tau}, s^{\prime} ) \phi  (s_{h}^{\tau}, a_{h}^{\tau}, s^{\prime} )^{\top} \mathrm{d} s^{\prime}+\lambda I.
\#
Given \eqref{eq:def-estimation-kernel-initial} and \eqref{eq:result-regression  of thetaand Lambda}, we further construct $\xi$-uncertainty quantifiers $\{\Gamma_h\}_{h\in[H]}$ as follows, 
\#\label{eq:xi-uncertainty}
\Gamma_h (s,a)= H\sqrt{d}\int_\cS \Gamma_h^\cP(s,a,s^\prime)ds^\prime ,
\text{\quad where }\Gamma_{h}^{\mathcal{P}}  (s, a, s^{\prime} ) =\min  \big \{\kappa \cdot  \|\phi  (s, a, s^{\prime} ) \|_{\Lambda_{h}^{-1}}, 1 \big\} .
\#
Here $\kappa>0$ is a scaling parameter. 
In the following lemma, we show that $\{\Gamma_h\}_{h\in [H]}$ in \eqref{eq:xi-uncertainty} are $\xi$-uncertainty qualifiers for $\tilde{\cP}$ in \eqref{eq:def-estimation-kernel-initial} if $\kappa$ is properly chosen.

\begin{lemma}\label{lem:xi-uncertainty}
In (\ref{eq:xi-uncertainty}), we set $\lambda = 1$ and $\kappa = c \cdot R \sqrt{d \log (d H N_2/\xi)}$, where $c>0$ is an absolute constant and $\xi \in(0,1)$ is the confidence parameter. 
Under Assumption \ref{def:linear}, $\{\Gamma_{h}\}_{h\in[H]}$ in \eqref{eq:xi-uncertainty} are $\xi$-uncertainty qualifiers for $\tilde{\cP}$, defined in Definition \ref{def:xi-uncertainty}.
\end{lemma}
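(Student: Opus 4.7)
Fix any $(s,a,h)\in\cS\times\cA\times[H]$ and any $\widehat V:\cS\to[0,H\sqrt d]$. Since both $\cP_h(\cdot\,|\,s,a)$ and $\widehat\cP_h(\cdot\,|\,s,a)$ are linear in $\phi(s,a,\cdot)$, the deviation factors cleanly through the regression error $\widetilde\theta_h-\theta_h$:
\#\label{eq:plan-1}
\widehat\cP_h\widehat V(s,a)-\cP_h\widehat V(s,a)=\int_\cS\phi(s,a,s')^\top(\widetilde\theta_h-\theta_h)\,\widehat V(s')\,ds'.
\#
Bounding $|\widehat V|\le H\sqrt d$ and applying Cauchy--Schwarz pointwise in $s'$ in the $\Lambda_h$-geometry reduces the task to showing $\|\widetilde\theta_h-\theta_h\|_{\Lambda_h}\le\kappa$ on an event of probability at least $1-\xi/2$, uniformly over $h\in[H]$.

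The first step is an exact residual decomposition. Substituting the identity $\int\phi(s_h^\tau,a_h^\tau,s')\phi(s_h^\tau,a_h^\tau,s')^\top\theta_h\,ds'=\mathbb E_{s'\sim\cP_h(\cdot\,|\,s_h^\tau,a_h^\tau)}[\phi(s_h^\tau,a_h^\tau,s')]$ (which uses only that $\cP_h$ is a linear density) into the definition of $\Lambda_h\theta_h$ and combining with \eqref{eq:result-regression  of thetaand Lambda} yields
\#\label{eq:plan-2}
\widetilde\theta_h-\theta_h=\Lambda_h^{-1}M_h-\lambda\Lambda_h^{-1}\theta_h,\qquad M_h=\sum_{\tau=1}^{N_2}\epsilon_{h,\tau},
\#
with $\epsilon_{h,\tau}=\phi(s_h^\tau,a_h^\tau,s_{h+1}^\tau)-\mathbb E_{s'\sim\cP_h(\cdot\,|\,s_h^\tau,a_h^\tau)}[\phi(s_h^\tau,a_h^\tau,s')]$ a martingale difference adapted to the filtration $\{\mathcal F_{h,\tau}\}$ of $\D^\rA$ introduced in \S\ref{sec:ORIL}. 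The bias piece is deterministic, $\|\lambda\Lambda_h^{-1}\theta_h\|_{\Lambda_h}\le\sqrt{\lambda}\|\theta_h\|_2\le\sqrt{\lambda d}$, so everything reduces to controlling the martingale norm $\|M_h\|_{\Lambda_h^{-1}}$.

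This is the step I expect to be the main obstacle. The standard vector-valued self-normalized bound of Abbasi-Yadkori, P\'al, and Szepesv\'ari (2011) applied with only the worst-case envelope $\|\phi(s_h^\tau,a_h^\tau,s_{h+1}^\tau)\|_2\le R\sqrt d$ would produce $\|M_h\|_{\Lambda_h^{-1}}\le O(Rd\sqrt{\log(\cdot)})$ and thus a $\sqrt d$-loose $\kappa$. To recover the advertised $\kappa=cR\sqrt{d\log(dHN_2/\xi)}$, I would use the bounded-density clause of Assumption~\ref{def:linear}: for every unit $y\in\mathbb R^d$, $\sup_{s'}|\phi(s,a,s')^\top y|^2\le R^2\int|\phi(s,a,s')^\top y|^2\,ds'=R^2\,y^\top\Sigma_{h,\tau}y$, where $\Sigma_{h,\tau}=\int\phi(s_h^\tau,a_h^\tau,s')\phi(s_h^\tau,a_h^\tau,s')^\top\,ds'$ is exactly the summand of $\Lambda_h$. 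Hence the one-dimensional noise $\epsilon_{h,\tau}^\top y$ is conditionally sub-Gaussian with proxy $2R\sqrt{y^\top\Sigma_{h,\tau}y}$, calibrated to the very Gram matrix that sums to $\Lambda_h$. Combined with the determinant-trace bound $\log\det(\Lambda_h/\lambda)\le d\log(1+N_2/\lambda)$, the self-normalized inequality then gives $\|M_h\|_{\Lambda_h^{-1}}\le CR\sqrt{d\log(dHN_2/\xi)}$ with probability at least $1-\xi/(2H)$. A union bound over $h\in[H]$ produces $\|\widetilde\theta_h-\theta_h\|_{\Lambda_h}\le\kappa$ uniformly on an event of probability at least $1-\xi/2$ for the prescribed absolute constant $c$.

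On that event, feeding the Cauchy--Schwarz estimate into \eqref{eq:plan-1}--\eqref{eq:plan-2} together with the trivial comparison $|\phi(s,a,s')^\top(\widetilde\theta_h-\theta_h)|\le 1$ (which accommodates the truncation $\wedge\,1$ inside $\Gamma_h^\cP$ precisely when $\kappa\|\phi(s,a,s')\|_{\Lambda_h^{-1}}>1$) gives $|\widehat\cP_h\widehat V(s,a)-\cP_h\widehat V(s,a)|\le H\sqrt d\int_\cS\min\{\kappa\|\phi(s,a,s')\|_{\Lambda_h^{-1}},1\}\,ds'=\Gamma_h(s,a)$, which is exactly Definition~\ref{def:xi-uncertainty} for the prescribed choice of $\lambda$ and $\kappa$, completing the proof.
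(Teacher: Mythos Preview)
Your proposal is correct and follows essentially the same route as the paper: both decompose $\widetilde\theta_h-\theta_h$ into the deterministic bias $-\lambda\Lambda_h^{-1}\theta_h$ and the martingale piece $\Lambda_h^{-1}M_h$, and then control $\|M_h\|_{\Lambda_h^{-1}}$ via a self-normalized concentration argument that exploits the boundedness clause of Assumption~\ref{def:linear} to obtain the $R$-scaled sub-Gaussian proxy---the paper packages this step as a function-valued self-normalized inequality (Lemma~\ref{lem:appendix:Concentration of Self-Normalized Function-Valued Process}), which is exactly your direction-calibrated sub-Gaussian framing in different notation. One minor caveat: the ``trivial comparison'' $|\phi(s,a,s')^\top(\widetilde\theta_h-\theta_h)|\le 1$ you invoke to accommodate the $\min\{\cdot,1\}$ truncation inside $\Gamma_h^{\cP}$ is not actually obvious, though the paper's own proof likewise stops at the bound $\kappa\|\phi(s,a,s')\|_{\Lambda_h^{-1}}$ without addressing the truncation.
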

\begin{proof}
See Appendix \ref{pf:lem:xi-uncertainty} for a detailed proof.
\end{proof}
However, given $(s,a)\in\cS\times\cA$,  the initial estimated transition kernels $\tilde{\cP}_h(\cdot\given s,a)$ in \eqref{eq:def-estimation-kernel-initial} is not guaranteed to lie within $\Delta(\cS)$. 
 Different from OGAP, we are incapable to update reward functions based on the newly sampled trajectory in offline GAIL. This difference is crucial to the analysis of PGAP (Theorem \ref{thm:gap}), which relies on the fact that estimated GAIL objective function $\widehat{L}(\pi,\mu)$ defined in \eqref{eq:def-Lhat} is concave for $\mu$ (we prove it in Lemma \ref{lem:concave}).
To address this issue, we define a feasible estimation parameter domain $\Theta$ and choose the estimated transition kernel parameter $\widehat{\theta}$ from the feasible domain $\Theta$ \citep{Gu-probility-measure}. Formally, we take $\Theta = \Theta_1\cap \Theta_2$ with $\Theta_1$ and $\Theta_2$ defined as follows,
\#\label{eq:def-theta-1}
& \Theta_1 = \big\{\widehat{\theta} \colon \text{if $\cE$ holds, then it satisfies that }  |\widehat{\cP}_h\widehat{V}(s,a)-\tilde{\cP}_h\widehat{V}(s,a)|\le \Gamma_h(s,a) \notag \\
&\qquad\qquad \text{ for any } (s,a,h)\in\cS\times\cA\times[H] \text{ and any } \widehat{V}: \cS\rightarrow [0,H\sqrt{d}]\big\},\\
& \Theta_2 = \big\{\widehat{\theta} \colon  \tilde{\cP}_h (\cdot\given s,a)\in \Delta(\cS) \text{ for any } (s,a,h)\in\cS\times\cA\times[H] \big\}, \notag 
\#
where $\mathcal{E}$ and $\{\widetilde \cP_h\}_{h\in [H]}$ are defined in Definition \ref{def:xi-uncertainty} and \eqref{eq:def-estimation-kernel-initial}, respectively. 
We remark that under Assumption \ref{def:linear}, the true transition kernel parameter $\theta = \{\theta_h\}_{h\in [H]}$ lies within the feasible estimation parameter domain $\Theta$, which implies that $\Theta$ is not empty. Thus, similar to \eqref{eq:regid-regress} but enforcing the estimated transition kernel parameter to lie within $\Theta$, we define $\widehat \theta = \{\widehat \theta_h\}_{h\in [H]}$ as follows, 
\#\label{eq:regid-regress-hat}
\widehat{\theta}_{h}=\underset{\theta \in \Theta}{\operatorname{argmin}} \sum_{\tau=1}^{N_2} \int_{\mathcal{S}}  \big|\phi  (s_{h}^{\tau}, a_{h}^{\tau}, s^{\prime} )^{\top} \theta-\delta_{s_{h+1}^{\tau}}  (s^{\prime} )\big|^{2} \mathrm{~d} s^{\prime}+\lambda\|\theta\|_{2}^{2}, 
\#
where the minimization is taken over $\Theta$. Similarly, we construct the estimated transition kernel  $\widehat{\cP}=\{\widehat{\cP}_h\}_{h\in[H]}$ as follows,
\#\label{eq:def:estimation-kernel}
\widehat{\cP}_h  (s^\prime\given s,a)=\phi  (s, a, s^\prime )^{\top} \widehat{\theta}_{h} \quad \text{for any } (s,a,s^\prime,h)\in\cS\times\cA\times\cS\times [H],
\#
where $\widehat{\theta}_{h}$ is defined in \eqref{eq:regid-regress-hat}.


\subsubsection{Policy Update Stage}
As a pessimistic variant of OGAP in Algorithm \ref{alg:gail-online}, the policy update stage of PGAP (Lines \ref{line:pp1}--\ref{line:pp2} of Algorithm \ref{alg:gail-offline}) consists of two steps: (i) policy improvement (Lines \ref{line:pp1}--\ref{line:pp3}) and (ii) policy evaluation (Lines \ref{line:pp4}--\ref{line:pp2}). In the stage of policy improvement, we adopt the same idea as in OGAP, which employs mirror descent to update the current policy. In the stage of policy evaluation, instead of the optimism principle, we employ the pessimism principle to construct the estimated action-value functions via the additional dataset $\D^\rA$, which is not assumed to be well-explored as specified later in \S\ref{section:main:1}. The principle pessimism-in-face-of-uncertainty guides the agent to be conservative to visit the states and
actions that are less covered by the additional dataset $\mathbb{D}^\mathrm{A}$ \citep{CQL,pessimismRl-Jinchi,PessFQI,PessOfflinPOLICY,PessOfflinPOLICY2,importanceofpess}. Specifically, we construct the estimated action-value functions as follow, 
\$
\widehat{Q}_{h}^{k}(\cdot, \cdot) = \max \bigl\{({r}_h^k+\widehat{\mathcal{P}}_{h} \widehat{V}_{h+1}^{k}-\Gamma_{h})(\cdot, \cdot),0 \bigr\}, 
\$
where  $\{\widehat{\cP}_h\}_{h\in[H]}$ are the estimated transition kernels and $\{\Gamma_{h} \}_{h\in[H]}$ are the uncertainty quantifiers defined in Line \ref{line:pp6} of PGAP. 


\subsubsection{Reward Update Stage}
Similar to the  reward update stage of OGAP, in the reward update stage of PGAP, we update the reward parameter as follows,
\#\label{eq:update-r-p}
\mu_{h}^{k+1} = \operatorname{Proj}_{B}  \{\mu_{h}^{k} + \eta \widehat {\nabla}_{\mu_{h}} L (\pi^{k}, \mu^{k} ) \}.
\#
Here $\eta$ is the stepsize, $\widehat {\nabla}_{\mu_{h}} L (\pi^{k}, \mu^{k} )$ is an estimator of ${\nabla}_{\mu_{h}} L (\pi^{k}, \mu^{k} )$,  and ${\rm Proj}\colon \mathbb{R}^{d} \to B$ is the projection operator to restrict the updated reward parameter $\mu_h^{k+1}$ within the ball $B$ for any $h \in [H]$. Here $B$ is defined in \eqref{eq:def:parameter domain}. To achieve \eqref{eq:update-r-p}, we also need to obtain the estimated gradient $\widehat \nabla_{\mu_h} L(\pi^k,\mu^k)$ in \eqref{eq:update-r-p}.
However, since the agent in offline GAIL cannot interact with the environment to collect the state-action pairs following current policy $\pi^k$, the estimator in \eqref{eq:MC_estimation-G} for OGAP is not applicable to PGAP. Instead, we construct an estimator $\widehat{L}(\pi^k,\mu^k)$ for $ L (\pi^{k}, \mu^{k})$ and use its gradient ${\nabla}_{\mu_{h}} \widehat L (\pi^{k}, \mu^{k} )$ to estimate ${\nabla}_{\mu_{h}} L (\pi^{k}, \mu^{k} )$. Specifically, we define the estimator $\widehat{L}(\pi^k,\mu^k)$ as
\#\label{eq:def-Lhat}
\widehat{L}(\pi^k,\mu^k) = \tilde{J}(\pi^\rE,r^k) - \widehat{J}(\pi^k,r^k),
\#
where $\tilde{J}(\pi^\rE,r^k)$ is a MC estimator of ${J}(\pi^\rE,r^k)$. Here we estimate $J(\pi^k,r^{k})$ with $\widehat{J}(\pi^k,r^k)=\widehat{V}_{1}^{k}(x)$, which is constructed in Line \ref{line:pp5} of PGAP.
Based on \eqref{eq:def-Lhat}, we construct an estimator $\widehat {\nabla}_{\mu_{h}} L (\pi^{k}, \mu^{k} )$ of $ {\nabla}_{\mu_{h}} L (\pi^{k}, \mu^{k} )$ as follows, 
\#\label{eq:offline-estimator}
\widehat \nabla_{\mu_h}{L}(\pi^k,\mu^k) =  \nabla_{\mu_h}\widehat{L}(\pi^k,\mu^k) = \nabla_{\mu_h}\tilde{J}(\pi^{\mathrm{E}},r^k)-\nabla_{\mu_h}\widehat{J}(\pi^k,r^k),
\#
where $\nabla_{\mu_h} \widehat{J}(\pi^k,r^k)$ is characterized in the following proposition.


\begin{proposition}
   If we define $\{\widehat{Q}_{h}^{k,r^\mu}\}_{h\in[H]}$ and $\{\widehat{V}_{h+1}^{k,r^\mu}\}_{h\in[H]}$ as 
   \begin{equation}
   \begin{aligned}
   \widehat{V}_{H+1}^{k,r^\mu}(\cdot) &= 0,\\
   \widehat{Q}_{h}^{k,r^\mu}(\cdot, \cdot)
   &=\max\big\{({r}^\mu_{h}+\widehat{\mathcal{P}}_{h} \widehat{V}_{h+1}^{k,r^\mu}-\Gamma_{h})(\cdot, \cdot), 0\big\},\\
   \widehat{V}_{h}^{k,r^\mu}(\cdot, \cdot)    
   &=\big\langle\widehat{Q}_{h}^{k,r^\mu}(\cdot, \cdot) ,\pi_{h}^{k}(\cdot \mid \cdot)\big\rangle_{\mathcal{A}},
   \label{eq:def:iota-on-1}
   \end{aligned}
   \end{equation}
   for all $h\in[H]$ and $\mu\in S$.
   It suggests that $\widehat{Q}_h^k = \widehat{Q}_h^{k,r^k}$ and $\widehat{V}_h^k=\widehat{V}_h^{k,r^k}$, where $\widehat{Q}_h^k$ and $\widehat{V}_h^k$ are construted in the policy evaluation stage in PGAP (Algorithm \ref{alg:gail-offline} Lines \ref{line:pp4}--\ref{line:pp2}).
   We can solve $\nabla_{\mu_h} \widehat{V}_{1} ^{k,r^\mu} (x)$ recursively as follows,
   $$
   \nabla_{\mu_h} \widehat{V}_{t} ^{k,r^\mu} (s_{t}) =
   \begin{cases}
       \Big\la \pi^k_h(\cdot\mid s_{t})g_t^k(s_{t},\cdot), \big[\widehat{\cP}_{t}(\nabla_{\mu_h }\widehat{V}_{t+1}^{k,r^\mu})\big](s_{t},\cdot)  \Big\ra_{\cA} &\text{ if }1\le t\le h-1,\\
       \big\la \pi^k_h(\cdot\mid s_{h})g_h^k(s_{h},\cdot), \nabla_{\mu_h }r_h^\mu  (s_h,\cdot)\big\ra_{\cA}&\text{ if }t=h,
   \end{cases}
   $$
where $s_1=x$, $[\widehat{\cP}_h f](s,a)$ is a shorthand of $\int_\cS  f(s^\prime)\widehat{\cP}_h(s^\prime \mid s, a) \mathrm{d}s^\prime$ and  $g_h^k: \cS\times\cA\rightarrow\mathbb{R}$ is defined as
\begin{equation}
   g_h^k(s,a) = \mathbf{1}\big\{\widehat{Q}_{h}^{k,r^\mu}(s,a)>0\big\},
\end{equation}
for all $(s,a)\in\cS\times\cA$. Here $\mathbf{1}\{\cdot\}$ is the indicator function.
\label{prop:grad}
\end{proposition}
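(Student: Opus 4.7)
The plan is to identify the PGAP value iterates with the parametric recursion and then apply the chain rule backwards in time to the nested definition of $\widehat{V}_1^{k,r^\mu}(x)$. First I would verify the identification $\widehat{Q}_h^k = \widehat{Q}_h^{k,r^k}$ and $\widehat{V}_h^k = \widehat{V}_h^{k,r^k}$: plugging $\mu = \mu^k$ into \eqref{eq:def:iota-on-1} makes that recursion coincide literally with Lines \ref{line:pp4}--\ref{line:pp2} of Algorithm \ref{alg:gail-offline}, so by backward induction on $t$ from $H+1$ down to $1$ the two sequences agree. This reduces the claim to computing $\nabla_{\mu_h} \widehat{V}_1^{k,r^\mu}(x)$ from \eqref{eq:def:iota-on-1} alone.

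The next step is a structural observation: because $\widehat{V}_t^{k,r^\mu}$ is built out of $\widehat{V}_{t+1}^{k,r^\mu}$, $r_t^\mu$, $\widehat{\cP}_t$, and $\Gamma_t$, and neither $\widehat{\cP}_t$ nor $\Gamma_t$ depends on $\mu$, the function $\widehat{V}_t^{k,r^\mu}$ depends only on $(\mu_t, \mu_{t+1}, \ldots, \mu_H)$. In particular $\nabla_{\mu_h} \widehat{V}_t^{k,r^\mu} \equiv 0$ for every $t \ge h+1$, which is the crucial boundary condition that seeds the backward recursion.

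I would then differentiate. For the base case $t = h$, combining $\widehat{V}_h^{k,r^\mu}(s_h) = \langle \widehat{Q}_h^{k,r^\mu}(s_h,\cdot), \pi_h^k(\cdot \mid s_h)\rangle_{\cA}$ with the definition of $\widehat{Q}_h^{k,r^\mu}$ and the fact that $\nabla_{\mu_h} \widehat{V}_{h+1}^{k,r^\mu} \equiv 0$ localizes the $\mu_h$-dependence of $\widehat{Q}_h^{k,r^\mu}$ entirely inside $r_h^\mu$. Applying the chain rule to the pointwise maximum gives $\nabla_{\mu_h} \widehat{Q}_h^{k,r^\mu}(s_h, a) = g_h^k(s_h, a)\,\nabla_{\mu_h} r_h^\mu(s_h, a)$, and integrating against $\pi_h^k(\cdot \mid s_h)$ produces the $t=h$ formula. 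For $1 \le t \le h-1$, the analogous computation applies: $r_t^\mu$ is independent of $\mu_h$, so the only $\mu_h$-dependence of $\widehat{Q}_t^{k,r^\mu}(s_t, a)$ sits inside $[\widehat{\cP}_t \widehat{V}_{t+1}^{k,r^\mu}](s_t, a)$. An interchange of $\nabla_{\mu_h}$ with the integral operator $\widehat{\cP}_t$, justified by dominated convergence using the uniform bound $\|\widehat{V}_{t+1}^{k,r^\mu}\|_\infty \le (H-t)\sqrt{d}$ together with the linearity of $r_i^\mu$ in $\mu_i$, yields $\nabla_{\mu_h} \widehat{Q}_t^{k,r^\mu}(s_t, a) = g_t^k(s_t, a)\,[\widehat{\cP}_t(\nabla_{\mu_h} \widehat{V}_{t+1}^{k,r^\mu})](s_t, a)$; integrating against $\pi_t^k(\cdot \mid s_t)$ closes the inductive step and gives the recursion in the statement.

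The main technical wrinkle I expect is the non-differentiability of $\max\{\cdot, 0\}$ at the origin, since the identity $\nabla \max\{f, 0\} = \mathbf{1}\{f > 0\}\,\nabla f$ only holds off the level set $\{f = 0\}$. Under the linear parametrization $r_h^\mu(s,a) = \psi(s,a)^\top \mu_h$ this exceptional set has Lebesgue measure zero in $\mu$, so the asserted formula holds almost everywhere in the reward parameter, and at non-differentiable points should be read as a particular selection from the Clarke subdifferential. Because the projected-gradient step in Line \ref{line:pp-proj} of Algorithm \ref{alg:gail-offline} requires only a valid (sub)gradient, this subtlety does not affect the downstream analysis of PGAP.
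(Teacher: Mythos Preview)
Your proposal is correct and follows exactly the approach the paper takes: the paper's proof is the single sentence ``Take gradient toward $\mu_h$ and apply chain rule on \eqref{eq:def:iota-on-1},'' and you have simply unpacked that sentence in full detail (the vanishing of $\nabla_{\mu_h}\widehat{V}_t^{k,r^\mu}$ for $t\ge h+1$, the backward induction, the interchange of $\nabla_{\mu_h}$ with $\widehat{\cP}_t$, and the a.e.\ differentiability of $\max\{\cdot,0\}$). Your treatment of the non-smooth point is more careful than the paper's, which implicitly uses the same indicator convention without comment.
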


\begin{proof}
   Take gradient toward $\mu_h$ and apply chain rule on \eqref{eq:def:iota-on-1}, then we obtain Proposition \ref{prop:grad}.
\end{proof}

    \section{Main Results}
    In this section, we present the theoretical analysis for OGAP and PGAP in \S\ref{section:online-thm} and \S\ref{section:main:1}, respectively. Specifically, in \S\ref{section:online-thm} we upper bound the regret of OGAP. In \S\ref{section:main:1}, we upper bound the optimality gap of PGAP under no coverage assumption and propose a lower bound to show that PGAP achieves minimax optimality in the utilization of the additional dataset $\D^\rA$. Moreover,  under the assumption that the additional dataset $\D^\rA$ has sufficient coverage, we establish the global convergence guarantee for PGAP and discuss how the additional dataset $\D^\rA$ facilitates our policy learning.
    \subsection{Analysis of OGAP }\label{section:online-thm}
    We derive an upper bound of the regret of OGAP in the following theorem.
    \begin{theorem}[Regret of OGAP]\label{thm:online}
    In Algorithm \ref{alg:gail-online}, we set
    \$
    \alpha=\sqrt{2 \log |\mathcal{A}| / (H^2\sqrt{d}K)},\quad \lambda=1 ,\quad \kappa=C \sqrt{d  \log (HdK / \xi)} ,\quad \eta=1 / \sqrt{HK},
    \$
    where $C>0$ is a constant. Under Assumption \ref{def:linear}, it holds with probability at least $1-\xi$ that
    \#\label{eq:reg}
    \textrm{Regret}(K) \leq \cO \big (H^2 d^{3/2}K^{1/2}\log(HdK/\xi)\big)+K \delta_{N_1}
    \#
    where $\delta_{N_1} =\cO(H^{3/2} d N_1^{-1/2}\log(N_1/\xi))$.
    \end{theorem}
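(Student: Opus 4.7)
The plan is to split the regret into a policy-update term and a reward-update term and attack each with the tool native to it: mirror-descent regret plus optimistic linear-MDP concentration for the policies, and online projected gradient ascent on a linear functional plus martingale concentration for the rewards. Let $\mu^\star \in \arg\max_{\mu \in S}\sum_{k=1}^K L(\pi^k,\mu)$, where $L(\pi,\mu) = J(\pi^\rE,r^\mu) - J(\pi,r^\mu)$ is linear in $\mu$ because $r^\mu_h(s,a) = \psi(s,a)^\top \mu_h$. Then
\#
\mathrm{Regret}(K) = \sum_{k=1}^K L(\pi^k,\mu^\star)
= \underbrace{\sum_{k=1}^K\bigl[L(\pi^k,\mu^\star) - L(\pi^k,\mu^k)\bigr]}_{\text{reward-update regret}}
+ \underbrace{\sum_{k=1}^K L(\pi^k,\mu^k)}_{\text{policy-update regret}}. \notag
\#

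For the policy-update regret, I would apply the performance-difference lemma to write $L(\pi^k,\mu^k) = \sum_{h=1}^H \EE_{\pi^\rE}[\langle Q^{r^k}_{h,\pi^k}(s_h,\cdot),\pi^\rE_h(\cdot\given s_h) - \pi^k_h(\cdot\given s_h)\rangle_\cA]$ and insert $\widehat Q^k_h$ in the middle. The optimism-concentration half of the argument proceeds exactly as in OPPO-type analyses for linear kernel MDPs: the ridge estimator $\widehat\theta^k_h$ concentrates around the true $\theta_h$ in the weighted norm $\|\cdot\|_{\Lambda^k_h}$, so with the choice $\kappa = C\sqrt{d\log(HdK/\xi)}$ of Theorem \ref{thm:online} the bonus $\Gamma^k_h$ dominates $|(\widehat\cP^k_h - \cP_h)\widehat V^k_{h+1}|$ on the good event, yielding both $\widehat Q^k_h \ge Q^{r^k}_{h,\pi^k}$ (optimism) and $\widehat Q^k_h - Q^{r^k}_{h,\pi^k} \le 2\sum_{h'\ge h}\Gamma^k_{h'}$ along on-policy rollouts. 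The mirror-descent half uses that $\pi^k_h \propto \pi^{k-1}_h \exp(\alpha \widehat Q^{k-1}_h)$ is exactly KL-regularized exponential weights on the finite(-measure) action simplex at each state, so the standard proximal regret bound gives $\sum_k\langle \widehat Q^k_h(s,\cdot),\pi^\rE_h(\cdot\given s) - \pi^k_h(\cdot\given s)\rangle_\cA \le \alpha^{-1}\log|\cA| + \tfrac{\alpha}{2}\sum_k\|\widehat Q^k_h(s,\cdot)\|_\infty^2$ uniformly in $s$, with $\|\widehat Q^k_h\|_\infty \le H\sqrt d$. Summing over $h$ and $k$ and tuning $\alpha$ as prescribed controls this half by $\tO(H^{3/2} d^{1/4} K^{1/2})$, while the aggregated bonus $\sum_{k,h}\Gamma^k_h(s^k_h,a^k_h)$ along rollouts is bounded via the elliptic potential lemma (Cauchy--Schwarz plus $\det$-telescoping applied to $\{\varphi^k_h\}$) by $\tO(H^2 d^{3/2} K^{1/2})$, which is the dominant term. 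A Azuma--Hoeffding step is needed to pass from on-policy rollout averages to the $\EE_{\pi^\rE}$ outer expectation arising from the performance-difference lemma, contributing a lower-order $\tO(H^{3/2}\sqrt{K})$ martingale term.

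For the reward-update regret, linearity of $L(\pi^k,\cdot)$ gives $L(\pi^k,\mu^\star) - L(\pi^k,\mu^k) = \langle g^k, \mu^\star - \mu^k\rangle$ with $g^k_h = \EE_{\pi^\rE}[\psi] - \EE_{\pi^k}[\psi]$, while OGAP uses the stochastic estimate $\widehat g^k_h = \tfrac{1}{N_1}\sum_\tau \psi(s^\rE_{h,\tau},a^\rE_{h,\tau}) - \psi(s^k_h,a^k_h)$. I would decompose $g^k - \widehat g^k$ into (a) the fixed expert bias $e_h := \tfrac{1}{N_1}\sum_\tau \psi(s^\rE_{h,\tau},a^\rE_{h,\tau}) - \EE_{\pi^\rE}[\psi(s_h,a_h)]$, which is independent of $k$ and whose Euclidean norm is $\tO(\sqrt{d/N_1})$ by vector Hoeffding on $\|\psi\|_2 \le 1$, and (b) the single-rollout deviation $\EE_{\pi^k}[\psi] - \psi(s^k_h,a^k_h)$, which is a bounded martingale difference with respect to the episode filtration. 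Standard analysis of projected online gradient ascent with $\eta = 1/\sqrt{HK}$ on a $B$-bounded parameter set yields an optimization regret of $\tO(H\sqrt{dK})$, Azuma--Hoeffding absorbs (b) into $\tO(H\sqrt{dK})$, and (a) produces a systematic contribution $\sum_k\sum_h \langle e_h,\mu^\star - \mu^k\rangle$ bounded by $K \cdot \tO(H\sqrt d \cdot \sqrt{d/N_1}) = K\delta_{N_1}$, matching the stated $\tO(H^{3/2} d N_1^{-1/2})$ up to the factor coming from the $H\sqrt d$ range of the reward. The main obstacle is the coupling between the two regrets: the value estimates $\widehat Q^k_h$ driving mirror descent are computed under the \emph{changing} reward $r^k$ rather than a fixed reward, so the usual linear-MDP regret bounds do not apply as a black box. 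I would resolve this by mirroring the adversarial-MDP framework: the bonus $\Gamma^k_h$ is calibrated from the transition concentration alone and is oblivious to $r^k$, and the uniform bound $\|r^k\|_\infty \le \sqrt d$ lets the cumulative-bonus step absorb the reward-dependence with only the $\sqrt d$ inflation visible in $H^2 d^{3/2}$.
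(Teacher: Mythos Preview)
Your proposal is correct and follows essentially the same route as the paper: the identical policy/reward regret split, OPPO-style optimism plus elliptic potential and mirror-descent regret for the policy side, and projected online gradient ascent plus Azuma and Monte-Carlo error for the reward side. The one refinement worth flagging is that inserting $\widehat Q^k_h$ directly into the performance-difference identity leaves a residual $\sum_h\EE_{\pi^\rE}[\langle Q^{r^k}_{h,\pi^k}-\widehat Q^k_h,\pi^\rE_h-\pi^k_h\rangle]$ whose sign optimism alone does not control (since $\pi^\rE_h-\pi^k_h$ has no sign); the paper instead routes through the intermediate $\widehat V^k_1$ (its regret-decomposition Lemma~\ref{lem4.2}), which cleanly separates the $\EE_{\pi^\rE}[\iota^k_h]\le 0$ piece handled by optimism from the on-rollout bonus $-\iota^k_h(s^k_h,a^k_h)\le 2\Gamma^k_h(s^k_h,a^k_h)$ and the martingale you correctly anticipate.
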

    \begin{proof}
    See \S \ref{pf.sketch-online} for a proof sketch.
    \end{proof}
    The first term on the right-hand side of \eqref{eq:reg} scales with ${K}^{1/2}$, which attains the optimal dependency on $K$ for online RL. The second term on the right-hand side of \eqref{eq:reg} is linear in $K$ and depends on the MC estimation error $\delta_{N_1}$. As the statistical error from the MC estimation, the error term $\delta_{N_1}$ is inevitable and independent of GAIL algorithm, since we cannot access the expert policy but expert demonstration with $N_1$ trajectories. 
    When the number of trajectories $N_1$ in the expert demonstration is sufficiently large such that $N_1 = \Omega(K)$, the first term on the right-hand side of \eqref{eq:reg} dominates the regret upper bound so that the regret of OGAP scales with  $K^{1/2}$. The dependency of $H,K,$ and $N_1$ correspond to $\tilde{\cO}(H^2 |\cS| |\cA|^{1/2} K^{1/2}+H^{3/2} |\cS|^{1/2}|\cA|^{1/2}KN_1^{-1/2} )$ regret in the tabular case, established by \cite{Online-Apprenticeship-Learning}. If we consider the case $K=\Omega(N_1^{3/2})$, then the average regret decays at a rate of $N_1^{-1/2}$ and the dependency for $H$ turns from $H^2$ into $H^{3/2}$. As $K$ and $N_1$ both tend to infinity, the average regret also shrinks to zero, meaning that the output policy has the same performance on average with the expert policy with respect to the linear reward set $\mathcal{R}$.   
    
    \vskip5pt
    \noindent\textbf{Relationship with Online RL}
    According to Assumption \ref{def:linear}, if we constrain the reward set $\mathcal{R}$ to a fixed reward function $r=\{r_h\}_{h\in[H]}$, then GAIL \eqref{eq:min-max} is reduced to RL, with respect to an episodic MDP $(\mathcal{S},\mathcal{A},H,\mathcal{P},r)$, where $\mathcal{S},\mathcal{A},H,\mathcal{P}$ are the same as the ones in Assumption \ref{def:linear}. Hence OGAP can also be considered as RL algorithm for episodic MDP with linear function approximation. From the aspect for information-theory, the lower bound of the regret of any online RL algorithm is $\tilde{\cO}(K)$ even in tabular case \citep{ProvablyefficientQlearning}. Since the reward set is singleton, OGAP needs not MC estimation and has regret $\tilde{\cO}(K)$, achieving such lower bound and revealing minimax optimality.

    \subsection{Analysis of PGAP}\label{section:main:1}
    
    We upper bound the optimality gap of PGAP in the following theorem.
    \begin{theorem}
    \label{thm:gap}
    (Optimalty Gap of PGAP). In Algorithm \ref{alg:gail-offline}, we set 
    $$
    \lambda=1, \quad \kappa=c R \sqrt{d \log (HdK)/\xi},\quad
    \alpha=\sqrt{2\log ( \operatorname{vol}(\mathcal{A}))/(H^2\sqrt{d}K)},\quad \eta = 1/\sqrt{HK},
    $$
    where $c>0$ is a constant. Under Assumption \ref{def:linear}, 
    $\{\Gamma_h\}_{h=1}^H$ constructed in \S\ref{offline:construction} are $\xi$-uncertainty qualifiers defined in Definition \ref{def:xi-uncertainty}. 
    it holds with probability at least $1-\xi$ that
    \#\label{eq:gap}
    \mathbf{D_{\cR}(\pi^\mathrm{E},\widehat{\pi}}) \leq \cO\big(H^2dK^{-1/2}) +\delta_{N_1} + \text { IntUncert}_{\mathbb{D}^\mathrm{A}}^{\pi^{\mathrm{E}}},
    \#
    where $\widehat{\pi}$ is the output policy of Algorithm \ref{alg:gail-offline},  $\text {IntUncert}_{\mathbb{D}^\mathrm{A}}^{\pi^{\mathrm{E}}}=2 \sum_{h=1}^{H}\mathbb{E}_{\pi^{\mathrm{E}}}  [\Gamma_{h}  (s_{h}, a_{h} ) \mid s_1 =x ]$, 
    and $\delta_{N_1} =\cO(H^{3/2} d N_1^{-1/2}\log(N_1/\xi))$.
    \end{theorem}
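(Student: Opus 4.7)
Since $\widehat{\pi}$ is the uniform mixture of $\{\pi^k\}_{k\in[K]}$, we have $J(\widehat\pi,r^\mu)=K^{-1}\sum_k J(\pi^k,r^\mu)$, so the optimality gap reduces to $\mathbf{D}_\cR(\pi^\rE,\widehat\pi)=\max_{\mu\in S} K^{-1}\sum_{k=1}^K L(\pi^k,\mu)$. My plan is to bound this maximum by three pieces matching the three summands in \eqref{eq:gap}: an optimization term $\cO(H^2 d K^{-1/2})$ from the no-regret analyses of mirror descent (policy side) and projected gradient ascent (reward side); the statistical error $\delta_{N_1}$ from replacing $J(\pi^\rE,r^\mu)$ by its expert-demonstration estimate $\widetilde J(\pi^\rE,r^\mu)$; and the intrinsic uncertainty $\text{IntUncert}_{\D^\rA}^{\pi^\rE}$ from the pessimistic underestimation of the expert value under the estimated kernel $\widehat\cP$.

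The first step is to pass from $L$ to the algorithmic surrogate $\widehat L(\pi^k,\mu)=\widetilde J(\pi^\rE,r^\mu)-\widehat J(\pi^k,r^\mu)$. A uniform MC bound (Hoeffding plus an $\varepsilon$-covering of $S$) gives $|\widetilde J(\pi^\rE,r^\mu)-J(\pi^\rE,r^\mu)|\le \delta_{N_1}$ uniformly in $\mu$, while the pessimism enforced by Line~\ref{line:pp5} together with Lemma~\ref{lem:xi-uncertainty} gives $\widehat J(\pi^k,r^\mu)\le J(\pi^k,r^\mu)$ on the good event $\mathcal{E}$; combined, $L(\pi^k,\mu)\le \widehat L(\pi^k,\mu)+\delta_{N_1}$. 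Letting $\mu^\star$ attain the outer maximum, I would split
$$
\sum_{k=1}^K\widehat L(\pi^k,\mu^\star) = \underbrace{\sum_{k=1}^K\bigl[\widehat L(\pi^k,\mu^\star)-\widehat L(\pi^k,\mu^k)\bigr]}_{(\mathrm{A})}+\underbrace{\sum_{k=1}^K\widehat L(\pi^k,\mu^k)}_{(\mathrm{B})}.
$$
By Lemma~\ref{lem:concave}, $\widehat L(\pi^k,\cdot)$ is concave in $\mu$ (which is precisely why $\Theta$ in \eqref{eq:def-theta-1} restricts $\widehat\cP_h(\cdot\given s,a)$ to be a probability measure), so projected gradient ascent with $\eta=1/\sqrt{HK}$ and the uniform gradient bound induced by $\|\psi\|_2\le 1$ yields $(\mathrm{A})\le\cO(H\sqrt{dK})$. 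For (B), a telescoping value-difference argument in the spirit of \cite{pessimismRl-Jinchi} gives $J(\pi^\rE,r^k)-\widehat V_1^{k,r^k}(\pi^\rE;x)\le 2\sum_h\EE_{\pi^\rE}[\Gamma_h(s_h,a_h)]$ on $\mathcal{E}$, where $\widehat V^{k,r^k}(\pi^\rE;\cdot)$ denotes the pessimistic value of $\pi^\rE$ under $r^k$. Adding and subtracting this quantity writes $\widehat L(\pi^k,\mu^k)$ as $\delta_{N_1}+\tfrac12\text{IntUncert}_{\D^\rA}^{\pi^\rE}+[\widehat V_1^{k,r^k}(\pi^\rE;x)-\widehat V_1^{k,r^k}(\pi^k;x)]$.

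The residual $\sum_k[\widehat V_1^{k,r^k}(\pi^\rE;x)-\widehat V_1^{k,r^k}(\pi^k;x)]$ is the regret of the exponential-weights update in Line~\ref{line:pp1}. Because that update uses $\widehat Q^{k-1}$ rather than $\widehat Q^k$, I would apply a one-step-delay argument: add and subtract $\widehat V_1^{k-1,r^{k-1}}(\pi^\rE;x)-\widehat V_1^{k-1,r^{k-1}}(\pi^k;x)$, control the drift $|\widehat V_1^{k,r^k}-\widehat V_1^{k-1,r^{k-1}}|$ via $\|\mu^k-\mu^{k-1}\|_2\le \eta\,\|\widehat\nabla_\mu\widehat L\|_2$, and handle the aligned middle term via the standard KL-regularized mirror-descent bound, as in \cite{OPPO}, yielding $\cO(H^2\sqrt{dK\log\operatorname{vol}(\cA)})$. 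Collecting the four pieces and dividing by $K$ reproduces \eqref{eq:gap}. The hard part is simultaneously enforcing (i) the pessimism inequality, (ii) concavity of $\widehat L(\pi^k,\cdot)$, and (iii) a well-defined transition kernel: the truncation $\max\{\cdot,0\}$ in Line~\ref{line:pp5} threatens concavity, and without restriction to $\Theta$ the estimated kernel need not integrate to one, which would wreck both the pessimism inequality and the value-difference decomposition. Once this structural reconciliation is in place, the quantitative bottleneck is the uniform-in-$\mu$ intrinsic-uncertainty bound for $\widehat V^{k,r^\mu}(\pi^\rE;x)$, which is what drives the $H^2 d$ prefactor in the optimization error.
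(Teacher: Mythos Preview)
Your high-level plan and the list of ingredients (MC error for $\delta_{N_1}$, pessimism giving $\widehat J(\pi^k,r^\mu)\le J(\pi^k,r^\mu)$, concavity of $\widehat L$ for the projected-gradient-ascent regret, mirror descent for the policy side) are all correct and match the paper. Your term (A) is handled in the paper exactly as you propose, via Lemma~\ref{lem:concave} and Lemma~\ref{lem:telescope-reward-update]}.

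The gap is in your handling of (B). You introduce the auxiliary quantity $\widehat V^{k,r^k}(\pi^\rE;x)$, the pessimistic value \emph{of} $\pi^\rE$, and then assert that the residual $\widehat V^{k,r^k}(\pi^\rE;x)-\widehat V^{k,r^k}(\pi^k;x)$ ``is the regret of the exponential-weights update.'' It is not. The mirror-descent bound (Lemma~\ref{lem:performance_improve}) controls $\sum_h\EE_{\pi^\rE}\bigl[\langle \widehat Q_h^k(s_h,\cdot),\pi_h^\rE-\pi_h^k\rangle\bigr]$ for the \emph{fixed} functions $\widehat Q_h^k$ that the algorithm actually computed using $\pi^k$ in the recursion. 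Your residual instead compares two recursively-defined pessimistic values with \emph{different} policies substituted into Lines~\ref{line:pp4}--\ref{line:pp2}, hence two different families of $\widehat Q$-functions. To reduce it to the inner-product form you would need a performance-difference identity in the estimated MDP $(\widehat\cP,\,r^k-\Gamma)$, but the $\max\{\cdot,0\}$ truncation in Line~\ref{line:pp5} means $\widehat Q_h^k$ is not a Bellman fixed point there, so that identity is unavailable. The proposed ``one-step-delay / drift'' fix is a red herring: the $k-1$ vs.\ $k$ index in the mirror-descent lemma is a trivial shift and is not the obstruction.

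The paper sidesteps this entirely by applying the Extended Value Difference Lemma (Lemma~\ref{lem:extend-V}) directly to $J(\pi^\rE,r^k)-\widehat V_1^k(x)$, where $\widehat V_1^k$ is the algorithm's own value using $\pi^k$. That lemma requires only $\widehat V_h=\langle\widehat Q_h,\pi_h^k\rangle$, not that $\widehat Q_h$ satisfy any Bellman equation, and it delivers \emph{exactly}
\[
\sum_h\EE_{\pi^\rE}\bigl[\langle\widehat Q_h^k,\pi_h^\rE-\pi_h^k\rangle\bigr]\;+\;\sum_h\EE_{\pi^\rE}\bigl[\iota_h^{k,r^k}\bigr],
\]
the first term being the mirror-descent regret and the second being bounded by $\text{IntUncert}_{\D^\rA}^{\pi^\rE}$ via Lemma~\ref{lem:pess}. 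No auxiliary pessimistic value of $\pi^\rE$ is introduced, and the intrinsic-uncertainty bound is invoked only at $\mu=\mu^k$; the uniform-in-$\mu$ pessimism is used solely for the other direction $\widehat J(\pi^k,r^\mu)\le J(\pi^k,r^\mu)$ (the paper's term (B3)), not as the ``quantitative bottleneck'' you describe.
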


    \begin{proof}
    See \S \ref{pf.sketch-1} for a proof sketch.
    \end{proof}

    In Theorem \ref{thm:gap}, the first term on the right-hand side of \eqref{eq:gap} is an optimization error term, which is independent of both expert demonstration $\mathbb{D}^{\mathrm{E}}$ and the additional dataset $\mathbb{D}^\rA$. The optimization error term decays at a rate of $K^{-1/2}$.
    The second term on the right-hand side of \eqref{eq:gap}  is related to the MC estimation error and also occurs in the upper bound of the regret of OGAP as in Theorem \ref{thm:online}. 
    The third term on the right-hand side of \eqref{eq:gap} is an intrinsic error $\text {IntUncert}_{\mathbb{D}^\mathrm{A}}^{\pi^{\mathrm{E}}}$, which arises from the uncetainty of estimating Bellman equation \eqref{eq:Bellman} based on the additional dataset $\mathbb{D}^\rA$.
    In the structure of the intrinsic error $\text {IntUncert}_{\mathbb{D}^\mathrm{A}}^{\pi^{\mathrm{E}}}$, we note that the expectation is taken with respect to the trajectory induced by the expert policy $\pi^{\mathrm{E}}$, which measures the quality of the additional dataset $\mathbb{D}^\rA$ and is irrelevant to the training process. We clarify that the occurrence of $\text {IntUncert}_{\mathbb{D}^\mathrm{A}}^{\pi^{\mathrm{E}}}$ implies that the optimality gap only depends on how well the additional dataset $\mathbb{D}^\rA$ {covers} the trajectories of the expert policy $\pi^{\mathrm{E}}$ and it is not necessary to assume that the additional dataset $\mathbb{D}^\rA$ is well-explored. Hence, Theorem \ref{thm:gap} relies on no assumption on the coverage of the additional dataset $\mathbb{D}^\rA$, such as uniformly lower bound of densities of visitation measures, the behavior policy to be upper bounded uniformly over the state-action space, the concentrability coefficients are uniformly
    upper bounded, or even the partial coverage assumption \citep{uniform3,uniform4,upper-bound-a,upper-bound-b,uniform-covering,offlinepartialcoverage,partial2,partial3,nerual-gail,ErrorBoundINIMITATIONLEARNING}.

    We also highlight that Theorem \ref{thm:gap} can be generalized to the case when the transition kernel is non-linear, only if we explicitly find proper uncertainty quantifiers $\{\Gamma_h\}_{h=1}^H$ satisfying Definition \ref{def:xi-uncertainty} for the estimated transition kernel. 
    
    \vskip5pt
    \noindent\textbf{Pessimism Guarantees Minimax Utilization.} 
    With a well-explored and large enough dataset, the full information about the transition kernel can be extracted by the agent and supports the agent to make correct decision. But when we assert no restriction on the dataset, it is challenging to do the same  because of the distribution shift and extrapolation error on the states
    and actions that are less covered by the dataset. This problem has been studied widely in offline RL \citep{BCQ,CQL,offlineRL2,uniform-covering,pessimismRl-Jinchi} and \cite{limitOffline} even propose that the lower bound of offline RL can grow exponentially with the horizon under linear approximation and no assumption on the dataset. Hence how to cooperate the additional dataset $\D^\mathrm{A}$ to aid the agent to improve the performance in offline GAIL is also difficult since the additional dataset $\D^\mathrm{A}$ is not assumed to be well-explored.
    Inspired by the spirit of being conservative in offline RL \citep{offlineRL2,CQL,pessimismRl-Jinchi}, we propose a pessimistic variant of policy optimization in the policy update stage of PGAP (Lines \ref{line:pp1}--\ref{line:pp2} of Algorithm \ref{alg:gail-offline}), which ensures that PGAP utilize the information of the additional dataset $\mathbb{D}^\mathrm{A}$ in the sense of minimax optimality.
    To illustrate it, we present the following proposition, which is adapted from Theorem 4.7 in \cite{pessimismRl-Jinchi}.
     
    \begin{proposition}[Minimax Optimality in Utilizing Additional Dataset]
        For the output policy $\texttt{Algo}(\mathbb{\overline{D}})$ of any offline algorithm only based on the dataset $\mathbb{\overline{D}}$, there exists a linear kernel MDP $\mathcal{M}\  (\cS,\cA,H,\cP,r)$ with an initial state $x \in\mathcal{S}$, a dataset $\mathbb{\overline{D}}$ compliant with $\mathcal{M}$, and a reward set $\mathcal{R}$  
        , such that
        $$
        \max_{\pi^\mathrm{E}\in\Delta(\cA\mid\cS,H)}\mathbb{E}_{\mathbb{\overline{D}}}\bigg[\frac{\mathbf{D}_\cR(\pi^\mathrm{E},\texttt{Algo}(\mathbb{\overline{D}}))}{\text{Information}_{\mathbb{\overline{D}} }^{\pi^\mathrm{E}}  }\bigg]\ge c,
        $$
        where  $c>0$ is an absolute constant, $\mathbb{E}_{\mathbb{\overline{D}}}[\cdot]$ is taken expectation with respect to randomness of the dataset $\mathbb{\overline{D}}$, and $\text{Information}_{\mathbb{\overline{D}}}^{\pi^\mathrm{E}}$ 
        is defined as 
        $$\text{Information}_{\mathbb{\overline{D}}}^{\pi^\mathrm{E}} =(\text{Vol}(\cS))^{-1}\cdot \mathbb{E}_{\pi^\mathrm{E}}\Big[\sum_{h=1}^H \int_{\cS}\|\phi(s_h,a_h,s^\prime)\|_{\Lambda_h^{-1}}\mathrm{d}s^\prime \Big|\, s_1=x\Big],$$
        \label{prop:minimax}
        where $\Lambda_h$ is only determined by the dataset $\mathbb{\overline{D}}$ and takes the same form as in \eqref{eq:result-regression  of thetaand Lambda}.
    \end{proposition}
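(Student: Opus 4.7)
The plan is to adapt the minimax lower bound construction of Theorem 4.7 in \cite{pessimismRl-Jinchi} from offline RL with linear function approximation to the offline GAIL setting with the linear kernel MDP model used here. The core strategy is to exhibit a single hard instance (a linear kernel MDP $\mathcal{M}$, a compliant dataset $\overline{\D}$, and a reward set $\mathcal{R}$) on which the worst-case expert policy forces any offline algorithm $\texttt{Algo}$ to incur an $\mathcal{R}$-distance at least a constant fraction of $\text{Information}_{\overline{\D}}^{\pi^{\mathrm{E}}}$.

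First, I would reduce the offline GAIL lower bound to an offline RL lower bound. Concretely, restrict $\mathcal{R}$ to a singleton $\{r\}$ (which is consistent with the linear reward model in \eqref{eq:B-mu} by choosing a fixed $\mu$), and choose $\pi^{\mathrm{E}}$ to be an optimal policy for $r$. Then $\mathbf{D}_{\mathcal{R}}(\pi^{\mathrm{E}},\texttt{Algo}(\overline{\D})) = J(\pi^{\mathrm{E}},r) - J(\texttt{Algo}(\overline{\D}), r)$ is exactly the offline RL suboptimality gap with respect to the optimal policy. Thus any lower bound for offline RL on linear kernel MDPs transfers.

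Second, I would import the two-hypothesis construction underlying Theorem 4.7 in \cite{pessimismRl-Jinchi}: a pair of linear kernel MDPs $\mathcal{M}_0, \mathcal{M}_1$ that differ only through the transition parameters $\theta_h$ on a direction of the feature space that is under-represented in $\Lambda_h$ (so the datasets are statistically indistinguishable), combined with a reward $r$ placed so that the optimal action under $\mathcal{M}_0$ is suboptimal under $\mathcal{M}_1$, and vice versa. A standard Le Cam / two-point argument then shows that with constant probability $\texttt{Algo}(\overline{\D})$ picks the wrong branch, and the resulting suboptimality is proportional to the "energy" of the unexplored direction. Taking the outer $\max_{\pi^{\mathrm{E}}}$ amounts to selecting the optimal policy of whichever branch the algorithm does not output, which is always available and produces the constant $c>0$.

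The main obstacle will be the bridge between the informativeness notion that appears in \cite{pessimismRl-Jinchi} (expressed through $\|\phi(s,a)\|_{\Lambda_h^{-1}}$ in the linear MDP setting) and the integral form $(\mathrm{Vol}(\mathcal{S}))^{-1}\int_{\mathcal{S}} \|\phi(s_h,a_h,s')\|_{\Lambda_h^{-1}}\,\mathrm{d}s'$ that defines $\text{Information}_{\overline{\D}}^{\pi^{\mathrm{E}}}$ here. I would resolve this by redoing the two-point KL / testing calculation directly with the linear-kernel feature map $\phi(s,a,s')$: the gap between the two hypotheses in the Bellman operator at $(s_h,a_h)$ is an integral over $s'$ of $\phi(s,a,s')^{\top}\Delta\theta_h$ weighted by the value function, and the maximal such gap that remains indistinguishable from the data is precisely $\int_{\mathcal{S}}\|\phi(s_h,a_h,s')\|_{\Lambda_h^{-1}}\,\mathrm{d}s'$ up to the $\mathrm{Vol}(\mathcal{S})$ normalization arising from Assumption \ref{def:linear} (this is also what makes $\Gamma_h$ in \eqref{eq:xi-uncertainty} take its integrated form, so the matching is natural). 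Once this computation is carried through, summing over $h \in [H]$ and taking expectation along the trajectory of $\pi^{\mathrm{E}}$ yields exactly $\text{Information}_{\overline{\D}}^{\pi^{\mathrm{E}}}$ on the right-hand side, producing the claimed inequality with an absolute constant $c>0$.
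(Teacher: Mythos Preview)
Your reduction step---restricting $\mathcal{R}$ to a singleton $\{r\}$ and taking $\pi^{\mathrm{E}}$ to be the optimal policy for $r$, so that $\mathbf{D}_{\mathcal{R}}(\pi^{\mathrm{E}},\texttt{Algo}(\overline{\D}))$ becomes exactly the offline RL suboptimality---is precisely what the paper does. Where you diverge is in how the RL lower bound is obtained and how it is matched to $\text{Information}_{\overline{\D}}^{\pi^{\mathrm{E}}}$.

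You propose to rebuild the two-hypothesis Le Cam construction directly in the linear kernel MDP model, redoing the KL/testing calculation with the feature map $\phi(s,a,s')$ so that the indistinguishable perturbation size naturally appears as $\int_{\mathcal{S}}\|\phi(s_h,a_h,s')\|_{\Lambda_h^{-1}}\,\mathrm{d}s'$. This would work, but it is substantially more labor than needed: the hard-instance construction in \cite{pessimismRl-Jinchi} is for the \emph{linear MDP} model (feature $\phi(s,a)$), not the linear kernel model used here, so you are not merely bridging notation but re-deriving the entire lower bound in a different model. The paper sidesteps all of this by observing that a \emph{tabular} MDP is a special case of the linear kernel model (take $d=|\mathcal{S}|^2|\mathcal{A}|$ and $\phi(s,a,s')=\mathbf{e}_{(s,a,s')}$), invokes the existing tabular lower bound (Theorem 4.6 of \cite{pessimismRl-Jinchi}) verbatim, and then computes $\text{Information}_{\overline{\D}}^{\pi^{\mathrm{E}}}$ explicitly in the tabular case: with $\lambda=1$ one gets $\phi(s,a,s')^{\top}\Lambda_h^{-1}\phi(s,a,s')=(1+n_h(s,a))^{-1}$, so the integral over $s'$ produces $|\mathcal{S}|/\sqrt{1+n_h(s,a)}$ and the $\mathrm{Vol}(\mathcal{S})^{-1}$ normalization cancels the $|\mathcal{S}|$, matching the tabular lower bound exactly. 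Your route would yield a genuinely linear-kernel hard instance (arguably more informative), whereas the paper's route is a one-line reduction with no new construction required.
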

    \begin{proof}
        See \S \ref{sec:minimax} for a detailed proof.
    \end{proof}
    According to Proposition \ref{prop:minimax}, if we consider the additional dataset $\mathbb{D}^\mathrm{A}$ as $\overline{\D}$, it reveals that $\text {IntUncert}_{\mathbb{D}^\mathrm{A}}^{\pi^{\mathrm{E}}}$  in the upper bound of optimality gap of PGAP (Theorem \ref{thm:gap}) matches the lower bound up to $H,\sqrt{d},\text{Vol}(\cS)$, and the scaling parameter $\kappa$ defined in \eqref{eq:xi-uncertainty}.
    Though we do not assume any restriction on the additional dataset $\mathbb{D}^\mathrm{A}$, owing to the pessimism principle, PGAP ensures the good utilization of the additional dataset $\mathbb{D}^\mathrm{A}$ even in the worst case.
    
    In the sequel, we show that PGAP is provably efficient and attains global convergence under the assumption that the additional dataset $\mathbb{D}^\rA$ has sufficient coverage.  We first impose such an assumption on the additional dataset $\mathbb{D}^\rA$ as follows.

    \begin{assumption}[Sufficient Coverage]   \label{assumption:data}
    The additional dataset $\D^\rA$ has sufficient coverage with the expert policy $\pi^{\mathrm{E}}$, that is, there exists an absolute constant $c^{\dagger} > 0$ such that the event
    \$
    &  \mathcal{E}^{\dagger} = \Big\{\ \frac{1}{N_2}\sum_{\tau=1}^{N_2} \int_{\mathcal{S}} \phi  (s_{h}^{\tau}, a_{h}^{\tau}, s^{\prime} ) \phi  (s_{h}^{\tau}, a_{h}^{\tau}, s^{\prime} )^{\top} \mathrm{d} s^{\prime}\\
    &\qquad \qquad \ge c^{\dagger} \cdot \mathbb{E}_{\pi^{\mathrm{E}}} \Big [\int_{\mathcal{S}}\phi  (s_{h}, a_{h},s' ) \phi  (s_{h}, a_{h},s' )^{\top}\mathrm{d}s' \Big]  \text { for any } (s_1,h) \in \mathcal{S} \times [H]\Big\},
    \$
    satisfies $\mathbb{P}_{\mathbb{D}}  (\mathcal{E}^{\dagger} ) \geq 1-\xi/2.$ 
    Here the expectation $\mathbb{E}_{\pi^{\mathrm{E}}}[\cdot]$ is taken with respect to the trajectory induced by $\pi^{\mathrm{E}}$ and $\xi\in(0,1)$ is the confidence level. 
    \end{assumption}

    Assumption \ref{assumption:data} implies that the additional dataset $\mathbb{D}^\rA$ with sufficient coverage {covers} the trajectories of the expert policy $\pi^{\mathrm{E}}$ averagely in the sense of the feature map outer product $\int_\cS \phi(\cdot,\cdot,s^\prime)\phi(\cdot,\cdot,s^\prime)^\top \mathrm{d} s^\prime$. We highlight that sufficient coverage does not assume that the additional dataset  $\mathbb{D}^\mathrm{A}$ to be well-explored dataset \citep{uniform3,uniform4,nerual-gail,ErrorBoundINIMITATIONLEARNING,upper-bound-a,upper-bound-b,uniform-covering}, e.g. restricting that the densities of visitation measures of the behavior policy generating the dataset are uniformly lower bounded,  
    i.e. 
    $$\inf_{(s,a,h)\in\cS\times\cA\times[H]}\big[{\rho_h^{\pi^\mathrm{A}}(s,a)}\big]=c>0,$$
    where $\rho_h^\pi$ is the density of visitation measure on  $\cS\times \cA$ induced by policy $\pi$ at the $h$-step and $\pi^\mathrm{A}$ is the policy of the experimenter who collected the additional dataset $\mathbb{D}^\mathrm{A}$. 
    We also remark that sufficient coverage is a weaker restriction than the partial coverage assumption in offline RL \citep{offlinepartialcoverage,partial2,partial3}, which assumes that $$\sup_{(s,a,h)\in\cS\times\cA\times [H]}\big[{\rho_h^{\pi^\mathrm{E}}(s,a)}/{\rho_h^{\pi^\mathrm{A}}(s,a)}\big]=C^{\pi_\mathrm{E}}< \infty.$$
    Under Assumption \ref{assumption:data}, we present the following corollary.

    \begin{corollary}\label{cor:bound-assumption}
    Under Assumption \ref{assumption:data} and the same assumptions as in Theorem \ref{thm:gap}, it holds with probability at least $1-\xi$ that
    \$
    \mathbf{D_{\cR}(\pi^\mathrm{E},\widehat{\pi}}) \leq \tO\big(H^{2}dK^{-1/2} +H^2d^{3/2}N_2^{-1/2}+H^{3/2}dN_1^{-1/2}\ \big),
    \$
    where $\widehat{\pi}$ is the output of Algorithm \ref{alg:gail-offline}. 
    \end{corollary}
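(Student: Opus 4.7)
}

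The plan is to start from the decomposition in Theorem~\ref{thm:gap} and convert the data--dependent intrinsic term $\mathrm{IntUncert}^{\pi^\mathrm{E}}_{\mathbb{D}^\mathrm{A}} = 2\sum_{h=1}^{H}\mathbb{E}_{\pi^\mathrm{E}}[\Gamma_h(s_h,a_h)\,|\,s_1=x]$ into the explicit rate $\widetilde{\mathcal{O}}(H^{2}d^{3/2}N_2^{-1/2})$ by exploiting Assumption~\ref{assumption:data}. A union bound over the events $\mathcal{E}$ of Definition~\ref{def:xi-uncertainty} and $\mathcal{E}^{\dagger}$ of Assumption~\ref{assumption:data} ensures that, with probability at least $1-\xi$, the uncertainty quantifier bound of Theorem~\ref{thm:gap} and the coverage bound on the design matrix $\Lambda_h$ hold simultaneously, so we may reason deterministically on the intersection.

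The first step is to dispense with the truncation in $\Gamma_h$. Since $\min\{\kappa\|\phi(s,a,s')\|_{\Lambda_h^{-1}},1\}\le \kappa\|\phi(s,a,s')\|_{\Lambda_h^{-1}}$, one obtains
\[
\mathbb{E}_{\pi^\mathrm{E}}[\Gamma_h(s_h,a_h)\,|\,s_1=x] \;\le\; H\sqrt{d}\,\kappa\cdot \mathbb{E}_{\pi^\mathrm{E}}\Big[\int_{\cS}\|\phi(s_h,a_h,s')\|_{\Lambda_h^{-1}}\,\mathrm{d}s'\,\Big|\,s_1=x\Big].
\]
The central quantity to control is therefore $\mathbb{E}_{\pi^\mathrm{E}}[\int_\cS\|\phi(s_h,a_h,s')\|_{\Lambda_h^{-1}}\mathrm{d}s']$. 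Applying Cauchy--Schwarz inside the integral and then Jensen's inequality yields
\[
\Big(\mathbb{E}_{\pi^\mathrm{E}}\Big[\int_\cS\|\phi\|_{\Lambda_h^{-1}}\mathrm{d}s'\Big]\Big)^{2} \;\le\; \mathrm{Vol}(\cS)\cdot \mathbb{E}_{\pi^\mathrm{E}}\Big[\int_\cS \phi^{\top}\Lambda_h^{-1}\phi\,\mathrm{d}s'\Big] \;=\; \mathrm{Vol}(\cS)\cdot \mathrm{tr}\bigl(\Lambda_h^{-1}M_h\bigr),
\]
where I write $M_h := \mathbb{E}_{\pi^\mathrm{E}}[\int_\cS\phi(s_h,a_h,s')\phi(s_h,a_h,s')^{\top}\mathrm{d}s'\,|\,s_1=x]$.

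The second step invokes the coverage hypothesis. On $\mathcal{E}^{\dagger}$, Assumption~\ref{assumption:data} gives the matrix inequality $\Lambda_h \succeq c^{\dagger}N_2\,M_h + \lambda I$. Diagonalising $M_h$ with eigenvalues $\{\sigma_i\}_{i=1}^{d}$ and using the resulting operator monotonicity yields
\[
\mathrm{tr}(\Lambda_h^{-1}M_h) \;\le\; \mathrm{tr}\bigl((c^{\dagger}N_2 M_h+\lambda I)^{-1}M_h\bigr) \;=\; \sum_{i=1}^{d}\frac{\sigma_i}{c^{\dagger}N_2\sigma_i+\lambda} \;\le\; \frac{d}{c^{\dagger}N_2}.
\]
Substituting back, using $\kappa = \widetilde{\mathcal{O}}(\sqrt{d})$ from the choice in Theorem~\ref{thm:gap}, and summing over $h\in[H]$ gives $\mathrm{IntUncert}^{\pi^\mathrm{E}}_{\mathbb{D}^\mathrm{A}} \le \widetilde{\mathcal{O}}(H^{2}d^{3/2}N_2^{-1/2})$, absorbing the constants $c^{\dagger}$ and $\mathrm{Vol}(\cS)$ into $\widetilde{\mathcal{O}}(\cdot)$. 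Combining this with the other two pieces $\mathcal{O}(H^{2}dK^{-1/2})$ and $\delta_{N_1}=\widetilde{\mathcal{O}}(H^{3/2}dN_1^{-1/2})$ from Theorem~\ref{thm:gap} delivers the claimed bound.

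The main technical obstacle is the passage from the $L^{1}$-type norm $\int_\cS\|\phi\|_{\Lambda_h^{-1}}\mathrm{d}s'$ inside $\Gamma_h$ to a trace that couples cleanly with Assumption~\ref{assumption:data}; the Cauchy--Schwarz/Jensen step above is the key move, and it is only lossless up to the $\sqrt{\mathrm{Vol}(\cS)}$ factor that is benign for finite-measure state spaces. A minor technicality is that one must verify that Assumption~\ref{assumption:data} is applied to $\Lambda_h$ as defined in~\eqref{eq:result-regression  of thetaand Lambda} rather than to the empirical Gram matrix of raw feature samples, but these agree term-by-term because the outer-product-of-features integrals are identical. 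All remaining steps amount to bookkeeping of constants and a union bound over $\mathcal{E}\cap\mathcal{E}^{\dagger}$.
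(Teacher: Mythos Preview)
Your proposal is correct and follows essentially the same route as the paper: drop the truncation in $\Gamma_h$, apply Cauchy--Schwarz over the combined $(s',\pi^{\mathrm{E}})$-measure to pass from $\int_\cS\|\phi\|_{\Lambda_h^{-1}}\mathrm{d}s'$ to $\mathrm{tr}(\Lambda_h^{-1}M_h)$, then use Assumption~\ref{assumption:data} together with operator monotonicity of the inverse and an eigenvalue sum to obtain $\mathrm{tr}(\Lambda_h^{-1}M_h)\le d/(c^\dagger N_2)$, and finish with a union bound on $\mathcal{E}^\sharp\cap\mathcal{E}^\dagger$. The paper's proof is identical in structure, with $M_h$ called $\Sigma_h(x)$ and the $\sqrt{\mathrm{Vol}(\cS)}$ factor kept explicit rather than absorbed into $\widetilde{\mathcal{O}}(\cdot)$.
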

    \begin{proof}
    See \S \ref{pf:cor} for a detailed proof.
    \end{proof}

    Corollary \ref{cor:bound-assumption} proves that under Assumption \ref{assumption:data} the intrinsic error $\text {IntUncert}_{\mathbb{D}^\mathrm{A}}^{\pi^{\mathrm{E}}}$ in Theorem \ref{thm:gap} decays at a rate of $N_2^{-1/2}$, showing that PGAP attains global convergence. 
    We remark that this result does not require the additional dataset $\mathbb{D}^\mathrm{A}$ to be well-explored and only relies on a much weaker assumption as sufficient coverage in Assumption \ref{assumption:data}. 
    This improved result also responds to the information-theoretical lower bound $\Omega(H^2N_2^{-1/2})$ for offline policy evaluation \citep{minimaxoffpolicy}. When $K$, $N_1$, and $N_2$ all tend to infinity, the optimality gap shrinks to zero as a negative square-root rate, meaning that the output policy has the same performance with the expert policy with respect to the reward set $\mathcal{R}$.
    Our subsequent discussion illustrates how the additional dataset $\mathbb{D}^\rA $ facilitates our policy learning in PGAP.
    
    \vskip5pt
    \noindent\textbf{The  Additional dataset Contributes.} 
    We illustrate the contribution of the additional dataset $\D^\rA$ by considering the following two cases.
    \begin{enumerate}
    \item Without accessing an additional dataset, PGAP is also applicable by simply treating the given expert demonstration $\mathbb{D}^{\mathrm{E}}$ as the additional dataset $\mathbb{D}^\rA$ in PGAP (Algorithm \ref{alg:gail-offline}), that is, $\D^{\rm A} = \D^{\rm E}$, which satisfies Assumption \ref{assumption:data}. If we set $K=\Omega (N_1)$, then by Corollary \ref{cor:bound-assumption}, we have 
    \#\label{eq:bound on only expert deomonstration}
    \mathbf{D_{\cR}(\pi^\mathrm{E},\widehat{\pi}})= \tO(H^2 d^{3/2} N_1^{-1/2}).
    \#
    \item If we have access to a large enough additional dataset with sufficient coverage, taking $N_2 = \Omega(d^2 H N_1)$ for instance, then by setting $K=\Omega(N_2)$, we upper bound the optimality gap of PGAP as follows, 
    \#\label{eq:bound on proptioanal-N}
    \mathbf{D_{\cR}(\pi^\mathrm{E},\widehat{\pi}})= \tO(H^{3/2}dN_1^{-1/2}).
    \#
    \end{enumerate} 
    By comparing \eqref{eq:bound on only expert deomonstration} and \eqref{eq:bound on proptioanal-N}, we observe that the additional dataset $\mathbb{D}^\rA$ helps decrease the dependency for $H$ and $d$ in the optimality gap by $H^{1/2}$ and $d^{1/2}$. 
    It implies that we can use a much smaller expert demonstration $\mathbb{D}^{\mathrm{E}}$ to learn a policy as good as the expert policy $\pi^{\mathrm{E}}$, especially when horizon $H$ and feature space dimension $d$ are sufficiently large. 
    This improvement is meaningful in the imitation learning tasks, such as autonomous driving and robotics \citep{imitation+robotics,IL-survey,imitation+atuonomous-driving,autoD}. 

\section{Proof Sketch: Analysis of PGAP}
\subsection{Proof of Theorem \ref{thm:online}}
\label{pf.sketch-online}
\begin{proof}
Recall the definition of regret in \eqref{def:eq:reg} and GAIL objective function $L(\pi,\mu) $ in \eqref{eq:min-max}, we decompose the regret as follows, 
\begin{equation}\label{4.3}
\begin{aligned}
\textrm{Regret}(K)&=  \sup_{\mu\in S}\sum_{k=1}^K L(\pi^k,\mu)\\
&\le\underbrace{\sum_{k=1}^K[J(\pi^{\mathrm{E}},r^k)-J(\pi^k,r^k)]}_{(\text{A})}
+\underbrace{\sup_{\mu\in S}
\sum_{k=1}^K\big[L(\pi^k,r^{\mu})-L(\pi^k,r^k)\big]}_{(\text{B})}.
\end{aligned}
\end{equation}
The intuition of decomposition in \eqref{4.3} is to respectively deal with regret ocurring in the stage of policy update and reward update, which are denoted by term (A) and term (B).

\vskip5pt
\noindent{\bf Upper bound of term (A) in \eqref{4.3}.} 
In what follows, we upper bound term (A) in \eqref{4.3}. 
For the simplicity of later discussion, we define the model prediction error for estimating Bellman equation \eqref{eq:Bellman} in the $h$-th step of $k$-th episode in Algorithm \ref{alg:gail-offline} with reward function $r^\mu$ as follows,
\begin{equation} 
\iota_h^{k}(s,a) := r_h^k(s,a) + [\mathcal{P}_h\widehat{V}_{h+1}^{k}](s,a)-\widehat{Q}_{h}^{k}(s,a),
\label{eq:def-iota}
\end{equation}
for any $(s,a)\in\cS\times \cA,h\in[H],\mu\in S$.

First, we introduce a regret decomposition lemma to decompose term (A) in \eqref{4.3}.

\begin{lemma} [Regret Decomposition for Policy Update] \label{lem4.2} 
It holds for any initial state $x \in \cS$ that
\begin{equation}\label{4.4}
\begin{aligned}
\sum_{k=1}^K\big(V^{r^k}_{1,\pi^\rE}(x)-V^{r^k}_{1,\pi^k}(x)\big)=& \sum_{k=1}^{K}\sum_{h=1}^{H}\mathbb{E}_{\pi^{\mathrm{E}}}\big[\langle \widehat{Q}^{k}_{h}(s_{h},\cdot),\pi_{h}^{k}(\cdot\given s_{h})-\pi_{h}^k(\cdot\given s_{h})\rangle\biggiven s_{1}=x \big]\\
& \qquad + \mathcal{M}_{K,H,2}+\sum_{k=1}^K\sum_{h=1}^H\Big(\mathbb{E}_{\pi^{\mathrm{E}}}[\iota_{h}^{k}(s_{h}^{k},a_{h}^{k})|s_1=x]-\iota_{h}^{k}(s_{h}^{k},a_h^k)\Big).
\end{aligned}
\end{equation}  
Here $\iota_h^k$
is the model prediction error defined in \eqref{eq:def-iota}, and $\{\mathcal{M}_{K,H,m}\}_{(k,h,m)\in [K]\times [H]\times [2]}$ is a martingale adapted to the filtration $\{\mathcal{F}_{k,h,m}\}_{(k,h,m)\in [K]\times [H]\times [2]}$, with respect to the timestep index $t(k,h,m)=(k-1)\cdot2H+(h-1)\cdot2+m$.
\end{lemma}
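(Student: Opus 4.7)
The strategy is the now-standard performance-difference plus model-prediction-error decomposition used in OPPO-type analyses, adapted to compare the learned policy $\pi^k$ against the expert $\pi^\rE$ under the current reward $r^k$. First I would invoke the performance difference lemma: telescoping via the Bellman recursion \eqref{eq:Bellman} yields, for each $k$,
\[
V^{r^k}_{1,\pi^\rE}(x) - V^{r^k}_{1,\pi^k}(x) = \sum_{h=1}^{H} \mathbb{E}_{\pi^\rE}\Big[\big\langle Q^{r^k}_{h,\pi^k}(s_h,\cdot),\, \pi^\rE_h(\cdot\given s_h) - \pi^k_h(\cdot\given s_h)\big\rangle_{\cA}\,\Big|\, s_1 = x\Big].
\]
Then I add and subtract $\widehat Q^k_h$ inside the inner product. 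The first half reproduces (up to the obvious typo $\pi^k_h \to \pi^\rE_h$ in the lemma statement) the target policy-optimization term $\sum_{k,h} \mathbb{E}_{\pi^\rE}[\langle \widehat Q^k_h, \pi^\rE_h - \pi^k_h\rangle \given s_1 = x]$; the second half is an error term involving $Q^{r^k}_{h,\pi^k} - \widehat Q^k_h$.

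Next, I would unroll this error term. Using the Bellman identity $Q^{r^k}_{h,\pi^k} = r^k_h + \cP_h V^{r^k}_{h+1,\pi^k}$ together with the definition of the model prediction error in \eqref{eq:def-iota}, one obtains the recursion
\[
Q^{r^k}_{h,\pi^k}(s,a) - \widehat Q^k_h(s,a) \;=\; \iota^k_h(s,a) \;+\; \bigl[\cP_h\bigl(V^{r^k}_{h+1,\pi^k} - \widehat V^k_{h+1}\bigr)\bigr](s,a).
\]
Combining this recursion with the tower identity $\langle g(s,\cdot), \pi^\rE_h - \pi^k_h\rangle = \mathbb{E}_{\pi^\rE}[g(s,a)\mid s] - \mathbb{E}_{\pi^k}[g(s,a)\mid s]$ applied to $g = Q^{r^k}_{h,\pi^k} - \widehat Q^k_h$, and iterating in $h$, I can fold the error term into two expectations of the model prediction error:
\[
\sum_{h=1}^{H} \mathbb{E}_{\pi^\rE}\bigl[\iota^k_h(s_h,a_h)\bigm| s_1 = x\bigr] \;-\; \sum_{h=1}^{H} \mathbb{E}_{\pi^k}\bigl[\iota^k_h(s_h,a_h) \bigm| s_1 = x\bigr].
\]
This gives exactly the $\mathbb{E}_{\pi^\rE}[\iota^k_h(\cdot,\cdot)]$ piece on the right-hand side of \eqref{4.4}.

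To reach the stated form, I replace each $\mathbb{E}_{\pi^k}[\iota^k_h(s_h,a_h) \mid s_1 = x]$ by its realization $\iota^k_h(s^k_h,a^k_h)$ along the rollout trajectory, plus a martingale difference. Writing
\[
\mathbb{E}_{\pi^k}\bigl[\iota^k_h(s_h,a_h)\bigm| s_1 = x\bigr] - \iota^k_h(s^k_h,a^k_h) = D^k_{h,1} + D^k_{h,2},
\]
where $D^k_{h,1}$ captures the deviation introduced by sampling the state $s^k_h \sim \cP_{h-1}(\cdot \given s^k_{h-1}, a^k_{h-1})$ and $D^k_{h,2}$ the deviation from sampling $a^k_h \sim \pi^k_h(\cdot \given s^k_h)$. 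Summing over $(k,h,m)$ with the timestep index $t(k,h,m)=(k-1)\cdot 2H+(h-1)\cdot 2 + m$ defines $\mathcal{M}_{K,H,2}$. The key measurability fact is that $\widehat Q^k_h$ (and hence $\iota^k_h$ as a function) is determined by data from episodes $1,\ldots,k-1$, so each $D^k_{h,m}$ is indeed a martingale difference with respect to $\{\mathcal{F}_{k,h,m}\}$.

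\textbf{Main Obstacle.} The only delicate point is the filtration bookkeeping in the last step: one must verify that splitting the per-step conditional expectation into the two increments $D^k_{h,1}, D^k_{h,2}$ yields sequences adapted to $\{\cF_{k,h,m}\}$ with vanishing conditional mean. This is essentially the only non-routine check; everything else is algebraic manipulation of Bellman equations and linearity of expectation.
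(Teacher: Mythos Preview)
Your high-level plan is sound and lands on the same intermediate identity the paper uses:
\[
V^{r^k}_{1,\pi^\rE}(x)-V^{r^k}_{1,\pi^k}(x)=\sum_h \mathbb{E}_{\pi^\rE}\bigl[\langle\widehat Q^k_h,\pi^\rE_h-\pi^k_h\rangle\bigr]+\sum_h\mathbb{E}_{\pi^\rE}[\iota^k_h]-\sum_h\mathbb{E}_{\pi^k}[\iota^k_h].
\]
The paper gets there slightly differently, by inserting the pivot $\widehat V^k_1(x)$ and handling $V^{r^k}_{1,\pi^\rE}-\widehat V^k_1$ and $\widehat V^k_1-V^{r^k}_{1,\pi^k}$ separately; your route via the performance-difference lemma and add/subtract $\widehat Q^k_h$ is an equivalent repackaging.

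The genuine gap is in the martingale step. Your displayed identity
\[
\mathbb{E}_{\pi^k}\bigl[\iota^k_h(s_h,a_h)\bigm|s_1=x\bigr]-\iota^k_h(s^k_h,a^k_h)=D^k_{h,1}+D^k_{h,2}
\]
cannot hold with $D^k_{h,m}$ being martingale increments adapted at time $(k,h,m)$: for $h>1$ the left side already aggregates the randomness of the entire prefix $(s^k_1,a^k_1,\ldots,s^k_h,a^k_h)$, not just one state draw and one action draw at step $h$. The fix is that the increments are \emph{not} built from $\iota^k_h$ alone but from the value-difference functions. The paper sets
\[
D_{k,h,1}=\bigl\langle(\widehat Q^k_h-Q^{r^k}_{h,\pi^k})(s^k_h,\cdot),\,\pi^k_h\bigr\rangle_{\cA}-(\widehat Q^k_h-Q^{r^k}_{h,\pi^k})(s^k_h,a^k_h),
\]
\[
D_{k,h,2}=\bigl[\cP_h(\widehat V^k_{h+1}-V^{r^k}_{h+1,\pi^k})\bigr](s^k_h,a^k_h)-(\widehat V^k_{h+1}-V^{r^k}_{h+1,\pi^k})(s^k_{h+1}),
\]
and telescopes along the sampled trajectory to obtain
\[
\widehat V^k_1(x)-V^{r^k}_{1,\pi^k}(x)=\sum_{h=1}^H(D_{k,h,1}+D_{k,h,2})-\sum_{h=1}^H\iota^k_h(s^k_h,a^k_h).
\]
Since $-\sum_h\mathbb{E}_{\pi^k}[\iota^k_h]=\widehat V^k_1(x)-V^{r^k}_{1,\pi^k}(x)$ (the extended value-difference with $\pi=\pi'=\pi^k$), this closes the argument. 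So you correctly flagged the delicate spot; the missing idea is that the per-step increments must carry the cumulative value-difference $\widehat Q^k_h-Q^{r^k}_{h,\pi^k}$, not $\iota^k_h$ in isolation. (Also minor: in the paper's ordering $m=1$ is the action draw and $m=2$ is the next-state draw, opposite to your description.)
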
 
\begin{proof} See Appendix \ref{A.1} for a detailed proof.\end{proof}

Lemma \ref{lem4.2} shows that term (A) in \eqref{4.3} can be
decomposed into three terms as follows,
\begin{equation}\label{4.6}
\begin{aligned}
\text{(A)}&=\sum_{k=1}^K \big(V^{r^k}_{1,\pi^\rE}(x)-V^{r^k}_{1,\pi^k}(x)\big)\\
&=\underbrace{\sum_{k=1}^K\sum_{h=1}^H\mathbb{E}_{\pi^{\mathrm{E}}}\big[\langle \widehat{Q}^k_h (s_h,\cdot),\pi_{h}^{\mathrm{E}}(\cdot\given s_h)-\pi_{h}^k(\cdot\given s_{h})\rangle \big|\, s_1=x   \big]}_{\text{(A1)}}+\underbrace{\mathcal{M}_{K,H,2}
}_{\text{(A2)}}\\
&\qquad+\underbrace{
\sum_{k=1}^K\sum_{h=1}^H \big(\mathbb{E}_{\pi^{\mathrm{E}}}[\iota_h^k(s_h,a_h)|s_1=x]-\iota^k_h(s^k_h,a^k_h)\big)}_{\text{(A3)}}.
\end{aligned}
\end{equation}

To upper bound term (A1) and term (A2) in \eqref{4.6}, we introduce the following two lemmas, respectively.
\begin{lemma}[Performance Improvement] \label{lem:performance_improve}
If we set $\alpha=
\sqrt{2\log ( \operatorname{vol}(\mathcal{A}))/(H^2K\sqrt{d}\ )}$ in the policy update stage of OGAP (Line \ref{line:pp6} of Algorithm \ref{alg:gail-online}), then under Assumption \ref{def:linear}, for any inital state $x\in\cS$, it holds that 
\$
\sum_{k=1}^{K} \sum_{h=1}^{H} \mathbb{E}_{\pi^{\mathrm{E}}} \big [  \langle\widehat{Q}_{h}^{k-1}, \pi_{h}^{\mathrm{E}}-\pi_{h}^{k-1} \rangle_{\mathcal{A}} \biggiven s_1 = x\big] \leq \sqrt{2 H^4 \sqrt{d}K\log (\operatorname{vol}(\mathcal{A}))}.
\$
\end{lemma}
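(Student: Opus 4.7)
The plan is to recognize that the policy update rule in Line \ref{line:policy} of OGAP, namely $\pi_{h}^{k}(\cdot \given s) \propto \pi_{h}^{k-1}(\cdot \given s) \cdot \exp\{\alpha \widehat{Q}_{h}^{k-1}(s, \cdot)\}$, is precisely the exponentially weighted average (equivalently, mirror descent with the negative-entropy regularizer and ``losses'' $-\widehat{Q}_h^{k-1}$) applied \emph{pointwise at each state} $s$ and step $h$. This reduces the lemma to the classical Hedge/mirror-descent regret bound applied state-by-state, together with a summation over $h$ and an expectation over the trajectory induced by $\pi^{\mathrm{E}}$.

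Concretely, first I would fix $(s,h)$ and invoke the standard mirror descent regret inequality with Bregman divergence equal to KL: for any reference distribution $p \in \Delta(\cA)$,
\begin{align*}
\sum_{k=1}^{K} \big\langle \widehat{Q}_h^{k-1}(s,\cdot),\, p - \pi_h^{k-1}(\cdot\given s)\big\rangle_{\cA}
\;\leq\; \frac{D_{\mathrm{KL}}(p \,\|\, \pi_h^{0}(\cdot\given s))}{\alpha} + \frac{\alpha}{2}\sum_{k=1}^{K}\big\|\widehat{Q}_h^{k-1}(s,\cdot)\big\|_\infty^{2}.
\end{align*}
Taking $p=\pi_h^{\mathrm{E}}(\cdot\given s)$ and using that $\pi_h^0$ is uniform on $\cA$ gives $D_{\mathrm{KL}}(\pi_h^{\mathrm{E}}(\cdot\given s)\,\|\,\pi_h^0(\cdot\given s))\leq \log\mathrm{vol}(\cA)$. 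The truncation in Line \ref{line:r2} ensures $\|\widehat{Q}_h^{k-1}\|_\infty \leq (H-h+1)\sqrt{d}\leq H\sqrt{d}$, so the second-order term is controlled uniformly in $s$.

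Next I would take the conditional expectation $\EE_{\pi^{\mathrm{E}}}[\,\cdot\,\given s_1=x]$ of both sides. The crucial observation is that the right-hand side above depends on $s$ only through the divergence $D_{\mathrm{KL}}(\pi_h^{\mathrm{E}}(\cdot\given s)\,\|\,\pi_h^{0}(\cdot\given s))$, which is already bounded uniformly by $\log\mathrm{vol}(\cA)$, so the pointwise inequality lifts cleanly to the expected version. Summing over $h\in[H]$ yields
\begin{align*}
\sum_{k=1}^{K}\sum_{h=1}^{H}\EE_{\pi^{\mathrm{E}}}\big[\langle \widehat{Q}_h^{k-1},\,\pi_h^{\mathrm{E}}-\pi_h^{k-1}\rangle_{\cA}\,\big|\, s_1=x\big]
\;\leq\; \frac{H\log\mathrm{vol}(\cA)}{\alpha} + \frac{\alpha K H \cdot (H\sqrt{d}\,)^{2}}{2}.
\end{align*}

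Finally, I would plug in the prescribed step size $\alpha=\sqrt{2\log\mathrm{vol}(\cA)/(H^2\sqrt{d}\,K)}$, which is designed to balance the two terms, to obtain the bound $\sqrt{2H^{4}\sqrt{d}\,K\log\mathrm{vol}(\cA)}$ (up to using a possibly finer per-step bound $(H-h+1)\sqrt{d}$ and $\sum_h(H-h+1)^2$ in place of the crude $H\cdot H^{2}d$). The routine part is the Hedge analysis itself; the main bookkeeping obstacle is (i) matching the $H$- and $d$-dependence in the second-order term with the chosen $\alpha$ so that both terms of the bound combine to the stated rate, and (ii) justifying that the per-state Hedge inequality passes to the trajectory expectation, which holds here because the right-hand side of the pointwise bound is a uniform constant in $s$.
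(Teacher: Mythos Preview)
Your approach is correct and essentially the same as the paper's: both reduce to the mirror-descent/Hedge three-point inequality combined with H\"older + Pinsker to control the ``stability'' term $\langle \widehat{Q}_h^{k-1},\pi_h^{k}-\pi_h^{k-1}\rangle-\alpha^{-1}D_{\mathrm{KL}}(\pi_h^{k}\|\pi_h^{k-1})$, followed by telescoping the KL terms and using $D_{\mathrm{KL}}(\pi_h^{\mathrm{E}}\|\pi_h^{0})\le \log\operatorname{vol}(\cA)$. The only cosmetic difference is that the paper phrases the three-point lemma at the level of the expected objective $\mathcal{L}_{k-1}$ and the expected KL divergence $D(\cdot,\cdot)$ before taking the expectation under $\pi^{\mathrm{E}}$, whereas you apply the Hedge bound pointwise in $(s,h)$ and then lift to the trajectory expectation; since the per-state update \eqref{eq:policy-solution} is exactly the multiplicative-weights rule and the resulting RHS is uniform in $s$, the two presentations are equivalent.
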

\begin{proof}
See  Appendix \ref{pf:lem:performance_improve} for a detailed proof.
\end{proof}

\begin{lemma}\label{lem4.4}
It holds that
\$
|\mathcal{M}_{K,H,2}|\leq 4\sqrt{H^3dK\log(8/\xi)}.
\$
with probability at least $1-\xi/4$, where $\mathcal{M}_{K,H,2}$ is the martingale defined in \eqref{4.4}.
\end{lemma}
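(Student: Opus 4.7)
\textbf{Proof plan for Lemma \ref{lem4.4}.} The approach is a textbook application of the Azuma--Hoeffding inequality to the martingale that arises in the regret decomposition of policy-based RL (as in OPPO-style analyses). First, I would unpack the definition of $\mathcal{M}_{K,H,2}$ from the statement of Lemma \ref{lem4.2}. By tracking the decomposition $V^{r^k}_{1,\pi^{\rm E}}(x)-V^{r^k}_{1,\pi^k}(x)$ via the performance-difference lemma and replacing trajectory-sampled quantities by conditional expectations at each of the $H$ steps, one obtains, for each $(k,h)\in[K]\times[H]$, two martingale differences indexed by $m\in\{1,2\}$:
\begin{align*}
D_{k,h,1}&=\bigl\langle \widehat{Q}^{k}_h(s^{k}_h,\cdot),\pi^{k}_h(\cdot\mid s^{k}_h)\bigr\rangle_{\mathcal{A}}-\widehat{Q}^{k}_h(s^{k}_h,a^{k}_h),\\
D_{k,h,2}&=(\mathcal{P}_h\widehat{V}^{k}_{h+1})(s^{k}_h,a^{k}_h)-\widehat{V}^{k}_{h+1}(s^{k}_{h+1}),
\end{align*}
so that $\mathcal{M}_{K,H,2}=\sum_{k=1}^{K}\sum_{h=1}^{H}\sum_{m=1}^{2}D_{k,h,m}$ is adapted to the filtration $\{\mathcal{F}_{k,h,m}\}$ with timestep $t(k,h,m)=(k-1)\cdot 2H+(h-1)\cdot 2+m$ specified in Lemma \ref{lem4.2}. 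The martingale property follows because $D_{k,h,1}$ has zero conditional expectation once we average over $a^{k}_h\sim\pi^{k}_h(\cdot\mid s^{k}_h)$, and $D_{k,h,2}$ has zero conditional expectation once we average over $s^{k}_{h+1}\sim\mathcal{P}_h(\cdot\mid s^{k}_h,a^{k}_h)$.

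Second, I would bound each martingale difference almost surely. By the truncation in Line \ref{line:r2} of Algorithm \ref{alg:gail-online}, we have $0\le\widehat{Q}^{k}_h\le (H-h+1)\sqrt{d}$ and hence $0\le\widehat{V}^{k}_h\le(H-h+1)\sqrt{d}$ pointwise. Consequently $|D_{k,h,m}|\le 2(H-h+1)\sqrt{d}\le 2H\sqrt{d}$ for each $(k,h,m)$. The total number of differences is $2KH$, so the sum of squared bounds satisfies
\[
\sum_{k=1}^{K}\sum_{h=1}^{H}\sum_{m=1}^{2}(2H\sqrt{d})^{2}=8\,KH^{3}d.
\]

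Third, I would invoke Azuma--Hoeffding: for any $t>0$,
\[
\mathbb{P}\bigl(|\mathcal{M}_{K,H,2}|>t\bigr)\le 2\exp\!\left(-\frac{t^{2}}{2\cdot 8\,KH^{3}d}\right).
\]
Setting the right-hand side equal to $\xi/4$ and solving for $t$ yields $t=4\sqrt{KH^{3}d\log(8/\xi)}$, which is exactly the claimed bound with probability at least $1-\xi/4$.

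The main obstacle is purely bookkeeping: verifying that the two families of differences $D_{k,h,1}$ and $D_{k,h,2}$ are measurable and mean-zero with respect to the indicated filtration in the order $m=1$ then $m=2$ within each $(k,h)$-block. This needs the observation that $\widehat{Q}^{k}_h$ and $\widehat{V}^{k}_{h+1}$ are $\mathcal{F}_{k,1,1}$-measurable (they are constructed from data collected before episode $k$), so the conditioning in Azuma's inequality is well-defined. Everything else—the bound on the increments, the count of terms, and the concentration—is routine.
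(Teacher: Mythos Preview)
Your proposal is correct and follows essentially the same route as the paper: bound each martingale increment by $2H\sqrt{d}$, count $2KH$ increments, and apply Azuma--Hoeffding to obtain the stated tail bound. The only cosmetic difference is that the paper's actual increments in the proof of Lemma~\ref{lem4.2} are of the form $D_{k,h,1}=(\mathbb{J}^{k}_{h}(\widehat{Q}^{k}_{h}-Q^{r^k}_{h,\pi^k}))(s^{k}_{h})-(\widehat{Q}^{k}_{h}-Q^{r^k}_{h,\pi^k})(s^{k}_{h},a^{k}_{h})$ and $D_{k,h,2}=(\mathcal{P}_{h}(\widehat{V}^{k}_{h+1}-V^{r^k}_{h+1,\pi^k}))(s^{k}_{h},a^{k}_{h})-(\widehat{V}^{k}_{h+1}-V^{r^k}_{h+1,\pi^k})(s^{k}_{h+1})$, i.e.\ they also subtract the true value functions; since both $\widehat{Q},Q\in[0,H\sqrt{d}]$, the $2H\sqrt{d}$ bound on increments and the ensuing Azuma computation are unchanged.
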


\begin{proof}
See Appendix \ref{A.3} for a detailed proof.
\end{proof}
To upper bound the term (A3) in $\eqref{4.6}$, we introduce the following two lemmas.

\begin{lemma}[Optimism]\label{lem4.5}
 Under Assumption \ref{def:linear}, it holds with probability at least $1-\xi/4$ that
\$
-2\Gamma_{h}^k(s,a)\leq \iota^k_h(s,a)\leq 0 \quad \text{for any $(h, k, s, a)\in [H]\times[K]\times \cS\times \cA$}, 
\$
where $\iota^k_h$ is the model
prediction error defined in \eqref{4.4}.
\end{lemma}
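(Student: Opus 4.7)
The two inequalities are bookkeeping consequences of a single high-probability event, namely a uniform concentration bound of the form
\[
\bigl|\widehat{\mathcal{P}}_h^k \widehat V_{h+1}^k(s,a) - \mathcal{P}_h \widehat V_{h+1}^k(s,a)\bigr| \;\le\; \Gamma_h^k(s,a) \quad \text{for all } (h,k,s,a),
\]
so the plan is first to establish this bound and then to propagate it through the truncated Bellman update in Line \ref{line:r2}. Using Assumption \ref{def:linear} together with the regression solution \eqref{eq:online-result-L}, I would rewrite the deviation as $\varphi_h^k(s,a)^\top(\widehat\theta_h^k - \theta_h)$, and then decompose
\[
\widehat\theta_h^k - \theta_h = (\Lambda_h^k)^{-1}\sum_{\tau=1}^{k-1} \varphi_h^\tau(s_h^\tau,a_h^\tau)\,\epsilon_h^\tau \;-\; \lambda(\Lambda_h^k)^{-1}\theta_h,
\]
where $\epsilon_h^\tau = \widehat V_{h+1}^\tau(s_{h+1}^\tau) - (\mathcal{P}_h\widehat V_{h+1}^\tau)(s_h^\tau,a_h^\tau)$ is a martingale difference with respect to the natural filtration. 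Applying Cauchy–Schwarz in the $\Lambda_h^k$ metric gives $|\varphi_h^k(s,a)^\top(\widehat\theta_h^k-\theta_h)| \le \|\varphi_h^k(s,a)\|_{(\Lambda_h^k)^{-1}} \cdot \|\widehat\theta_h^k-\theta_h\|_{\Lambda_h^k}$, so it suffices to control the second factor by $\kappa$ uniformly in $(h,k)$.

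To bound $\|\widehat\theta_h^k-\theta_h\|_{\Lambda_h^k}$, the plan is to use a self-normalized concentration inequality of Abbasi-Yadkori type for the martingale term and $\|\theta_h\|_2\le\sqrt d$ with $\lambda=1$ for the regularization term. Because $\widehat V_{h+1}^\tau$ is itself data-dependent, the argument requires a uniform bound over the function class of candidate value functions $\{\widehat V_{h+1}^k\}_k$; I would exhibit this class as a parametric family indexed by $(\pi_{h+1}^{k-1},\widehat\theta_{h+1}^k,\mu_{h+1}^k)$ lying in bounded sets, take a standard $\varepsilon$-net, and pay a $\log(HdK/\xi)$ penalty in $\kappa$. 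After a union bound over $(h,k)$, the resulting inequality is exactly $\|\widehat\theta_h^k-\theta_h\|_{\Lambda_h^k}\le H\sqrt{d}\kappa/\sqrt{d} $ (absorbing the range $[0,H\sqrt d]$ of $\widehat V_{h+1}^k$ into constants), which, together with the definition of $\Gamma_h^k$ in \eqref{eq:bonus} and $\kappa = C\sqrt{d\log(HdK/\xi)}$, yields the claimed concentration bound on the event of probability at least $1-\xi/4$.

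Given this concentration, both inequalities follow by a direct case analysis on the clipping in the definition $\widehat Q_h^k = \{\min(r_h^k + \widehat{\mathcal{P}}_h^k\widehat V_{h+1}^k+\Gamma_h^k,\,(H-h+1)\sqrt d)\}_+$, combined with an inductive bound $0 \le \widehat V_{h+1}^k \le (H-h)\sqrt d$ that is preserved by the update. For the upper bound $\iota_h^k \le 0$, I would observe that the true target $r_h^k + \mathcal{P}_h\widehat V_{h+1}^k$ is bounded below by $0$, bounded above by $(H-h+1)\sqrt d$ (using $r_h^k\le\sqrt d$ and the inductive range), and, by concentration, dominated by $r_h^k + \widehat{\mathcal{P}}_h^k\widehat V_{h+1}^k+\Gamma_h^k$; hence it survives both the min and the positive-part truncations, giving $\widehat Q_h^k \ge r_h^k + \mathcal{P}_h\widehat V_{h+1}^k$. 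For the lower bound $\iota_h^k \ge -2\Gamma_h^k$, if the interior argument of the min is nonnegative then $\widehat Q_h^k \le r_h^k+\widehat{\mathcal{P}}_h^k\widehat V_{h+1}^k+\Gamma_h^k \le r_h^k+\mathcal{P}_h\widehat V_{h+1}^k+2\Gamma_h^k$ by concentration again, while if it is negative then $\widehat Q_h^k=0 \le r_h^k+\mathcal{P}_h\widehat V_{h+1}^k+2\Gamma_h^k$ trivially.

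The main technical obstacle is the self-normalized concentration with the covering step: the summands $\varphi_h^\tau(s_h^\tau,a_h^\tau)\epsilon_h^\tau$ depend on $\widehat V_{h+1}^\tau$, which is built from all previous episodes, so a naive martingale bound is not uniform in the value-function iterates. Pinning down the parametric class and its cover so that the final $\kappa$ has only polylogarithmic dependence on $K,H,d$ is the place where Assumption \ref{def:linear} (in particular the $R$-constraint on the feature map) and the regularization parameter $\lambda=1$ enter crucially, and it is where I would spend the bulk of the effort.
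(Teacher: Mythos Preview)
Your overall plan—reduce both inequalities to the single concentration event $\bigl|\widehat{\mathcal P}_h^k\widehat V_{h+1}^k-\mathcal P_h\widehat V_{h+1}^k\bigr|\le\Gamma_h^k$, then do a case analysis on the clipping—matches the paper exactly, and your decomposition of $\widehat\theta_h^k-\theta_h$ into a martingale term plus a $\lambda(\Lambda_h^k)^{-1}\theta_h$ bias is the right starting point.

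The place where you go astray is the paragraph you flag as ``the main technical obstacle.'' You assert that because $\widehat V_{h+1}^\tau$ is data-dependent a covering/uniform argument over the value-function class is required. In this algorithm that is \emph{not} the case. Look carefully at the regression in \eqref{eq:online-regid-regress}: the $\tau$-th summand uses $\widehat V_{h+1}^{\tau}$, not $\widehat V_{h+1}^{k}$. Each $\widehat V_{h+1}^\tau$ is constructed from $\{\widehat\theta_i^\tau\}_{i>h}$, $\pi^\tau$, and $\mu^\tau$, all of which are determined by data from episodes $1,\ldots,\tau-1$; hence $\widehat V_{h+1}^\tau$ and $\varphi_h^\tau(s_h^\tau,a_h^\tau)$ are $\mathcal F_{\tau,h,1}$-measurable. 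Conditioning on $\mathcal F_{\tau,h,1}$, the only remaining randomness is $s_{h+1}^\tau$, so $\epsilon_h^\tau$ is a bona fide bounded martingale difference and the self-normalized inequality of \cite{elliptical-potential} applies \emph{directly}, giving the bound in Lemma~\ref{lem:online-concen} with no covering step at all. This is precisely the advantage of value-targeted regression over LSVI-style regression (where a single $\widehat V_{h+1}^k$ is regressed against all past transitions and a cover \emph{is} needed).

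Your proposed fix—covering the family indexed by $(\pi_{h+1}^{k-1},\widehat\theta_{h+1}^k,\mu_{h+1}^k)$—would in fact be hard to carry out here, since the policies $\pi_h^k$ are multiplicative-weights aggregates of all previous $\widehat Q_h^j$ and do not sit in a low-dimensional parametric class with an easy $\varepsilon$-net. So drop the covering entirely; once you recognize that $\widehat V_{h+1}^\tau$ is adapted, the concentration step is a one-line application of the self-normalized bound plus a union bound over $h\in[H]$, and the rest of your sketch goes through unchanged.
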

\begin{proof}
See Appendix \ref{A.4} for a detailed proof.
\end{proof}
\begin{lemma}\label{lem4.6} Under Assumption \ref{def:linear}, it holds that
\$
\sum_{k=1}^K\sum_{h=1}^H\Gamma_{h}^k(s^k_h,a^k_h)\leq C ^\prime \sqrt{H^4d^3K}\cdot\log(HdK/\xi),
\$
where $C^\prime >0$ is an absolute constant.
\end{lemma}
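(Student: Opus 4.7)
The plan is to bound the sum of bonuses $\Gamma_h^k$ by combining Cauchy--Schwarz across $(k,h)$ with the elliptical potential lemma on each fixed level $h$.

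First I would strip off the constant factor from the definition \eqref{eq:bonus}, writing
\$
\sum_{k=1}^{K}\sum_{h=1}^{H}\Gamma_h^k(s_h^k,a_h^k)
\;\le\; H\sqrt{d}\,\sum_{k=1}^{K}\sum_{h=1}^{H}\min\bigl\{\kappa\,\|\varphi_h^k(s_h^k,a_h^k)\|_{(\Lambda_h^k)^{-1}},\,1\bigr\}.
\$
Applying Cauchy--Schwarz in $(k,h)$ yields
\$
\sum_{k,h}\min\bigl\{\kappa\|\varphi_h^k\|_{(\Lambda_h^k)^{-1}},1\bigr\}
\;\le\; \sqrt{KH}\cdot\sqrt{\sum_{k,h}\min\bigl\{\kappa^{2}\|\varphi_h^k\|^{2}_{(\Lambda_h^k)^{-1}},1\bigr\}}.
\$
Since the chosen $\kappa=C\sqrt{d\log(HdK/\xi)}\ge 1$, the elementary inequality $\min\{\kappa^{2}x,1\}\le \kappa^{2}\min\{x,1\}$ lets me pull $\kappa^{2}$ outside and reduce to estimating, for each fixed $h\in[H]$, the quantity $\sum_{k=1}^{K}\min\{\|\varphi_h^k(s_h^k,a_h^k)\|^{2}_{(\Lambda_h^k)^{-1}},\,1\}$.

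Next I would invoke the standard elliptical potential lemma (e.g.\ the ``sum-of-bonuses'' lemma used throughout the linear bandit / linear MDP literature): provided $\|\varphi_h^\tau(s_h^\tau,a_h^\tau)\|_{2}\le B$ for some $B$ and $\lambda=1$,
\$
\sum_{k=1}^{K}\min\bigl\{\|\varphi_h^k\|^{2}_{(\Lambda_h^k)^{-1}},\,1\bigr\}\;\le\; 2d\log\!\bigl(1+KB^{2}/d\bigr).
\$
To justify the hypothesis I would derive a uniform bound on $\|\varphi_h^k\|_{2}$ using Assumption~\ref{def:linear}: for any unit vector $y$, Cauchy--Schwarz in $L^{2}(\cS)$ gives
\$
|\varphi_h^k(s,a)^{\top}y|=\Bigl|\int_{\cS}\phi(s,a,s')^{\top}y\cdot \widehat V_{h+1}^{k}(s')\,ds'\Bigr|
\le \sqrt{d}\cdot\|\widehat V_{h+1}^{k}\|_{L^{2}(\cS)},
\$
and since $0\le \widehat V_{h+1}^{k}\le H\sqrt{d}$ on $\cS$ (truncation in Line~\ref{line:r2}) and $\operatorname{Vol}(\cS)<\infty$, this produces a polynomial bound $B=\operatorname{poly}(H,d,\operatorname{Vol}(\cS))$ which only enters logarithmically.

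Assembling the pieces, the inner sum is $O(Hd\log(KHd))$, so
\$
\sum_{k,h}\Gamma_h^k(s_h^k,a_h^k)
\;\le\; H\sqrt{d}\cdot\kappa\cdot\sqrt{KH}\cdot\sqrt{2Hd\log(1+KB^{2}/d)}
\;=\; O\!\bigl(\sqrt{H^{4}d^{3}K}\cdot\log(HdK/\xi)\bigr),
\$
after substituting $\kappa=C\sqrt{d\log(HdK/\xi)}$ and absorbing lower-order log factors into the final logarithm. I expect the main obstacle to be purely bookkeeping: carefully verifying that the uniform bound on $\|\varphi_h^k\|_{2}$ derived from Assumption~\ref{def:linear} together with the truncation of $\widehat V_{h+1}^{k}$ produces an only-logarithmic contribution (so that the final rate depends only on $H,d,K$ as claimed), and confirming the elliptical potential lemma applies verbatim with the design matrix $\Lambda_h^k$ built from the cumulative outer products of $\varphi_h^\tau$ as in \eqref{eq:online-result-L}. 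All other steps are standard manipulations.
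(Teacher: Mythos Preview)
Your proposal is correct and follows essentially the same route as the paper: strip off the $H\sqrt d$ prefactor, apply Cauchy--Schwarz to pass from the norm to the squared norm, invoke the elliptical potential lemma at each fixed $h$, and bound $\|\varphi_h^k\|_2$ uniformly so that the determinant ratio contributes only logarithmically. The only cosmetic differences are that you apply Cauchy--Schwarz jointly over $(k,h)$ whereas the paper applies it per $h$ and then sums (both give the same $H\sqrt{K}$ scaling), and that you bound $\|\varphi_h^k\|_2$ via an $L^2(\cS)$ Cauchy--Schwarz using the second-moment part of Assumption~\ref{def:linear}, whereas the paper uses the cruder route $\|\varphi_h^k\|_2\le H\sqrt d\cdot\operatorname{Vol}(\cS)\cdot\sup_{s'}\|\phi(\cdot,\cdot,s')\|_2\le Hd^{3/2}R\cdot\operatorname{Vol}(\cS)$; since this bound only appears inside a logarithm, either choice yields the stated result.
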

\begin{proof}
See Appendix \ref{A.5} for a detailed proof.
\end{proof}
Lemma \ref{lem4.5} implies that $\mathbb{E}_{\pi^{\mathrm{E}}}[\iota^k_h(s_h,a_h)\given s_1=x] \geq 0$ with high probability. Combining Lemmas \ref{lem4.5} and \ref{lem4.6}, it holds with probability at least $1 - \xi/4$ that
\begin{equation}\label{4.14}
\text{(A3)}\leq \sum_{k=1}^K\sum_{h=1}^H\iota_h^k(s^k_h,a^k_h)\leq 2\sum_{k=1}^{K}\sum_{h=1}^H \Gamma_{h}^k\left(s_{h}^{k}, a_{h}^{k}\right) \le 2C ^\prime \sqrt{H^4d^3K}\cdot\log(HdK/\xi),
\end{equation}
with probability at least $1-\xi/4$.

Now, by plugging Lemma \ref{lem:performance_improve}, Lemma \ref{lem4.4}, and
\eqref{4.14} into the formulation of term (A) in \eqref{4.3}, we obtain that
\begin{equation}\label{4.15}
\begin{aligned}
\text{(A)}&\leq\sqrt{2 H^4\sqrt{d}K\log (\operatorname{vol}(\mathcal{A}))} +4\sqrt{H^3dK\log(8/\xi)}+ 2C ^\prime \sqrt{H^4d^3K}\cdot\log(HdK/\xi)\\
&\leq C_1 \sqrt{H^4d^3 K}\log(HdK/\xi),
\end{aligned}
\end{equation}
with probability at least $1-\xi/2$, where $C_1$ is an absolute constant.  

\vskip5pt
\noindent{\bf Upper bound of term (B) in \eqref{4.3}.} 
We decompose term (B) in \eqref{4.3} by the following lemma, which characterizes the regret occuring in the reward update.

\begin{lemma}\label{lem4.7}
For any $\mu=\{\mu_h\}^H_{h=1}\in S$, it holds that
\[
\begin{aligned}
\sum_{k=1}^K\big[L(\pi^k,\mu)-L(\pi^k,\mu^k)\big]&\leq \sum_{k=1}^K\sum_{h=1}^H\frac{1}{2\eta}\big(
\|\mu_h^k-\mu_h\|^2_2
-\|\mu_{h}^{k+1}-\mu_{h}\|^2_2
-\|\mu_{h}^{k+1}-\mu_{h}^k\|^2_2\big)\\
&\qquad+\sum_{k=1}^K\sum_{h=1}^H\big[(\mu_{h}^{k+1}-\mu_{h}^k)^{\top}\widehat{\nabla}_{\mu_{h}}L(\pi^k,\mu^k)\big]+ K \big[\tilde{J}(\pi^\rE,r^\mu)-J(\pi^\rE,r^\mu)\big] \\
&\qquad+\sum_{k=1}^K\sum_{h=1}^H\big[(\mu_{h}^k-\mu_h)^{\top}(\psi(s_{h}^k,a_h^k)-\nabla_{\mu_{h}} J(\pi^k,\mu^k))\big],
\end{aligned}
\]
for any $(k,h)\in [K]\times[H]$. Here $\widehat{\nabla}_{\mu_{h}}L(\pi,\mu^k)$ and $\tilde{J}(\pi^\rE,r^\mu)$ are defined in \eqref{eq:MC_estimation-G} and \eqref{MC:estimate}, respectively.
\end{lemma}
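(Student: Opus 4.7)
\textbf{Proof proposal for Lemma \ref{lem4.7}.} The crux of the argument is that, because $r^\mu$ is linear in $\mu$, the GAIL objective $L(\pi,\mu) = J(\pi^{\rm E},r^\mu) - J(\pi,r^\mu)$ is affine in $\mu$ for each fixed $\pi$. Hence for any comparator $\mu\in S$ and any iterate $\mu^k$,
$$
L(\pi^k,\mu) - L(\pi^k,\mu^k) = \sum_{h=1}^H (\mu_h - \mu_h^k)^{\top}\nabla_{\mu_h} L(\pi^k,\mu^k),
$$
so the task reduces to the standard regret analysis for online projected gradient ascent on a linear loss with a biased (Monte--Carlo) gradient estimator, plus a careful bookkeeping of the bias.

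The first step I would take is to add and subtract $\widehat\nabla_{\mu_h} L(\pi^k,\mu^k)$ inside every summand, splitting each term into (i) $(\mu_h - \mu_h^k)^{\top}\widehat\nabla_{\mu_h} L(\pi^k,\mu^k)$ and (ii) $(\mu_h - \mu_h^k)^{\top}[\nabla_{\mu_h} L(\pi^k,\mu^k) - \widehat\nabla_{\mu_h} L(\pi^k,\mu^k)]$. To control part (i) I would use the variational characterization of the Euclidean projection onto $B$: from $\mu_h^{k+1} = \mathrm{Proj}_B(\mu_h^k + \eta\widehat\nabla_{\mu_h} L(\pi^k,\mu^k))$ with test point $\mu_h\in B$, I get $\eta\widehat\nabla_{\mu_h} L^{k\top}(\mu_h - \mu_h^{k+1}) \le (\mu_h^{k+1} - \mu_h^k)^{\top}(\mu_h - \mu_h^{k+1})$. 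Expanding the right-hand side by the three-point identity $2a^{\top}b = \|a+b\|^2 - \|a\|^2 - \|b\|^2$ with $a = \mu_h^{k+1} - \mu_h^k$, $b = \mu_h - \mu_h^{k+1}$, and then decomposing $\widehat\nabla_{\mu_h} L^{k\top}(\mu_h - \mu_h^k)$ as $\widehat\nabla_{\mu_h} L^{k\top}(\mu_h - \mu_h^{k+1}) + \widehat\nabla_{\mu_h} L^{k\top}(\mu_h^{k+1} - \mu_h^k)$, yields precisely the first two lines of the claimed bound.

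For part (ii), the gradient-estimation error, I would unpack both gradients using the identity $\nabla_{\mu_h} J(\pi, r^\mu) = \mathbb{E}_\pi[\psi(s_h,a_h)]$ (a direct consequence of the linearity of $r^\mu$ in $\mu$) and the definition of $\widehat\nabla$ in \eqref{eq:MC_estimation-G}, obtaining
$$
\nabla_{\mu_h} L(\pi^k,\mu^k) - \widehat\nabla_{\mu_h} L(\pi^k,\mu^k) = \bigl[\nabla_{\mu_h} J(\pi^{\rm E},r^\mu) - \nabla_{\mu_h} \tilde J(\pi^{\rm E},r^\mu)\bigr] + \bigl[\psi(s_h^k,a_h^k) - \nabla_{\mu_h} J(\pi^k,r^\mu)\bigr].
$$
The first bracket is independent of both $k$ and $\mu$ (the same expert demonstration $\mathbb{D}^{\rm E}$ is reused every episode and both $J$, $\tilde J$ are linear in $\mu$), so summing its inner product with $(\mu_h - \mu_h^k)$ over $(k,h)$ collapses via the linearity identity $\sum_h \langle \nabla_{\mu_h} J(\pi^{\rm E},r),\mu_h\rangle = J(\pi^{\rm E},r^\mu)$ into a telescoped $K$-fold contribution of the form $K[\tilde J(\pi^{\rm E},r^\mu) - J(\pi^{\rm E},r^\mu)]$. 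The second bracket produces directly the last sum in the claim.

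The main obstacle I anticipate is the sign/telescoping bookkeeping: collapsing the expert-side MC contribution leaves a residual $\sum_k[J(\pi^{\rm E},r^{\mu^k}) - \tilde J(\pi^{\rm E},r^{\mu^k})]$ evaluated at the iterates, and one must either fold it into the $(\mu_h^k - \mu_h)^{\top}(\psi(s_h^k,a_h^k) - \nabla_{\mu_h} J(\pi^k,\mu^k))$ term or relax the equality to an inequality using the slack already present from the projection step. No probabilistic concentration is invoked at this stage; all of that is deferred to later lemmas that independently bound each of the four pieces. The argument here is therefore purely deterministic algebra, centered on one projection inequality, the three-point identity, and the linearity of $L$ in $\mu$.
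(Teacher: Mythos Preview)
Your approach is exactly the paper's: linearity of $L$ in $\mu$, the projection inequality plus the three-point identity for the $\widehat\nabla$ part, then a split of the gradient error into an expert-side MC piece and an agent-side sampling piece. The paper does precisely this in its Appendix~\ref{A.6}.

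The residual you flag is a genuine loose end, and the paper's own derivation does not handle it cleanly either: its equation~\eqref{eq:87.1} silently drops the $\mu_h^k$ part of the expert-side error (and, if you track the algebra, also flips a couple of signs relative to the lemma as stated). You should not try to force the identity to match the lemma verbatim; a correct version of the bound carries the extra term $\sum_{k=1}^K\bigl[\tilde J(\pi^{\rm E},r^{\mu^k})-J(\pi^{\rm E},r^{\mu^k})\bigr]$ alongside the $K\bigl[\tilde J(\pi^{\rm E},r^\mu)-J(\pi^{\rm E},r^\mu)\bigr]$ term (with the appropriate signs). This is harmless downstream: in the proof of Theorem~\ref{thm:online} the MC piece is controlled via $\sup_{\mu\in S}\lvert\tilde J(\pi^{\rm E},r^\mu)-J(\pi^{\rm E},r^\mu)\rvert$ (Lemma~\ref{lem:MC estimate}) and the sampling piece via an absolute-value martingale bound (Lemma~\ref{lem4.8}), so the residual is dominated by $K\sup_\mu\lvert\tilde J-J\rvert$ and at worst doubles the constant in $\delta_{N_1}$. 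In short, your instinct to ``relax the equality to an inequality'' is right, but the slack comes not from the projection step---it comes from bounding both MC-type pieces by the same uniform supremum.
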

\begin{proof}
See Appendix \ref{A.6} for a detailed proof.
\end{proof}

Applying Lemma \ref{lem4.7}, we decompose term (B) in \eqref{4.3} into four terms as follows,
\begin{equation}
\begin{aligned}\label{4.16}
\text{(B)}&\leq \underbrace{\sup_{\mu\in S}\sum_{k=1}^K\sum_{h=1}^H\frac{1}{2\eta}\big(
\|\mu_h^k-\mu_h\|^2_2
-\|\mu_{h}^{k+1}-\mu_{h}\|^2_2
-\|\mu_{h}^{k+1}-\mu_{h}^k\|^2_2\big)}_{\text{(B1)}}\\
&\qquad +\underbrace{\sum_{k=1}^K\sum_{h=1}^H\big[(\mu_{h}^{k+1}-\mu_{h}^k)^{\top}\widehat{\nabla}_{\mu_{h}}L(\pi^k,\mu^k)}_{\text{(B2)}}\\
&\qquad + \underbrace{K\cdot\sup_{\mu\in S } \big[\tilde{J}(\pi^\rE,r^\mu)-J(\pi^\rE,r^\mu)\big] }_{\text{(B3)}}\\
&\qquad +\underbrace{\sup_{\mu\in S}\sum_{k=1}^K\sum_{h=1}^H\big[(\mu_{h}^k-\mu_h)^{\top}(\psi(s_{h}^k,a_h^k)-\nabla_{\mu_{h}} J(\pi^k,\mu^k))}_{\text{(B4)}}\big].
\end{aligned}
\end{equation}
We upper bound terms (B1), (B2), (B3), and (B4) as follows. 

By telescoping the summands in term (B1) of \eqref{4.16} with respect to $k\in[K]$, we have
\begin{equation}\label{4.18}
\begin{aligned}
\text{(B1)}
&=\sup_{\mu\in S}\frac{1}{2\eta}\Big[\sum_{h=1}^H
\big((\|\mu_{h}^1-\mu_{h}\|_2^2-\|\mu_{h}^{K+1}-\mu_{h}\|^2_2-\sum_{k=1}^K\|\mu_{h}^{K+1}-\mu_{h}^k\|^2_2\big)\Big]\\
&\leq \sup_{\mu\in S}\frac{1}{2\eta}\sum_{h=1}^{H}\|\mu_{h}^1-\mu_{h}\|^2_2\leq\frac{2}{\eta}Hd,
\end{aligned}
\end{equation}
where the last inequality relies on the fact that $\|\mu_h\|_2\le \sqrt{d}$ for all $\mu = \{\mu_h\}_{h=1}^H\in S$. This upper bounds term (B1). 

By the update process $\mu_h^{k+1}=\text{Proj}_{B} \{\mu_h^k+\eta\widehat{\nabla}_{\mu_{h}}L(\pi^k,\mu^k)\}$ in OGAP (Line \ref{line:on-proj} of Algorithm \ref{alg:gail-online}), we have
\begin{equation}\label{EA.55}
\|\mu_h^{k+1}-\mu_h^k\|_2\leq \|\eta \widehat{\nabla}_{\mu_{h}}L(\pi^k,\mu^k)\|_2.
\end{equation}
Then we upper bound (B2) in \eqref{4.16} as follows,
\begin{equation}\label{EA.56}
\begin{aligned}
(\text{B2})&=\sum_{k=1}^K\sum_{h=1}^H\big[(\mu_{h}^{k+1}-\mu_{h}^k)^{\top}\widehat{\nabla}_{\mu_{h}}L(\pi^k,\mu^k)\big] \\ &\leq \sum_{k=1}^K\sum_{h=1}^H
\|\mu_h^{k+1}-\mu_h^k\|_2 \cdot
\|\widehat{\nabla}_{\mu_{h} }L(\pi^k,\mu^k)\|_2 \\
&\leq \sum_{k=1}^K\sum_{h=1}^H \eta\cdot
\|\widehat{\nabla}_{\mu_{h} }L(\pi^k,\mu^k)\|^2_2\leq 4\eta HK,
\end{aligned}
\end{equation}
where the first inequality follows from Cauchy-Schwarz inequality, the second inequality
follows from \eqref{EA.55} and the last inequality follows from the fact that $\|\widehat{\nabla}_{\mu_{h}}L(\pi^k,\mu^k)\|\leq 2\|\psi(\cdot,\cdot)\|_2 \le 2$. This upper bounds term (B2). 

We upper bound term (B3) via the following lemma. 
\begin{lemma}
[Monte Carlo Estimation] Since reward function class $\cR$ is linear as defined in \eqref{eq:B-mu} and the estimator $\tilde{J}(\pi^{\mathrm{E}},r^\mu)$ constructed in \eqref{eq:MC_estimation-G}, it holds that
\[
\sup_{\mu\in S}\big|\tilde{J}(\pi^{\mathrm{E}},r^\mu)-J(\pi^{\mathrm{E}},r^\mu) \big|\le
4 \sqrt{H^3d^2/N_1}\log(6N_1/\xi),\]
with probability at least $1-\xi$.\label{lem:MC estimate}
\end{lemma}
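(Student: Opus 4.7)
The plan is to reduce the supremum over $\mu\in S$ to a concentration bound at a single $\mu$ via an $\epsilon$-net argument. First I would rewrite
\[
\tilde J(\pi^\mathrm{E},r^\mu)-J(\pi^\mathrm{E},r^\mu)=\frac{1}{N_1}\sum_{\tau=1}^{N_1} Z_\tau(\mu),\qquad Z_\tau(\mu)=\sum_{h=1}^H\psi(s_{h,\tau}^\mathrm{E},a_{h,\tau}^\mathrm{E})^\top\mu_h-\mathbb{E}_{\pi^\mathrm{E}}\!\Big[\sum_{h=1}^H\psi(s_h,a_h)^\top\mu_h\Big],
\]
and invoke the standing assumption that the $N_1$ expert trajectories are independent, so that $\{Z_\tau(\mu)\}_{\tau\in[N_1]}$ is a family of independent, mean-zero random variables. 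Using $\|\psi\|_2\le1$ and $\|\mu_h\|_2\le\sqrt d$, Cauchy--Schwarz gives the deterministic bound $|Z_\tau(\mu)|\le 2H\sqrt d$.

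For any fixed $\mu\in S$, Hoeffding's inequality then yields $|\tfrac{1}{N_1}\sum_\tau Z_\tau(\mu)|\le 2H\sqrt{2d\log(2/\xi')/N_1}$ with probability at least $1-\xi'$. To lift this to a uniform bound over $\mu\in S=B^H$ (with $B$ the $\sqrt d$-ball in $\mathbb R^d$), I would construct a product $\epsilon$-net $\mathcal N_\epsilon\subset S$ with $|\mathcal N_\epsilon|\le (3\sqrt d/\epsilon)^{Hd}$ by taking a standard volumetric net on $B$ in each of the $H$ coordinates. A union bound over $\mathcal N_\epsilon$ with $\xi'=\xi/|\mathcal N_\epsilon|$ gives
\[
\sup_{\mu\in\mathcal N_\epsilon}\Big|\tfrac{1}{N_1}\textstyle\sum_\tau Z_\tau(\mu)\Big|\le 2H\sqrt{2d\,[Hd\log(3\sqrt d/\epsilon)+\log(2/\xi)]/N_1}
\]
with probability at least $1-\xi$.

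Finally I would handle the discretization error. The Lipschitz estimate $|Z_\tau(\mu)-Z_\tau(\mu')|\le 2\sum_h\|\mu_h-\mu_h'\|_2\le 2H\epsilon$, valid whenever $\mu'\in\mathcal N_\epsilon$ satisfies $\|\mu_h-\mu_h'\|_2\le\epsilon$ for all $h$, shows that the supremum over $S$ exceeds the supremum over $\mathcal N_\epsilon$ by at most $2H\epsilon$. Choosing $\epsilon=1/N_1$ collapses the log factor to $\log(6N_1/\xi)$ (absorbing absolute constants and noting $Hd\log(N_1\sqrt d)\lesssim \log(6N_1/\xi)^2$ once the square root is pulled out), and the $2H\epsilon=2H/N_1$ remainder is dominated by the concentration term for any reasonable $N_1$. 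Combining yields the advertised bound $4\sqrt{H^3d^2/N_1}\,\log(6N_1/\xi)$. The one subtle step is the net-size bookkeeping: the $Hd$-dimensional net inflates the log-factor by exactly the amount needed to convert the per-trajectory deviation $H\sqrt d$ into the claimed $H^{3/2}d$ scaling, and balancing $\epsilon$ so that the discretization error is a lower-order term while keeping the log factor linear (rather than only $\sqrt{\log}$) in $N_1/\xi$ is the main bookkeeping obstacle.
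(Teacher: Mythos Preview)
Your approach is the same as the paper's: Hoeffding/Azuma for a fixed $\mu$ with increment bound $2H\sqrt d$, then an $\epsilon$-net on $S=B^H$ of cardinality $(3\sqrt d/\epsilon)^{Hd}$, a Lipschitz step to pass from the net to all of $S$, and a final choice of $\epsilon$. The paper even uses the same net bound (written as $(3/\epsilon)^{Hd}d^{Hd/2}$).

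There is, however, a genuine slip in your final bookkeeping. The inequality you invoke, $Hd\log(N_1\sqrt d)\lesssim\log(6N_1/\xi)^2$, is false in general (let $H,d$ be large while $N_1,1/\xi$ stay moderate). The correct accounting is to factor $\sqrt{Hd}$ out from under the square root so that it merges with the prefactor $2H\sqrt{2d}$ to produce the target $\sqrt{H^3d^2}$; what then remains inside the root is only $\log(3\sqrt d/\epsilon)+\log(2/\xi)/(Hd)$, which you bound by the log itself. With your choice $\epsilon=1/N_1$ this leaves a stray $\tfrac12\log d$ inside the logarithm, so you end up with $\log(C\sqrt d\,N_1/\xi)$ rather than the stated $\log(6N_1/\xi)$. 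The paper avoids this by picking $\epsilon=\sqrt{6d/N_1}$: then $\log(3\sqrt d/\epsilon)=\tfrac12\log(3N_1/2)$ has no $d$-dependence, and the discretization error $4H\epsilon=\Theta(H\sqrt{d/N_1})$ is still dominated by the leading $\sqrt{H^3d^2/N_1}$ term. So the strategy is right, but your specific $\epsilon$ and the justification you give for collapsing the log do not quite deliver the exact constant in the lemma.
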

\begin{proof}
See Appendix \ref{pf:lem:MC estimate} for detailed proof. 
\end{proof}

By Lemma \ref{lem:MC estimate},  it holds with probability at least $1-\xi/4$ that
\begin{equation}
\text{(B3)}\le 4 K \sqrt{H^3d^2/N_1}\log(24N_1/\xi),
\label{4.18}
\end{equation}
which upper bounds term (B3). 

To upper bound term (B4) in \eqref{4.16}, we introduce the following lemma. 
\begin{lemma}[Unbiased Estimation]\label{lem4.8}
 It holds that   
\begin{equation*}
\begin{aligned}
\sup_{\mu\in S}\sum_{k=1}^K\sum_{h=1}^H\big[(\mu_{h}^k-\mu_h)^{\top}(\psi(s_{h}^k,a_h^k)-\nabla_{\mu_{h}} J(\pi^k,\mu^k))\big]\leq 32\sqrt{H^3d^2K \log (9/\xi)},
\end{aligned}
\end{equation*}
with probability at least $1-\xi$.
\end{lemma}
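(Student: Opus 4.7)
The plan is to exploit the linearity of the summand in $\mu$ to remove the supremum in closed form, reducing the claim to two martingale concentration estimates. Writing $Z_h^k := \psi(s_h^k,a_h^k) - \nabla_{\mu_h}J(\pi^k,\mu^k)$, the starting observation is that since the trajectory distribution induced by $\pi^k$ does not depend on $\mu$, differentiating $J(\pi^k,r^\mu) = \sum_{h=1}^H\mathbb{E}_{\pi^k}[\psi(s_h,a_h)^\top\mu_h\mid s_1=x]$ gives $\nabla_{\mu_h}J(\pi^k,\mu^k) = \mathbb{E}_{\pi^k}[\psi(s_h,a_h)\mid s_1=x]$. Consequently, conditioned on the $\sigma$-algebra $\mathcal{F}^{k-1}$ of all data observed through the end of episode $k-1$ (which determines both $\pi^k$ and $\mu^k$), the vector $Z_h^k$ has mean zero.

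Because the summand is affine in $\mu$ and $S = B^H$ with $B = \{u:\|u\|_2\le\sqrt d\}$, the supremum decouples into
\[
\sup_{\mu\in S}\sum_{k=1}^K\sum_{h=1}^H(\mu_h^k-\mu_h)^\top Z_h^k \;=\; \sum_{k=1}^K\sum_{h=1}^H(\mu_h^k)^\top Z_h^k \;+\; \sqrt d\sum_{h=1}^H\Bigl\|\sum_{k=1}^K Z_h^k\Bigr\|_2,
\]
where I used $\sup_{\mu_h\in B}(-\mu_h)^\top v = \sqrt d\,\|v\|_2$. Both pieces are now concentration problems no longer involving a supremum.

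For the first piece, aggregating by episode yields the scalar martingale difference sequence $\bigl\{\sum_{h=1}^H(\mu_h^k)^\top Z_h^k\bigr\}_{k\ge 1}$ adapted to $\{\mathcal{F}^{k-1}\}$, each term bounded by $\sum_h\|\mu_h^k\|_2\|Z_h^k\|_2\le 2H\sqrt d$; Azuma--Hoeffding therefore bounds it by $O(H\sqrt{dK\log(1/\delta)})$. For the second piece, for each fixed $h\in[H]$ the sequence $\{Z_h^k\}_{k\ge 1}$ is a vector-valued martingale difference in $\mathbb{R}^d$ with $\|Z_h^k\|_2\le 2$; a Hilbert-space valued (dimension-free) Azuma inequality then yields $\|\sum_{k=1}^K Z_h^k\|_2 = O(\sqrt{K\log(1/\delta)})$. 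Weighting by $\sqrt d$ and union-bounding over $h\in[H]$ and the two contributions produces a combined bound of order $H\sqrt{dK\log(H/\xi)}$, which is in fact tighter than the stated $32\sqrt{H^3d^2K\log(9/\xi)}$ and certainly implies it.

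The conceptual hurdle is small but worth flagging: one must verify that both $\mu^k$ and $\pi^k$ are $\mathcal{F}^{k-1}$-measurable (true by the update rule in OGAP), and that a dimension-free vector Azuma bound is applied, since a naive coordinate-wise scalar Azuma together with a union bound over the $d$ coordinates would introduce a spurious $\sqrt d$ factor. If one prefers to avoid the vector Azuma tool entirely, a more elementary route fixes $\mu$, applies scalar Azuma to the episode-wise martingale $Y_k(\mu) = \sum_h (\mu_h^k - \mu_h)^\top Z_h^k$ (with $|Y_k(\mu)|\le 4H\sqrt d$), and passes to the sup over $\mu$ by an $\epsilon$-net of $S$ of log-cardinality $\lesssim Hd\log(\sqrt d/\epsilon)$ together with the Lipschitz bound $2HK$. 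Choosing $\epsilon\asymp 1/(HK)$ reproduces the stated $\sqrt{H^3d^2K}$ scaling directly (with $\log(9/\xi)$ absorbing $\log(HdK/\xi)$). Either route reduces the lemma to standard martingale concentration with no deeper difficulty.
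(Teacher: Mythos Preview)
Your proposal is correct, and your primary route differs from the paper's. The paper proceeds exactly as your alternative: fix $\mu$, apply scalar Azuma--Hoeffding to the episode-wise martingale $Y^K(\mu)=\sum_{k,h}(\mu_h^k-\mu_h)^\top Z_h^k$ (increments bounded by $O(H\sqrt d)$), then extend to the supremum via an $\epsilon$-covering of $S$ in the norm $\|\mu\|_\star=\sup_h\|\mu_h\|_2$ with log-cardinality $\lesssim Hd\log(\sqrt d/\epsilon)$ and the $2HK$-Lipschitz bound in $\mu$. Your main route instead exploits linearity to evaluate the supremum in closed form as $\sum_{k,h}(\mu_h^k)^\top Z_h^k + \sqrt d\sum_h\bigl\|\sum_k Z_h^k\bigr\|_2$ and handles the two pieces with a scalar Azuma and a dimension-free vector Azuma, respectively. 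This avoids the net entirely and saves a factor of $\sqrt{Hd}$ in the final bound (the covering argument pays $Hd$ inside the logarithm), at the mild cost of invoking a slightly less elementary concentration tool. One small slip in your alternative: the phrase ``$\log(9/\xi)$ absorbing $\log(HdK/\xi)$'' is backwards---the covering argument naturally produces the larger logarithm, and in fact the paper's own proof concludes with $32\sqrt{H^3d^2K}\,\log(9K/\xi)$, which does not literally match its lemma statement; your main route sidesteps this discrepancy and comfortably delivers the stated bound.
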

\begin{proof}
See Appendix \ref{A.7} for a detailed proof.
\end{proof}

By Lemma \ref{lem4.8}, it holds with probability at least $1-\xi/4$ that
\begin{equation}\label{4.17}
\text{(B4)}\le\sup_{\mu\in S}\sum_{k=1}^K\sum_{h=1}^H\big[(\mu_{h}^k-\mu_h)^{\top}(\psi(s_{h}^k,a_h^k)-\nabla_{\mu_{h}} J(\pi^k,\mu^k))\leq 32\sqrt{H^3d^2K \log (36/\xi)},
\end{equation}
which upper bounds term (B4).

Plugging \eqref{4.18}, \eqref{EA.56}, \eqref{4.17}, and \eqref{4.18} into \eqref{4.16}, it holds with probability at least $1-\xi/2$ that
\begin{equation}\label{4.19}
\begin{aligned}
\text{(B)}&\leq 2\sqrt{H^3d^2 K} + 4 K\sqrt{HK} +32\sqrt{H^3d^2K\log(36/\xi)} + 4\sqrt{H^3d^2/N_1}\log(24N_1/\xi)\\
& \le 32\sqrt{H^3d^2K\log(36/\xi)}+ 4K\sqrt{H^3d^2/N_1}\log(24N_1/\xi),
\end{aligned}
\end{equation} 
where we recall that $\eta = 1/\sqrt{HK}$. 
Combining \eqref{4.3}, \eqref{4.15}, and \eqref{4.19}, we obtain that
\[
\begin{aligned}
\text{Regret}(K) \le& C_1 \sqrt{H^4 d^3K}\log(HdK/\xi)+32\sqrt{H^3d^2K\log(24/\xi)}+ 4\sqrt{H^3d^2/N_1}\log(36N_1/\xi)\\
\le& C_2 \big(H^2d^{3/2}K^{1/2} \log(HdK/\xi) + KH^{3/2}dN_1^{-1/2}\log(N_1/\xi)\big),
\end{aligned}
\]
with probability at least $1-\xi$, where $C_2$ is an absolute constant. 
This concludes the proof of Theorem \ref{thm:online}.
\end{proof}

\section{Proof Sketch: Analysis of PGAP}
\subsection{Proof of Theorem \ref{thm:gap}}
\label{pf.sketch-1}
\begin{proof}
By the property of mixed policy, we can rewrite the optimality gap as 
\begin{equation}
\mathbf{D_{\cR}(\pi^\mathrm{E},\widehat{\pi}})=\sup_{\mu \in S} \big[J(\pi^{\mathrm{E}},r^\mu)-J(\hat{\pi},r^\mu)\big]=\frac{1}{K}
\sup _{\mu \in S} \sum_{k=1}^K L(\pi^{k}, \mu),
\label{eq:000}
\end{equation}
where $L(\pi^k,\mu)=J  (\pi^{\mathrm{E}}, r^\mu )-J  (\pi^{k}, r^\mu )$.
Recall that $J(\pi^k,r^\mu)=V_{1,\pi^k}^{r^\mu}(x)$ and $\widehat{J}(\pi^k,r^k)=\widehat{V}_{1}^{k}(x)$, where $x$ is the initial state, we upper bound $
\mathbf{D_{\cR}(\pi^\mathrm{E},\widehat{\pi}})$ as follows,
\begin{equation}
\begin{aligned}
\mathbf{D_{\cR}(\pi^\mathrm{E},\widehat{\pi}})&\le
 \frac{1}{K}\Big\{\underbrace{\sum_{k=1}^{K}\big[ J(\pi^{\mathrm{E}},r^k)
-\widehat{J}(\pi^k,r^k) \big]}_{(\text{A})}\\
&\qquad +\underbrace{\sup_{\mu\in S}\sum_{k=1}^K\big[ J(\pi^{\mathrm{E}},r^\mu)- J(\pi^k,r^\mu)- J(\pi^{\mathrm{E}},r^k)+  \widehat{J}(\pi^k,r^k) \big]}_{(\text{B})}\Big\}.
\end{aligned}
\label{eq:gap-compos}
\end{equation}
We upper bound terms (A) and (B) in \eqref{eq:gap-compos}as follows, respectively. 

\vskip5pt
\noindent{\bf Upper Bound of Term (A) in \eqref{eq:gap-compos}.}
To upper bound term (A) in \eqref{eq:gap-compos}, we introduce the following lemma. 

\begin{lemma}[Extended Value Difference \citep{OPPO}] \label{lem:extend-V}
 Let $\pi=\{\pi_{h}\}_{h=1}^{H}$ and $\pi^{\prime}=\{\pi_{h}^{\prime}\}_{h=1}^{H}$ be any two policies and let $\{\widehat{Q}_{h}\}_{h=1}^{H}$ be any estimated Q-functions. For any $h \in[H]$, we define the estimated  V-function $\widehat{V}_{h}: \mathcal{S} \rightarrow \mathbb{R}$ by setting $\widehat{V}_{h}(x)=\big\langle\widehat{Q}_{h}(x, \cdot), \pi_{h}(\cdot \mid x)\big\rangle_{\mathcal{A}}$ for any $x \in \mathcal{S}$. For any initial state $x \in \mathcal{S}$, we have
$$
\begin{aligned}
\widehat{V}_{1}(x)-V_{1}^{\pi^{\prime}}(x)=& \sum_{h=1}^{H} \mathbb{E}_{\pi^{\prime}}\big[\langle\widehat{Q}_{h}(s_{h}, \cdot), \pi_{h}(\cdot \mid s_{h})-\pi_{h}^{\prime}(\cdot \mid s_{h})\rangle_{\mathcal{A}} \biggiven s_{1}=x\big] \\
&\qquad+\sum_{h=1}^{H} \mathbb{E}_{\pi^{\prime}}\big[\widehat{Q}_{h}(s_{h}, a_{h})-r(s_{h}, a_{h})-\mathcal{P}_{h} \widehat{V}_{h+1}(s_{h}, a_{h}) \biggiven s_{1}=x\big],
\end{aligned}
$$
where $\mathbb{E}_{\pi^{\prime}}$ is taken with respect to the trajectory generated by $\pi^{\prime}$ and $r$ is the reward function.
\end{lemma}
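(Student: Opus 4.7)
The plan is to prove the identity by an induction/telescoping argument over the horizon index $h$, which is the standard recipe for value-difference lemmas. Define the per-step gap
\[
\Delta_h(s) \;:=\; \widehat{V}_h(s) - V_h^{\pi'}(s),
\]
with the boundary condition $\Delta_{H+1}(s)=0$ (since by convention both $\widehat V_{H+1}$ and $V_{H+1}^{\pi'}$ vanish). The goal is to derive a one-step recursion for $\Delta_h$ and then unroll it under $\mathbb{E}_{\pi'}[\cdot\given s_1=x]$.

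For the one-step recursion, I would use the add--subtract trick $\langle \widehat Q_h(s,\cdot),\pi'_h(\cdot\given s)\rangle_\cA$ to split the difference into a policy-mismatch piece and a Q-value piece:
\[
\Delta_h(s) \;=\; \bigl\langle \widehat Q_h(s,\cdot),\,\pi_h(\cdot\given s)-\pi'_h(\cdot\given s)\bigr\rangle_\cA \;+\; \bigl\langle \widehat Q_h(s,\cdot) - Q_h^{\pi'}(s,\cdot),\,\pi'_h(\cdot\given s)\bigr\rangle_\cA,
\]
where the first term follows from the definition $\widehat V_h(s)=\langle \widehat Q_h(s,\cdot),\pi_h(\cdot\given s)\rangle_\cA$ and the second from the Bellman identity $V_h^{\pi'}(s)=\langle Q_h^{\pi'}(s,\cdot),\pi'_h(\cdot\given s)\rangle_\cA$. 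Using the Bellman equation $Q_h^{\pi'}(s,a)=r(s,a)+\mathcal{P}_h V_{h+1}^{\pi'}(s,a)$, I add and subtract $\mathcal{P}_h \widehat V_{h+1}(s,a)$ inside the second inner product to obtain
\[
\Delta_h(s) \;=\; \bigl\langle \widehat Q_h(s,\cdot),\,\pi_h(\cdot\given s)-\pi'_h(\cdot\given s)\bigr\rangle_\cA
 + \bigl\langle \widehat Q_h(s,\cdot)-r(s,\cdot)-\mathcal{P}_h\widehat V_{h+1}(s,\cdot),\,\pi'_h(\cdot\given s)\bigr\rangle_\cA
 + \bigl\langle \mathcal{P}_h \Delta_{h+1}(s,\cdot),\,\pi'_h(\cdot\given s)\bigr\rangle_\cA.
\]
The first two terms are exactly the two summands that appear on the right-hand side of the lemma at step $h$, and the third term is $\mathbb{E}_{a_h\sim \pi'_h(\cdot\given s)}\mathbb{E}_{s_{h+1}\sim\mathcal{P}_h(\cdot\given s,a_h)}[\Delta_{h+1}(s_{h+1})]$, which is exactly the quantity needed to propagate the recursion one step forward.

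Once this one-step identity is in place, the remaining work is purely mechanical: take $\mathbb{E}_{\pi'}[\,\cdot\given s_1=x]$ on both sides at $h=1$, apply the tower property of conditional expectation (so that the recursive term becomes $\mathbb{E}_{\pi'}[\Delta_{h+1}(s_{h+1})\given s_1=x]$), and telescope for $h=1,\ldots,H$. The boundary condition $\Delta_{H+1}\equiv 0$ kills the trailing term, leaving exactly the two sums claimed in the lemma. I do not foresee a serious obstacle here; the only care required is bookkeeping the inner-product notation consistently and making sure the Bellman equation is applied to $V^{\pi'}$ (not to $\widehat V$), since $\widehat Q_h$ is an \emph{arbitrary} estimated Q-function and need not satisfy any Bellman relation, which is precisely why the Bellman-residual term $\widehat Q_h - r - \mathcal{P}_h\widehat V_{h+1}$ appears.
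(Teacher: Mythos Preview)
Your proposal is correct and follows the standard telescoping/recursion argument: the one-step identity you derive via add--subtract of $\langle\widehat Q_h,\pi'_h\rangle_\cA$ and $\mathcal{P}_h\widehat V_{h+1}$, together with the tower property to unroll $\mathbb{E}_{\pi'}[\Delta_{h+1}(s_{h+1})\given s_1=x]$, is exactly what is needed. The paper itself does not give a proof but defers to Appendix~B.1 of \cite{OPPO}, which uses this same approach.
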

\begin{proof}
See Appendix B.1 in \cite{OPPO} for a detailed proof.
\end{proof}

For the simplicity of later discussion, at the $h$-th step of $k$-th episode,  we define the error for estimating Bellman equation in \eqref{eq:Bellman} in the policy evaluation stage of PGAP (Lines \ref{line:pp4}-\ref{line:pp2} of Algorithm \ref{alg:gail-offline}) with any reward function $r^\mu$ as follows,
$$ 
\iota_h^{k,r^\mu}(s,a) := r_h^\mu(s,a) + [\mathcal{P}_h\widehat{V}_{h+1}^{k,r^\mu}](s,a)-\widehat{Q}_{h}^{k,r^\mu}(s,a),
$$
for any $(s,a,h,\mu)\in\cS\times \cA \times [H] \times S$, where $\widehat{Q}_{h}^{k,r^\mu}$ and $\widehat{V}_{h+1}^{k,r^\mu}$ are defined as 
\begin{equation}
\begin{aligned}
\widehat{V}_{H+1}^{k,r^\mu}(\cdot) &= 0,\\
\widehat{Q}_{h}^{k,r^\mu}(\cdot, \cdot)
&=\max\big\{({r}^\mu_{h}+\widehat{\mathcal{P}}_{h} \widehat{V}_{h+1}^{k,r^\mu}-\Gamma_{h})(\cdot, \cdot), 0\big\},\\
\widehat{V}_{h}^{k,r^\mu}(\cdot, \cdot)
&=\big\langle\widehat{Q}_{h}^{k,r^\mu}(\cdot, \cdot) ,\pi_{h}^{k}(\cdot \mid \cdot)\big\rangle_{\mathcal{A}},\quad \text{for } h\in[H].
\label{eq:def:iota-on}
\end{aligned}
\end{equation}
It implies that $\widehat{Q}_h^k = \widehat{Q}_h^{k,r^k}$ and $\widehat{V}_h^k=\widehat{V}_h^{k,r^k}$, where $\widehat{Q}_h^k$ and $\widehat{V}_h^k$ are construted in the policy evaluation stage in PGAP (Lines \ref{line:pp4}-\ref{line:pp2} of Algorithm \ref{alg:gail-offline}).

By applying Lemma \ref{lem:extend-V}, we decompose term (A) in \eqref{eq:gap-compos} as follows,
\begin{equation}
\begin{aligned}
(\mathrm{A})&=
\underbrace{\sum_{k=1}^{K}\sum_{h=1}^{H} \mathbb{E}_{\pi^{\mathrm{E}}}  \big[  \langle \widehat{Q}_{h}^{k} (s_{h}, \cdot ), \pi_{h}^{\mathrm{E}}  (\cdot \mid s_{h} )-\pi_{h}^{k}  (\cdot \mid s_{h} ) \rangle_{\mathcal{A}} \big| s_{1}=x \big] }_{\text{(A1)}} \\ 
&\qquad +\underbrace{  \sum_{k=1}^{K}\sum_{h=1}^{H}\mathbb{E}_{\pi^{\mathrm{E}}}  [\iota_h^{k,r^k}(s_h,a_h)|s_1=x \big]}_{\text{(A2)}} .
\end{aligned}
\label{sketch:eq:---1}
\end{equation}
We upper bound terms (A1) and (A2) in \eqref{sketch:eq:---1} as follows. 

By Lemma \ref{lem:performance_improve}, we have
\#\label{eq:8888}
\text{(A1)} \le  \sqrt{2 H^4d\sqrt{d}K\log (\operatorname{vol}(\mathcal{A}))}.
\#
We upper bound term (A2) in \eqref{sketch:eq:---1} using the following lemma. 
\begin{lemma}[Pessimism]
\label{lem:pess}
If $\{\Gamma_h\}_{h\in[H]}$ are $\xi$-uncertainty qualifiers defined in Definition \ref{def:xi-uncertainty}, when conditioned on $\mathcal{E}$ defined in Definition \ref{def:xi-uncertainty}, it holds for any $(s, a,h) \in \mathcal{S} \times \mathcal{A}\times[H]$ and any reward function $r^\mu$ with $\mu\in S$ that
$$
0 \leq \iota_{h}^{k,r^\mu}(s, a) \leq 2 \Gamma_{h}(s, a).
$$
\end{lemma}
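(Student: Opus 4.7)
The plan is to prove both inequalities by backward induction on $h\in [H]$, working on the event $\mathcal{E}$ throughout. As a preliminary step, I first verify that $\widehat{V}_{h+1}^{k,r^\mu}$ is a valid test function for the uncertainty bound in Definition~\ref{def:xi-uncertainty}, i.e.\ that $\widehat{V}_{h+1}^{k,r^\mu}(s)\in[0,H\sqrt{d}\,]$ for every $s\in\mathcal{S}$. The lower bound is immediate from the $\max\{\,\cdot\,,0\}$ in the construction \eqref{eq:def:iota-on}; for the upper bound, since $\widehat{\mathcal{P}}_h(\cdot\mid s,a)\in\Delta(\mathcal{S})$ (by the feasibility domain $\Theta_2$) and $r_h^\mu\in[0,\sqrt{d}\,]$, an inductive argument gives $\widehat{V}_{h}^{k,r^\mu}\le (H-h+1)\sqrt{d}\le H\sqrt{d}$. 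This legitimizes invoking the uncertainty quantifier property on $\widehat{V}_{h+1}^{k,r^\mu}$.

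For the upper bound $\iota_h^{k,r^\mu}\le 2\Gamma_h$, I drop the $\max$ and use the trivial inequality $\widehat{Q}_h^{k,r^\mu}(s,a)\ge (r_h^\mu+\widehat{\mathcal{P}}_h\widehat{V}_{h+1}^{k,r^\mu}-\Gamma_h)(s,a)$. Substituting into the definition of $\iota_h^{k,r^\mu}$ gives
\begin{equation*}
\iota_h^{k,r^\mu}(s,a)\le \bigl[(\mathcal{P}_h-\widehat{\mathcal{P}}_h)\widehat{V}_{h+1}^{k,r^\mu}\bigr](s,a)+\Gamma_h(s,a)\le 2\Gamma_h(s,a),
\end{equation*}
where the last inequality uses the $\xi$-uncertainty quantifier property on $\mathcal{E}$ applied to the legitimate test function $\widehat{V}_{h+1}^{k,r^\mu}$.

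For the lower bound $\iota_h^{k,r^\mu}\ge 0$, I split into two cases according to which branch of the $\max$ is active at $(s,a)$. If $\widehat{Q}_h^{k,r^\mu}(s,a)=(r_h^\mu+\widehat{\mathcal{P}}_h\widehat{V}_{h+1}^{k,r^\mu}-\Gamma_h)(s,a)$, then
\begin{equation*}
\iota_h^{k,r^\mu}(s,a)=\bigl[(\mathcal{P}_h-\widehat{\mathcal{P}}_h)\widehat{V}_{h+1}^{k,r^\mu}\bigr](s,a)+\Gamma_h(s,a)\ge 0,
\end{equation*}
again by the uncertainty quantifier bound. If instead $\widehat{Q}_h^{k,r^\mu}(s,a)=0$, then $\iota_h^{k,r^\mu}(s,a)=r_h^\mu(s,a)+[\mathcal{P}_h\widehat{V}_{h+1}^{k,r^\mu}](s,a)\ge 0$, because $r_h^\mu$ takes values in $[0,\sqrt{d}\,]$ by the assumption on $\mathcal{R}$ and $\widehat{V}_{h+1}^{k,r^\mu}\ge 0$ by the preliminary step. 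Combining the two cases yields the claim.

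The main subtlety is the preliminary boundedness step: one must carefully use the membership $\widehat{\theta}_h\in\Theta_2$ to guarantee that $\widehat{\mathcal{P}}_h$ is genuinely a probability kernel so that $\widehat{\mathcal{P}}_h\widehat{V}_{h+1}^{k,r^\mu}$ inherits an $L^\infty$ bound from $\widehat{V}_{h+1}^{k,r^\mu}$; without the projection to the feasible domain $\Theta$, this step would fail and the whole induction would break down. Once this is in place the two directional bounds are each essentially a single line of algebra using $\mathcal{E}$.
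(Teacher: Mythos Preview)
Your proof is correct and follows essentially the same route as the paper: establish by backward induction that $\widehat{V}_{h}^{k,r^\mu}\in[0,(H-h+1)\sqrt{d}\,]$ (using that $\widehat{\mathcal{P}}_h$ is a genuine probability kernel via $\Theta_2$), then use the uncertainty-quantifier bound on $\mathcal{E}$ together with the inequality $\widehat{Q}_h^{k,r^\mu}\ge r_h^\mu+\widehat{\mathcal{P}}_h\widehat{V}_{h+1}^{k,r^\mu}-\Gamma_h$ for the upper bound, and the nonnegativity of $r_h^\mu+\mathcal{P}_h\widehat{V}_{h+1}^{k,r^\mu}$ plus the same uncertainty bound for the lower bound. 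The only cosmetic difference is that the paper handles the lower bound by the single chain $\widehat{Q}_h^{k,r^\mu}=\max\{\bar Q_h^{k,r^\mu},0\}\le\max\{r_h^\mu+\mathcal{P}_h\widehat{V}_{h+1}^{k,r^\mu},0\}=r_h^\mu+\mathcal{P}_h\widehat{V}_{h+1}^{k,r^\mu}$ rather than your explicit case split, but the content is identical.
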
 
\begin{proof}
See  Appendix \ref{pf:lem:pess} for a detailed proof.
\end{proof}
By Lemma \ref{lem:pess}, conditioned on $\mathcal{E}$, we upper bound term (A2) in \eqref{sketch:eq:---1} as follows,
\#\label{eq:a-2}
\mathrm{(A2)} \le  \sum_{k=1}^{K}\sum_{h=1}^{H}\mathbb{E}_{\pi^{\mathrm{E}}}  \big[2\Gamma_h(s_h,a_h) \biggiven s_1 =x \big]=K\cdot\text{IntUncert}_{\mathbb{D}^{\mathrm{A}}}^{\pi^\mathrm{E}} ,
\#
where we denote by $\text{IntUncert}_{\mathbb{D}^{\mathrm{A}}}^{\pi^\mathrm{E}}  = \sum_{h=1}^{H}\mathbb{E}_{\pi^{\mathrm{E}}}  [2\Gamma_h(s_h,a_h) \biggiven s_1 =x \big]$ for notational convenience.
Combining \eqref{eq:8888} and \eqref{eq:a-2}, we derive an upper bound on term (A) in \eqref{eq:gap-compos} conditioned on $\mathcal{E}$ as 
\#\label{eq:result-A}
\mathrm{(A)}\le  \sqrt{2 H^4\sqrt{d}\log (\operatorname{vol}(\mathcal{A}))}+ K\cdot\text {IntUncer}_{\mathbb{D}}^{\pi^{\mathrm{E}}}.
\#

\vskip5pt
\noindent{\bf Upper Bound of Term (B) in \eqref{eq:gap-compos}.}
We $\widehat{L}(\pi,\mu)$ as follows,
\[
\widehat{L}(\pi,\mu)=\tilde{J}(\pi^{\mathrm{E}},r^{\mu})-\widehat{J}(\pi^k,r^{\mu}). 
\]
Here $\tilde{J}(\pi,r^\mu)$ is the MC estimation defined in \eqref{MC:estimate} and $\widehat{J}(\pi^k,r^\mu)$ is the estimated cumulative reward defined as $
\widehat{J}(\pi^k,r^\mu) = \widehat{V}_1^{k,r^\mu}(x)
$, where $\widehat{V}_1^{k,r^\mu}$ is defined in \eqref{eq:def:iota-on} and $x$ is the initial state.
By this, we upper bound term (B) in \eqref{eq:gap-compos} as
\begin{equation}
\begin{aligned}
\text{(B)}=&\sup_{\mu\in S}\sum_{k=1}^K  \big[ J(\pi^{\mathrm{E}},r^\mu)- J(\pi^k,r^\mu)- J(\pi^{\mathrm{E}},r^k)+ \widehat{J}(\pi^k,r^k) \big]\\
\leq & \underbrace{\sup_{\mu\in S}\sum_{k=1}^K \big[J(\pi^{\mathrm{E}},r^\mu)-\Tilde{J}(\pi^{\mathrm{E}},r^\mu)\big]}_{\text{(B1)}} + \underbrace{\sup_{\mu\in S}\sum_{k=1}^K \big[-J(\pi^{\mathrm{E}},r^k)+\Tilde{J}(\pi^{\mathrm{E}},r^k)\big]}_{\text{(B2)}}\\
& \qquad + \underbrace{\sup_{\mu\in S}\sum_{k=1}^K \big[\widehat{J}(\pi^k,r^\mu)-J(\pi^k,r^\mu)\big]}_{\text{(B3)}} 
+ \underbrace{\sup_{\mu\in S}\sum_{k=1}^K \big[\widehat{L}(\pi^k,\mu)-\widehat{L}(\pi^k,\mu^k)\big]}_{\text{(B4)}}. \label{eq:decomp-B}
\end{aligned}
\end{equation}
We upper bounds terms (B1)--(B4)  in \eqref{eq:decomp-B} as follows. 

By applying Lemma \ref{lem:MC estimate} on term (B1) and term (B2) in \eqref{eq:decomp-B}, it holds with probability at least $1-\xi/2$ that  
\begin{equation}
\begin{aligned}
\text{(B1) + (B2)} \le 8 K\sqrt{H^3d^2/N_1}\log(24N_1/\xi).
\end{aligned}
\label{eq:mc-estiamte-1}
\end{equation}
To upper bound term (B3) in \eqref{eq:decomp-B}, we invoke Lemmas \ref{lem:extend-V} and \ref{lem:pess}, which imply 
\begin{equation}
\text{(B3)}=  \sup_{\mu\in S}\Big[\sum_{k=1}^{K} \sum_{h=1}^{H}\mathbb{E}_{\pi^{k}}  \big[-\iota_{h}^{k,r^\mu}  (s_{h}^{k}, a_{h}^{k} ) \biggiven s_{1}=x \big] \Big]\le 0.\label{eq:pess-1}
\end{equation}
To upper bound term (B4) in \eqref{eq:decomp-B}, we first introduce the following lemma. 
\begin{lemma}\label{lem:concave}
The function $\widehat{L}(\pi^k,\mu)$ defined in \eqref{eq:def-Lhat} is concave in $\mu_h$ for any $h\in[H]$, where $\mu = \{\mu_h\}_{h=1}^H\in S$ and $\widehat{L}(\pi^k,\mu)$.
\end{lemma}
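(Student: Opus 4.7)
The plan is to exploit the fact that $\widehat L(\pi^k,\mu)=\tilde J(\pi^{\mathrm{E}},r^\mu)-\widehat J(\pi^k,r^\mu)$ splits into a linear piece and a piece arising from the pessimistic backward recursion in \eqref{eq:def:iota-on}. By definition \eqref{MC:estimate}, $\tilde J(\pi^{\mathrm{E}},r^\mu)=\frac{1}{N_1}\sum_\tau\sum_i\psi(s_{i,\tau}^{\mathrm E},a_{i,\tau}^{\mathrm E})^\top\mu_i$ is \emph{linear} in each $\mu_h$, hence both convex and concave. So it suffices to prove that $\widehat J(\pi^k,r^\mu)=\widehat V_1^{k,r^\mu}(x)$ is \emph{convex} in $\mu_h$, and then concavity of $\widehat L(\pi^k,\cdot)$ in $\mu_h$ follows.

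I will establish convexity of $\widehat V_t^{k,r^\mu}(\cdot)$ in $\mu_h$ (with all other $\mu_{i\ne h}$ frozen) by backward induction on $t$, starting from $t=H+1$ where $\widehat V_{H+1}^{k,r^\mu}\equiv 0$ is trivially convex. For the induction step, assume $\widehat V_{t+1}^{k,r^\mu}(s')$ is convex in $\mu_h$ for every $s'\in\cS$. The inductive step rests on three elementary convexity-preserving operations:
\begin{enumerate}
    \item The map $\mu_h\mapsto\widehat{\cP}_t\widehat V_{t+1}^{k,r^\mu}(s,a)=\int_\cS\widehat\cP_t(s'\mid s,a)\widehat V_{t+1}^{k,r^\mu}(s')\,\mathrm{d}s'$ is an integral of convex functions against the \emph{non-negative} measure $\widehat\cP_t(\cdot\mid s,a)$, hence convex.
    \item The reward $r_t^\mu(s,a)=\psi(s,a)^\top\mu_t$ is linear in $\mu_h$ when $t=h$ and independent of $\mu_h$ otherwise, so in either case it is convex in $\mu_h$. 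Adding $-\Gamma_t(s,a)$ (which does not depend on $\mu$) preserves convexity, and then $\widehat Q_t^{k,r^\mu}(s,a)=\max\{\,r_t^\mu+\widehat\cP_t\widehat V_{t+1}^{k,r^\mu}-\Gamma_t,\,0\,\}(s,a)$ is convex because the pointwise maximum of two convex functions (the argument and the constant $0$) is convex.
    \item Finally, $\widehat V_t^{k,r^\mu}(s)=\langle\widehat Q_t^{k,r^\mu}(s,\cdot),\pi_t^k(\cdot\mid s)\rangle_{\cA}$ is again a non-negative weighted integral of convex functions (since $\pi_t^k(\cdot\mid s)\in\Delta(\cA)$), hence convex in $\mu_h$.
\end{enumerate}
Iterating down to $t=1$ yields the convexity of $\widehat V_1^{k,r^\mu}(x)$ in $\mu_h$, and therefore the concavity of $\widehat L(\pi^k,\mu)$ in $\mu_h$.

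The only substantive subtlety — and the main obstacle — is in step~(1): the identity ``integral of convex is convex'' requires the integrating measure to be non-negative. This is exactly why the initial construction stage restricts $\widehat\theta_h$ to the feasible domain $\Theta=\Theta_1\cap\Theta_2$ defined in \eqref{eq:def-theta-1}, where the constraint $\Theta_2$ forces $\widehat\cP_h(\cdot\mid s,a)\in\Delta(\cS)$. Without this projection onto proper probability distributions, the linear regressor $\tilde{\cP}_h$ in \eqref{eq:def-estimation-kernel-initial} could assign negative ``densities'' and destroy the convexity argument. Thus I will briefly invoke the fact that $\widehat\cP_h(\cdot\mid s,a)$ is a valid probability measure (by construction of $\Theta_2$ via \eqref{eq:def:estimation-kernel}) before applying step~(1); the rest of the induction is then routine.
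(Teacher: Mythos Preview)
Your proposal is correct and follows essentially the same approach as the paper: decompose $\widehat L$ into the linear Monte-Carlo piece $\tilde J$ and the pessimistic value estimate $\widehat J=\widehat V_1^{k,r^\mu}(x)$, then prove convexity of the latter in $\mu_h$ by propagating through the backward recursion using the three closure properties (sum of convex, pointwise max with $0$, and non-negative weighted average). Your unified backward induction from $t=H+1$ down to $t=1$ is slightly cleaner than the paper's case-by-case argument (which first handles $h=1$, then a generic $h=H'$, and at step $t=h$ exploits that $\widehat V_{h+1}^{k,r^\mu}$ is constant in $\mu_h$ rather than merely convex), and you make explicit the role of the constraint $\Theta_2$ in \eqref{eq:def-theta-1}, which the paper relies on but does not flag inside the proof itself.
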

\begin{proof}
See Appendix \ref{pf:lem:concave} for a detailed proof.
\end{proof}

By Lemma \ref{lem:concave}, we establish the following lemma to upper bound term (B4) in \eqref{eq:decomp-B}, which corresponds to the reward update stage in PGAP (Lines \ref{line:p-r}--\ref{line:p-r2} of Algorithm \ref{alg:gail-offline}).
\begin{lemma}
\label{lem:telescope-reward-update]}
For any $\mu\in S$, it holds that 
$$
\begin{aligned}
\sum_{k=1}^K  \big[\widehat{L}(\pi^k,\mu)-\widehat{L}(\pi^k,\mu^k)\big] &\le\sum_{k=1}^K \sum_{h=1}^H \big[\frac{1}{2\eta}\|\mu_h^{k+1}-\mu_h\|_2^2+\frac{1}{2\eta}\|\mu_h^{k+1}-\mu_h\|_2^2 \\
&\qquad - \frac{1}{2\eta}\|\mu_h^{k+1}-\mu^{k}_h\|_2^2+\eta \|\nabla_{\mu_h}\widehat{L}(\pi^k,\mu^k)\|_2^2\big],
\end{aligned}
$$
\end{lemma}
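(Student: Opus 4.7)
The plan is to combine the concavity of $\widehat{L}(\pi^k,\cdot)$ from Lemma \ref{lem:concave} with the standard three-point identity for the projected gradient ascent step, and then telescope. This is the classic analysis for online gradient ascent on a concave function over a closed convex set, adapted coordinate-wise across $h\in[H]$.

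First, by (joint) concavity of $\widehat{L}(\pi^k,\cdot)$ in $\mu$, for any $\mu\in S$ we have the first-order inequality
$$
\widehat{L}(\pi^k,\mu)-\widehat{L}(\pi^k,\mu^k)\le\sum_{h=1}^H\big\langle\nabla_{\mu_h}\widehat{L}(\pi^k,\mu^k),\,\mu_h-\mu_h^k\big\rangle.
$$
So it suffices to bound each inner product $\langle\nabla_{\mu_h}\widehat{L}(\pi^k,\mu^k),\mu_h-\mu_h^k\rangle$. Next, I would use the proximal formulation of the update in Line \ref{line:pp-proj} of PGAP: the projection $\mu_h^{k+1}=\operatorname{Proj}_B\{\mu_h^k+\eta\nabla_{\mu_h}\widehat{L}(\pi^k,\mu^k)\}$ is equivalent to
$$
\mu_h^{k+1}=\arg\max_{y\in B}\Big\{\eta\big\langle\nabla_{\mu_h}\widehat{L}(\pi^k,\mu^k),y\big\rangle-\tfrac{1}{2}\|y-\mu_h^k\|_2^2\Big\}.
$$
The first-order optimality condition for this strongly concave problem gives, for all $\mu_h\in B$,
$$
\big\langle\eta\nabla_{\mu_h}\widehat{L}(\pi^k,\mu^k)-(\mu_h^{k+1}-\mu_h^k),\,\mu_h-\mu_h^{k+1}\big\rangle\le 0.
$$

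I would then apply the three-point identity $2\langle a-b,c-a\rangle=\|c-b\|_2^2-\|c-a\|_2^2-\|a-b\|_2^2$ with $a=\mu_h^{k+1}$, $b=\mu_h^k$, $c=\mu_h$ to the inner product $\langle\mu_h^{k+1}-\mu_h^k,\mu_h-\mu_h^{k+1}\rangle$, yielding
$$
\big\langle\nabla_{\mu_h}\widehat{L}(\pi^k,\mu^k),\,\mu_h-\mu_h^{k+1}\big\rangle\le\frac{1}{2\eta}\big(\|\mu_h-\mu_h^k\|_2^2-\|\mu_h-\mu_h^{k+1}\|_2^2-\|\mu_h^{k+1}-\mu_h^k\|_2^2\big).
$$
Splitting $\mu_h-\mu_h^k=(\mu_h-\mu_h^{k+1})+(\mu_h^{k+1}-\mu_h^k)$ and bounding the residual term $\langle\nabla_{\mu_h}\widehat{L}(\pi^k,\mu^k),\mu_h^{k+1}-\mu_h^k\rangle$ by Cauchy--Schwarz together with Young's inequality (absorbing $\|\mu_h^{k+1}-\mu_h^k\|_2^2$ into the $-\tfrac{1}{2\eta}\|\mu_h^{k+1}-\mu_h^k\|_2^2$ term and leaving an $\eta\|\nabla_{\mu_h}\widehat{L}(\pi^k,\mu^k)\|_2^2$ residual) produces the per-step bound of the form in the lemma. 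Summing over $h\in[H]$ and $k\in[K]$, combined with the concavity inequality from the first step, yields the stated inequality; the squared-distance terms $\|\mu_h^k-\mu_h\|_2^2-\|\mu_h^{k+1}-\mu_h\|_2^2$ will telescope in the subsequent use of this lemma, which is why this particular form is convenient.

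The main obstacle is justifying the first step, namely that Lemma \ref{lem:concave} gives a joint concavity statement strong enough to yield the first-order inequality at $\mu^k$ summed over $h$, rather than only the weaker per-coordinate inequality. This is handled by the backward induction defining $\widehat{V}_h^{k,r^\mu}$: each layer is an expectation of $\max\{\text{affine in }\mu,0\}$, so $\widehat{V}_1^{k,r^\mu}(x)$ is convex in the full vector $\mu$, making $\widehat{L}(\pi^k,\cdot)$ jointly concave. The remaining steps are routine algebraic manipulations of the projection inequality and Young's inequality and do not require any probabilistic control.
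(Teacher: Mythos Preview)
Your proposal is correct and follows essentially the same route as the paper: concavity of $\widehat{L}(\pi^k,\cdot)$ from Lemma \ref{lem:concave} to linearize, then the projection optimality condition combined with the three-point identity, then control of the cross term $\langle\nabla_{\mu_h}\widehat{L}(\pi^k,\mu^k),\mu_h^{k+1}-\mu_h^k\rangle$. The only cosmetic difference is in this last step: the paper uses Cauchy--Schwarz together with the projection nonexpansiveness bound $\|\mu_h^{k+1}-\mu_h^k\|_2\le\eta\|\nabla_{\mu_h}\widehat{L}(\pi^k,\mu^k)\|_2$ to get $\eta\|\nabla_{\mu_h}\widehat{L}\|_2^2$ directly while keeping the $-\tfrac{1}{2\eta}\|\mu_h^{k+1}-\mu_h^k\|_2^2$ term, whereas your Young's inequality absorbs that term and leaves $\tfrac{\eta}{2}\|\nabla_{\mu_h}\widehat{L}\|_2^2$; since the negative term is immediately dropped in the paper's next display anyway, both versions suffice. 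Your remark about needing \emph{joint} concavity (rather than mere coordinate-wise concavity) for the first-order inequality is a valid observation, and your justification via the backward recursion is exactly how one sees it.
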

\begin{proof}
See Appendix \ref{pf:telescope-reward-update]} for a detailed proof.
\end{proof}
By Lemma \ref{lem:telescope-reward-update]}, we have 
\$
\text{(B4)}&\le 
\sup_{\mu\in S}\Big[\sum_{h=1}^H\big( \frac{1}{2\eta }\|\mu_h^{1} - \mu_h\|_2^2- \frac{1}{2\eta }\|\mu_h^{K+1}-\mu_h\|_2^2 -\frac{1}{2\eta} \sum_{k=1}^K \|\mu_h^{k} - \mu_h^{k+1}\|_2^2\\
&\qquad + \sum_{k=1}^K\eta\|{\nabla_{\mu_h}}\widehat{L}(\pi^k,\mu^k)\|_2^2\big)\Big],
\$
which implies that
\#
\begin{aligned}
\text{(B4)}&\le 
\sup_{\mu\in S}\sum_{h=1}^H\big[ \frac{1}{2\eta }\|\mu_h^{1} - \mu_h\|_2^2\big]+ \sup_{\mu\in S}\sum_{h=1}^H\sum_{k=1}^K\big[\eta\|{\nabla_{\mu_h}}\widehat{L}(\pi^k,\mu^k)\|_2^2\big].
\end{aligned}
\label{eq:tele-2.5}
 \#
Based on \eqref{eq:tele-2.5}, we upper bound $\|{\nabla_{\mu_h}}\widehat{L}(\pi^k,r^\mu)\|_2^2$ for any $\mu\in S$ and $h\in[H]$ as follows, 
\#
\begin{aligned}
\|{\nabla_{\mu_h}}\widehat{L}(\pi^k,r^\mu)\|_2^2&\le \|\nabla_{\mu_h}\Tilde{J}(\pi^\rE,r^\mu)-\nabla_{\mu_h}\widehat{J}(\pi^k,r^\mu)\|_2^2 \\
&\le 2\|\nabla_{\mu_h}\Tilde{J}(\pi^\rE,r^\mu)\|_2^2+2\|\nabla_{\mu_h}\widehat{J}(\pi^k,r^\mu)\|_2^2. 
\end{aligned}
\label{eq:tele-3.1}
\#
Recall that $\nabla_{\mu_h}J(\pi^\rE,r^\mu) = {N_1}^{-1}\sum_{\tau=1}^{N_1} \psi(s_{h,\tau}^\rE,a_{h,\tau}^\rE)$ and $\nabla_{\mu_h}\widehat{J}(\pi^k,r^\mu)$ is characterized in Proposition \ref{prop:grad}, then we have that 
\#
\|\nabla_{\mu_h}\Tilde{J}(\pi^\rE,r^\mu)\|_2^2+\|\nabla_{\mu_h}\widehat{J}(\pi^k,r^\mu)\|_2^2\le 2\|\psi(\cdot,\cdot)\|_2^2. 
\label{eq:tele-5}
\#
 Since it holds that $\|\mu_h\|_2\le \sqrt{d}$ for any $\mu \in S$ and $\|\psi(\cdot,\cdot)\|_2\le 1$, it also yields that 
\#
\|\mu_h^\prime - \mu_h\|_2^2\le (\|\mu_h^\prime \|_2+\| \mu_h\|_2)^2\le 4d 
\label{eq:mu-b}
\#
for any $\mu,\mu^\prime \in S$. By setting $\eta = 1/\sqrt{KH}$ and combining  \eqref{eq:tele-2.5}, \eqref{eq:tele-3.1}, \eqref{eq:tele-5}, and \eqref{eq:mu-b}, we attain that 
\#
\begin{aligned}
\text{(B4)}\le &
H\cdot \frac{\sqrt{HK}}{2} \cdot 4d+HK\cdot \frac{1}{\sqrt{HK}}\cdot 2\cdot 4d\\
\le & 2H^{3/2}d K^{1/2} + 8H^{1/2}dK^{1/2}\le 8H^{3/2}d K^{1/2}.
\end{aligned}
\label{eq:b4}
\#
Plugging \eqref{eq:mc-estiamte-1}, \eqref{eq:pess-1}, and \eqref{eq:b4} into \eqref{eq:decomp-B}, conditioned on $\mathcal{E}$, it holds with probability at least $1-\xi/2$ that
\begin{equation}
\begin{aligned}
\text{(B)}&\le 
8 K\sqrt{H^3d^2/N_1}\log(24N_1/\xi) +8dH^{3/2} K^{1/2}.
\end{aligned}\label{eq:result-B}
\end{equation}
Recall that Definition \ref{def:xi-uncertainty} implies $\mathbb{P}_\D(\mathcal{E})>1-\xi/2$.
Combining \eqref{eq:000}, \eqref{eq:result-A}, and \eqref{eq:result-B}, with probability at least $1 - \xi$, we have
\$
\mathbf{D}_{\cR}(\pi^\rE,\widehat{\pi})
&\le 
\sqrt{2 H^4\sqrt{d}\log (\operatorname{vol}(\mathcal{A}))/K}+ \text { IntUncert}_{\mathbb{D}}^{\pi^{\mathrm{E}}}
\\
&\qquad +8 \sqrt{H^3d^2/N_1}\log(24N_1/\xi) +8H^{3/2}d K^{-1/2}\\
&\le 8 H^2dK^{-1/2}+\text { IntUncert}_{\mathbb{D}}^{\pi^{\mathrm{E}}} + 8 H^{3/2}dN_1^{-1/2}\log(24N_1/\xi). 
\$
Thus, we conclude the proof of Theorem \ref{thm:gap}.
\end{proof}

\subsection{Proof of Proposition \ref{prop:minimax}}
\label{sec:minimax}
\begin{proof}
  Our proof is based on the following theorem that presents the information-theoretic lower bound.
  \begin{theorem}[Information-Theoretic Lower Bound, Theorem $4.6$ in \cite{pessimismRl-Jinchi}]
    For the output $\texttt{Algo}(\mathbb{\overline{D}})$ of any offline RL algorithm based on the dataset $\overline{\D}$, there exists a tabular MDP $\mathcal{M}\ (\cS,\cA,H,\cP,r)$ with initial state $x \in \mathcal{S}$ and a dataset $\mathbb{\overline{D}}$ which is compliant with $\mathcal{M}$, such that
  $$
  \mathbb{E}_{\mathbb{\overline{D}}}\left[\frac{J(\pi^\star) - J(\texttt{Algo}(\mathbb{\overline{D}}))}{\sum_{h=1}^{H} \mathbb{E}_{\pi^{\star}}\big[1 / \sqrt{1+n_{h}\left(s_{h}, a_{h}\right)} \biggiven s_{1}=x\big]}\right] \geq c
  $$
  where $c$ is an absolute constant,$\pi^\star$ is the optimal policy statisfying that $\pi^\star:= \operatorname{argmax}\limits_{\pi\in\Delta{(\cA\mid \cS,H)}} J(\pi)$, and $n_{h}\left(s_{h}, a_{h}\right)=\sum_{\tau=1}^{N} \mathbf{1}\left\{s_{h}^{\tau}=s_{h}, a_{h}^{\tau}=a_{h}\right\}$ for $\left(s_{h}, a_{h}\right) \in \mathcal{S} \times \mathcal{A}$. \label{thm:jin:minimax}
  \end{theorem}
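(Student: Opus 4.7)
The plan is to reduce the GAIL-style minimax lower bound in Proposition \ref{prop:minimax} to the offline RL information-theoretic lower bound in Theorem \ref{thm:jin:minimax}. First, I will take the tabular hard instance $\mathcal{M}=(\cS,\cA,H,\cP,r)$ with initial state $x$ and dataset $\mathbb{\overline{D}}$ guaranteed by Theorem \ref{thm:jin:minimax}, and embed it into the linear kernel MDP framework via the canonical-basis feature map $\phi(s,a,s')=e_{(s,a,s')}\in\mathbb{R}^{|\cS|^2|\cA|}$. This embedding satisfies Assumption \ref{def:linear} with $d=|\cS|^2|\cA|$, as pointed out in the remark after that assumption. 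I then choose the reward set $\cR$ to be any linear set containing the true reward function $r$ of $\mathcal{M}$, e.g.\ by picking a feature map $\psi$ for which $r$ lies in the span and selecting a parameter domain that realizes it.

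Second, I will specialize the candidate expert policy in the outer maximum to be the optimal policy $\pi^\star$ of $\mathcal{M}$ under $r$. Since $r\in\cR$ and $\pi^\star$ maximizes $J(\cdot,r)$, the definition \eqref{eq:def-gap} of the optimality gap yields
$$
\mathbf{D}_\cR(\pi^\star,\texttt{Algo}(\mathbb{\overline{D}})) \;\ge\; J(\pi^\star,r)-J(\texttt{Algo}(\mathbb{\overline{D}}),r).
$$
Taking the max over $\pi^\mathrm{E}$ on the left of Proposition \ref{prop:minimax} only strengthens this inequality, so the numerator of the GAIL ratio is lower bounded by the suboptimality numerator appearing in Theorem \ref{thm:jin:minimax}.

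Third, I will evaluate $\text{Information}_{\mathbb{\overline{D}}}^{\pi^\star}$ in the canonical-basis tabular embedding. For this feature map, $\int_\cS \phi(s,a,s')\phi(s,a,s')^\top\mathrm{d}s' = \sum_{s'\in\cS}e_{(s,a,s')}e_{(s,a,s')}^\top$, so the matrix $\Lambda_h$ in \eqref{eq:result-regression  of thetaand Lambda} is diagonal with $(s,a,s')$-entry equal to $n_h(s,a)+\lambda$, where $n_h(s,a)=\sum_{\tau=1}^{N}\mathbf{1}\{s_h^\tau=s,\,a_h^\tau=a\}$. Hence $\|\phi(s,a,s')\|_{\Lambda_h^{-1}}=(n_h(s,a)+\lambda)^{-1/2}$, and summing over $s'\in\cS$ yields a factor of $|\cS|$. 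Setting $\lambda=1$ therefore produces, up to the multiplicative constant $|\cS|/\text{Vol}(\cS)$, exactly the denominator $\sum_{h=1}^H\mathbb{E}_{\pi^\star}[1/\sqrt{1+n_h(s_h,a_h)}\mid s_1=x]$ of Theorem \ref{thm:jin:minimax}.

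Finally, combining these three reductions and invoking Theorem \ref{thm:jin:minimax} yields
$$
\max_{\pi^\mathrm{E}}\mathbb{E}_{\mathbb{\overline{D}}}\bigg[\frac{\mathbf{D}_\cR(\pi^\mathrm{E},\texttt{Algo}(\mathbb{\overline{D}}))}{\text{Information}_{\mathbb{\overline{D}}}^{\pi^\mathrm{E}}}\bigg] \;\ge\; \mathbb{E}_{\mathbb{\overline{D}}}\bigg[\frac{J(\pi^\star,r)-J(\texttt{Algo}(\mathbb{\overline{D}}),r)}{\text{Information}_{\mathbb{\overline{D}}}^{\pi^\star}}\bigg] \;\ge\; c,
$$
for an absolute constant $c>0$, as claimed. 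The main obstacle is the bookkeeping in the third step: I must verify that the ambient factors $|\cS|$, $\text{Vol}(\cS)$, and $\lambda$ indeed collapse into an absolute constant after forming the ratio, and that the hard instance from Theorem \ref{thm:jin:minimax} is genuinely compliant with the linear kernel MDP structure invoked in Proposition \ref{prop:minimax} (in particular that the bound $R$ in Assumption \ref{def:linear} is finite for the canonical basis after the measure on $\cS$ is normalized). Once these details are tracked, the proposition follows directly as a corollary of Theorem \ref{thm:jin:minimax}.
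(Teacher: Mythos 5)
There is a genuine gap: you never prove the statement you were asked to prove. The statement is Theorem \ref{thm:jin:minimax} itself, the information-theoretic lower bound for offline RL (Theorem 4.6 of \cite{pessimismRl-Jinchi}), which asserts the existence of a hard tabular MDP instance and a compliant dataset $\overline{\D}$ on which \emph{every} offline algorithm suffers expected suboptimality at least a constant multiple of $\sum_{h=1}^{H}\mathbb{E}_{\pi^\star}\bigl[1/\sqrt{1+n_h(s_h,a_h)}\,\big|\,s_1=x\bigr]$. Your proposal instead takes this theorem as given ("invoking Theorem \ref{thm:jin:minimax} yields\dots") and uses it to derive Proposition \ref{prop:minimax} via the canonical-basis tabular embedding, the specialization $\pi^{\mathrm{E}}=\pi^\star$, and the computation $\|\phi(s,a,s')\|_{\Lambda_h^{-1}}=(1+n_h(s,a))^{-1/2}$. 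That reduction is essentially the paper's own proof of Proposition \ref{prop:minimax} in \S\ref{sec:minimax}, but as a proof of Theorem \ref{thm:jin:minimax} it is circular: you cannot establish a lower bound by citing the very lower bound in question, and nothing in your argument supplies the missing content.

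What is missing is the actual information-theoretic construction. A proof of Theorem \ref{thm:jin:minimax} must exhibit a concrete family of hard instances (in \cite{pessimismRl-Jinchi} this is a small tabular MDP whose optimal action at a key state is encoded in rewards that the dataset observes only $n_h(s,a)$ times) together with a dataset generation process compliant with the MDP, and then run an indistinguishability argument (Le Cam's two-point method / Assouad-type reduction to testing) showing that any measurable map $\overline{\D}\mapsto\texttt{Algo}(\overline{\D})$ must, in expectation over $\overline{\D}$, pick the wrong action with constant probability whenever the counts $n_h(s,a)$ are small, which produces exactly the $1/\sqrt{1+n_h(s_h,a_h)}$ scaling in the denominator. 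The paper itself does not reprove this; it imports the result as a black box. So either you reproduce that hard-instance-plus-hypothesis-testing argument, or you have not proved the statement: your write-up proves the wrong proposition conditional on the statement being true.
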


First we show the linear kernel MDP defined in Assumption \ref{def:linear} can be reduced to tabular MDP. If we set $d = |\mathcal{S} |^2|\cA|$ and take the feature map as the canonical basis 
$$
\phi(s,a,s^\prime) = \mathbf{e}_{(s,a,s^\prime)},\quad \psi(s,a) = \mathbf{e}_{(s,a)}, \quad \text{for all }(s,a,s^\prime)\in\cS\times\cA\times\cS,
$$
then Assumption \ref{def:linear} is satisfied with $R=1$. 

Applying Theorem \ref{thm:jin:minimax}, we derive a hard instance in tabular MDP with known reward $r$. When we choose the reward set $\cR$ as the singleton reward $r$, it yields that 
\#
    \max_{\pi^\mathrm{E}\in\Delta(\cA\mid\cS,H)}\mathbb{E}_{\mathbb{\overline{D}}}\bigg[\frac{\mathbf{D}_{\cR}(\pi^\mathrm{E},\texttt{Algo}(\mathbb{\overline{D}}))}{ \text{Information}_{\mathbb{\overline{D}} }^{\pi^\mathrm{E}}  }\bigg]\ge \mathbb{E}_{\mathbb{\overline{D}}}\bigg[\frac{\mathbf{D}_{\cR}(\pi^\star,\texttt{Algo}(\mathbb{\overline{D}}))}{ \text{Information}_{\mathbb{\overline{D}} }^{\pi^\star}  }\bigg]=\mathbb{E}_{\mathbb{\overline{D}}}\bigg[\frac{J(\pi^\star) - J(\texttt{Algo}(\mathbb{\overline{D}}))}{ \text{Information}_{\mathbb{\overline{D}} }^{\pi^\star}  }\bigg],\label{prop:eq:minimax-1}
\#
where the last equality is originated from the definition of optimality gap in \eqref{eq:def-gap}.

Next we handle $\text{IntUncert}_{\mathbb{\overline{D}} }^{\pi^\star}$, which takes the form as
\#
\begin{aligned}
  \text{Information}_{\mathbb{\overline{D}}}^{\pi^\star} &=(\text{Vol}(\cS))^{-1}\cdot\mathbb{E}_{\pi^\star}\Big[\sum_{h=1}^H \int_{\cS}\|\phi(s_h,a_h,s^\prime)\|_{\Lambda_h^{-1}}\mathrm{d}s^\prime \biggiven s_1=x\Big]\\
  \text {\quad where } \Lambda_{h} &=\lambda I +\sum_{\tau=1}^{N} \int_{\mathcal{S}} \phi  (s_{h}^{\tau}, a_{h}^{\tau}, s^{\prime} ) \phi  (s_{h}^{\tau}, a_{h}^{\tau}, s^{\prime} )^{\top} \mathrm{d} s^{\prime},
\end{aligned} \label{prop:eq:minimax-2}
\#
where $N$ is the number of the trajectories in the dataset $\overline{\D}$. In the tabular setting, we obtain that
$$
\sum_{\tau=1}^{N}  \phi  (s_{h}^{\tau}, a_{h}^{\tau}, s^{\prime} ) \phi  (s_{h}^{\tau}, a_{h}^{\tau}, s^{\prime} )^{\top}  = \sum_{(s,a)\in\cS\times\cA}n_h(s,a)W_{(s,a,s^\prime)},
$$
where $W_{(s,a,s^\prime)}$ is a symmetric matrix whose non-zero entry is at $((s,a,s^\prime)(s,a,s^\prime))$ and equals to $1$. Summing $s^\prime$ over $\cS$, in the tabular case we have that 
$$
\Lambda_h = \lambda I +\sum_{\tau=1}^{N} \sum_{s^\prime \in \cS}(s_{h}^{\tau}, a_{h}^{\tau}, s^{\prime} ) \phi  (s_{h}^{\tau}, a_{h}^{\tau}, s^{\prime} )^{\top} \mathrm{d} s^{\prime} = \lambda I + \sum_{(s,a,s^\prime)\in\cS\times\cA\times \cS}n_h(s,a)W_{(s,a,s^\prime)}.
$$
Choosing $\lambda = 1$, we obtain that 
$$
\phi\left(s, a, s^{\prime}\right)^{\top} \Lambda_{h}^{-1} \phi\left(s, a, s^{\prime}\right)=\frac{1}{1+ n_{h}(s, a)},
$$
for all $(s,a,s^\prime)\in\cS\times \cA\times \cS$. Hence we have that 
\#
\sum_{h=1}^H \sum_{s^\prime \in \cS}\|\phi(s,a,s^\prime)\|_{\Lambda_h^{-1}} = \sum_{h=1}^H\frac{|\cS|}{\sqrt{1+n_h(s,a)}},
\label{prop:eq:minimax-3}
\#
for all $(s,a)\in\cS\times \cA$.
Taking expectation on \eqref{prop:eq:minimax-3} with respect to the optimal policy $\pi^\star$ and according to \eqref{prop:eq:minimax-2}, it holds that
\#
\text{Information}_{\mathbb{\overline{D}}}^{\pi^\star} =|\cS|^{-1}\cdot \sum_{h=1}^{H} \mathbb{E}_{\pi^{\star}}\big[1 / \sqrt{1+n_{h}\left(s_{h}, a_{h}\right)} \biggiven s_{1}=x\big]\label{prop:minimax-4}.
\#
Plugging \eqref{prop:minimax-4} into \eqref{prop:eq:minimax-1}, under the hard instance in Theorem \ref{thm:jin:minimax}, we obtain that 
$$
\max_{\pi^\mathrm{E}\in\Delta(\cA\mid\cS,H)}\mathbb{E}_{\mathbb{\overline{D}}}\bigg[\frac{\mathbf{D}_{\cR}(\pi^\mathrm{E},\texttt{Algo}(\mathbb{\overline{D}}))}{ \text{Information}_{\mathbb{\overline{D}} }^{\pi^\mathrm{E}}  }\bigg] \ge c ,
$$
where $c>0$ is a positive constant. Then we conclude the proof of Proposition \ref{prop:minimax}

\end{proof}

\subsection{Proof of Corollary \ref{cor:bound-assumption}}
\label{pf:cor}
\begin{proof}
By the property of trace, we have that
\begin{equation}
\begin{aligned}
 &\mathbb{E}_{\pi^{\mathrm{E}}}\Big[\int_\cS \sqrt{ (\phi  (s_{h}, a_{h},s^\prime )^{\top}  \Lambda_{h}^{-1} \phi  (s_{h}, a_{h} ,s^\prime) } \mathrm{d}s^\prime\Big|\, s_{1}=x \Big] \\
&\qquad=\mathbb{E}_{\pi^{\mathrm{E}}}  \Big[\int_{\cS} \sqrt{\operatorname{Tr}  (\phi  (s_{h}, a_{h},s' )^{\top} \Lambda_{h}^{-1} \phi  (s_{h}, a_{h},s' ) )} \mathrm{d}s^\prime \Big|\, s_{1}=x \Big] \\
&\qquad=\mathbb{E}_{\pi^{\mathrm{E}}} \Big [\int_{\cS}\sqrt{\operatorname{Tr}  (\phi  (s_{h}, a_{h},s' ) \phi  (s_{h}, a_{h},s' )^{\top} \Lambda_{h}^{-1} )}  \mathrm{d}s^\prime\Big|\, s_{1}=x \Big].
\end{aligned}
\label{eq:cor:pf----0.1}
\end{equation}
Applying Cauchy-Schwarz inequality on \eqref{eq:cor:pf----0.1}, we derive that
\begin{equation}
\begin{aligned}
&\mathbb{E}_{\pi^{\mathrm{E}}} \Big [\int_{\cS}\sqrt{\operatorname{Tr}  (\phi  (s_{h}, a_{h},s' ) \phi  (s_{h}, a_{h},s' )^{\top} \Lambda_{h}^{-1} )}  \mathrm{d}s^\prime\Big|\, s_{1}=x \Big] \\
& \qquad\leq \big(\text{Vol}(\cS)\big)^{1/2}\cdot\Big(\mathbb{E}_{\pi^{\mathrm{E}}} \Big [\int_{\cS} \operatorname{Tr}  (\phi  (s_{h}, a_{h},s' ) \phi  (s_{h}, a_{h},s' )^{\top} \Lambda_{h}^{-1} ) \mathrm{d}s^\prime \Big|\, s_{1}=x \Big]\Big)^{1/2}
\end{aligned}
\label{eq:cor:pf----1}
\end{equation}
for any $x,s^\prime \in \mathcal{S}$ and all $h \in[H] $. 

To utilize Theorem \ref{thm:gap}, we define the event $\mathcal{E}^{\sharp}$ as follows, 
\begin{equation*}
\begin{aligned}
\mathcal{E}^{\sharp} = \Big \{\mathbf{D_{\cR}(\pi^\mathrm{E},\widehat{\pi}}) \leq \cO\big(H^2dK^{-1/2}) +\delta_{N_1}+\text { IntUncert}_{\mathbb{D}}^{\pi^{\mathrm{E}}}\Big\}, 
\end{aligned}
\label{eq:cor:pf===def on E}
\end{equation*}
$\text{where}\text { IntUncert}_{\mathbb{D}}^{\pi^{\mathrm{E}}} = 2 \sum_{h=1}^{H}\mathbb{E}_{\pi^{\mathrm{E}}}  [\Gamma_{h}  (s_{h}, a_{h} ) \mid s_1 =x ]$ and $\Gamma_h$ is defined in \eqref{eq:xi-uncertainty}.
Conditioned on the event $\mathcal{E}^{\sharp} \cap \mathcal{E}^{\dagger},$ where $\mathcal{E}^{\dagger}$ is defined in Assumption \ref{assumption:data}, we obtain that
\begin{equation}
 \begin{aligned}
\text{IntUncert}_{\mathbb{D}^{\mathrm{A}}}^{\pi^\mathrm{E}}  
& \leq 2 \kappa H\sqrt{d}\sum_{h=1}^{H} \mathbb{E}_{\pi^{\mathrm{E}}} \Big[\int_{\cS} \|\phi(s_h,a_h,s')\|_{\Lambda_h^{-1}} \ \mathrm{d}s^\prime\biggiven s_1 = x\Big] .
\label{eq:cor:--1}
\end{aligned}
\end{equation}
By plugging \eqref{eq:cor:pf----0.1} and \eqref{eq:cor:pf----1} into \eqref{eq:cor:--1}, we have
\#
\begin{aligned}
\text{IntUncert}_{\mathbb{D}^{\mathrm{A}}}^{\pi^\mathrm{E}} 
& \le 2\kappa H\sqrt{d}\sqrt{\text{Vol}(\cS)}\cdot \sum_{h=1}^H  \sqrt{\mathrm{Tr}  \Big(\mathbb{E}_{\pi^{\mathrm{E}}}  \Big[\int_{\mathcal{S}}\phi(s_h,a_h,s')\phi(s_h,a_h,s')^\top \mathrm{d}s^\prime \mid s_1=x\Big ]\ \Lambda_h^{-1} \Big) },
\label{eq:int-err-2}
\end{aligned}
\#
where $\text{Vol}(\mathcal{S})$ is the finite measure of the state space $\mathcal{S}$.
For notational simplicity, we define
\begin{equation}
\Sigma_{h}(x)=\mathbb{E}_{\pi^{\mathrm{E}}}  \Big[\int_{\cS}\phi  (s_{h}, a_{h},s' ) \phi  (s_{h}, a_{h},s' )^{\top} \mathrm{d}s^\prime\mid s_{1}=x \Big],
\label{def:Sigma_h}
\end{equation}
for all $x \in \mathcal{S}$ and all $h \in[H] .$

By Assumption \ref{assumption:data} and the definition of $\Sigma_h(x)$ in \eqref{def:Sigma_h}, we know that the matrix $( I +c^\dagger N_2 \Sigma_h(x)\big)^{-1}-\Lambda_h^{-1}$ is positive definite conditioned on $\mathcal{E}^{\dagger}$. Combining \eqref{eq:int-err-2} and \eqref{def:Sigma_h}, we obtain that
\#
\text{IntUncert}_{\mathbb{D}^{\mathrm{A}}}^{\pi^\mathrm{E}}  & \leq 2 \big(\text{Vol}(\cS)\big)^{1/2} \cdot\kappa H\sqrt{d}\sum_{h=1}^{H} \sqrt{\operatorname{Tr}  \big(\Sigma_{h}(x) \cdot  (I+c^{\dagger} \cdot N_2 \cdot \Sigma_{h}(x) )^{-1} \big)} \notag \\
&=2\big(\text{Vol}(\cS)\big)^{1/2} \cdot\kappa H\sqrt{d}\sum_{h=1}^{H} \sqrt{\sum_{j=1}^{d} \frac{\lambda_{h, j}(x)}{1+c^{\dagger} \cdot N_2 \cdot \lambda_{h, j}(x)}}. \label{eq:878700}
\#
Here $  \{\lambda_{h, j}(x) \}_{j=1}^{d}$ are the eigenvalues of $\Sigma_{h}(x)$ for any $x \in \mathcal{S}$ and $h \in[H].$  Meanwhile, under Assumption \ref{def:linear}, we have $\|\phi(\cdot, \cdot,\cdot)\|_2 \leq d R$, which is shown in \eqref{phi-2}. By applying Cauchy-Schwarz inequality to \eqref{def:Sigma_h}, it holds that
$$
  \|\Sigma_{h}(x) \|_{\text {op }} \leq \mathbb{E}_{\pi^{\mathrm{E}}}  \Big[  \Big\|\int_\cS \phi  (s_{h}, a_{h},s' ) \phi  (s_{h}, a_{h},s' )^{\top} \mathrm{d}s^\prime\Big\|_{\text {op }} \Big|\, s_{1}=x \Big] \leq d^{3/2}R^2\cdot(\text{Vol}(\cS))^{1/2},
$$
for any $x \in \mathcal{S}$ and $h \in[H]$, where $\|\cdot\|_{\mathrm{op}}$ is the operator norm. As $\Sigma_{h}(x)$ is positive semidefinite, we have $\lambda_{h, j}(x) \in[0,\|\Sigma_{h}(x) \|_{\text {op }}]$ for any $x \in \mathcal{S}$ and $(h,j) \in[H]\times[d]$.  Hence, conditioned on $\mathcal{E}^{\dagger} \cap \mathcal{E}^{\dagger}$, combining \eqref{eq:878700}, it holds for any $x \in \mathcal{S}$ that 
\begin{equation}
\begin{aligned}
\text{IntUncert}_{\mathbb{D}^{\mathrm{A}}}^{\pi^\mathrm{E}}   & \leq 2 \big(\text{Vol}(\cS)\big)^{1/2}\cdot \kappa H\sqrt{d}\sum_{h=1}^{H} \sqrt{\sum_{j=1}^{d} \frac{\lambda_{h, j}(x)}{1+c^{\dagger} \cdot N_2 \cdot \lambda_{h, j}(x)}} \\
& \leq 2 \big(\text{Vol}(\cS)\big)^{1/2}\cdot \kappa H\sqrt{d}\sum_{h=1}^{H} \sqrt{\sum_{j=1}^{d} \frac{1}{ c^{\dagger} \cdot N_2}} \\
&\leq c^{\prime} \big(\text{Vol}(\cS)\big)^{1/2} d^{3/2} H^{2} N_2^{-1 / 2} \log(HdN_2/\xi),
\end{aligned}
\label{eq:cor:final result}
\end{equation}
where the second inequality follows from the fact that $\lambda_{h, j}(x) \in[0,\|\Sigma_{h}(x) \|_{\text {op }}]$ for any $(x,h,j) \in \mathcal{S} \times [H]\times [d]$, while the third inequality follows from the choice of the scaling parameter $\kappa>0$ stated in Theorem \ref{thm:gap}. Here $c^\prime$ is an absolute constant dependent on $c$ and $c^{\dagger}$. By the condition in Corollary \ref{cor:bound-assumption}, we have $\mathbb{P}_{\D}  (\mathcal{E}^{\dagger} ) \geq 1-\xi / 2 .$ Also, by Theorem \ref{thm:gap}, we have $\mathbb{P}_{\D} (\mathcal{E}^{\sharp} ) \geq 1-\xi / 2$. Hence, by the union bound, we derive that $\mathbb{P}_{\D}  (\mathcal{E}^{\dagger} \cap \mathcal{E}^{\sharp} ) \geq 1-\xi$. Combining 
\eqref{eq:cor:final result},  we finish the proof of Corollary \ref{cor:bound-assumption}.
\end{proof}

\section{Conclusion}	
In this paper, we study provably efficient algorithms for GAIL in the online and offline setting with linear function approximation, where both the transition kernel and reward functions are linear. We present a unified framework and specialize it as optimistic generative adversarial policy optimization (OGAP) for online GAIL and pessimistic generative adversarial policy optimization for offline GAIL. With linear function approximation, we derive the upper bound of the regret of OGAP as $\tO(H^2 d^{3/2}K^{1/2}+KH^{3/2}dN_1^{-1/2})$ and the decomposition of optimality gap of PGAP, without any assumption on the additional dataset. When facilitated with additional dataset with sufficient coverage, we demonstrate that PGAP also has global convergence, achieving $\tO(H^{2}dK^{-1/2} +H^2d^{3/2}N_2^{-1/2}+H^{3/2}dN_1^{-1/2} \ )$ optimality gap. However, provably efficient GAIL with general function approximation (both the transition kernel and reward function) still remains an open problem, which is a challenging but important future direction.

\section*{Acknowledgments}
The authors would like to thank Miao Lu for helpful and valuable discussions.

\bibliographystyle{ims}
\bibliography{reference}

\newpage
\appendix{}

\section{Proofs of Supporting Lemmas: Analysis of OGAP}

\subsection{Proof of Lemma \ref{lem4.2}}\label{A.1}

\begin{proof}

For notational simplicity, we define operators $\mathbb{J}_h$ and $\mathbb{J}^k_h$ as
\begin{equation}\label{EA.1}
(\mathbb{J}_h f)(s)=\langle f(s,\cdot),\pi_{h}^{\mathrm{E}}(\cdot\given s)\rangle,~~ (\mathbb{J}^k_h f)(s)=\langle f(s,\cdot),\pi_{h}^k(\cdot\given s)\rangle,
\end{equation}
for any $s\in \cS$, $(k,h)\in [K]\times[H]$, and any function $f:\mathcal{S}\times\mathcal{A}\rightarrow\mathbb{R}$.
We define $\mathcal{F}_{k,h,1}, \mathcal{F}_{k,h,2}$ as follows,
\begin{equation}
\begin{aligned}
&\mathcal{F}_{k,h,1}=\sigma
\big(\{(s^{\tau}_i,a^{\tau}_i)\}_{(\tau,i)\in[k-1]\times[H]}\cup\{(s_i^k,a_i^k) \}_{i\in[h]}
\big) \label{EA.2}\\
&\mathcal{F}_{k,h,2}=\sigma
\big(\{(s^{\tau}_i,a^{\tau}_i)\}_{(\tau,i)\in[k-1]\times[H]}\cup\{(s_i^k,a_i^k) \}_{i\in[h]}\cup \{s^k_{h+1}\}
\big),
\end{aligned}
\end{equation}
where $s^k_{H+1}$ is defined as a null state for any $k\in[K]$. We define the time index as follows,
\begin{equation}\label{EA.4}
t(k,h,m)=(k-1)\cdot 2H+(h-1)\cdot 2+m,
\end{equation}
which imples that $\{\mathcal{F}_{k,h,m}\}_{(k,h,m)\in [K]\times[H]\times[2]}$ is a filtration with respect
to $t(k,h,m)$.

Now we are ready to prove Lemma \ref{lem4.2}. First we note that for any initial state $x\in\cS$, it holds that
\begin{equation}\label{EA.5}
V^{r^k}_{1,\pi^\rE}(x)-V^{r^k}_{1,\pi^k}(x)=
\underbrace{
\big(V^{r^k}_{1,\pi^\rE}(x)-V_1^k(x)\big)}_{\text{(i)}}
+\underbrace{
\big(\widehat{V}^k_1(x)-V^{r^k}_{1,\pi^k}(x)
\big)}_{\text{(ii)}},
\end{equation}
where $V_1^k$ is the estimated state value function in the stage of policy evaluation of OGAP (Lines  \ref{line:p3}--\ref{line:p4} of Algorithm \ref{alg:gail-online}). We calculate terms (i) and (ii) separately.

\vskip5pt
\noindent{\bf Term (i).}
By \eqref{eq:Bellman}, we have 
\begin{equation}\label{EA.6}
V_{h,\pi^\rE}^{r^k}(s)=\langle  Q_{h,\pi^\rE}^{r^k}(s,\cdot),\pi_{h}^{\mathrm{E}}(\cdot\given s)\rangle=\mathbb{J}_{h}Q_{h,\pi^\rE}^{r^k}(s),~~ \widehat{V}^k_h(s)=\langle \widehat{Q}_h^k(s,\cdot), \pi^k_h(\cdot\given s)\rangle=\mathbb{J}^k_{h}\widehat{Q}_h^k(s),
\end{equation}
for any $(k,h)\in[K]\times[H]$. We then have
\begin{equation}\label{EA.7}
\begin{aligned}
V_{h,\pi^\rE}^{r^k}-\widehat{V}^k_h&=\mathbb{J}_{h}Q_{h,\pi^\rE}^{r^k}-\mathbb{J}^k_{h}\widehat{Q}^{k}_{h}\\
&=\mathbb{J}_{h}(Q_{h,\pi^\rE}^{r^k}-\widehat{Q}_h^k)+(\mathbb{J}_{h}-\mathbb{J}^k_h)\widehat{Q}^{k}_{h},
\end{aligned}
\end{equation}
where $\mathbb{J}_h$ and $\mathbb{J}_h^k$ are defined in \eqref{EA.1}. By the property of state-value function and the definition of $\iota^k_h$, we have
\begin{equation}\label{EA.8}  Q_{h,\pi^\rE}^{r^k}=r^k_h+{\cP}_h V_{h+1,\pi^\rE}^{r^k},~
\widehat{Q}_h^k=r^k_h+\mathcal{P}_h \widehat{V}^k_{h+1}-\iota^k_h,
\end{equation}
Define $\zeta^k_h=(\mathbb{J}_h-\mathbb{J}^k_h)\widehat{Q}_h^k$ and plug \eqref{EA.8} into \eqref{EA.7}, we have 
\begin{equation}\label{EA.9}
V_{h,\pi^\rE}^{r^k}-\widehat{V}^k_h=\mathbb{J}_h\mathcal{P}_h(V_{h+1,\pi^\rE}^{r^k}-\widehat{V}^k_{h+1})+\mathbb{J}_h\iota_h^k+\zeta^k_h,
\end{equation}
for any $(k,h)\in[K]\times[H]$. Here $\iota_h^k$ is the prediction error defined in \eqref{eq:def-iota}. For any $k\in[H]$, note that $V_{h+1,\pi^\rE}^{r^k}=\widehat{V}^k_{H+1}=0$, we expand \eqref{EA.9} across $h\in[H]$ to obtain that
\begin{equation}
\begin{aligned}
V^{r^k}_{1,\pi^\rE}-\widehat{V}^k_1=\sum_{h=1}^H(\prod^{h-1}_{i=1}\mathbb{J}_{i}\mathcal{P}_{i})\mathbb{J}_h\iota_h^k+\sum_{h=1}^{H}(\prod_{i=1}^{h-1}\mathbb{J}_i\mathcal{P}_i)\zeta^k_h.
\end{aligned}
\label{EA.10}
\end{equation}
The effect of composite operator $\mathcal{P}_h\mathbb{J}_h$  on function $f$ is to calculate one-step expectation of $f$ following policy $\pi_{h}^{\mathrm{E}}$.  Hence we rewrite \eqref{EA.10} as 
\begin{equation}\label{EA.11}
\begin{aligned}
V^{r^k}_{1,\pi^\rE}(x)-\widehat{V}^k_1(x)&=
\sum_{h=1}^H
\big(\mathbb{E}_{\pi^{\mathrm{E}}}\big[\iota_h^k(s^k_h,a^k_h)|s_1=x
\big]
\big)\\ &\qquad+
\sum_{h=1}^H\mathbb{E}_{\pi^{\mathrm{E}}}\big[\langle \widehat{Q}_h^k(s_h,\cdot),\pi_{h}^{\mathrm{E}}(\cdot\given s_h)-\pi_{h}^k(\cdot\given x_h)\rangle \biggiven s_1=x\big].
\end{aligned}
\end{equation}
This characterize term (i).

\vskip5pt
\noindent{\bf Term (ii).}
By \eqref{EA.5}, we have 
\begin{equation}\label{EA.12}
\begin{aligned}
\iota^k_h&=r^k_h+\mathcal{P}_h\widehat{V}^k_{h+1}-\widehat{Q}_h^k\\
&=r^k_h+\mathcal{P}_h\widehat{V}^k_{h+1}-Q_{h,\pi^\rE}^{r^k}+(Q_{h,\pi^\rE}^{r^k}-\widehat{Q}_h^k)\\
&=\mathcal{P}_h(\widehat{V}^k_{h+1}-V_{h+1,\pi^\rE}^{r^k})+(Q_{h,\pi^\rE}^{r^k}-\widehat{Q}_h^k).
\end{aligned}
\end{equation}
By \eqref{EA.12}, we obtain that
\begin{equation}\label{EA.13}
\begin{aligned}
\widehat{V}^k_h-V^{r^k}_{h,\pi^k}&=\mathbb{J}^k_h(\widehat{Q}_h^k-Q_{h,\pi^\rE}^{r^k})+\iota^k_h-\iota^k_h\\
&=\big(\mathbb{J}^k_h(\widehat{Q}_h^k-Q_{h,\pi^\rE}^{r^k})-(\widehat{Q}_h^k-Q_{h,\pi^\rE}^{r^k})\big)+\mathcal{P}_h\big(\widehat{V}^k_{h+1}-V_{h+1,\pi^\rE}^{r^k}\big)-\iota^k_h.
\end{aligned}
\end{equation}
We define $D_{k,h,1}$ and $D_{k,h,2}$ as follows,
\begin{equation}\label{EA.14}
\begin{aligned}
&D_{k,h,1}=
\big(\mathbb{J}^k_h(\widehat{Q}_h^k-Q_{h,\pi^\rE}^{r^k})
\big)(s^k_h)-(\widehat{Q}_h^k-Q_{h,\pi^\rE}^{r^k})(s^k_h,a^k_h)\\
&D_{k,h,2}=
\big(\cP_h(\widehat{V}^k_{h+1}-V_{h+1,\pi^\rE}^{r^k})
\big)(s^k_h,a^k_h)-(\widehat{V}^k_{h+1}-V_{h+1,\pi^\rE}^{r^k})(s^k_{h+1}).
\end{aligned}
\end{equation}
By plugging \eqref{EA.14} into \eqref{EA.13}, we obtain that 
\begin{equation}\label{EA.15}
\widehat{V}^k_h(s^k_h)-V^{r^k}_{h,\pi^k}(s^k_h)=D_{k,h,1}+D_{k,h,2}+
(\widehat{V}^k_{h+1}-V_{h+1,\pi^\rE}^{r^k})(s^k_{h+1})-\iota_{h}^k(s^k_h,a^k_h).
\end{equation}
By telescoping \eqref{EA.15} with respect to $h\in[H]$, we have
\begin{equation}\label{EA.16}
\widehat{V}_1^k(x)-V^{r^k}_{h,\pi^k}(x)=\sum_{h=1}^{H}(D_{k,h,1}+D_{k,h,2})-\sum_{h=1}^H\iota^k_h(s^k_h,a^k_h).
\end{equation}
By the definition of $\mathcal{F}_{k,h,1}$ and $\mathcal{F}_{k,h,2}$ in \eqref{EA.2}, we have 
\begin{equation}\label{EA.17}
D_{k,h,1}\in \mathcal{F}_{k,h,1},~ D_{k,h,2}\in \mathcal{F}_{k,h,1},~ \mathbb{E}[D_{k,h,1}|\mathcal{F}_{k,h-1,1}]=0, ~\mathbb{E}[D_{k,h,2}|\mathcal{F}_{k,h,1}]=0.
\end{equation}
Following from \eqref{EA.17}, we define the martingale
\begin{equation}\label{EA.18}
\begin{aligned}
\mathcal{M}_{k,h,m}=\sum_{\substack{(\tau,i,l)\in [K]\times[H]\times[2]\\t(\tau,i,l)\leq t(k,h,m)}} D_{\tau,i,l},
\end{aligned}
\end{equation}
with respect to the time index $t(k, h, m)$ defined in \eqref{EA.4}. It is obvious that
\begin{equation}\label{EA.19}
\sum_{k=1}^K\sum_{h=1}^H(D_{k,h,1}+D_{k,h,2})=\mathcal{M}_{K,H,2}
\end{equation}
Combining \eqref{EA.5}, \eqref{EA.14}, and \eqref{EA.16} we obtain that
\begin{equation}\label{EA.20}
\begin{aligned}
\sum_{k=1}^K
\big(V^{r^k}_{1,\pi^\rE}(x)-V^{r^k}_{1,\pi^k}(x)
\big)&=\sum_{k=1}^K\sum_{h=1}^H\mathbb{E}_{\pi^{\mathrm{E}}}
\Big[\langle \widehat{Q}_h^k(s_h,\cdot), \pi_{h}^{\mathrm{E}}(\cdot\given s_h)-\pi_{h}^k(\cdot\given x_h)\rangle|s_1=x\Big]
\\ &\qquad+
\mathcal{M}_{K,H,2}+
\sum_{k=1}^K\sum_{h=1}^H
\Big(\mathbb{E}_{\pi^{\mathrm{E}}}
\big[\iota^k_h(s^k_h,a^k_h)|s_1=x
\big]-\iota^k_h(s^k_h,a^k_h)\Big).
\end{aligned}
\end{equation}
By this, we conclude the proof of Lemma \ref{lem4.2}.
\end{proof}

\subsection{Proof of Lemma \ref{lem:performance_improve}}\label{pf:lem:performance_improve}

\begin{proof}
By the update rule of OGAP in \eqref{eq:policy-solution} (Lines  \ref{line:p1}--\ref{line:p2} of Algorithm \ref{alg:gail-online}) and the property of the mirror descent, we have 
$$
\mathcal{L}_{k-1}   (\pi^{k}  )-\alpha^{-1} \cdot D   (\pi^{k}, \pi^{k-1}  ) \geq \mathcal{L}_{k-1}   (\pi^{\mathrm{E}} )-\alpha^{-1} \cdot D   (\pi^{\mathrm{E}}, \pi^{k-1}  )+\alpha^{-1} \cdot D   (\pi^{\mathrm{E}}, \pi^{k}  ).
$$
Recalling the definition of $\mathcal{L}_{k-1}(\pi)$ in \eqref{eq:def:mathcal-L} and rearranging the above inequality, we derive that
\begin{equation}
\begin{aligned}
\sum_{h=1}^{H}  \big \langle\widehat{Q}_h^{k-1}, \pi_{h}^{\mathrm{E}}-\pi_{h}^{k-1}  \big\rangle_{\mathcal{A}} 
&\leq \alpha^{-1} \cdot D   (\pi^{\mathrm{E}}, \pi^{k-1}  )-\alpha^{-1} \cdot D   (\pi^{\mathrm{E}}, \pi^{k}  ) \\
&\qquad +\sum_{h=1}^{H} \big \langle\widehat{Q}_h^{k-1}, \pi_{h}^{k}-\pi_{h}^{k-1}  \rangle_{\mathcal{A}}-\alpha^{-1} \cdot D   (\pi^{k}, \pi^{k-1}  ),
\end{aligned}
\label{eq:pf:lem:opt-error-1}
\end{equation}
where $D   (\pi^{k}, \pi^{k-1}  )=\sum_{h=1}^{H} D_{\mathrm{KL}}   (\pi_{h}^{k} \| \pi_{h}^{k-1}  ) .$ For the last two terms on the right-hand side of \eqref{eq:pf:lem:opt-error-1}, we have 
\[
\begin{aligned}
& \sum_{h=1}^{H}  \big \langle\widehat{Q}_h^{k-1}, \pi_{h}^{k}-\pi_{h}^{k-1}  \big\rangle_{\mathcal{A}}-\alpha^{-1} \cdot D   (\pi^{k}, \pi^{k-1}  ) \\
&\qquad \le \sum_{h=1}^{H}  \big (   \|\widehat{Q}_h^{k-1}  \|_{\mathcal{A}, \infty} \cdot   \|\pi_{h}^{k}-\pi_{h}^{k-1}  \|_{\mathcal{A}, 1}-   (2 \alpha  )^{-1} \cdot   \|\pi_{h}^{k}-\pi_{h}^{k-1}  \|_{\mathcal{A}, 1}^{2} \big ) \\
&\qquad \le \frac{\alpha}{2} \cdot \sum_{h=1}^{H}   \|\widehat{Q}_h^{k-1}  \|_{\mathcal{A}, \infty}^{2} \leq \alpha H^{3} \sqrt{d}/ 2,
\end{aligned}
\label{eq:19.1}
\]
where the first inequality follows from Holder's inequality and Pinsker's inequality, and the last inequality derives from the fact that $|r^\mu_h(\cdot, \cdot)|\le\sqrt{d}$ for any $h\in[H]$ and $\mu\in S$.
Since $\pi^{0}$ is a uniform distribution on $\mathcal{A}$, it holds that $D(\pi^{\rE}, \pi^{0}) \leq H \log (\operatorname{vol}(\mathcal{A}))$. Telescoping \eqref{eq:19.1} with respect to $k \in[K]$, we have 
\begin{equation}
\begin{aligned}
\sum_{k=1}^{K} \sum_{h=1}^{H}  \big [  \langle\widehat{Q}_{h}^{k-1}, \pi_{h}^{\mathrm{E}}-\pi_{h}^{k-1} \rangle_{\mathcal{A}}\big] &\leq \alpha KH^3\sqrt{d}/2 + \alpha^{-1} H D(\pi^\rE,\pi^0)\\
&\le \alpha KH^3\sqrt{d}/2 + \alpha^{-1} H\log (|\cA|).
\end{aligned}
\label{eq:19.2}
\end{equation}
Recalling that $\alpha = \sqrt{2\log(|\cA|)/(H^2K\sqrt{d})}$ and taking expectation on both side of \eqref{eq:19.2}, we have 
\[
\begin{aligned}
\sum_{k=1}^{K} \sum_{h=1}^{H} \mathbb{E}_{\pi^{\mathrm{E}}} \big [  \langle\widehat{Q}_{h}^{k-1}, \pi_{h}^{\mathrm{E}}-\pi_{h}^{k-1} \rangle_{\mathcal{A}} \biggiven s_1 = x\big] &\leq \alpha KH^3\sqrt{d}/2 + \alpha^{-1} H\log (|\cA|)\\
&\le\sqrt{2 H^4\sqrt{d}K\log (\operatorname{vol}(\mathcal{A}))}.
\end{aligned}
\]
Then we conclude the proof of Lemma \ref{lem:performance_improve}.
\end{proof}

\subsection{Proof of Lemma \ref{lem4.4}}\label{A.3}
\begin{proof}
Recalling that we define $D_{k,h,1}$ and $D_{k,h,2}$ in \eqref{EA.14} and the fact that $|r^\mu_h(\cdot,\cdot)|\le\sqrt{d}$ for any $\mu\in S$,
we derive that $|D_{k,h,1}|\leq 2H\sqrt{d}$ and $|D_{k,h,2}|\leq 2H\sqrt{d}$ for any $(k, h)\in [K] \times [H]$. Now by Azuma-Hoeffding inequality, we have
\begin{equation}\label{EA.25}
\mathbb{P}(|\mathcal{M}_{K,H,2}|>t)\leq 2\exp\Big(\frac{-t^2}{16H^3Kd}\Big),
\end{equation}
for any $t>0$. Setting $t=\sqrt{16H^3 dK\cdot\log(8/\xi)}$ with $\xi\in(0,1)$ in \eqref{EA.25}, we have
\$
\mathcal{M}_{K,H,2}\leq\sqrt{16H^3dK\cdot\log(8/\xi)},
\$
with probability at least $1 -\xi/4$.
\end{proof}

\subsection{Proof of Lemma \ref{lem4.5}}\label{A.4}

\begin{proof}
For notational simplicity, we write $\bar{Q}_{h}^{k}(s, a)={r}_{h}^k (s, a)+\widehat{\mathcal{P}}_{h} \widehat{V}_{h+1}^{k}(s, a)+\Gamma_{h}^k(s, a) .$ 
Then, from the policy evaluation stage in Lines \ref{line:p3}--\ref{line:p4} of Algorithm \ref{alg:gail-offline}, we have 
\begin{equation}
\widehat{Q}_{h}^{k}(s, a)=\min\Big\{\max   \big\{\bar{Q}_{h}^{ k}(s, a), 0\big\}, (H-h+1)\sqrt{d}\Big\}.\label{eq:pf:lem:pess-114}
\end{equation}
We introduce the following lemma. 
\begin{lemma}\label{lem:optim}
Let $\lambda =1$ in the construction of estimated kernels \eqref{eq:online-result-L} and $\kappa = C\sqrt{d\log(HdK/\xi)}$ in the construction of bonus \eqref{eq:bonus}. Then it holds with probability at least $1-\xi/4$ that
\[
\big|\mathcal{P}_{h} \widehat{V}_{h+1}^{ k} (s, a)-\widehat{\mathcal{P}}_{h}^k  \widehat{V}_{h+1}^{ k}(s, a)\big|\le \Gamma_h^k(s,a)
\]
for any $(s,a)\in\cS\times\cA$.
\end{lemma}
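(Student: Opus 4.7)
The plan is to reduce the claim to a bound on the ridge regression error $\widehat\theta_h^k-\theta_h$ and then control that error via a self-normalized martingale inequality combined with a covering argument over the class of possible value functions $\widehat V_{h+1}^\tau$.

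First, the linear-kernel structure of Assumption \ref{def:linear} gives $(\mathcal{P}_h \widehat V_{h+1}^k)(s,a)=\varphi_h^k(s,a)^\top\theta_h$, and by \eqref{eq:def-online-estimation-P} we have $(\widehat{\mathcal{P}}_h^k \widehat V_{h+1}^k)(s,a)=\varphi_h^k(s,a)^\top\widehat\theta_h^k$. Hence the left-hand side of the lemma equals $|\varphi_h^k(s,a)^\top(\theta_h-\widehat\theta_h^k)|$, which Cauchy--Schwarz in the $\Lambda_h^k$ metric bounds by $\|\varphi_h^k(s,a)\|_{(\Lambda_h^k)^{-1}}\cdot\|\widehat\theta_h^k-\theta_h\|_{\Lambda_h^k}$. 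Substituting the closed form \eqref{eq:online-result-L} and using $(\mathcal{P}_h\widehat V_{h+1}^\tau)(s_h^\tau,a_h^\tau)=\varphi_h^\tau(s_h^\tau,a_h^\tau)^\top\theta_h$ yields the decomposition
\begin{equation*}
\widehat\theta_h^k-\theta_h=(\Lambda_h^k)^{-1}\sum_{\tau=1}^{k-1}\varphi_h^\tau(s_h^\tau,a_h^\tau)\,\epsilon_h^\tau-\lambda(\Lambda_h^k)^{-1}\theta_h,
\end{equation*}
where $\epsilon_h^\tau:=\widehat V_{h+1}^\tau(s_{h+1}^\tau)-(\mathcal{P}_h\widehat V_{h+1}^\tau)(s_h^\tau,a_h^\tau)$ is a mean-zero noise adapted to the filtration $\mathcal{F}_{\tau,h,2}$ in \eqref{EA.2}, bounded in absolute value by $H\sqrt d$ thanks to the truncation in Line \ref{line:r2} keeping $\widehat V_{h+1}^\tau\in[0,H\sqrt d]$. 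The triangle inequality in the $\Lambda_h^k$ norm then splits the task into (i) a regularization bias at most $\sqrt{\lambda d}$ (via $\|\theta_h\|_2\le\sqrt d$) and (ii) the self-normalized noise term $\bigl\|\sum_{\tau<k}\varphi_h^\tau(s_h^\tau,a_h^\tau)\epsilon_h^\tau\bigr\|_{(\Lambda_h^k)^{-1}}$.

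The hard part will be bounding (ii): since $\widehat V_{h+1}^\tau$ is itself built from the trajectories $\{(s_h^j,a_h^j)\}_{j<\tau}$, the noise $\epsilon_h^\tau$ is not adapted to a filtration independent of $\widehat V_{h+1}^\tau$, so the Abbasi-Yadkori self-normalized inequality does not apply off the shelf. I plan to circumvent this by enclosing each $\widehat V_{h+1}^\tau$ in a parametric class $\mathcal{V}$ consisting of functions of the form $s\mapsto\bigl\langle\min\{\{\psi^\top\mu+\varphi^\top\vartheta+H\sqrt d\min(\kappa\|\varphi\|_{A^{-1}},1)\}_+,(H-h)\sqrt d\},\pi(\cdot\mid s)\bigr\rangle_{\mathcal{A}}$, where $(\mu,\vartheta,A,\pi)$ ranges over bounded sets inherited from the algorithm (the bound on $\vartheta$ follows from \eqref{eq:online-result-L} together with $\|\widehat V\|_\infty\le H\sqrt d$, and the bound on $A^{-1}$ follows from $A\succeq\lambda I$). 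Standard Lipschitz-in-parameter estimates give $\log\mathcal{N}(\mathcal{V};\delta)=\cO(d^2\log(HK/\delta))$; instantiating the Abbasi-Yadkori bound at each center of a $\delta$-net, absorbing the $\delta$-approximation error into the Lipschitz margin, and union-bounding over $(h,k)$ then yields
\begin{equation*}
\Bigl\|\sum_{\tau<k}\varphi_h^\tau(s_h^\tau,a_h^\tau)\epsilon_h^\tau\Bigr\|_{(\Lambda_h^k)^{-1}}\le C\,H\sqrt d\cdot\sqrt{d\log(HdK/\xi)},
\end{equation*}
simultaneously for all $(h,k)\in[H]\times[K]$ with probability at least $1-\xi/4$. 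Choosing $\kappa=C\sqrt{d\log(HdK/\xi)}$ absorbs the log factors and gives $\|\widehat\theta_h^k-\theta_h\|_{\Lambda_h^k}\le H\sqrt d\,\kappa$, so Cauchy--Schwarz yields $|(\mathcal{P}_h-\widehat{\mathcal{P}}_h^k)\widehat V_{h+1}^k|(s,a)\le H\sqrt d\,\kappa\|\varphi_h^k(s,a)\|_{(\Lambda_h^k)^{-1}}$, matching $\Gamma_h^k$ whenever $\kappa\|\varphi_h^k(s,a)\|_{(\Lambda_h^k)^{-1}}\le 1$; in the complementary regime the $\min\{\cdot,1\}$ in \eqref{eq:bonus} saturates at $H\sqrt d$, which the Cauchy--Schwarz bound still respects after being combined with the deterministic estimate $|(\mathcal{P}_h-\widehat{\mathcal{P}}_h^k)\widehat V_{h+1}^k|\le H\sqrt d+|\varphi_h^k\cdot\widehat\theta_h^k|$ on the same good event. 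Once the log covering number of $\mathcal{V}$ is pinned down, the remaining bookkeeping is routine.
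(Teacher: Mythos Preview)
Your decomposition into bias plus self-normalized noise is exactly what the paper does, but you have misdiagnosed the ``hard part.'' No covering argument is needed here, and the paper's proof of Lemma~\ref{lem:online-concen} applies the Abbasi--Yadkori self-normalized inequality directly. The reason is the value-targeted regression structure of the linear \emph{kernel} MDP: at episode $\tau$ the feature is $\varphi_h^\tau(s_h^\tau,a_h^\tau)=\int_{\mathcal S}\phi(s_h^\tau,a_h^\tau,s')\widehat V_{h+1}^\tau(s')\,\mathrm{d}s'$, which already absorbs the value function into the predictor. Since $\widehat V_{h+1}^\tau$ is built from episodes $1,\ldots,\tau-1$ (via $\widehat{\mathcal P}_{h+1}^\tau,\ldots,\widehat{\mathcal P}_H^\tau$, $\pi^\tau$, and $r^\tau$), both $\varphi_h^\tau(s_h^\tau,a_h^\tau)$ and $\widehat V_{h+1}^\tau$ are $\mathcal F_{\tau,h,1}$-measurable, and conditionally on $\mathcal F_{\tau,h,1}$ the only remaining randomness in $\epsilon_h^\tau$ is $s_{h+1}^\tau\sim\mathcal P_h(\cdot\,|\,s_h^\tau,a_h^\tau)$. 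Thus $\epsilon_h^\tau$ is a bona fide $(H\sqrt d/2)$-sub-Gaussian martingale difference and Lemma~\ref{lem:self-norm} applies off the shelf, yielding the bound $C\sqrt{H^2d^2\log(HdK/\xi)}$ after controlling $\det(\Lambda_h^k)$ as in \eqref{bound-det}. You are confusing this setting with the linear-MDP framework of Jin et al., where the regression target at time $\tau$ involves the \emph{current} $\widehat V_{h+1}^k$ rather than the past $\widehat V_{h+1}^\tau$; only there is uniform control over a function class required.

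Beyond being unnecessary, your covering route would not recover the stated $\kappa$. With $\log\mathcal N=\mathcal O(d^2\log(HK/\delta))$ the union-bounded self-normalized inequality gives a noise bound of order $H\sqrt d\cdot\sqrt{d^2\log(\cdot)}$ rather than $H\sqrt d\cdot\sqrt{d\log(\cdot)}$, a $\sqrt d$ loss that forces $\kappa=\Theta(d\sqrt{\log(\cdot)})$ instead of the $C\sqrt{d\log(HdK/\xi)}$ in the hypothesis. Moreover, your class $\mathcal V$ is parameterized in part by the policy $\pi$, and the algorithm's policies $\pi_h^k\propto\exp\bigl(\alpha\sum_{j<k}\widehat Q_h^j\bigr)$ live in an infinite-dimensional family over a possibly continuous $\mathcal A$; you have not explained how to cover that component with the stated metric entropy.
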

\begin{proof}
See Appendix \ref{pf:lem:con} for a detailed proof.
\end{proof}
By Lemma \ref{lem:optim}, we obtain that ${r}_{h}^k +\mathcal{P}_{h} \widehat{V}_{h+1}^{ k} \le\bar{Q}_{h}^{k} .$ 
Moreover, by the fact that $|{r}_{h}^k(s,a)| \le \sqrt{d}$ and $\widehat{V}_{h+1}^{ k}(s)= \la \widehat{Q}_h^k(s,\cdot),\pi_h^k(\cdot\given s)\ra_{\cA} \in[0,(H-h)\sqrt{d}]$  for any $(s,a)\in\cS\times\cA$, we have ${r}_{h}^k +\mathcal{P}_{h} V_{h+1,\pi^k}^{ k} \in[0,(H-h+1)\sqrt{d}]$.  Thus, we have
\[
\begin{aligned}
\widehat{Q}_{h}^{k}(s, a)&=\min\Big\{\max   \big\{\bar{Q}_{h}^{ k}(s, a), 0\big\},(H-h+1)\sqrt{d}\Big\}\\ &\ge\min\Big\{\max   \big\{{r}_{h}^k  (s,a)+\mathcal{P}_{h} \widehat{V}_{h+1}^{ k}(s,a), 0\big\}, (H-h+1)\sqrt{d}\Big\} \\ &={r}_{h}^k (s,a)+\mathcal{P}_{h} \widehat{V}_{h+1}^{ k}(s,a),
\end{aligned}
\]
which implies that $\iota_{h}^{ k} \le 0$. 

It remains to establish the lower bound of $\iota_{h}^{ k}(s, a)$. By Lemma \ref{lem:optim}, we have 
\#
\begin{aligned}
\bar{Q}_{h}^{k}(s, a) &={r}_{h}^k (s, a)+\widehat{\mathcal{P}}_{h} \widehat{V}_{h+1}^{k}(s, a)+\Gamma_{h}^k(s, a) \\
& \ge {r}_{h}^k (s, a)+\mathcal{P}_{h} \widehat{V}_{h+1}^{ k}(s, a) \ge 0,\label{eq:637}
\end{aligned}
\#
where the last inequality follows from the fact that $\widehat{V}_{h+1}^{ k}(s, a) \geq 0$ and ${r}_{h}^k (s, a) \geq 0.$ By \eqref{eq:pf:lem:pess-114} and \eqref{eq:637}, we obtain that $\widehat{Q}_{h}^{k}(s, a) \le \bar{Q}_{h}^{ k}(s, a)$, which implies that
$$
\begin{aligned}
\iota_{h}^{k}(s, a) &=   ({r}_{h}^k +\mathcal{P}_{h} \widehat{V}_{h}^{k}  )(s, a)-\widehat{Q}_{h}^{k}(s, a) \\
& \ge   (\mathcal{P}_{h}-\widehat{\mathcal{P}}_{h}  ) \widehat{V}_{h}^{k}(s, a)-\Gamma_{h}^k(s, a) \\
& \ge  -2\Gamma_h^k(s,a).
\end{aligned}
$$
Here the last inequality follows from Lemma \ref{lem:optim}. Thus, we conclude the proof of Lemma \ref{lem:pess}.
\label{pf:lem:pess}
\end{proof}

\subsection{Proof of Lemma \ref{lem4.6}}\label{A.5}
\begin{proof}
By the construction of bonus $\Gamma_h^k$ in \eqref{eq:bonus}, we have
\begin{equation}
\begin{aligned}
\sum_{h=1}^H\sum_{k=1}^K\Gamma_h^k(s_h^k,a_h^k) &= H\sqrt{d}\cdot \sum_{h=1}^H\sum_{k=1}^K\min \left\{1, \kappa\cdot\varphi_{h}^{k}\left(s_{h}^{k}, a_{h}^{k}\right)^{\top}(\Lambda_{h}^{k})^{-1} \varphi_{h}^{k}(s_{h}^{k}, a_{h}^{k})\right\}\\
&\le H\sqrt{d}\kappa\cdot\sum_{h=1}^H  \Big({K\cdot \sum_{k=1}^{K}\varphi_{h}^{k}\left(s_{h}^{k}, a_{h}^{k}\right)^{\top}(\Lambda_{h}^{k})^{-1} \varphi_{h}^{k}(s_{h}^{k}, a_{h}^{k})}\Big)^{1/2},\label{eq:73}
\end{aligned}
\end{equation}
where the last inequality comes from Cauchy-Schwarz inequality.
To upper bound the right-hand side of \eqref{eq:73}, we introduce the following lemma.

\begin{lemma}[Elliptical Potential \citep{elliptical-potential}]\label{lem:elli}
Let $\{\phi_t\}_{t=1}^\infty$ be an $\mathbb{R}^{d}$-valued sequence. Meanwhile, let $\Lambda_{0} \in \mathbb{R}^{d \times d}$ be a positive-definite matrix and $\Lambda_{t}=$
$\Lambda_{0}+\sum_{j=1}^{t-1} \phi_{j} \phi_{j}^{\top}$. It holds for any $t \in \mathbb{Z}_{+}$ that
$$
\sum_{j=1}^{t} \min \left\{1, \phi_{j}^{\top} \Lambda_{j}^{-1} \phi_{j}\right\} \leq 2 \log \left(\frac{\operatorname{det}\left(\Lambda_{t+1}\right)}{\operatorname{det}\left(\Lambda_{1}\right)}\right). 
$$
Moreover, assuming that $\left\|\phi_{j}\right\|_{2} \leq 1$ for any $j \in \mathbb{Z}_{+}$ and $\lambda_{\min }\left(\Lambda_{0}\right) \geq 1$, it holds for any $t \in \mathbb{Z}_{+}$
that
$$
\log \left(\frac{\operatorname{det}\left(\Lambda_{t+1}\right)}{\operatorname{det}\left(\Lambda_{1}\right)}\right) \leq \sum_{j=1}^{t} \phi_{j}^{\top} \Lambda_{j}^{-1} \phi_{j} \leq 2 \log \left(\frac{\operatorname{det}\left(\Lambda_{t+1}\right)}{\operatorname{det}\left(\Lambda_{1}\right)}\right). 
$$
\end{lemma}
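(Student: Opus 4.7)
The plan is to reduce the claim to a determinant telescoping identity plus two elementary scalar inequalities. The starting point is the rank-one update identity
\[
\det(\Lambda_{j+1}) = \det(\Lambda_j + \phi_j\phi_j^\top) = \det(\Lambda_j)\cdot\bigl(1 + \phi_j^\top \Lambda_j^{-1}\phi_j\bigr),
\]
which is just the matrix determinant lemma applied to the rank-one update $\Lambda_{j+1}=\Lambda_j+\phi_j\phi_j^\top$. Taking logarithms and telescoping over $j=1,\ldots,t$ gives
\[
\log\!\Bigl(\tfrac{\det(\Lambda_{t+1})}{\det(\Lambda_1)}\Bigr) = \sum_{j=1}^{t}\log\bigl(1+\phi_j^\top \Lambda_j^{-1}\phi_j\bigr).
\]
This identity is the engine of the whole lemma; everything else is estimating scalar terms $x_j := \phi_j^\top\Lambda_j^{-1}\phi_j \ge 0$ against $\log(1+x_j)$.

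For the first inequality, I would show the elementary fact that $\min\{1,x\}\le 2\log(1+x)$ for every $x\ge 0$. The check splits into two trivial cases: on $x\in[0,1]$ use the concavity bound $\log(1+x)\ge x/2$ (equivalently $2\log(1+x)-x\ge 0$ at $x=0$ with nonnegative derivative up to $1$), while for $x>1$ one has $2\log(1+x)>2\log 2>1=\min\{1,x\}$. Applying this pointwise to each $x_j$ and summing, combined with the telescoping identity above, immediately yields
\[
\sum_{j=1}^{t}\min\{1,\phi_j^\top\Lambda_j^{-1}\phi_j\}\;\le\;2\sum_{j=1}^{t}\log(1+x_j)\;=\;2\log\!\Bigl(\tfrac{\det(\Lambda_{t+1})}{\det(\Lambda_1)}\Bigr).
\]

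For the second inequality, under the additional assumptions $\|\phi_j\|_2\le 1$ and $\lambda_{\min}(\Lambda_0)\ge 1$, the plan is first to observe $\Lambda_j\succeq \Lambda_0\succeq I$, hence $x_j = \phi_j^\top\Lambda_j^{-1}\phi_j \le \|\phi_j\|_2^2\le 1$, so the truncation in the first display is inactive. On $[0,1]$ one has the sandwich $x/2 \le \log(1+x)\le x$ (both obtained by comparing derivatives against the linear bounds at $x=0$). Summing the upper bound gives $\log(\det(\Lambda_{t+1})/\det(\Lambda_1))\le \sum_j x_j$, and summing the lower bound (equivalently the already-proved $\min\{1,x\}\le 2\log(1+x)$) gives $\sum_j x_j \le 2\log(\det(\Lambda_{t+1})/\det(\Lambda_1))$, which is exactly the two-sided bound claimed.

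The proof has no real obstacle: the only substantive ingredient is the matrix determinant lemma, and the rest is calibrating the constant $2$ in the scalar comparison $\min\{1,x\}\le 2\log(1+x)$. The constant is sharp enough for both inequalities to hold with the same multiplier, which is why the two statements of the lemma can be stated with matching factors; any looser bound such as $\min\{1,x\}\le Cx$ would fail the first inequality on $x>1$, and this is precisely the point where the truncation by $1$ becomes essential.
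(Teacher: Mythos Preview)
Your argument is correct and is exactly the standard proof of the elliptical potential lemma: the matrix determinant lemma gives the telescoping identity $\log\bigl(\det(\Lambda_{t+1})/\det(\Lambda_1)\bigr)=\sum_{j=1}^t\log(1+x_j)$, and the scalar bounds $\min\{1,x\}\le 2\log(1+x)$ for $x\ge 0$ and $\log(1+x)\le x$ on $[0,1]$ finish the job. The paper itself does not prove this lemma but simply defers to Lemma~11 of \cite{elliptical-potential}, whose proof is precisely the one you have written out.
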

\begin{proof}
See proof of Lemma 11 in \cite{elliptical-potential} for a detailed proof.
\end{proof}
For any fixed $h\in[H]$, by Lemma \ref{lem:elli}, we have
\#\label{eq:ffffff}
\sum_{k=1}^{K}\varphi_{h}^{k}\left(s_{h}^{k}, a_{h}^{k}\right)^{\top}(\Lambda_{h}^{k})^{-1} \varphi_{h}^{k}(s_{h}^{k}, a_{h}^{k})\leq 2 \log \left(\frac{\operatorname{det}(\Lambda_{h}^{K+1})}{\operatorname{det}(\Lambda_{h}^{1})}\right),
\#
where $\Lambda_{h}^{1}=\lambda \cdot I$ and $\Lambda_{h}^{K+1} \in \mathcal{F}_{K, H, 2}$, which is defined in \eqref{EA.2}. By Assumption \ref{def:linear}, we obtain that
\begin{equation}
\begin{aligned}
\left\|\varphi_{h}^{k}(s,a)\right\|_{2} &= \Big\|\int_\cS \phi(s,a,s^\prime)\widehat{V}_{h+1}^k(s^\prime)\mathrm{d}s^\prime\Big\|_2 \\
&\le H\sqrt{d}\cdot\Big\|\int_\cS \phi(s,a,s^\prime)\mathrm{d}s^\prime\Big\|_2\\
&\le H\sqrt{d}\cdot\text{Vol}(\cS)\cdot \sup_{s^\prime\in\cS}\|\phi(s,a,s^\prime)\|_2 \leq Hd^{3/2}R\cdot \text{Vol}(\cS).
\end{aligned}\label{b:varphi}
\end{equation}
Here the first inequality comes from the fact that $\widehat{V}_h^k\in[0,H\sqrt{d}]$ for any $(k,h)\in[K]\times[H]$, and the last inequality comes from the fact that $\|\phi(\cdot,\cdot,\cdot)\|_2\le dR$, which can be verified as follows,
\begin{equation}
\sup_{(s,a,s^\prime)\in\cS\times \cA\times \cS}\|\phi(s,a,s^\prime)\|_2= \sup_{(s,a,s^\prime)\in\cS\times \cA\times \cS}\sqrt{\sum_{i=1}^d \|\phi(s,a,s^\prime)^\top\boldsymbol{e}_i\|_2^2 }\le dR. \label{phi-2}
\end{equation}
Here $\{\boldsymbol{e}_i\}_{i=1}^d$ is a group of orthonormal basis of $\mathbb{R}^d$ and the last inequality follows from Assumption \ref{def:linear}.
By the definition of $\Lambda_h^k$ in \eqref{eq:online-result-L}, we can upper bound $\text{det}(\Lambda_h^{K+1})$ by \eqref{b:varphi} as follows,
\begin{equation}
\begin{aligned}
\det(\Lambda_h^{K+1} )&= \det \Big(\sum_{k=1}^K \varphi(s_h^k,a_h^k) \varphi(s_h^k,a_h^k)^\top +I \Big)\\
& \le \Big (\det \big  ((Hd^{3/2}R\cdot\text{Vol}(\cS) + 1)\cdot I \big)\Big)^d,
\end{aligned} \label{bound-det}
\end{equation} 
which implies that
\begin{equation}
 \log \left(\frac{\operatorname{det}\left(\Lambda_{h}^{K+1}\right)}{\operatorname{det}\left(\Lambda_{h}^{1}\right)}\right)  \leq 2 d \cdot \log (H^2d^3R^2 K\cdot\text{Vol}(\cS)^2).
 \label{eq:75}
\end{equation}
Recalling that $\kappa = C\sqrt{d\log(HdK/\xi)}$, combining \eqref{eq:73}, \eqref{eq:ffffff}, and \eqref{eq:75}, we have 
\begin{equation*}
\begin{aligned}
\sum_{h=1}^H\sum_{k=1}^K\Gamma_h^k(s_h^k,a_h^k) &\le 4H\sqrt{d}\kappa\cdot H\sqrt{dK\cdot \log (H^2d^3R^2 K\cdot\text{Vol}(\cS)^2)}\\
&\le C ^\prime \sqrt{H^4d^3K}\cdot\log(HdK/\xi),
\end{aligned}
\end{equation*}
where $C^\prime$ is an absolute constant determined by $C,R$, and $\log(\text{Vol}(\cS))$.
By this, we conclude the proof of Lemma \ref{lem4.6}.
\end{proof}

\subsection{Proof of Lemma \ref{lem4.7}}\label{A.6}
\begin{proof}
By the definition of cumulative reward in \eqref{eq:def-J}, we observe that
\begin{equation}
\begin{aligned}
J(\pi,\mu) &= \mathbb{E}_\pi \sum_{h=1}^H\big [r^\mu_h(s_h,a_h)\big] \\
&= \sum_{h=1}^H \mathbb{E}_\pi \big[r_h^\mu(s_h,a_h) \big]\\
& = \sum_{h=1}^H \int_{\cS\times \cA} \rho_h^\pi(s,a)\cdot r_h^\mu(s,a) \mathrm{d}s \mathrm{d}a,
\label{eq:visitation-measure}
\end{aligned} 
\end{equation}
where 
$\rho_h^\pi (s,a) = \PP(s_h = s, a_h = a)$ is the density of state-action visition measure on $\cS\times \cA$.
Recall that under Assumption \ref{def:linear}, we have $r^\mu_h(s,a) = \psi(s,a)^\top \mu_h$, hence we have
\#\label{EA.601}
\nabla_{\mu_{h} }J (\pi^k,\mu^k)=
\int_{\cS\times \cA} \rho_h^{\pi^k}(s,a)\cdot\psi(s,a)\mathrm{d}s \mathrm{d}a.
\#
By \eqref{eq:min-max}, we obtain that 
\begin{equation}\label{EA.48}
\begin{aligned}
L(\pi^k,\mu)-L(\pi^k,\mu^k)
&= \sum_{h=1}^H (\mu_h - \mu_h^k)^\top  \nabla_{\mu_{h} }L (\pi^k,\mu^k),\\
\text{where } \nabla_{\mu_{h} }L (\pi^k,\mu^k) &= \nabla_{\mu_{h} }J (\pi^\rE,\mu^k)-\nabla_{\mu_{h} }J (\pi^k,\mu^k).
\end{aligned}
\end{equation}
Combining \eqref{EA.601} and \eqref{EA.48}, we know that $L(\pi,\mu)$ is a linear function in $\mu$ for any $\pi$.
Recall that $\mu_h^{k+1}=\text{Proj}_{B} \{\mu_h^k+\eta\widehat{\nabla}_{\mu_{h}}L(\pi^k,\mu^k)\}$ in OGAP (Lines \ref{line:p5}--\ref{line:p6} of Algorithm \ref{alg:gail-online}), by the definition of the projection operator $\text{Proj}_{B}(\cdot)$, it holds that
\begin{equation}\label{EA.49}
\big[\mu_h^{k+1}-\mu_h^k-\eta \widehat{\nabla}_{\mu_h}L(\pi^k,\mu^k)\big]^{\top}(\mu_{h}-\mu_h^{k+1})\geq 0.
\end{equation}
Rearranging terms in \eqref{EA.49}, we obtain that
\begin{equation}\label{EA.50}
\begin{aligned}
\eta(\mu_{h}-\mu_h^{k+1})^{\top}\widehat{\nabla}_{\mu_{h}}L(\pi^k,\mu^k)
&\leq(\mu_h^{k+1}-\mu_h^k)^{\top}(\mu_{h}-\mu_h^{k+1})\\
&=\frac{1}{2}\Big(\|\mu_h^k-\mu_h\|^2_2-\|\mu_h^{k+1}-\mu_h\|^2_2-\|\mu_h^{k+1}-\mu_h^k\|^2_2\Big),
\end{aligned}
\end{equation}
which also implies that
\begin{equation}\label{EA.51}
\frac{1}{2}
\Big(\|\mu_h^k-\mu_h\|^2_2-\|\mu_h^{k+1}-\mu_h\|^2_2-\|\mu_h^{k+1}-\mu_h^k\|^2_2
\Big)-\eta(\mu_h-\mu_h^{k+1})^{\top}
\widehat{\nabla}_{\mu_h}L(\pi^k,\mu^k) \geq 0.
\end{equation}
By adding a term $\eta\nabla_{\mu_{h}}L(\pi^k,\mu^k)^{\top}(\mu_h-\mu_h^k)$
on both sides of \eqref{EA.51} and combining \eqref{EA.48}, we obtain that 
\begin{equation}
\begin{aligned}
L(\pi^k,\mu)-L(\pi^k,\mu^k)&\leq \sum_{h=1}^H\frac{1}{2\eta}(
\|\mu_h^k-\mu_h\|^2_2
-\|\mu_{h}^{k+1}-\mu_{h}\|^2_2
-\|\mu_{h}^{k+1}-\mu_{h}^k\|^2_2)\\
&\qquad+\sum_{h=1}^H\big[(\mu_{h}^{k+1}-\mu_{h}^k)^{\top}\widehat{\nabla}_{\mu_{h}}L(\pi^k,\mu^k)\big]  \\ &\qquad+\sum_{h=1}^H\big[(\mu_{h}^k-\mu_h)^{\top}(\widehat{\nabla}_{\mu_h}L(\pi^k,\mu^k)-\nabla_{\mu_{h}} L(\pi^k,\mu^k))\big],
\end{aligned} \label{eq:96}
\end{equation}
where we take the summation on $h$ from $1$ to $H$. 
By the fact that $\widehat{\nabla}_{\mu_h}L(\pi^k,\mu^k) = \nabla_{\mu_h}\tilde{J}(\pi^\rE,r^\mu)-\psi(s_h^k,a_h^k)$ and the definition of the GAIL objective function $L(\pi,\mu)$ in \eqref{eq:min-max}, we rewrite the third term on the right-hand side of  \eqref{eq:96} as
\begin{equation}
\begin{aligned}
\sum_{h=1}^H\big[(\mu_{h}^k-\mu_h)^{\top}(\widehat{\nabla}_{\mu_h}L(\pi^k,\mu^k)-\nabla_{\mu_{h}} L(\pi^k,\mu^k))\big]&= \sum_{h=1}^H\big[\mu_h^{\top}(\nabla_{\mu_{h}} \tilde{J}(\pi^k,\mu^k) - \nabla_{\mu_h} J(\pi^\rE,r^k))\big]\\ 
&+\sum_{h=1}^H\big[(\mu_{h}^k-\mu_h)^{\top}(\psi(s_{h}^k,a_h^k)-\nabla_{\mu_{h}} J(\pi^k,\mu^k))\big] \label{eq:87.1}.
\end{aligned}
\end{equation}
By \eqref{EA.601} and the definition of $\tilde{J}(\pi^\rE,r^\mu)$ in \eqref{MC:estimate}, we derive from \eqref{eq:96} and \eqref{eq:87.1} that
\$
L(\pi^k,\mu)-L(\pi^k,\mu^k)&\leq \sum_{h=1}^H\frac{1}{2\eta}\big(
\|\mu_h^k-\mu_h\|^2_2
-\|\mu_{h}^{k+1}-\mu_{h}\|^2_2
-\|\mu_{h}^{k+1}-\mu_{h}^k\|^2_2\big)\\
& \qquad +\sum_{h=1}^H\big[(\mu_{h}^{k+1}-\mu_{h}^k)^{\top}\widehat{\nabla}_{\mu_{h}}L(\pi^k,\mu^k)\big]+ \big[\tilde{J}(\pi^\rE,r^\mu)-J(\pi^\rE,r^\mu)\big] \\
& \qquad +\sum_{h=1}^H\big[(\mu_{h}^k-\mu_h)^{\top}(\psi(s_{h}^k,a_h^k)-\nabla_{\mu_{h}} J(\pi^k,\mu^k))\big].
\$
Upon telescoping sum on the below inequality for the index $k\in[K]$, we complete the proof of Lemma \ref{lem4.7}.
\end{proof}

\subsection{Proof of Lemma \ref{lem:MC estimate}}\label{pf:lem:MC estimate}
\begin{proof}
For any fixed reward parameter $\mu\in S$, we define $J^\tau (\pi^{\mathrm{E}},r^\mu)=\sum_{h=1}^H \psi(s_{h,\tau}^{\mathrm{E}},a_{h,\tau}^{\mathrm{E}})^\top\mu_h$ for any $\tau \in [N_1]$.
Since the expert demonstration $\mathbb{D}^{\mathrm{E}}=   \{   (s_{h, k}^{\mathrm{E}}, a_{h, k}^{\mathrm{E}}  )  \}_{(k, h) \in [N_1] \times[H]}$ involves $N_1$ independent trajectories induced by the expert policy $\pi^{\mathrm{E}}$, we apply Monte Carlo method to estimate $J(\pi^{\mathrm{E}},r^\mu)$ by 
$N_1$ i.i.d. samples $\{J^\tau (\pi^{\mathrm{E}},r^\mu)\}_{\tau=1}^{N_1}$.

Let $Z_n=\sum_{\tau=1}^n\big(J^\tau(\pi^{\mathrm{E}},r^\mu)-J(\pi^{\mathrm{E}},r^\mu)\big)$ and we have $|Z_n-Z_{n-1}|\le 2H\sqrt{d}$, since $|r_h(\cdot,\cdot)|\le \sqrt{d}$ for all $h\in[H]$. Note that $\{Z_n\}$ is a martingale with zero mean with respect to the filtration $\mathcal{F}_{n}=\sigma\big(\{s_h^{i},a_h^{i}\}_{(h,i)\in [H]\times [n]}\big)$, by Azuma-Hoffeding inequality, we have
$$
\mathbb{P}_{\D}(|Z_n|>t)\le2\exp\bigg(\frac{-2t^2}{4H^2dn}\bigg), 
$$
which implies that
$$
\mathbb{P}_\D\bigg(\bigg|\frac{Z_{N_1}}{N_1}\bigg|>m\bigg)\le2\exp\bigg(\frac{-2m^2N_1}{4H^2d}\bigg).
$$
Let $\delta = 2\exp\{-m^2N_1/({2H^2d})\}$, it holds with probability at least $1-\delta$ that
\#
\big|\tilde{J}(\pi^{\mathrm{E}},r^\mu)-J(\pi^{\mathrm{E}},r^\mu)\big|=\bigg|\frac{Z_{N_1}}{N_1}\bigg|\le H\sqrt{2d\log(2/\delta)/N_1}. 
\label{eq:pre-union-bound}
\#

We union bound $ \big|\tilde{J}(\pi^{\mathrm{E}},r^\mu)-J(\pi^{\mathrm{E}},r^\mu)\big|$ for any $\mu\in S$ as follows. Since the reward parameter domain $S$ defined in \eqref{eq:def:parameter domain} is not a finite set, we apply discretization on $S$ to derive a union bound on  $ \big|\tilde{J}(\pi^{\mathrm{E}},r^\mu)-J(\pi^{\mathrm{E}},r^\mu)\big|$. 
If we define a normed space $(\mathbb{R}^{Hd},\|\cdot\|_\star)$, where $\|\cdot\|_\star $ is defined as
\#
\|\mu\|_\star = \sup_{h\in[H]}\|\mu_h\|_2,\label{sup-norm}
\#
then parameter domain $S$ belongs to this normed space. Before we continue, we first introduce the definitions of $\epsilon$-covering and covering number as follows. 
\begin{definition}[$\epsilon$-covering] \label{def:covering}
Let $(V,\|\cdot\|)$ be a normed space, and $\Theta \subset V$. We say that $\left\{V_{1}, \ldots, V_{N}\right\}$ is an $\epsilon$-covering of $\Theta$ if $\Theta \subset \cup_{i=1}^{N} B\left(V_{i}, \epsilon\right)$, or equivalently, $\forall \theta \in \Theta$, $\exists i$ such that $\left\|\theta-V_{i}\right\| \leq \epsilon$. Here $B(V_i,\epsilon)$ denotes a ball centering $V_i$ with radius $\epsilon$.
\end{definition}
We define the covering number as follows,
$$
\cN(\Theta,\|\cdot\|, \epsilon):=\min \big\{n: \exists \epsilon \text { -covering over } \Theta \text { of size } n,\Theta\in(V,\|\cdot\|)\big\}. 
$$

With Definition \ref{def:covering}, we  introduce the following lemma to upper bound the covering number.
\begin{lemma}\label{lem:covering-number}
If $(V,\|\cdot\|)$ is a normed space, and
(i) $\Theta\subset V=\mathbb{R}^d$,
(ii) $\Theta $ is convex, 
(iii) $\epsilon B_{\rm unit}\in\Theta$, where $\epsilon >0$ and $B_{unit}$ is the unit ball in $\mathbb{R}^d$, then it holds that
$$
\cN(\Theta,\|\cdot\|, \epsilon) \le\left(\frac{3}{\epsilon}\right)^{d} \frac{\operatorname{vol}(\Theta)}{\operatorname{vol}(B_{\rm unit})}.
$$
\end{lemma}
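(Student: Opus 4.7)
The plan is to combine the standard packing-covering duality with a volume comparison. I would first choose a maximal $\epsilon$-packing $\{V_1,\ldots,V_N\}\subset\Theta$, meaning a maximal set of points whose pairwise $\|\cdot\|$-distances are at least $\epsilon$. Maximality forces this set to be an $\epsilon$-covering of $\Theta$: otherwise some $\theta\in\Theta$ would be at distance strictly greater than $\epsilon$ from every $V_i$ and could be appended to the packing, contradicting maximality. Hence $\mathcal{N}(\Theta,\|\cdot\|,\epsilon)\le N$, and the task reduces to upper bounding $N$.

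The next step is a disjointness and volume argument. Since the pairwise distances of the $V_i$ are at least $\epsilon$, the open balls $B(V_i,\epsilon/2)$ are pairwise disjoint by the triangle inequality. Each such ball is contained in the Minkowski sum $\Theta+(\epsilon/2)B_{\rm unit}$, so summing Lebesgue volumes yields
\[
N\cdot(\epsilon/2)^d\operatorname{vol}(B_{\rm unit})\;\le\;\operatorname{vol}\bigl(\Theta+(\epsilon/2)B_{\rm unit}\bigr).
\]

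Finally I would use hypotheses (ii) and (iii) to replace the Minkowski sum on the right by a scalar multiple of $\Theta$ itself. From (iii), $(\epsilon/2)B_{\rm unit}\subset(1/2)\Theta$, and the convexity assumption (ii) gives the Minkowski inclusion $\Theta+(1/2)\Theta\subset(3/2)\Theta$: for any $x,y\in\Theta$ one writes $x+y/2=(3/2)\bigl(\tfrac{2}{3}x+\tfrac{1}{3}y\bigr)$ and observes that $\tfrac{2}{3}x+\tfrac{1}{3}y\in\Theta$ by convexity. Consequently $\operatorname{vol}\bigl(\Theta+(\epsilon/2)B_{\rm unit}\bigr)\le(3/2)^d\operatorname{vol}(\Theta)$, and rearranging the volume inequality delivers $N\le(3/\epsilon)^d\operatorname{vol}(\Theta)/\operatorname{vol}(B_{\rm unit})$, which is the claimed bound.

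The proof is a standard greedy-packing plus volume comparison; the only place where both structural hypotheses on $\Theta$ enter is the inclusion $\Theta+(\epsilon/2)B_{\rm unit}\subset(3/2)\Theta$, which I expect to be the main (though still routine) bookkeeping step. Everything else is essentially automatic once the correct scaling is chosen, which is why the bound comes out with the specific constant $3/\epsilon$ rather than the looser $1+2/\epsilon$ one gets without assumption (iii).
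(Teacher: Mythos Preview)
Your argument is correct and is the standard packing--volume comparison that yields this bound. The paper does not actually prove this lemma; it simply cites an external reference (Lemma~5.2 of \cite{covering}), so your self-contained proof is more detailed than what appears in the paper and matches the classical route one would find in the cited source.
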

\begin{proof}
See Lemma 5.2 of \cite{covering} for proof.
\end{proof}
Note that $S$ is convex as a subset of $\mathbb{R}^{Hd}$, we apply Lemma \ref{lem:covering-number} with $V = \mathbb{R}^{Hd}$, $\Theta = S$, $\|\cdot\| = \|\cdot\|_\star$, and an appropriate $\epsilon >0$ satisfing condition (iii) in Lemma \ref{lem:covering-number}, which implies that
$$
\cN(S,\|\cdot\|_\star,\epsilon) \le \left(\frac{3}{\epsilon}\right)^{Hd} d^{Hd/2}.
$$

By the definition of covering number, there exists an $\epsilon$-covering $\cV_\epsilon =\{\mu^1,...,\mu^{\cN(S,\|\cdot\|_\star,\epsilon)}\}\subset S$. For each $\mu\in \cV_\epsilon$, by \eqref{eq:pre-union-bound}, it holds that 
$$
\big|\tilde{J}(\pi^{\mathrm{E}},r^\mu)-J(\pi^{\mathrm{E}},r^\mu)\big|\le H\sqrt{2d\log(2\cN(S,\|\cdot\|_\star,\epsilon)/\xi)/N_1},
$$
with probability at least $1-\xi/\cN(S,\|\cdot\|_\star,\epsilon)$. By the union bound, it yields that 
\#
\begin{aligned}
\sup_{\mu \in \cV_\epsilon}\big|\tilde{J}(\pi^{\mathrm{E}},r^\mu)-J(\pi^{\mathrm{E}},r^\mu)\big|&\le H\sqrt{2d\log(2\cN(S,\|\cdot\|_\star,\epsilon)/\xi)/N_1}\\
&\le H\sqrt{\big(Hd^2\log(d) + 2Hd^2\log(3/\epsilon)+2d\log(2/\xi)\big)/N_1},
\end{aligned}
\label{eq:union-tra-1}
\#
with probability at least $1-\xi$.
Note that for any $\mu^\prime,\mu^{\prime\prime}\in S$ satisfying $\|\mu^\prime-\mu^{\prime\prime}\|_\star \le \epsilon$, it holds that
\#
\Big|\big[\tilde{J}(\pi^{\mathrm{E}},r^{\mu^{\prime}})-J(\pi^{\mathrm{E}},r^{\mu^{\prime}})\big]-\big[\tilde{J}(\pi^{\mathrm{E}},r^{\mu^{\prime\prime}})-J(\pi^{\mathrm{E}},r^{\mu^{\prime\prime}})\big]\Big|\le 4H\epsilon.
\label{eq:union-tra-2}
\#
Combining \eqref{eq:union-tra-1} and \eqref{eq:union-tra-2} and applying triangle inequality, we derive that
\#
\sup_{\mu \in S}\big|\tilde{J}(\pi^{\mathrm{E}},r^\mu)-J(\pi^{\mathrm{E}},r^\mu)\big|\le H\sqrt{\big(Hd^2\log(d)+2Hd^2\log(3/\epsilon)+ 2d\log(2/\xi)\big)/N_1}+4H\epsilon,
\label{eq:union-tra-3}
\#
with probability at least $1-\xi$.
By taking $\epsilon = \sqrt{6d/N_1}$ in \eqref{eq:union-tra-3}, which satisfies the conditions in Lemma \ref{lem:covering-number}, it holds  with probability at least $1-\xi$ that 
$$
\begin{aligned}
\sup_{\mu \in S}\big|\tilde{J}(\pi^{\mathrm{E}},r^\mu)-J(\pi^{\mathrm{E}},r^\mu)\big|&\le H\sqrt{\big(Hd^2\log(d)+Hd^2\log(\frac{2N_1}{d})+ 2d\log(2/\xi)\big)/N_1}+4H\sqrt{\frac{d}{N_1}},\\
&\le 4\sqrt{H^3d^2/N_1}\log(6N_1/\xi).
\end{aligned}
$$
We conclude the proof of Lemma \ref{lem:MC estimate}.
\end{proof}

\subsection{Proof of Lemma \ref{lem4.8}}\label{A.7}
\begin{proof} 
First, we fix $\mu\in S$. We define for $(k, h) \in [K]\times [H]$ that
\begin{equation}\label{EA.57}
\begin{aligned}
&X^k_h=(\mu^k_h-\mu_h)^{\top}(-\psi(s_h^k,a_h^k)+\nabla_{\mu_{h} }J(\pi^k,\mu^k)),\\
&Y^k=\sum_{i=1}^{k}\sum_{h=1}^HX^i_h,\\
&S^k_h=\sigma
\big((s^1_h,a^1_h) , (s^2_h,a^2_h),\dots,(s^h_k,a^h_k)\big)
\big),\\
&E_h = \sigma\big((s_{h,1}^\rE,a_{h,1}^\rE),(s_{h,2}^\rE,a_{h,2}^\rE),\cdots,(s_{h,N_1}^\rE,a_{h,N_1}^\rE)\big),\\
&G_h^k =\sigma(S^k_h,E_h),\\ 
&G^k=\sigma(G^k_1,G^k_2,\dots,G^k_H),
\end{aligned}
\end{equation}
where $\sigma(\cdot)$ denotes the generated $\sigma$-algebra. It holds that $\{G^k\}_{k\in [K]}$ is a filtration with respect to the time index $k$, since 
$G^{k_1}\subseteq G^{k_2}$ for $k_1 \leq k_2$. 

We first show that $X^k_h\in G^k$ holds for any
$(k, h) \in [K] \times [H]$. 
By the definition of $X^k_h$ in \eqref{EA.57}, it only suffices to prove that $\mu_{h}^k\in G^k$ for any $(k, h) \in [K] \times [H]$. Here we show this by induction with index $k$. 
Since $\mu_h^1=\boldsymbol{0}$ for any
$h \in [H]$, the base case where $k = 1$ is trival. We assume that $\mu_h^k\in G^k$ where $k \geq 1$ is a given integer, then we consider the case $k + 1$. Recall that  the update process of reward parameter in OGAP (Lines  \ref{line:p5}--\ref{line:p6} of Algorithm \ref{alg:gail-online}) takes the following form, 
\$
\mu_h^{k+1}&=\text{Proj}_{B} \{\mu_h^k+\eta\widehat{\nabla}_{\mu_{h}}L(\pi^k,\mu^k)\}\\
&=\text{Proj}_{B} \Big\{\mu_h^k+\eta\cdot\Big[\frac{1}{N_1}\sum_{\tau =1}^{N_1}\psi(s^{\mathrm{E}}_{h,\tau}
,a^{\mathrm{E}}_{h,\tau}) -\psi(s^k_h,a^k_h)\Big]\Big\}.
\$
First, according to the induction hypothesis, we have $\mu_h^k\in G^{k}\subseteq G^{k+2}$, which implies that $(s^k_h,a^k_h)\in G^{k+1}$. Then it holds that
\$
\mu_h^k + \frac{1}{N_1}\sum_{\tau =1}^{N_1}\psi(s^{\mathrm{E}}_{h,\tau}
,a^{\mathrm{E}}_{h,\tau}) -\psi(s^k_h,a^k_h)\in G^{k+1},
\$
for any $h \in [H]$. As 
$\text{Proj}_{B}$ is a continous operator, we obtain that $\mu_h^{k+1}\in G^{k+1}$. Thus we complete the induction. 

Now we construct a martingale to upper bound $Y^K$. Note that conditioning on the filtration $G^{k-1}$, the term $\mu_h^k-\mu_h$ is a constant. 
Recall that in \eqref{EA.601} we show that 
\$
\nabla_{\mu_{h} }J (\pi^k,\mu^k)=
\int_{\cS\times\cA} \psi(s,a)\cdot \rho_h^{\pi^k}(s,a)\mathrm{d}s \mathrm{d}a,
\$
which implies that
\begin{equation}\label{EA.61}
\mathbb{E}_k(X^k_h|G^{k-1})=(\mu_h^k-\mu_h)^{\top}\mathbb{E}_k(\psi(s^k_{h},a^k_{h})-\nabla_{\mu_{h} }J(\pi^k,\mu^k)|G^{k-1})=0
\end{equation}
for any $(k, h) \in [K] \times [H]$. Here the expectation $\mathbb{E}_k$ is taken with respect to $a_i^k\sim \pi^k_i(\cdot\given s_i^k)$ and $s_{i+1}^k\sim\cP_i(\cdot\given s_i^k,a_i^k)$, corresponding to the expectation taken with respect to the state-action visitation measure $\rho_h^{\pi^k}$ defined in \eqref{eq:visitation-measure}. 
 Also, we have $Y^k\in G^k$, since $X^k_h\in G^k$ for any $(k, h) \times [K] \times [H]$. Moreover, we obtain for any $k \in [K]$ that
\$
\mathbb{E}_k(Y^k|G^{k-1})&=\mathbb{E}_k \big(\sum_{i=1}^k\sum_{h=1}^H X^i_h\biggiven G^{k-1}\big)\\
&=\mathbb{E}_k \big(\sum_{h=1}^H X^k_h+\sum_{i=1}^{k-1}\sum_{h=1}^H X^i_h \biggiven G^{k-1}\big)\\
&=\mathbb{E}_k \big(\sum_{h=1}^k X^k_h\biggiven G^{k-1}\big)+Y^{k-1}=Y^{k-1},
\$
where  the last equality follows from \eqref{EA.61}. Thus $\{Y^k\}_{k=1}^K$  is a martingale. Furthermore, by the definition of $X^k_h$ in \eqref{EA.57}, it holds that
\$
|Y^k-Y^{k-1}|=\big|\sum_{h=1}^H X^k_h\big|
&\leq \sum_{h=1}^H \|\mu_h^k-\mu_h\|_2\|\psi(s_h^k,a_h^k)-\nabla_{\mu_{h} }J(\pi^k,\mu^k)\|_2\le 8\sqrt{d}H,
\$
where the last inequality follows from the facts that $\|\psi(\cdot,\cdot)\|_2\leq 1$ and $\|\mu_{h}^k\|_2\le \sqrt{d}$. Therefore, by Azuma-Hoeffding inequality, we obtain that
\$
\mathbb{P}(|Y^{K}|\geq t)\leq \exp \bigg(\frac{-t^2}{2\sum_{k=1}^K(8\sqrt{d}H)^2 }\bigg)=\exp
\Big(\frac{-t^2}{128KdH^2} \Big)
\$
for any $t > 0$. Setting $t =\sqrt{128H^2dK\log(2/\xi)}$ with $\xi\in (0,1)$ and by the definition of $Y^k$ in \eqref{EA.57}, it holds with probability at least $1 -\xi$ that
\begin{equation}\label{EA.65}
\Big|\sum_{k=1}^K\sum_{h=1}^H (\mu^k_h-\mu_h)^{\top}(-\psi(s_h^k,a_h^k)+\nabla_{\mu_{h} }J(\pi^k,\mu^k))\Big|\leq 8\sqrt{2H^2dK \log (2/\xi)}.
\end{equation} 

Now we union bound \eqref{EA.65}. This is similar to the proof of Lemma \ref{lem:MC estimate} in Appendix \ref{pf:lem:MC estimate}.
By applying Lemma \ref{lem:covering-number} in the same normed space $(\mathbb{R}^{Hd},\|\cdot\|_\star)$, it holds with probability at least $1-\xi$ that
\#
\begin{aligned}
\sup_{\mu\in\mathcal{V}_{\epsilon}}|M(\mu)|&=\Big|\sum_{k=1}^K\sum_{h=1}^H (\mu^k_h-\mu_h)^{\top}(-\psi(s_h^k,a_h^k)+\nabla_{\mu_{h} }J(\pi^k,\mu^k))\Big|\\
&\le 8\sqrt{\big(H^3d^2\log(d) + 2Hd^2\log(3/\epsilon)+2d\log(2/\xi)\big)K},
\end{aligned}
\label{eq:union-1}
\#
where $\mathcal{V}_\epsilon$ is the $\epsilon$-covering for $S$ in Definition \ref{def:covering} and $\|\cdot\|_\star$ is defined in \eqref{sup-norm}. Here for notational convenience, we denote by $M(\mu) = \sum_{k=1}^K\sum_{h=1}^H (\mu^k_h-\mu_h)^{\top}(-\psi(s_h^k,a_h^k)+\nabla_{\mu_{h} }J(\pi^k,\mu^k))$.
 For any $\mu^\prime,\mu^{\prime\prime}\in S$ satisfying $\|\mu^\prime-\mu^{\prime\prime}\|_\star \le \epsilon$, it holds that
\#
|M(\mu^{\prime})-M(\mu^{\prime\prime})|\le 4HK\epsilon.
\label{eq:union-2}
\#
Combining \eqref{eq:union-1} and \eqref{eq:union-2} and applying triangle inequality, we have that
\#
\sup_{\mu \in S}|M(\mu)|\le  8\sqrt{\big(H^3d^2\log(d) + 2Hd^2\log(3/\epsilon)+2d\log(2/\xi)\big)K} +4HK\epsilon,
\label{eq:union-3}
\#
with probability at least $1-\xi$.
By taking $\epsilon = \sqrt{d/K}$ in \eqref{eq:union-3}, which satisfies that $\epsilon B_{\rm unit}\subset S$, we derive that 
$$
\begin{aligned}
\sup_{\mu \in S}|M(\mu)|&\le 8\sqrt{\big(H^3d^2\log(d) + 2Hd^2\log(9K/d)+2d\log(2/\xi)\big)K} +4H\sqrt{dK},\\
&\le 32\sqrt{H^3d^2K}\log(9K/\xi),
\end{aligned}
$$
with probability at least $1-\xi$. Hence we conclude the proof of Lemma \ref{lem4.8}.
\end{proof}

\subsection{Proof of Lemma \ref{lem:optim}}
\label{pf:lem:con}
\begin{proof}
Under Assumption \ref{def:linear} and by the definition of $\Lambda_{h}^k$ in \eqref{eq:online-result-L}, we have 
\begin{equation}
\begin{aligned}
(\cP_{h} \widehat{V}_{h+1}^{k})(x, a) &=\varphi_{h}^{k}(s, a)^{\top}(\Lambda_{h}^{k})^{-1}\big(\sum_{\tau=1}^{k-1} \varphi_{h}^{\tau}(s_{h}^{\tau}, a_{h}^{\tau}) \varphi_{h}^{\tau}(s_{h}^{\tau}, a_{h}^{\tau})^{\top} \theta_{h}+\lambda \cdot \theta_{h}\big) \\
&=\varphi_{h}^{k}(s, a)^{\top}(\Lambda_{h}^{k})^{-1}\big(\sum_{\tau=1}^{k-1} \varphi_{h}^{\tau}(s_{h}^{\tau}, a_{h}^{\tau}) \cdot(\cP_{h} \widehat{V}_{h+1}^{\tau})(s_{h}^{\tau}, a_{h}^{\tau})+\lambda \cdot \theta_{h}\big).
\end{aligned}\label{eq:lemm:optim:1}
\end{equation}
Note that $ \widehat{\cP}_{h}\widehat{V}_h^k(s,a) = \varphi(s,a)^\top \widehat{\theta}_h^k$ by the closed form of $\widehat{\theta}_{h}^k$ in \eqref{eq:lemm:pess:1}, we obtain that
\begin{equation}
\label{eq:lemm:on:2-2terms}
\begin{aligned}
\varphi_{h}^{k}(s, a)^{\top} \widehat{\theta}_h^k-&(\mathcal{P}_{h} \widehat{V}_{h+1}^{k})(s, a) \\
=&\underbrace{\varphi_{h}^{k}(s, a)^{\top}(\Lambda_{h}^{k})^{-1}\Big(\sum_{\tau=1}^{k-1} \varphi_{h}^{\tau}(s_{h}^{\tau}, a_{h}^{\tau}) \cdot\big(\widehat{V}_{h+1}^{\tau}(s_{h+1}^{\tau})-({\cP}_{h} \widehat{V}_{h+1}^{\tau})(s_{h}^{\tau}, a_{h}^{\tau}\big)\Big)}_{(\mathrm{i})}\\
& \qquad -\underbrace{\lambda \cdot \varphi_{h}^{k}(s, a)^{\top}(\Lambda_{h}^{k})^{-1} \theta_{h}}_{(\mathrm{ii})},
\end{aligned} 
\end{equation}
for any $(s,a)\in\cS\times \cA$.
To upper bound the norm of term (i) in \eqref{eq:lemm:on:2-2terms}, we introduce the following lemma.
\begin{lemma} \label{lem:online-concen}
Let $\lambda =1$ in the construction of estimated kernels \eqref{eq:online-result-L}.  It holds with probability at least $1-\delta / 4$ that
\[
\Big\|\sum_{\tau=1}^{k-1} \varphi_{h}^{\tau}\left(s_{h}^{\tau}, a_{h}^{\tau}\right) \cdot\left(\widehat{V}_{h+1}^{\tau}(s_{h+1}^{\tau})-\left(\mathcal{P}_{h} \widehat{V}_{h+1}^{\tau}\right)\left(s_{h}^{\tau}, a_{h}^{\tau}\right)\right)\Big\|_{\left(\Lambda_{h}^{k}\right)^{-1}} \leq C \sqrt{ H^{2}d^2 \cdot \log (HdK/ \delta)}
\]
for any $(k, h) \in[K] \times[H]$, where $C>0$ is an absolute constant and $\delta \in(0,1)$. 
\end{lemma}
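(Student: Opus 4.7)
The plan is to apply the Abbasi-Yadkori self-normalized vector-martingale concentration inequality, together with a covering argument over the class of data-dependent value functions that can arise during OGAP.

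First, I would fix $(k,h)\in[K]\times[H]$ and let $\mathcal{F}_\tau$ denote the $\sigma$-algebra generated by all data observed strictly before $s_{h+1}^\tau$ is revealed. Under Assumption \ref{def:linear}, for any deterministic $V\colon\cS\to[0,H\sqrt{d}\ ]$, the scalar increment $\eta_\tau(V):=V(s_{h+1}^\tau)-(\cP_h V)(s_h^\tau,a_h^\tau)$ is a conditionally zero-mean martingale difference bounded by $H\sqrt{d}$, and both the feature $\varphi_h^V(s_h^\tau,a_h^\tau):=\int_\cS\phi(s_h^\tau,a_h^\tau,s')V(s')\,\mathrm{d}s'$ and the expectation $(\cP_h V)(s_h^\tau,a_h^\tau)$ are $\mathcal{F}_\tau$-measurable. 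For each such fixed $V$, the self-normalized inequality of Abbasi-Yadkori then gives
\[
\Big\|\sum_{\tau=1}^{k-1}\varphi_h^V(s_h^\tau,a_h^\tau)\,\eta_\tau(V)\Big\|_{(\Lambda_h^{V,k})^{-1}}^2\le 2H^2 d\cdot\Big(\log\frac{\det(\Lambda_h^{V,k})}{\det(\lambda I)}+\log\frac{1}{\delta'}\Big)
\]
with probability at least $1-\delta'$, where $\Lambda_h^{V,k}:=\lambda I+\sum_{\tau=1}^{k-1}\varphi_h^V(s_h^\tau,a_h^\tau)\varphi_h^V(s_h^\tau,a_h^\tau)^\top$. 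Using the bound $\|\varphi_h^V\|_2\le Hd^{3/2}R\cdot\mathrm{Vol}(\cS)$ obtained as in \eqref{b:varphi} together with Lemma \ref{lem:elli}, the log-determinant term is of order $\cO(d\log(HdK))$.

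The difficulty is that I actually need to instantiate the above inequality at $V=\widehat V_{h+1}^\tau$, which is $\mathcal{F}_\tau$-measurable and therefore not fixed. To handle this I would cover the class $\cV$ of value functions producible by OGAP at level $h+1$ by an $\epsilon$-net $\cV_\epsilon$ in the sup-norm. Unrolling Lines \ref{line:p3}--\ref{line:p4} of Algorithm \ref{alg:gail-online}, every $\widehat V_{h+1}^\tau$ can be written as $V(s)=\langle Q(s,\cdot),\pi(\cdot\mid s)\rangle_\cA$, where $Q$ is a clipped affine-plus-bonus function of a regression coefficient $\widehat\theta\in\mathbb{R}^d$ and an inverse gram matrix $A^{-1}\in\mathbb{R}^{d\times d}$ together with the reward parameter $\mu\in B$; a standard parametric covering bound derived from Lemma \ref{lem:covering-number} then yields $\log|\cV_\epsilon|=\tilde\cO(d^2\log(1/\epsilon))$. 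Applying the fixed-$V$ inequality at confidence level $\delta'=\delta/(4HK|\cV_\epsilon|)$ and union-bounding over $\cV_\epsilon$, $h\in[H]$, and $k\in[K]$, and then controlling the sup-norm perturbation using Lipschitzness of $V\mapsto\varphi_h^V$ and $V\mapsto\eta_\tau(V)$ with the choice $\epsilon=1/(HdK)$, gives the advertised bound of order $\sqrt{H^2d^2\log(HdK/\delta)}$.

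The main obstacle is identifying a finite-dimensional parameterization of $\cV$ whose sup-norm covering complexity stays $\tilde\cO(d^2\log(1/\epsilon))$, rather than blowing up by a factor of $H$ when the recursive dependence of $\widehat V_{h+1}^\tau$ on $\widehat Q_{h+2}^\tau,\ldots,\widehat Q_H^\tau$ is unfolded naively. The clean route is to argue inductively in $h$, showing that at each level the relevant state of the algorithm is captured by a triple $(\widehat\theta,A^{-1},\mu)$ so that the total metric entropy remains $\tilde\cO(d^2)$; imprecise handling of the softmax policy $\pi_{h+1}^\tau$, which in principle depends on the entire past sequence of $\widehat Q^{j}$'s, is the most common source of slack and has to be sidestepped carefully.
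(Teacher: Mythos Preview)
Your proposal overlooks the structural point that makes the proof short: in OGAP the function $\widehat V_{h+1}^\tau$ is built \emph{only} from data in episodes $1,\ldots,\tau-1$ (the estimated kernels $\widehat{\cP}^\tau$, bonuses $\Gamma^\tau$, reward $r^\tau$, and policy $\pi^\tau$ all depend solely on the first $\tau-1$ trajectories). Hence $\widehat V_{h+1}^\tau$ is $\mathcal F_{\tau,h,1}$-measurable as a function on $\cS$, which means both the feature $\varphi_h^\tau(s_h^\tau,a_h^\tau)=\int_\cS\phi(s_h^\tau,a_h^\tau,s')\widehat V_{h+1}^\tau(s')\,\mathrm{d}s'$ and the conditional mean $(\cP_h\widehat V_{h+1}^\tau)(s_h^\tau,a_h^\tau)$ are predictable, and $\eta_{\tau,h}:=\widehat V_{h+1}^\tau(s_{h+1}^\tau)-(\cP_h\widehat V_{h+1}^\tau)(s_h^\tau,a_h^\tau)$ is a bounded martingale increment. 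The Abbasi--Yadkori self-normalized inequality (Lemma~\ref{lem:self-norm}) then applies \emph{directly} to the predictable sequence $\{\varphi_h^\tau(s_h^\tau,a_h^\tau)\}_\tau$ and the noise $\{\eta_{\tau,h}\}_\tau$, with no covering argument at all; a union bound over $h\in[H]$ and the determinant estimate $\det(\Lambda_h^{K+1})\le(Hd^{3/2}R\,\mathrm{Vol}(\cS)+1)^d$ finish the proof. This is exactly the paper's route (adapted from \cite{OPPO}), and it is why the constant has no extra $d$ factors from metric entropy.

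Your covering plan, by contrast, fixes a single $V$ and bounds $\|\sum_\tau\varphi_h^V(s_h^\tau,a_h^\tau)\eta_\tau(V)\|_{(\Lambda_h^{V,k})^{-1}}$; but the quantity in the lemma has a \emph{different} $\widehat V_{h+1}^\tau$ in each summand and in each term of $\Lambda_h^k$, so a net over single functions does not control it---you would need a net over sequences $(V^1,\ldots,V^{k-1})$, which blows up. The ``main obstacle'' you flag (the softmax policy's dependence on the full history of $\widehat Q$'s) is a symptom of this: covering is the right tool when the target value function is \emph{not} predictable (as in LSVI--UCB, where one regresses onto the current episode's $\widehat V_{h+1}^k$), but here the value-targeted regression in \eqref{eq:online-regid-regress} regresses onto past $\widehat V_{h+1}^\tau$'s that are already in the filtration, so covering is both unnecessary and the wrong instrument.
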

\begin{proof}
See Appendix \ref{pf:on-con} for a detailed proof. 
\end{proof}

By applying Cauchy-Schwarz inequality and Lemma \ref{lem:online-concen} on term (i) in \eqref{eq:lemm:on:2-2terms}, we have
   \begin{equation}
\begin{aligned}
|(\mathrm{i})| & \leq   \|\varphi   (s, a)   \|_{(\Lambda_{h}^k)^{-1}} \cdot \Big\|\sum_{\tau=1}^{k-1} \varphi_{h}^{\tau}\left(s_{h}^{\tau}, a_{h}^{\tau}\right) \cdot\left(\widehat{V}_{h+1}^{\tau}(s_{h+1}^{\tau})-\left(\mathcal{P}_{h} \widehat{V}_{h+1}^{\tau}\right)\left(s_{h}^{\tau}, a_{h}^{\tau}\right)\right)\Big\|_{\left(\Lambda_{h}^{k}\right)^{-1}} \\
& \leq H\sqrt{d}\cdot C\sqrt{d \log ( HdK/\xi)}\cdot  \|\varphi   (s, a)   \|_{(\Lambda_{h}^k)^{-1}}
\end{aligned}\label{eq:pf:lem:on-(i)}
   \end{equation}
with probability at least $1-\xi /4 .$ 

For term (ii) in \eqref{eq:lemm:on:2-2terms}, by setting $\lambda=1$, we obtain that
\begin{equation}
\begin{aligned}
|(\mathrm{ii})| & \leq   \|\varphi   (s, a  )  \|_{{(\Lambda_h^k)}^{-1}}\cdot   \|\theta_{h}  \|_{(\Lambda_{h}^k)^{-1}} \\
& \leq \sqrt{d} \cdot   \|\varphi   (s, a)  \|_{{(\Lambda_h^k)}^{-1}},
\end{aligned}
\label{eq:pf:lem:on:(ii)}
\end{equation}
where the last inequality follows from the fact that $   \|(\Lambda_h^k)^{-1}  \|_{2} \leq 1 $ and $\|\theta_h\|_2\le\sqrt{d}$ for any $h\in[H]$. 
Combining \eqref{eq:lemm:optim:1}, \eqref{eq:lemm:on:2-2terms}, \eqref{eq:pf:lem:on-(i)}, and \eqref{eq:pf:lem:on:(ii)}, it holds with probability at least $1-\xi /4$ that 
$$
\begin{aligned}
\big|\mathcal{P}_{h} \widehat{V}_{h+1}^{ k} (s, a)-\widehat{\mathcal{P}}_{h}^k  \widehat{V}_{h+1}^{ k}(s, a)\big|
& \leq C \sqrt{d \log (HdK/\xi)} \cdot   \|\varphi   (s, a)  \|_{(\Lambda_{h}^k)^{-1}} \\
& \leq H\sqrt{d}\kappa \cdot   \|\varphi   (s, a)  \|_{(\Lambda_{h}^k)^{-1}}\le \Gamma_h^k(s,a)
\end{aligned}
$$
for any $h \in[H]$ and $  (s, a ) \in \mathcal{S} \times \mathcal{A}$. Here $\kappa=C \sqrt{d \log (HdK/\xi)}$ is the scaling parameter in \eqref{eq:bonus} with an absolute constant $C>0$. 
Then we conclude the proof of Lemma \ref{lem:optim}.
\end{proof}

\subsection{Proof of Lemma \ref{lem:online-concen}}\label{pf:on-con}

\begin{proof}
The proof of Lemma \ref{lem:online-concen} is adapted from that of Lemma D.1 in \cite{OPPO}.
 \begin{lemma}[Concentration of Self-Normalized Process]\label{lem:self-norm}
Let $\big\{\widetilde{\mathcal{F}}_{t}\big\}_{t=0}^{\infty}$ be a filtration and $\left\{\eta_{t}\right\}_{t=1}^{\infty}$ be an $\mathbb{R}$ -valued stochastic process such that $\eta_{t}$ is $\widetilde{\mathcal{F}}_{t}$ -measurable for any $t \geq 0$. Moreover, we assume that, for any $t \geq 0$, conditioning on $\widetilde{\mathcal{F}}_{t}, \eta_{t}$
is a zero-mean and $\sigma$-sub-Gaussian random variable with the variance proxy $\sigma^{2}>0$, that is,
$$
\mathbb{E}\big[\exp({\lambda \eta_{t}}) \given \widetilde{\mathcal{F}}_{t}\big] \leq e^{\lambda^{2} \sigma^{2} / 2}
$$
for any $\lambda \in \mathbb{R}$. Let $\left\{X_{t}\right\}_{t=1}^{\infty}$ be an $\mathbb{R}^{d}$ -valued stochastic process such that $X_{t}$ is $\widetilde{\mathcal{F}}_{t}$ -measurable
for any $t \geq 0$. Also, let $Y \in \mathbb{R}^{d \times d}$ be a deterministic and positive-definite matrix. For any
$t \geq 0$, we define
$$
\bar{Y}_{t}=Y+\sum_{s=1}^{t} X_{s} X_{s}^{\top}, \quad S_{t}=\sum_{s=1}^{t} \eta_{s} \cdot X_{s}.
$$
For any $\delta>0$, it holds with probability at least $1-\delta$ that
$$
\left\|S_{t}\right\|_{\bar{Y}_{t}^{-1}}^{2} \leq 2 \sigma^{2} \cdot \log \left(\frac{\operatorname{det}\left(\bar{Y}_{t}\right)^{1 / 2} \operatorname{det}(Y)^{-1 / 2}}{\delta}\right)
$$
for any $t \geq 0$.
 \end{lemma}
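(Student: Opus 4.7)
\textbf{Proof proposal for Lemma \ref{lem:self-norm}.} The plan is to use the method of mixtures of Pe\~{n}a, Lai and Shao, which is the standard route to self-normalized concentration (this lemma is essentially Theorem 1 of Abbasi-Yadkori, P\'al and Szepesv\'ari). First I would build, for each fixed $\lambda\in\mathbb{R}^d$, the exponential process
\[
M_t^\lambda \;=\; \exp\Big\{\lambda^\top S_t - \frac{\sigma^2}{2}\sum_{s=1}^t (\lambda^\top X_s)^2\Big\},
\]
and verify, using the conditional sub-Gaussianity of $\eta_s$ together with the $\widetilde{\mathcal{F}}_{s-1}$-measurability of $X_s$, that $\{M_t^\lambda\}$ is a non-negative super-martingale adapted to $\{\widetilde{\mathcal{F}}_t\}$ with $M_0^\lambda=1$. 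In particular $\mathbb{E}[M_\tau^\lambda]\le 1$ for any almost surely finite stopping time $\tau$.

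Second, I would introduce an independent Gaussian random vector $\Lambda\sim\mathcal{N}(0,\sigma^2 Y^{-1})$ and define the mixed process $\bar{M}_t = \mathbb{E}[M_t^\Lambda\mid\widetilde{\mathcal{F}}_\infty]$, where the expectation is taken only over $\Lambda$. By Fubini, $\bar{M}_t$ inherits non-negativity and the super-martingale property, so $\mathbb{E}[\bar{M}_t]\le 1$. The key computation is to evaluate the Gaussian integral explicitly by completing the square in $\lambda$, which gives
\[
\bar{M}_t \;=\; \Big(\frac{\det(Y)}{\det(\bar{Y}_t)}\Big)^{1/2}\exp\Big(\frac{1}{2\sigma^2}\|S_t\|_{\bar{Y}_t^{-1}}^{2}\Big).
\]

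Third, applying Markov's inequality to $\bar{M}_t$ and taking logarithms yields, with probability at least $1-\delta$,
\[
\|S_t\|_{\bar{Y}_t^{-1}}^{2}\;\le\;2\sigma^2\log\Big(\frac{\det(\bar{Y}_t)^{1/2}\det(Y)^{-1/2}}{\delta}\Big).
\]
To promote this pointwise-in-$t$ bound to the uniform-in-$t$ statement of the lemma, I would replace the Markov step by Ville's maximal inequality for non-negative super-martingales, which costs nothing. The main obstacle is the super-martingale verification: one must be careful that conditioning on $\widetilde{\mathcal{F}}_{s-1}$ makes $X_s$ (and hence the quadratic correction $\tfrac{\sigma^2}{2}(\lambda^\top X_s)^2$) deterministic, so that the sub-Gaussian MGF bound on $\eta_s$ applies cleanly to the increment $M_s^\lambda/M_{s-1}^\lambda$. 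The Gaussian-mixture integration and the Ville step are then essentially mechanical.
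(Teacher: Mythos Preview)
Your proposal is correct and is precisely the method-of-mixtures argument of Abbasi-Yadkori, P\'al and Szepesv\'ari that the paper invokes; the paper does not give its own proof but simply cites Theorem~1 of \cite{elliptical-potential}. One small slip: to land exactly on $\bar{M}_t=(\det Y/\det \bar{Y}_t)^{1/2}\exp\bigl(\tfrac{1}{2\sigma^2}\|S_t\|_{\bar{Y}_t^{-1}}^{2}\bigr)$ the Gaussian prior should have covariance $\sigma^{-2}Y^{-1}$ rather than $\sigma^{2}Y^{-1}$, but this is purely a scaling detail and your plan is otherwise exactly right.
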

\begin{proof}
See Theorem 1 of \cite{elliptical-potential} for a detailed proof.
\end{proof}
By the definition of filtration $\left\{\mathcal{F}_{k, h, m}\right\}_{(k, h, m) \in[K] \times[H] \times[2]}$ in \eqref{EA.2} and Markov property, we have 
\begin{equation}
\mathbb{E}\big[\widehat{V}_{h+1}^{\tau}(s_{h+1}^{\tau}) \biggiven\mathcal{F}_{\tau, h, 1}\big]=\big(\mathcal{P}_{h} \widehat{V}_{h+1}^{\tau}\big)(s_{h}^{\tau}, a_{h}^{\tau}). \label{D.2}
\end{equation}
Conditioning on $\mathcal{F}_{\tau, h, 1}$, the only randomness comes from $s_{h+1}^{\tau}$, while $\widehat{V}_{h+1}^{\tau}$ is a deterministic
function determined by $\widehat{Q}_{h+1}^{\tau}$ and $\pi_{h+1}^{\tau}$, which are further
determined by the historical data in $\mathcal{F}_{\tau, h, 1}$. For simiplicity of notations, we define
$
\eta_{\tau, h}=\widehat{V}_{h+1}^{\tau}\left(s_{h+1}^{\tau}\right)-\big(\mathcal{P}_{h} \widehat{V}_{h+1}^{\tau}\big)\left(s_{h}^{\tau}, a_{h}^{\tau}\right).
$
By \eqref{D.2}, conditioning on $\mathcal{F}_{\tau, h, 1}, \eta_{\tau, h}$ is a zero-mean random variable. Moreover, as $\widehat{V}_{h+1}^{\tau} \in$
$[0, H\sqrt{d}]$, conditioning on $\mathcal{F}_{\tau, h, 1}$,   $\eta_{\tau, h}$ is an $(H\sqrt{d}/ 2)$-sub-Gaussian random variable defined
in Lemma \ref{lem:self-norm}. Meanwhile, $\eta_{\tau, h}$ is $\mathcal{F}_{k, h, 2}$ -measurable, since $\mathcal{F}_{\tau, h, 1} \subseteq \mathcal{F}_{k, h, 2}$ for any $\tau \in[k-1]$.
Hence, for any fixed $h \in[H]$, by Lemma \ref{lem:self-norm}, it holds with probability at least $1-\delta /(4 H)$
that
\#
\begin{aligned}
\Big\|\sum_{\tau=1}^{k-1} \varphi_{h}^{\tau}\left(s_{h}^{\tau}, a_{h}^{\tau}\right) &\cdot\left(\widehat{V}_{h+1}^{\tau}(s_{h+1}^{\tau})-\big(\mathcal{P}_{h} \widehat{V}_{h+1}^{\tau}\big)(s_{h}^{\tau}, a_{h}^{\tau})\right)\Big\|^2_{\left(\Lambda_{h}^{k}\right)^{-1}} 
\\ &\le \frac{H^2 d}{2} \big(\frac{1}{2}\log (\operatorname{det}(\Lambda_{h}^{k}) ) - \frac{1}{2}\log(\operatorname{det}( I)) + \log(4H/\delta) \big).\label{eq2.20.1}
\end{aligned}
\#
Recall that in \eqref{bound-det} we derive that
\begin{equation}
\det(\Lambda_h^{K+1} )\le \Big (\det \big  ((Hd^{3/2}R\cdot\text{Vol}(\cS) + 1)\cdot I \big)\Big)^d. \label{eq:104}
\end{equation}
By plugging \eqref{eq:104} into \eqref{eq2.20.1} and a union bound argument, we obtain with probability at least $1-\delta/2$ that
\[
\begin{aligned}
\Big\|\sum_{\tau=1}^{k-1} \varphi_{h}^{\tau}\left(s_{h}^{\tau}, a_{h}^{\tau}\right) &\cdot\left(\widehat{V}_{h+1}^{\tau}(s_{h+1}^{\tau})-\big(\mathcal{P}_{h} \widehat{V}_{h+1}^{\tau}\big)(s_{h}^{\tau}, a_{h}^{\tau})\right)\Big\|^2_{\left(\Lambda_{h}^{k}\right)^{-1}} \\
&\le \frac{H^2d}{2} \big(d\cdot\log((Hd^{3/2}R\cdot\text{Vol}(\cS) + 1) + \log(4H/\delta)\big),
\end{aligned}
\]
which implies that
\[
\Big\|\sum_{\tau=1}^{k-1} \varphi_{h}^{\tau}\left(s_{h}^{\tau}, a_{h}^{\tau}\right) \cdot\left(\widehat{V}_{h+1}^{\tau}(s_{h+1}^{\tau})-\big(\mathcal{P}_{h} \widehat{V}_{h+1}^{\tau}\big)(s_{h}^{\tau}, a_{h}^{\tau})\right)\Big\|^2_{\left(\Lambda_{h}^{k}\right)^{-1}} 
\leq C^{\prime \prime} \sqrt{ H^{2}d^2 \cdot \log (HdK/ \delta)},
\]
for any $(k, h) \in[K] \times[H]$ with probability at least $1-\delta/4$. Here $C^{\prime \prime}>0$ is an absolute constant. By this, we conclude the proof of Lemma \ref{lem:online-concen}.
\end{proof}

\section{Proofs of Supporting Lemmas: PGAP}

\subsection{Proof of Lemma \ref{lem:xi-uncertainty}} \label{pf:lem:xi-uncertainty}
\begin{proof}
We show that $\{\Gamma_{h}\}_{h=1}^H$ constructed in Lemma \ref{lem:xi-uncertainty} are the $\xi$-uncertainty qualifiers for the initally estimated transition kernels $\{\tilde{\cP}_h\}_{h=1}^H$ constructed in \eqref{eq:result-regression of thetaand Lambda}. By the definition of $\Lambda_{h}$ in \eqref{eq:result-regression  of thetaand Lambda}, we have 
\begin{equation}
\begin{aligned}
\mathcal{P}_{h}(s^{\prime} \mid s, a  ) &=\phi(s, a, s^{\prime}  )^{\top} \theta_{h} \\
&=\phi(s, a, s^{\prime}  )^{\top} \Lambda_{h}^{-1}(\sum_{\tau=1}^{N_2} \int_{\mathcal{S}} \phi(s_{h}^{\tau}, a_{h}^{\tau}, s^{\prime}  ) \mathcal{P}_{h}(s^{\prime} \mid s_{h}^{\tau}, a_{h}^{\tau}  ) \mathrm{d} s^{\prime}+\lambda \cdot \theta_{h}  ).
\end{aligned}\label{eq:lemm:pess:1}
\end{equation}
By \eqref{eq:lemm:pess:1}, we have
\begin{equation}\label{eq:lemm:pess:2-2terms}
\begin{aligned}
&\mathcal{P}_{h}(s^{\prime} \mid s, a  )-\tilde{\mathcal{P}}_{h}(s^{\prime} \mid s, a  ) \\
&\qquad  =\mathcal{P}_{h}(s^{\prime} \mid s, a  )-\phi(s, a, s^{\prime}  )^{\top} \tilde{\theta}_{h}\\
&\qquad =\underbrace{\phi(s, a, s^{\prime}  )^{\top} \Lambda_{h}^{-1}  \Big (\sum_{\tau=1}^{N_2}\big(\int_{\mathcal{S}} \phi(s_{h}^{\tau}, a_{h}^{\tau}, s^{\prime}  ) \mathcal{P}_{h}(s^{\prime} \mid s_{h}^{\tau}, a_{h}^{\tau}  ) \mathrm{d} s^{\prime}-\phi(s_{h}^{\tau}, a_{h}^{\tau}, s_{h+1}^{\tau}  ) \big )  \Big)}_{(\mathrm{i})}\\
&\qquad \quad +\underbrace{\lambda \cdot \phi(s, a, s^{\prime}  )^{\top} \Lambda_{h}^{-1} \theta_{h}}_{\text {(ii) }} . 
\end{aligned} 
\end{equation}

We introduce the following lemma to upper bound term (i) on the RHS of \eqref{eq:lemm:pess:2-2terms}. 

\begin{lemma}\label{lem:appendix-1}
Let $\lambda=1$ in the construction of $\tilde{\mathcal{P}}_{h}$ and $\Gamma_{h}$ in \eqref{eq:result-regression  of thetaand Lambda} and \eqref{eq:xi-uncertainty}. By Assumption \ref{def:linear} , the event that
$$
\Big\|\sum_{\tau=1}^{N_2}\big(\int_{\mathcal{S}} \phi(s_{h}^{\tau}, a_{h}^{\tau}, s^{\prime}  ) \mathcal{P}_{h}(s^{\prime} \mid s_{h}^{\tau}, a_{h}^{\tau}  ) \mathrm{d} s^{\prime}-\phi(s_{h}^{\tau}, a_{h}^{\tau}, s_{h+1}^{\tau}  ) \big )  \Big\|_{\Lambda_{h}^{-1}} \leq \sqrt{c_{1} R^{2} \cdot(d \log (Hd N_2/\delta))}
$$
holds for all $(s, a,h) \in \mathcal{S} \times \mathcal{A}\times[H]$ with probability at least $1-\delta .$ Here $c_{1}$ is an absolute constant.
\end{lemma}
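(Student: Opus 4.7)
The plan is to prove Lemma~\ref{lem:appendix-1} by recognizing the sum as an $\mathbb{R}^d$-valued martingale and applying a self-normalized concentration inequality, in direct parallel to the ridge-regression analysis used for linear kernel MDPs (cf.\ Lemma~\ref{lem:online-concen} and \cite{OPPO,Gu-probility-measure}). The argument is essentially the vector-noise analogue of Appendix~\ref{pf:on-con}, and apart from the covering step nothing new is required.

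First I would set up the martingale. Fix $h \in [H]$, let $\mathcal{F}_\tau := \sigma(\{(s_h^i, a_h^i, s_{h+1}^i)\}_{i<\tau} \cup \{s_h^\tau, a_h^\tau\})$, and define
\[
W_\tau := \phi(s_h^\tau, a_h^\tau, s_{h+1}^\tau) - \int_\cS \phi(s_h^\tau, a_h^\tau, s')\,\cP_h(s' \mid s_h^\tau, a_h^\tau)\,ds'.
\]
Because $\mathbb{D}^\rA$ is compliant with the underlying MDP, $s_{h+1}^\tau \sim \cP_h(\cdot \mid s_h^\tau, a_h^\tau)$ given $\mathcal{F}_\tau$, so $\mathbb{E}[W_\tau \mid \mathcal{F}_\tau]=0$ and $\{W_\tau\}_\tau$ is a $\mathbb{R}^d$-valued martingale difference sequence. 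Using Assumption~\ref{def:linear} with $y=e_i$ and the second inequality, one obtains $\sup_{(s,a,s')}\|\phi(s,a,s')\|_2 \le R\sqrt{d}$, hence $\|W_\tau\|_2 \le 2R\sqrt{d}$ almost surely. The quantity to bound is $\|S_{N_2}\|_{\Lambda_h^{-1}}$ with $S_{N_2} := \sum_{\tau=1}^{N_2}W_\tau$.

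Next I would invoke vector self-normalized concentration, which I would derive from the scalar Lemma~\ref{lem:self-norm} via an $\epsilon$-net argument. Let $\mathcal{N}$ be a $1/4$-net of the unit $\ell_2$-ball with $|\mathcal{N}| \le 9^d$; then $\|S_{N_2}\|_{\Lambda_h^{-1}} \le 2\sup_{y\in\mathcal{N}} y^\top \Lambda_h^{-1/2}S_{N_2}$. Setting $z := \Lambda_h^{-1/2}y$ so that $\|z\|_{\Lambda_h} \le 1$ (and, with $\lambda = 1$, $\|z\|_2 \le 1$), the inner sum $z^\top S_{N_2} = \sum_\tau z^\top W_\tau$ is a scalar martingale with per-step noise bounded by $2R\sqrt{d}\|z\|_2$, and Lemma~\ref{lem:self-norm} applied with $\bar{Y}$ matched to $\Lambda_h$ gives a $z$-dependent bound. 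Union-bounding over $\mathcal{N}$ and over $h\in[H]$ produces, with probability at least $1-\delta$,
\[
\|S_{N_2}\|_{\Lambda_h^{-1}}^{2} \le C\,R^{2}\,d\cdot\bigl(\log\det(\Lambda_h) - d\log\lambda + \log(H/\delta)\bigr),
\]
for a universal constant $C$. The determinant is then controlled by the trace-determinant inequality together with the second part of Assumption~\ref{def:linear}, which yields $\mathrm{tr}\bigl(\int_\cS \phi\phi^\top ds'\bigr) = \sum_i \int |\phi^\top e_i|^2 ds' \le d^2$ and therefore $\log\det(\Lambda_h) - d\log\lambda \le d\log(1 + N_2 d/\lambda)$. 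Setting $\lambda = 1$ and absorbing all polylog factors into $\log(HdN_2/\delta)$ recovers the stated bound.

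The only genuinely technical point is that $\Lambda_h$ is not the $W_\tau$-Gram matrix $\lambda I + \sum_\tau W_\tau W_\tau^\top$ that Lemma~\ref{lem:self-norm} is tailored to; rather, $\Lambda_h = \lambda I + \sum_\tau \int_\cS \phi\phi^\top\,ds'$ is an $\mathcal{F}_\tau$-predictable upper bound on $\sum_\tau \mathbb{E}[W_\tau W_\tau^\top \mid \mathcal{F}_\tau] = \sum_\tau \int \phi\phi^\top \cP_h\,ds'$ (the ratio being controlled by $\cP_h = \phi^\top\theta_h$ and $\|\theta_h\|_2 \le \sqrt{d}$). The $\epsilon$-net reduction above circumvents this mismatch by transferring to scalar martingales in which any $\mathcal{F}_\tau$-measurable weighting matrix is admissible; an alternative would be a direct matrix-Freedman / Abbasi--Yadkori vector extension, which yields the same rate but is less transparent. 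I would take the covering route to keep the argument parallel to Lemma~\ref{lem:online-concen}.
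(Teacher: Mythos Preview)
Your high-level strategy---martingale differences plus self-normalized concentration plus a union bound over $h$---is correct, and you rightly flag the central difficulty: $\Lambda_h=\lambda I+\sum_\tau\int_\cS\phi\phi^\top\,\mathrm{d}s'$ is not the Gram matrix of the vectors $W_\tau$. The gap is that your $\epsilon$-net reduction does not resolve this mismatch. Because the additional dataset is collected \emph{adaptively} (the experimenter may choose $(s_h^{\tau'},a_h^{\tau'})$ for $\tau'>\tau$ based on $s_{h+1}^\tau$), the matrix $\Lambda_h$ is not measurable with respect to any $\sigma$-algebra that precedes all the $s_{h+1}^\tau$'s. Consequently $z=\Lambda_h^{-1/2}y$ is data-dependent and $\{z^\top W_\tau\}_\tau$ is not an adapted martingale-difference sequence, so Lemma~\ref{lem:self-norm} cannot be invoked as you describe. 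Even for a fixed deterministic direction $v$, your crude bound $|v^\top W_\tau|\le 2R\sqrt d\,\|v\|_2$ only produces a scalar self-normalized denominator of order $\lambda+N_2$; nothing in your argument forces ``$\bar Y$ matched to $\Lambda_h$'' to emerge.

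The paper's fix is a \emph{function-valued} self-normalized inequality (Lemma~\ref{lem:appendix:Concentration of Self-Normalized Function-Valued Process}), applied with $X_\tau(\cdot)=\phi(s_h^\tau,a_h^\tau,\cdot)$ and $\eta_\tau(\cdot)=\cP_h(\cdot\mid s_h^\tau,a_h^\tau)-\delta_{s_{h+1}^\tau}(\cdot)$. In the method-of-mixtures supermartingale the quadratic term is $\beta^\top\bigl(\int_\cS X_\tau X_\tau^\top\,\mathrm{d}s'\bigr)\beta$, so the normalizing matrix is \emph{exactly} $\Lambda_h-\lambda I$ by construction, with no covering required. The supermartingale property relies on the $\sup$-to-$L^2$ comparison in Assumption~\ref{def:linear}, which yields $\|\beta^\top X_\tau\|_{\infty,\cS}\le R\,\|\beta^\top X_\tau\|_{2,\cS}$ and hence a uniform sub-Gaussian proxy for the functional noise. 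If you insist on a scalar-plus-covering route, the missing ingredient is this: for each fixed unit $v$, take the \emph{predictable} scalar $X_\tau=\|v^\top\phi(s_h^\tau,a_h^\tau,\cdot)\|_{2,\cS}$ and $\eta_\tau=v^\top W_\tau/X_\tau$; Assumption~\ref{def:linear} then forces $|\eta_\tau|\le 2R$ and $\sum_\tau X_\tau^2=v^\top(\Lambda_h-\lambda I)v$, so Lemma~\ref{lem:self-norm} gives $|v^\top S_{N_2}|^2\le O(R^2\log(\cdot))\cdot v^\top\Lambda_h v$, after which a net over $v$ (with mesh depending on $\|\Lambda_h\|_{\mathrm{op}}$, not $1/4$) finishes. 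This is essentially the one-dimensional shadow of the paper's lemma, and it is precisely the step your write-up omits.
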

\begin{proof}
See proof in Appendix \ref{pf:lem:appendix-1}. 
\end{proof}

For term (i) on the RHS of \eqref{eq:lemm:pess:2-2terms}, by Cauchy-Schwarz inequality, it holds with probability at least $1-\xi /2$ that 
\begin{equation}
\begin{aligned}
|(\mathrm{i})| & \leq\|\phi(s, a, s^{\prime}  )  \|_{\Lambda_{h}^{-1}} \cdot\Big\|\sum_{\tau=1}^{N_2}\big(\int_{\mathcal{S}} \phi(s_{h}^{\tau}, a_{h}^{\tau}, s^{\prime}  ) \mathcal{P}_{h}(s^{\prime} \mid s_{h}^{\tau}, a_{h}^{\tau}  ) \mathrm{d} s^{\prime}-\phi(s_{h}^{\tau}, a_{h}^{\tau}, s_{h+1}^{\tau}  ) \big ) \Big \|_{\Lambda_{h}^{-1}} \\
& \leq c_{1} R \cdot \sqrt{d \log (d HN_2/\xi)} \cdot\|\phi(s, a, s^{\prime}  )\|_{\Lambda_{h}^{-1}},
\end{aligned}\label{eq:pf:lem:uncertainty-(i)}
\end{equation}
where the last inequality follows from Lemma \ref{lem:appendix-1}.

For term (ii) in \eqref{eq:lemm:pess:2-2terms}, setting $\lambda = 1$, we have
\begin{equation}
\begin{aligned}
|(\mathrm{ii})| & \leq\|\phi(s, a, s^{\prime}  )  \|_{\Lambda_{h}^{-1}} \cdot\|\theta_{h}  \|_{\Lambda_{h}^{-1}} \leq \sqrt{d} \cdot\|\phi(s, a, s^{\prime}  )  \|_{\Lambda_{h}^{-1}},
\end{aligned}
\label{eq:pf:lem:uncertanity:(ii)}
\end{equation}
where the last inequality follows from the facts that $\|\Lambda_h^{-1}  \|_{2} \leq 1 $ and $\|\theta_h\|_2\le\sqrt{d}$ for all $h\in[H]$. Plugging \eqref{eq:pf:lem:uncertainty-(i)} and \eqref{eq:pf:lem:uncertanity:(ii)} into \eqref{eq:lemm:pess:2-2terms}, it holds  with probability at least $1-\xi /2$ that
$$
\begin{aligned}
|\mathcal{P}_{h}(s^{\prime} \mid s, a  )-\tilde{\mathcal{P}}_{h}(s^{\prime} \mid s, a  )  | & \leq c R \sqrt{d \log (Hd N_2/\xi)} \cdot\|\phi(s, a, s^{\prime}  )  \|_{\Lambda_{h}^{-1}} \\
& \leq \kappa \cdot\|\phi(s, a, s^{\prime}  )  \|_{\Lambda_{h}^{-1}}
\end{aligned}
$$
for any $h \in[H]$ and $ (s, a, s^{\prime}  ) \in \mathcal{S} \times \mathcal{A}\times \cS$.  Here $\kappa=c R \sqrt{d \log (d N_2)}$ is the scaling parameter with an absolute constant $c>0$. We conclude the proof of Lemma \ref{lem:xi-uncertainty}.
\end{proof}

\subsection{Proof of Lemma \ref{lem:pess}} \label{pf:lem:pess}
\begin{proof}
We establish the lower and upper bounds for $\iota_h^{k, r^\mu}$ as follows, respectively. 

\vskip5pt
\noindent\textbf{Lower Bound.} 
First we prove by backward induction that $\widehat{V}_{h}^{k,r^\mu}\in[0,(H-h+1)\sqrt{d}]$ for any $h\in[H]$. The base case $h=H$ holds, since $\widehat{V}_{H+1}^{k,r^\mu}= 0$ and $r_h^\mu\in[0,\sqrt{d}]$. 
We assume that $\widehat{V}_{h+1}^{k,r^\mu}\in[0,H-h]$. For the case for $h$, recall that $\{\widehat{\cP}_h(\ \cdot\mid s^\prime,a^\prime)\}_{h=1}^H$ is a set of probability measures on the state space $\cS$ for $(s^\prime,a^\prime)\in \cS\times\cA$, which implies that $\widehat{\cP}_h\widehat{V}_{h+1}^{k,r^\mu}\in[0,(H-h)\sqrt{d}\ ]$.
Note that $\Gamma_h\ge 0$, hence we have that 
$ 
\widehat{Q}_{h}^{k,r^\mu}(s, a) \in [0,(H-h+1)\sqrt{d}].
$
Then it holds that $\widehat{V}_{h}^{k,r^\mu}\in[0 , H-h]$, since $\widehat{V}_{h}^{k,r^\mu}(s) = \la\widehat{Q}_{h}^{k,r^\mu}(s, \cdot),\pi_h^k(\cdot\mid s)\ra_{\cA}$ for any $s\in\cS$. By induction, it holds that $\widehat{V}_{h}^{k,r^\mu}\in[0,(H-h+1)\sqrt{d}]$ for any $h\in[H]$.

For notational simplicity, we write $\bar{Q}_{h}^{k,r^\mu}(s, a)={r}_{h}^\mu(s, a)+\widehat{\mathcal{P}}_{h} \widehat{V}_{h+1}^{k,r^\mu}(s, a)-\Gamma_{h}(s, a) .$ 
From the policy evaluation stage in Algorithm \ref{alg:gail-offline}, we have
\#\label{eq:pf:lem:pess-1}
\widehat{Q}_{h}^{k,r^\mu}(s, a)=\max\big\{\bar{Q}_{h}^{ k,r^\mu}(s, a), 0\big\}.
\#
Meanwhile, by the definition of the $\xi$-uncertainty qualifiers in Definition \ref{def:xi-uncertainty}, we have ${r}_{h}^\mu+\mathcal{P}_{h} \widehat{V}_{h+1}^{ k} \geq \bar{Q}_{h}^{k,r^\mu}$.  
Moreover, by the fact that ${r}_{h}^\mu\in[0,\sqrt{d}]$ and $\widehat{V}_{h+1}^{ k,r^\mu} \in[0,(H-h)\sqrt{d}]$,  we have ${r}_{h}^\mu+\mathcal{P}_{h} V_{h+1,\pi^k}^{ k,r^\mu} \in[0,(H-h+1)\sqrt{d}]$.  Thus, we derive that
$$
\begin{aligned}
\widehat{Q}_{h}^{k}(s, a)&=\max\{\bar{Q}_{h}^{ k,r^\mu}(s, a),0  \}\\ &\leq \max\{{r}_{h}^\mu (s,a)+\mathcal{P}_{h} \widehat{V}_{h+1}^{ k,r^\mu}(s,a), 0\}\\ &={r}_{h}^\mu(s,a)+\mathcal{P}_{h} \widehat{V}_{h+1}^{ k,r^\mu}(s,a),
\end{aligned}
$$
which implies that $\iota_{h}^{ k,r^\mu} \geq 0$.

\vskip5pt
\noindent\textbf{Upper Bound.} 
Since we condition on the event $\mathcal{E}$ defined in Definition \ref{def:xi-uncertainty}, we have
$$
\begin{aligned}
\bar{Q}_{h}^{k,r^\mu}(s, a) &={r}_{h}^\mu(s, a)+\widehat{\mathcal{P}}_{h} \widehat{V}_{h+1}^{k,r^\mu}(s, a)-\Gamma_{h}(s, a)) \\
& \leq {r}_{h}^\mu(s, a)+\mathcal{P}_{h} \widehat{V}_{h+1}^{ k,r^\mu}(s, a) \leq H-h+1,
\end{aligned}
$$
where the last inequality follows from the facts that $\widehat{V}_{h+1}^{ k,r^\mu}(s, a) \leq (H-h)\sqrt{d}$ and ${r}_{h}^\mu(s, a) \leq \sqrt{d}.$ 
By \eqref{eq:pf:lem:pess-1} we have that $\widehat{Q}_{h}^{k,r^\mu}(s, a) \geq \bar{Q}_{h}^{ k,r^\mu}(s, a) .$ Thus, we obtain that
$$
\begin{aligned}
\iota_{h}^{k,r^\mu}(s, a) &=({r}_{h}^\mu+\mathcal{P}_{h} \widehat{V}_{h}^{k}  )(s, a)-\widehat{Q}_{h}^{k,r^\mu}(s, a) \\
& \leq {r}_{h}^\mu(s, a)+(\mathcal{P}_{h}-\widehat{\mathcal{P}}_{h}  ) \widehat{V}_{h}^{k,r^\mu}(s, a)+\Gamma_{h}(s, a) \\
& \leq  2 \Gamma_{h}(s, a),
\end{aligned}
$$
where the last inequality follows from the definition of $\mathcal{E}$. Then we complete the proof of Lemma \ref{lem:pess}.
\end{proof}

\subsection{Proof of Lemma \ref{lem:concave} }
\label{pf:lem:concave} 
\begin{proof}
Recall that $\widehat{L}(\pi,\mu)= \Tilde{J}(\pi^\rE,\mu)-\widehat{J}(\pi^k,\mu)$. By Assumption \ref{def:linear}, we know that the function $\Tilde{J}(\pi^\rE,\mu) = \frac{1}{N_1}\sum_{\tau=1}^{N_1}\sum_{h=1}^H\psi(s_{h,\tau}^\rE,a_{h,\tau}^{\rE}) ^\top {\mu_h}$ is a linear combination of $\{\mu_h\}_{h=1}^H$ and concave. Therefore, to prove that $\widehat{L}(\pi,\mu)$ is concave, it suffices to prove that $\widehat{J}(\pi^k,r^\mu)$ is convex for any $\mu_h$ with $\mu =\{\mu_h\}_{h=1}^H\in S$.

Recall that $\widehat{J}(\pi^k,r^\mu)= \widehat{V}_1^{k,r^\mu}(x)$, where $x$ is the fixed inital state and $\widehat{V}_1^{k,r^\mu}$ defined in \eqref{eq:def:iota-on} is solved by
\begin{equation}
\begin{aligned}
\widehat{V}_{H+1}^{k,r^\mu}(\cdot) &= 0\\
\widehat{Q}_{h}^{k,r^\mu}(\cdot, \cdot)
&=\max\big\{({r}^\mu_{h}+\widehat{\mathcal{P}}_{h} \widehat{V}_{h+1}^{k,r^\mu}-\Gamma_{h})(\cdot, \cdot), 0\big\}\\
\widehat{V}_{h}^{k,r^\mu}(\cdot, \cdot)
&=\big\langle\widehat{Q}_{h}^{k,r^\mu}(\cdot, \cdot) ,\pi_{h}^{k}(\cdot \mid \cdot)\big\rangle_{\mathcal{A}},\text{ for } h\in[H].
\label{eq:grad-}
\end{aligned}
\end{equation}

Our proof relies on the following three basic properties of convex functions: 
\begin{itemize}
\item[(i)]  If $f(u)$ and $g(u)$ are both convex function for $u$, then $f(u)+g(u)$
is also convex. 
\item[(ii)] If $f(u)$ and $g(u)$ are both convex function for $u$, then $\max\big(f(u),g(u)\big)=\big(|f(u)+g(u)|+|f(u)-g(u)|\big)/2$
is also convex. 
\item[(iii)] If $f(u,s)$ is a convex function for $u$, then $\mathbb{E}_{s\sim p} {f(u,s) } $ is also convex function for $u$, where $p$ is a distribution.
\end{itemize}

Now we are ready to prove that $\widehat{J}(\pi^k,\mu)$ is convex for $\{\mu_h\}_{h=1}^H$.
For the base case where $h=1$, observing that $\widehat{J}(\pi^k,\mu)= \big\la \widehat{Q}_{1}^{k,r^\mu}(s_1, \cdot) ,\pi_{1}^{k}(\cdot \mid s_1) \big\ra_{\cA}$, by property (ii) and (iii) and \eqref{eq:grad-}, it suffices to prove that $r_h^\mu + \widehat{\mathcal{P}}_{1} \widehat{V}_{2}^{k,r^\mu}-\Gamma_{1}$ is convex for $\mu_1$. 
Note that $\{\mu_h\}_{h=1}^H$ are seperate reward parameters and $r_h^\mu(\cdot,\cdot)$ is only determined by $\mu_h$, it shows that $\widehat{\mathcal{P}}_{1} \widehat{V}_{2}^{k,r^\mu}-\Gamma_1$ is a constant regardless of $\mu_h$, which implies that $\widehat{\mathcal{P}}_{1} \widehat{V}_{2}^{k,r^\mu}-\Gamma_{1}$ is convex for $\mu_1$. Meanwhile, since $\cR$ is linear to $\psi$ as shown in \eqref{eq:B-mu}, we know that $r_h^\mu=\psi ^\top \mu_h$ is also convex for $\mu_h$.
By property (i), we know that $\widehat{J}(\pi^k,\mu)=\widehat{V}_1^{k,r^\mu}(s_1)$ is convex for $\mu_1$.

For the case when $h=H^\prime$, where $2\le H^\prime \le H$, similar to the analysis in the case when $h=1$, we can prove that $\widehat{V}_{H^\prime}^{k,r^\mu}(s_h)$ is convex for $\mu_{H^\prime}$. Note that $\{\Gamma_{h}\}_{h=1}^H$ defined in \eqref{def:xi-uncertainty} is independent of $\{\mu_h\}_{h=1}^H$, we know that  $r_{H^\prime-1}^\mu+\widehat{\mathcal{P}}_{H^\prime -1} \widehat{V}_{H^\prime}^{k,r^\mu}-\Gamma_{H^\prime -1}$ is convex for $\mu_{H^\prime}$. By property (ii) and (iii) and \eqref{eq:grad-}, we know that $\widehat{V}_{H^\prime-1}^{k,r^\mu}$ is also convex for $\mu_{H^\prime}$.  By repeating the anlysis,  we know that $\widehat{J}(\pi^k,\mu)$ is convex for $\mu_{H^\prime}$. 
Therefore, we conclude the proof of Lemma \ref{lem:concave}.
\end{proof}

\subsection{Proof of Lemma \ref{lem:telescope-reward-update]}}\label{pf:telescope-reward-update]}

\begin{proof} 
Since Lemma \ref{lem:concave} shows that $\widehat{L}(\pi^k,\mu)$ is concave for $\mu_h$ for any $h\in[H]$,  by the property of concave function, we have 
\begin{equation}
\label{eq:tele-2}
\widehat{L}(\pi^k,\mu)-\widehat{L}(\pi^k,\mu^k)\le \sum_{h=1}^H \big[\nabla_{\mu_h}\widehat{L}(\pi^k,\mu^k)^\top(\mu_h-\mu_h^k)\big].
\end{equation}
Recall that we apply projected gradient ascent method to update $\{\mu_h^k\}_{h=1}^H$ in PGAP (Line \ref{line:pp-proj} of Algorithm \ref{alg:gail-offline}) as
\begin{equation}
\mu_{h}^{k+1}  = \operatorname{Proj}_{S}\big[\mu_{h}^{k}+\eta {\nabla}_{\mu_{h}} \widehat{L}(\pi^{k}, \mu^{k})\big],\label{eq:pf:reward-update}
\end{equation}
we obtain that
\begin{equation}
\big[\mu_{h}^{k+1}-\mu_{h}^{k}-\eta \nabla_{\mu_{h}}\widehat{L} (\pi^{k}, \mu^{k})\big]^{\top}(\mu_{h}-\mu_{h}^{k+1}) \geq 0.
\label{eq:proj-grad}
\end{equation}
Rearranging terms in \eqref{eq:proj-grad}, we have 
\begin{equation}
\begin{aligned}
\label{eq:tele-3}
\nabla_{\mu_h}\widehat{L}(\pi^k,\mu^k)^\top(\mu_h-\mu_h^{k+1}) &\le-\frac{1}{2\eta}\big((\mu_{h}^{k+1}-\mu_{h}^{k})^{\top}(\mu_{h}-\mu_{h}^{k+1})\big) \\
&=\frac{1}{2\eta}\big(\|\mu_{h}^{k}-\mu_{h}\|_{2}^{2}-\|\mu_{h}^{k+1}-\mu_{h}\|_{2}^{2}-\|\mu_{h}^{k+1}-\mu_{h}^{k}\|_{2}^{2}\big).
\end{aligned}
\end{equation}
By adding the term $ \nabla_{\mu_{h}} \widehat{L}(\pi^{k+1}, \mu^{k})^{\top}(\mu_{h}^{k+1}-\mu_{h}^{k})$ on both sides of \eqref{eq:tele-3}, we obtain that 
\begin{equation}
\begin{aligned}
\label{eq:tele-33}
\nabla_{\mu_h}\widehat{L}(\pi^k,\mu^k)^\top(\mu_h-\mu_h^{k}) 
&=\frac{1}{2\eta}\big(\|\mu_{h}^{k}-\mu_{h}\|_{2}^{2}-\|\mu_{h}^{k+1}-\mu_{h}\|_{2}^{2}-\|\mu_{h}^{k+1}-\mu_{h}^{k}\|_{2}^{2}\big)\\
&\qquad +  \nabla_{\mu_h}\widehat{L}(\pi^k,\mu^k)^\top(\mu_{h}^{k+1}-\mu_{h}^{k}).
\end{aligned}
\end{equation}
Note that $\eta$ is positive and by applying Cauchy-Schwarz inequality on  the second term of the right-hand side of \eqref{eq:tele-33}, we derive that
\#\label{eq:tele-44}
\nabla_{\mu_h}\widehat{L}(\pi^k,\mu^k)^\top(\mu_{h}^{k+1}-\mu_{h}^{k})\le \|{\nabla_{\mu_h}}\widehat{L}(\pi^k,\mu^k)\|_2\|\mu_h^{k+1}-\mu_h^k\|_2.
\#
From the reward update process in \eqref{eq:pf:reward-update}, we observe that 
\#
\|\mu_h^{k+1}-\mu_h^{k}\|_2\le\|\nabla_{\mu_h}\widehat{L}(\pi^k,\mu^k)\|_2. 
\label{eq:tele 55}
\#
By plugging \eqref{eq:tele-33}, \eqref{eq:tele-44}, and \eqref{eq:tele 55} into \eqref{eq:tele-2}, we have 
$$
\begin{aligned}
\sum_{k=1}^K  \big[\widehat{L}(\pi^k,\mu)-\widehat{L}(\pi^k,\mu^k)\big] &\le\sum_{k=1}^K \sum_{h=1}^H \big[\frac{1}{2\eta}||\mu_h^{k+1}-\mu_h||_2^2+\frac{1}{2\eta}||\mu_h^{k+1}-\mu_h||_2^2 \\
&\quad - \frac{1}{2\eta}||\mu_h^{k+1}-\mu^{k}_h||_2^2+\eta \|\nabla_{\mu_h}\widehat{L}(\pi^k,\mu^k)\|_2^2\big],
\end{aligned}
$$
which concludes the proof of Lemma \ref{lem:telescope-reward-update]}.
\end{proof}

\subsection{Proof of Lemma \ref{lem:appendix-1}}
\label{pf:lem:appendix-1}
\begin{proof}
Before we prove Lemma \ref{lem:appendix-1}, we introduce the following lemma  to generalize the concentration of self-normalized vector-valued process in \cite{elliptical-potential} to function-valued process.

\begin{lemma}\label{lem:appendix:Concentration of Self-Normalized Function-Valued Process}
Let $\Omega$ be a probability space and $\{\eta_{t}  \}_{t=1}^{\infty}$ be a function-valued stochastic process with a filtration $\{\mathcal{M}_{t}  \}_{t=0}^{\infty}$, i.e. $\eta_{t}: \mathcal{S} \times \Omega\rightarrow \mathbb{R}.$ We assume that $\eta_{t} \mid \mathcal{G}_{t-1}$ is zero-mean and $\sigma$-sub-Gaussian, that is,
$$
\begin{aligned}
\mathbb{E}[\eta_{t}(s) \mid \mathcal{G}_{t-1}  ] =0, \qquad 
\log \Big(\mathbb{E}\big[\exp\big(\int_{\mathcal{S}} g(s) \eta_{t}(s) \mathrm{d} s  \big) \biggiven \mathcal{G}_{t-1} \big]\Big) \leq \frac{1}{2}\|g\|_{\infty}^{2} \cdot \sigma^{2}, 
\end{aligned}
$$
for any $s\in\cS$ and function $g:\cS\rightarrow\mathbb{R}$.
Let $\{X_{t}  \}_{t=0}^{\infty}$ be an vector-function-valued stochastic process where $X_{t}: \mathcal{S} \times \Omega\rightarrow \mathbb{R}^{d}, X_{t} \in \mathcal{M}_{t-1} .$ We also assume that $\|\lambda^{\top} X_{t}  \|_{\infty, \mathcal{S}} \leq R \cdot\|\lambda^{\top} X_{t}  \|_{2, \mathcal{S}}$ a.s. for all $\lambda \in \mathbb{R}^{d}$. Let $V \in \mathbb{R}^{d \times d}$ be a positive definite matrix and $\bar{V}_{t}=$
$\sum_{\tau=1}^{t} \int X_{\tau}(s) X_{\tau}(s)^{\top} \mathrm{d} s .$ We also define
$$
S_{t}=\sum_{\tau=1}^{t} \int_{\mathcal{S}} X_{\tau}(s) \eta_{\tau}(s) \mathrm{d} s.
$$
Then for any $\delta>0$ and $t>0$, it holds with probability at least $1-\delta$ that
$$
  \|S_{t}  \|_{\bar{V}_{t}^{-1}}^{2} \leq 2(\sigma R)^{2} \cdot \log\bigg(\frac{\operatorname{det}(\bar{V}_{t}  )^{1 / 2}}{\delta \operatorname{det}(V)^{1 / 2}}  \bigg).
$$
\end{lemma}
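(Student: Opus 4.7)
The plan is to adapt the classical method-of-mixtures argument for self-normalized vector-valued martingales (as in Abbasi-Yadkori--Pál--Szepesvári) to the function-valued setting here. The key observation is that although the noise $\eta_t$ lives in a function space, the quantity of interest $S_t \in \mathbb{R}^d$ is still finite dimensional, so a scalarization via an auxiliary vector $\lambda \in \mathbb{R}^d$ together with a Gaussian mixture over $\lambda$ should yield the desired tail bound without ever needing infinite-dimensional machinery.

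First I would fix $\lambda \in \mathbb{R}^d$ and study the scalar process $\lambda^{\top}S_t = \sum_{\tau=1}^t \int_{\cS} \lambda^{\top} X_\tau(s)\, \eta_\tau(s)\, \mathrm{d}s$. Since $X_\tau \in \mathcal{M}_{\tau-1}$, the function $s\mapsto \lambda^{\top}X_\tau(s)$ is $\mathcal{G}_{\tau-1}$-measurable, so the sub-Gaussian hypothesis applied with $g = \lambda^{\top}X_\tau$ gives
\[
\mathbb{E}\!\left[\exp\!\Big(\lambda^{\top}\!\!\int_{\cS}\! X_\tau(s)\eta_\tau(s)\,\mathrm{d}s\Big)\,\Big|\,\mathcal{G}_{\tau-1}\right] \le \exp\!\Big(\tfrac{\sigma^2}{2}\,\|\lambda^{\top}X_\tau\|_{\infty,\cS}^2\Big) \le \exp\!\Big(\tfrac{\sigma^2 R^2}{2}\, \lambda^{\top}\widetilde{V}_\tau \lambda\Big),
\]
where $\widetilde{V}_\tau = \int_{\cS} X_\tau(s)X_\tau(s)^{\top}\mathrm{d}s$ and the second inequality uses the hypothesis $\|\lambda^{\top}X_\tau\|_{\infty,\cS} \le R\,\|\lambda^{\top}X_\tau\|_{2,\cS}$ together with $\|\lambda^{\top}X_\tau\|_{2,\cS}^2 = \lambda^{\top}\widetilde{V}_\tau \lambda$. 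This is exactly the step where the functional sub-Gaussianity and the norm-equivalence assumption combine to recover a standard-looking quadratic bound in $\lambda$, so setting $M_t(\lambda) = \exp\!\bigl(\lambda^{\top}S_t - \tfrac{\sigma^2 R^2}{2}\sum_{\tau=1}^t \lambda^{\top}\widetilde{V}_\tau \lambda\bigr)$ yields a nonnegative supermartingale with $M_0(\lambda)=1$.

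Next I would invoke the method of mixtures: draw $\Lambda \sim \mathcal{N}\bigl(0,(\sigma R)^{-2}V^{-1}\bigr)$ independently of everything else and define $\bar{M}_t = \mathbb{E}_\Lambda[M_t(\Lambda)]$, which remains a nonnegative supermartingale with mean at most $1$ by Fubini. A standard Gaussian integral (complete the square in $\lambda$ against the quadratic form $\tfrac{\sigma^2 R^2}{2}\lambda^{\top}\bar{V}_t\lambda$, where $\bar{V}_t = V + \sum_{\tau=1}^t \widetilde{V}_\tau$) gives the closed form
\[
\bar{M}_t = \frac{\det(V)^{1/2}}{\det(\bar{V}_t)^{1/2}}\,\exp\!\Big(\tfrac{1}{2(\sigma R)^2}\,\|S_t\|_{\bar{V}_t^{-1}}^2\Big).
\]
A Markov-style maximal inequality $\mathbb{P}(\bar{M}_t \ge 1/\delta)\le \delta$ then rearranges to the claimed bound $\|S_t\|_{\bar{V}_t^{-1}}^2 \le 2(\sigma R)^2 \log\!\bigl(\det(\bar{V}_t)^{1/2}/(\delta\,\det(V)^{1/2})\bigr)$.

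I expect the main subtlety to be the very first step, namely verifying that the functional sub-Gaussian MGF bound can legitimately be applied with the random test function $g = \lambda^{\top}X_\tau$; this requires using the $\mathcal{M}_{\tau-1}$-measurability of $X_\tau$ to treat it as deterministic when conditioning on $\mathcal{G}_{\tau-1}$, which is why the hypothesis is stated as $X_t \in \mathcal{M}_{t-1}$ with $\mathcal{G}_{t-1}$ large enough to include the information driving $X_t$. Once this measurability bookkeeping is settled and the $\|\cdot\|_{\infty}\le R\|\cdot\|_2$ bound is applied, the rest of the argument is a direct transcription of the finite-dimensional proof.
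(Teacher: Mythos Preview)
Your proposal is correct and follows essentially the same method-of-mixtures argument as the paper: construct an exponential supermartingale in $\lambda$ by combining the functional sub-Gaussian bound with the assumption $\|\lambda^\top X_t\|_{\infty,\cS}\le R\,\|\lambda^\top X_t\|_{2,\cS}$, integrate against a Gaussian prior on $\lambda$ to obtain the closed-form mixture, and apply Markov's inequality. The paper additionally wraps the argument in a stopping-time construction to make the bound hold simultaneously for all $t$, but your fixed-$t$ version already suffices for the statement as literally written.
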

\begin{proof}
See Appendix \ref{pf:lem:appendix:Concentration} for a detailed proof.
\end{proof}

We consider the filtration $\{\mathcal{F}_{h, \tau}  \}_{h\in[H],\tau\in[N_2]}$ defined in \S \ref{sec:ORIL}.
For any function $f: \mathcal{S}\rightarrow \mathbb{R},$ by Holder inequality, it holds that 
\begin{equation}
\Big|  \int_\cS f(s^{\prime}) \big(\mathcal{P}_{h}(s^{\prime} \mid s_{h}^{\tau}, a_{h}^{\tau}  )-\delta_{s_{h+1}^{\tau}} (s^{\prime})\big) \mathrm{d}s^\prime \Big|\leq 2\|f\|_{\infty}. \label{con-11}
\end{equation}
By the property of Dirac function, we have
\begin{equation}
\mathbb{E}[\delta_{s_{h+1}^{\tau}}(s^{\prime}  ) \mid \mathcal{F}_{h, \tau}  ]=\mathcal{P}_{h}(s \mid s_{h}^{\tau} , a_{h}^{\tau}  ). \label{con-12}
\end{equation}
Combining \eqref{con-11} and \eqref{con-12}, we verify the condition of Lemma \ref{lem:appendix:Concentration of Self-Normalized Function-Valued Process} as follows,
$$
\log \bigg(\mathbb{E} \Big  [\exp\Big(  \int_\cS f(s^\prime)\big( \mathcal{P}_{h}(s^\prime\mid s_{h}^{\tau}, a_{h}^{\tau}  )-\delta_{s_{h+1}^{\tau}} (s^\prime) \big) \mathrm{d}s^\prime \Big) \Big |\,\mathcal{F}_{h, \tau}  \Big]\bigg) \leq 2\|f\|_{\infty}^{2} .
$$
Note that $\phi(s_{h}^{\tau}, a_{h}^{\tau}, s^{\prime}  )$ is $\mathcal{F}_{h, \tau}$-measurable and $\delta_{s_{h}^{\tau}+1}$ is $\mathcal{F}_{h+1, \tau}$-measurable, we apply Lemma \ref{lem:appendix:Concentration of Self-Normalized Function-Valued Process} with $X_{\tau}=\phi(s_{h}^{\tau}, a_{h}^{\tau}, \cdot  )$ and $\eta_{\tau}=\mathcal{P}_h(\cdot \mid s_{h}^{\tau}, a_{h}^{\tau}  )-\delta_{s_{h+1}^{\tau}}$, which implies that
\begin{equation}
\begin{aligned}
  & \Big\|\sum_{\tau=1}^{N_2}\Big[\int_{\mathcal{S}} \phi(s_{h}^{\tau}, a_{h}^{\tau}, s^{\prime}  ) \mathcal{P}_{h}(s^{\prime} \mid s_{h}^{\tau}, a_{h}^{\tau}  ) \mathrm{d} s^{\prime}-\phi(s_{h}^{\tau}, a_{h}^{\tau}, s_{h+1}^{\tau}  )  \Big] \Big\|_{\Lambda_{h}^{-1}}^{2} \\ 
  & \qquad \leq 8 R^{2} \cdot \log\big(H / \delta \cdot \operatorname{det}(\Lambda_{h}  )^{1 / 2} \operatorname{det}(\lambda I)^{-1 / 2}  \big),
\end{aligned}\label{eq:pf:lem:appendcx-1-1}
\end{equation}
with probability at least $1-\delta / H$. 

We now upper bound the term $\operatorname{det}(\Lambda_{h}  ) $. 
By the definition of $\Lambda_{h}$ in \eqref{eq:result-regression  of thetaand Lambda}, it holds for any $y \in \mathbb{R}^{d}$ that
$$
y^{\top} \Lambda_{h} y=\lambda\|y\|_{2}^{2}+\sum_{\tau=1}^{N_2}  \int_\cS |y^{\top} \phi(s_{h}^{\tau}, a_{h}^{\tau}, s^\prime )|^2\mathrm{d} s^\prime\leq (\lambda+d N_2 )\|y\|_2^2,
$$
where the last inequality follows from Assumption \ref{def:linear}. Hence we derive that $\|\Lambda_{h}  \|_{2} \leq \lambda+d N_2,$ which implies that
\begin{equation}
\operatorname{det}(\Lambda_{h}  ) \leq\|\Lambda_{h}  \|_{2}^{d} \leq(\lambda+d N_2)^{d}.
\label{eq:pf:lem:appendix-1-2}
\end{equation}
Setting $\lambda=1$, combining \eqref{eq:pf:lem:appendcx-1-1} and \eqref{eq:pf:lem:appendix-1-2}, it holds with probability at least $1-p / H$ that
\begin{equation}
\begin{aligned}
&\Big\|\sum_{\tau=1}^{N_2}   \big(\int_{\mathcal{S}} \phi(s_{h}^{\tau}, a_{h}^{\tau}, s^{\prime}  ) \mathcal{P}_{h}(s^{\prime} \mid s_{h}^{\tau}, a_{h}^{\tau}  ) \mathrm{d} s^{\prime}-\phi(s_{h}^{\tau}, a_{h}^{\tau}, s_{h+1}^{\tau}  )  \big)  \Big\|_{\Lambda_{h}^{-1}}^{2} \\
&\qquad\leq 8 R^{2} \cdot\big(1 / 2 \cdot d \log (1+d N_2)+\log (H / p)\big) \leq c_{1} R^{2} \cdot\big(d \log (Hd N_2/p)\big).
\end{aligned}
\label{eq:pf:lem:appendix-1-3}
\end{equation}
Here $c_{1} > 0$ is an absolute constant. By the union bound for $h \in[H],$ we know that \eqref{eq:pf:lem:appendix-1-3} holds for all $h \in[H]$ with probability at least $1-p$. Thus, we complete the proof of Lemma \ref{lem:appendix-1}.
\end{proof}

\subsubsection{Proof of Lemma \ref{lem:appendix:Concentration of Self-Normalized Function-Valued Process}}
\label{pf:lem:appendix:Concentration}
\begin{proof}
The proof is a generalization of that in \cite{elliptical-potential}. For notational simplicity, we denote by $\langle f,  g\rangle =\int_{\mathcal{S}} f(s) g(s) \mathrm{d} s$ the inner product of any functions $f$ and $g$. We use the same definitions and notations as Lemma \ref{lem:appendix-1}. First, we introduce the following lemmas.
\begin{lemma}\label{lem:appendix-concentration-p1}
       Let $\beta \in \mathbb{R}^{d}$ be a vector and
   $$
   M_{t}^{\beta}=\exp    \bigg\{\sum_{\tau=1}^{t}   \bigg(\frac{   \langle\beta^{\top} X_{\tau}, \eta_{\tau}  \rangle}{\sigma R}-\frac{   \langle\beta^{\top} X_{\tau}, \beta^{\top} X_{\tau}  \rangle}{2}  \bigg) \bigg\}.
   $$
   Let $T$ be a stopping time with respect to the filtration $   \{\mathcal{M}_{t}  \}_{t=1}^{\infty}$. Then $M_{T}^{\beta}$ is almost surely well-defined and $\mathbb{E}   [M_{T}^{\beta}  ] \leq 1$.
   \end{lemma}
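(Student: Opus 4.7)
The plan is to show that $\{M_t^\beta\}_{t\ge 0}$ is a non-negative supermartingale with respect to $\{\mathcal{M}_t\}_{t\ge 0}$ satisfying $M_0^\beta = 1$, and then conclude the two claims of the lemma via Doob's convergence theorem plus optional stopping together with Fatou's lemma. This is the standard Ville-type argument for self-normalized processes, lifted from the vector setting to the function-valued setting.

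First I would verify adaptedness and integrability. Since $X_\tau \in \mathcal{M}_{\tau-1}$ and $\eta_\tau$ is $\mathcal{M}_\tau$-measurable by assumption, each summand in the exponent defining $M_t^\beta$ is $\mathcal{M}_t$-measurable, so $M_t^\beta \in \mathcal{M}_t$. The process is non-negative by construction, and $M_0^\beta = 1$.

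Next comes the key computation. Writing
\[
\frac{M_t^\beta}{M_{t-1}^\beta}
= \exp\!\Big(\tfrac{\langle \beta^\top X_t,\, \eta_t\rangle}{\sigma R} - \tfrac{\langle \beta^\top X_t,\, \beta^\top X_t\rangle}{2}\Big),
\]
I would condition on $\mathcal{M}_{t-1}$, under which $\beta^\top X_t$ becomes a deterministic function of $s$ (since $X_t \in \mathcal{M}_{t-1}$) and only $\eta_t$ is random. Applying the sub-Gaussian hypothesis to the test function $g = \beta^\top X_t/(\sigma R)$ yields
\[
\mathbb{E}\Big[\exp\!\Big(\tfrac{\langle \beta^\top X_t,\, \eta_t\rangle}{\sigma R}\Big) \biggiven \mathcal{M}_{t-1}\Big]
\le \exp\!\Big(\tfrac{\|\beta^\top X_t\|_\infty^{\,2}}{2R^2}\Big).
\]
The hypothesis $\|\beta^\top X_t\|_{\infty,\mathcal{S}} \le R\,\|\beta^\top X_t\|_{2,\mathcal{S}}$ a.s.\ upgrades this to
$\exp(\langle \beta^\top X_t, \beta^\top X_t\rangle/2)$, so multiplying by the deterministic factor $\exp(-\langle \beta^\top X_t, \beta^\top X_t\rangle/2)$ gives $\mathbb{E}[M_t^\beta \mid \mathcal{M}_{t-1}] \le M_{t-1}^\beta$. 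Thus $\{M_t^\beta\}$ is a non-negative supermartingale with $\mathbb{E}[M_t^\beta] \le 1$ for every finite $t$.

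Finally, to handle the stopping time $T$, I would invoke Doob's supermartingale convergence theorem: the non-negative supermartingale $M_t^\beta$ converges almost surely to some $\mathcal{M}_\infty$-measurable $M_\infty^\beta \in [0,\infty)$, so defining $M_T^\beta := M_\infty^\beta$ on $\{T=\infty\}$ makes $M_T^\beta$ almost surely well-defined. Applying optional stopping to the bounded stopping times $T\wedge n$ gives $\mathbb{E}[M_{T\wedge n}^\beta] \le \mathbb{E}[M_0^\beta] = 1$ for every $n$. Since $M_{T\wedge n}^\beta \to M_T^\beta$ a.s.\ and the process is non-negative, Fatou's lemma yields
\[
\mathbb{E}[M_T^\beta] \le \liminf_{n\to\infty}\mathbb{E}[M_{T\wedge n}^\beta] \le 1.
\]
There is no real obstacle here: the only subtle step is the sub-Gaussian computation, where one must match the test function with the normalization $\sigma R$ and then use the $\ell_\infty$-to-$\ell_2$ conversion via $R$ to cancel with the quadratic correction term in the exponent. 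Everything else is the textbook supermartingale machinery.
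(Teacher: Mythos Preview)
Your proposal is correct and follows essentially the same approach as the paper: establish the supermartingale property via the sub-Gaussian bound combined with the $\|\cdot\|_{\infty,\mathcal{S}}\le R\|\cdot\|_{2,\mathcal{S}}$ conversion, then use Doob's convergence theorem for well-definedness and optional stopping plus Fatou's lemma for the expectation bound. Your sub-Gaussian computation with $g=\beta^\top X_t/(\sigma R)$ is exactly the right normalization and matches the paper's argument.
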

   \begin{proof}
       We first show that $   \{M_{t}^{\beta}  \}_{t=0}^{\infty}$ is a supermartingale. Let
   $$
   G_{\tau}^{\beta}=\exp    \bigg(\frac{   \langle\beta^{\top} X_{\tau}, \eta_{\tau}  \rangle}{\sigma R}-\frac{   \|\beta^{\top} X_{\tau}  \|_{2, \mathcal{S}}^{2}}{2} \bigg). 
   $$
   By the conditional sub-Gaussian property of $\eta_{\tau}$ and the fact that $   \|\beta^{\top} X_{t}  \|_{\infty, \mathcal{S}} \leq R \cdot   \|\beta^{\top} X_{t}  \|_{2, \mathcal{S}}$,  we have
   $$
   \mathbb{E}   [G_{\tau}^{\beta} \mid \mathcal{M}_{t-1}  ] \leq \exp   \bigg (\frac{   \|\beta^{\top} X_{\tau}  \|_{\infty, \mathcal{S}}^{2}}{2 R}-\frac{   \|\beta^{\top} X_{\tau}  \|_{\infty, \mathcal{S}}^{2}}{2 R} \bigg)=1.
   $$
   Thus, we have $\mathbb{E}   [M_{t}^{\beta} \mid \mathcal{M}_{t-1}  ]=M_{t-1}^{\beta} \cdot \mathbb{E}   [G_{\tau}^{\beta} \mid \mathcal{M}_{t-1}  ] \leq M_{t-1}^{\beta},$ which implies that $   \{M_{t}^{\beta}  \}_{t=0}^{\infty}$ is a supermartingale and
   $\mathbb{E}   [M_{t}^{\beta}  ] \leq 1$.
   We then show that $M_{T}^{\beta}$ is well-defined, where $T$ is a stopping time. By the convergence theorem of nonnegative supermartingales, it holds that $M_{\infty}^{\beta}=\lim _{t   \rightarrow \infty} M_{t}^{\beta}$. Thus, $M_{T}^{\beta}$ is well-defined whether $T<\infty$ or not.
   Finally, to show that $\mathbb{E}   [M_{T}^{\beta}  ] \leq 1$, we apply Fatou's lemma and obtain that
   $$
   \mathbb{E}   [M_{\tau}^{\beta}  ]=\mathbb{E}   [\lim _{t   \rightarrow \infty} M_{T \wedge t}^{\beta}  ] \leq \liminf _{t   \rightarrow \infty} \mathbb{E}   [M_{T \wedge t}^{\beta}  ] \leq 1.
   $$
   Thus, we conclude the proof of Lemma \ref{lem:appendix-concentration-p1}.
   \end{proof}
   
   \begin{lemma}
       \label{lem:appendix-concentration-p2}
       Let $T$ be a stopping time with respect to $   \{\mathcal{M}_{t}  \}_{t=0}^{\infty},$ then it holds with probability at least $1-\delta$ that
   $$
      \|S_{T}  \|_{\bar{V}_{T}^{-1}}^{2}>2(\sigma R)^{2} \cdot \log    \bigg(\frac{\operatorname{det}   (\bar{V}_{T}  )^{1 / 2}}{\delta \operatorname{det}(V)^{1 / 2}}  \bigg).
   $$
   \end{lemma}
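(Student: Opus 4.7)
\textbf{Proof plan for Lemma \ref{lem:appendix-concentration-p2}.}

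The plan is to apply the method of mixtures of Abbasi-Yadkori et al. First I would introduce a Gaussian mixing density on $\mathbb{R}^d$, namely $h(\beta) = (2\pi)^{-d/2}\det(V)^{1/2}\exp(-\beta^\top V\beta/2)$, chosen so that its inverse-covariance is $V$. Using the supermartingale $M_t^\beta$ provided by Lemma \ref{lem:appendix-concentration-p1}, I would define the mixed process $\bar{M}_t = \int_{\mathbb{R}^d} M_t^\beta\, h(\beta)\, d\beta$. By Tonelli's theorem and Lemma \ref{lem:appendix-concentration-p1}, one checks that $\bar{M}_t$ is a nonnegative supermartingale adapted to $\{\mathcal{M}_t\}_{t \ge 0}$ with $\mathbb{E}[\bar{M}_0] = 1$, and hence $\mathbb{E}[\bar{M}_T]\le 1$ at the stopping time $T$ (invoking the optional stopping argument used in the proof of Lemma \ref{lem:appendix-concentration-p1}).

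The key algebraic step is an explicit evaluation of $\bar{M}_t$. Rewriting the exponent appearing in $M_t^\beta$ in matrix form,
\[
\sum_{\tau=1}^{t}\left(\frac{\langle \beta^\top X_\tau, \eta_\tau\rangle}{\sigma R} - \frac{\|\beta^\top X_\tau\|_{2,\mathcal{S}}^2}{2}\right) = \frac{\beta^\top S_t}{\sigma R} - \frac{1}{2}\beta^\top \bar{V}_t \beta,
\]
and adding the Gaussian exponent $-\beta^\top V\beta/2$ produces a quadratic form in $\beta$ with Hessian $V+\bar{V}_t$. Completing the square about $\beta^\star = (V+\bar{V}_t)^{-1}S_t/(\sigma R)$ and performing the Gaussian integration yields
\[
\bar{M}_t \;=\; \det(V)^{1/2}\det(V+\bar{V}_t)^{-1/2}\exp\!\left(\frac{\|S_t\|_{(V+\bar{V}_t)^{-1}}^2}{2(\sigma R)^2}\right).
\]
Finally, Markov's inequality at the stopping time $T$ gives $\mathbb{P}(\bar{M}_T \ge \delta^{-1}) \le \delta\,\mathbb{E}[\bar{M}_T] \le \delta$, and rearranging $\bar{M}_T \ge \delta^{-1}$ yields exactly the claimed tail bound (with $\bar{V}_T$ understood to include the regularizer $V$, matching the notational convention used in Lemma \ref{lem:appendix-1}).

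The main obstacle is the Gaussian integration: the Hessian $V + \bar{V}_t$ is itself a random $\mathcal{M}_{t-1}$-measurable matrix, so one must justify completing the square pointwise in $\omega$ and then apply Fubini to exchange the $\beta$-integral with the conditional expectation taken in Lemma \ref{lem:appendix-concentration-p1}. A secondary technical point is the reduction from an arbitrary stopping time $T$ (possibly infinite) to a bounded one $T\wedge n$, then passing to the limit via Fatou's lemma, exactly as in the proof of Lemma \ref{lem:appendix-concentration-p1}. Once these measure-theoretic subtleties are handled, the rest of the proof is a direct linear-algebraic computation.
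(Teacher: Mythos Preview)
Your proposal is correct and follows essentially the same approach as the paper: both introduce a Gaussian mixing distribution on $\beta$ with covariance $V^{-1}$, evaluate the resulting integral $\int M_t^\beta\, h(\beta)\, d\beta$ by completing the square to obtain a closed-form expression involving $\det(V)^{1/2}\det(\bar V_t)^{-1/2}\exp(\|S_t\|_{\bar V_t^{-1}}^2/2)$, and then apply Markov's inequality together with the supermartingale bound from Lemma \ref{lem:appendix-concentration-p1}. Your remark that $\bar V_T$ should be read as including the regularizer $V$ is also consistent with how the result is used downstream.
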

   \begin{proof}
       Without loss of generality, we assume that $\sigma \cdot R=1$. We define
   $$
   V_{t}=\sum_{\tau=1}^{t} \int X_{\tau}(s) X_{\tau}(s)^{\top}.
   $$
   Then, we have
   $$
   M_{t}^{\beta}=\exp    (\beta^{\top} S_{t}-\|\beta\|_{V_{t}}^{2} / 2  ).
   $$
   
   By Lemma \ref{lem:appendix-concentration-p1}, we have that $\mathbb{E}   [M_{t}^{\beta}  ] \leq 1$. Let $\Lambda$ be an $\mathbb{R}^{d}$-valued Gaussian random variable with covariance matrix $V^{-1}$. Moreover, we assume that $\Lambda$ is independent of $   \{\mathcal{M}_{t}  \}_{t=0}^{\infty} .$ Let
   $
   M_{t}=\mathbb{E}   [M_{t}^{\Lambda} \mid \mathcal{M}_{\infty}  ],
   $
   where $\mathcal{M}_{\infty}=\sigma   (\cup_{\tau=0}^{\infty} \mathcal{M}_{\tau}  )$. Notice that $\mathbb{E}   [M_{T}  ]=\mathbb{E}   [\mathbb{E}   [M_{T}^{\Lambda} \mid \Lambda  ]  ] \leq 1$. We denote by $p$ the density of $\Lambda$ and by $v(A)=$
   $\int \exp    (-x^{\top} A x  ) \mathrm{d} x=\sqrt{(2 \pi)^{d} / \operatorname{det}(A)}$ for positive definite matrix $A \in \mathbb{R}^{d \times d}$. Then we obtain that 
   \begin{equation}
       \begin{aligned}
   M_{t} &=\int \exp    (\beta^{\top} S_{t}-\|\beta\|_{V_{t}}^{2} / 2  ) p(\beta) \mathrm{d} \beta \\
   &=\int \exp    (-   \|\beta-V_{t}^{-1} S_{t}  \|_{V_{t}}^{2} / 2+   \|S_{t}  \|_{V_{t}^{-1}}^{2}  ) p(\beta) \mathrm{d} \beta \\
   &=v(V)^{-1} \cdot \exp    (   \|S_{t}  \|_{V_{t}^{-1}}^{2} / 2  ) \cdot \int \exp    (-   \|\beta-V_{t}^{-1} S_{t}  \|_{V_{t}}^{2} / 2-\|\beta\|_{V}^{2} / 2  ) \mathrm{d} \beta .
   \end{aligned}
   \label{eq:pf:lem:concentra---1}
   \end{equation}
   Note that
   \begin{equation}
       \begin{aligned}
      \|\beta-V_{t}^{-1} S_{t}  \|_{V_{t}}^{2}+\|\beta\|_{V}^{2} / 2 &=   \|\beta-\bar{V}_{t}^{-1} S_{t}  \|_{V_{t}}^{2}+   \|V_{t}^{-1} S_{t}  \|_{V_{t}}^{2}-   \|S_{t}  \|_{V_{t}}^{2} \\
   &=   \|\beta-\bar{V}_{t}^{-1} S_{t}  \|_{V_{t}}^{2}+   \|S_{t}  \|_{V_{t}^{-1}}^{2}-   \|S_{t}  \|_{V_{t}}^{2}.
   \end{aligned}
   \label{eq:pf:lem:concentra---2}
   \end{equation}
   Plugging \eqref{eq:pf:lem:concentra---2} into \eqref{eq:pf:lem:concentra---1}, we have that
   $$
   \begin{aligned}
   M_{t} &=v(V)^{-1} \cdot \exp    (   \|S_{t}  \|_{\bar{V}_{t}^{-1}}^{2} / 2  ) \cdot \int \exp    (-   \|\beta-\bar{V}_{t}^{-1} S_{t}  \|_{\bar{V}_{t}}^{2} / 2  ) \mathrm{d} \beta \\
   &=\frac{v   (\bar{V}_{t}  )}{v   (V_{t}  )} \cdot \exp    (   \|S_{t}  \|_{\bar{V}_{t}^{-1}}^{2} / 2  ) \\
   &=\sqrt{\operatorname{det}(V) / \operatorname{det}   (\bar{V}_{t}  )} \cdot \exp    (   \|S_{t}  \|_{\bar{V}_{t}^{-1}}^{2} / 2  ).
   \end{aligned}
   $$
   Thus, we have 
   $$
   \begin{aligned}
     \mathbb{P}   \bigg\{   \|S_{T}  \|_{\bar{V}_{T}^{-1}}^{2}>2 \log   \bigg(\frac{\operatorname{det}   (\bar{V}_{T}  )^{1 / 2}}{\delta \operatorname{det}(V)^{1 / 2}}  \bigg)  \bigg\} &=  \mathbb{P}   (\delta  M_{T}>1  ) \leq \mathbb{E}   [\delta  M_{T}  ] \leq \delta,
   \end{aligned}
   $$
   which completes the proof of Lemma \ref{lem:appendix-concentration-p2}.
   \end{proof}
   
   We now prove Lemma \ref{lem:appendix:Concentration of Self-Normalized Function-Valued Process} as follows. Define
   $$
   T=\inf    \bigg\{t \geq 0 \colon   2 \log    \bigg(\frac{\operatorname{det}   (\bar{V}_{t}  )^{1 / 2}}{\delta \operatorname{det}(V)^{1 / 2}} \bigg) < \|S_{t}  \|_{\bar{V}_{t}^{-1}}^{2}  \bigg\}
   $$
   for a fixed $\delta>0$. Then it holds that
   $$
   \begin{aligned}
     \mathbb{P}   \bigg\{\exists t \geq 0,   \|S_{t}  \|_{\bar{V}_{t}^{-1}}^{2}>2 \log    \bigg(\frac{\operatorname{det}   (\bar{V}_{t}  )^{1 / 2}}{\delta \operatorname{det}(V)^{1 / 2}} \bigg)  \bigg\} &=  \mathbb{P}(T<\infty) \\
   &=  \mathbb{P}   \bigg\{\|S_{T}  \|_{\bar{V}_{T}^{-1}}^{2}>2 \log    \bigg(\frac{\operatorname{det}   (\bar{V}_{T}  )^{1 / 2}}{\delta \operatorname{det}(V)^{1 / 2}} \bigg),T<\infty \bigg\}\\
   & \leq   \mathbb{P}   \bigg\{\|S_{T}  \|_{\bar{V}_{T}^{-1}}^{2}>2 \log    \bigg(\frac{\operatorname{det}   (\bar{V}_{T}  )^{1 / 2}}{\delta \operatorname{det}(V)^{1 / 2}} \bigg)\bigg\} \leq \delta,
   \end{aligned}
   $$
   which completes the proof of Lemma \ref{lem:appendix:Concentration of Self-Normalized Function-Valued Process}.
\end{proof}

\end{document}